\newcommand{\cC}{\mathcal{C}}
\newcommand{\bu}{{\bf u}}
\newcommand{\bv}{{\bf v}}
\newcommand{\bw}{{\bf w}}
\newcommand{\bx}{{\bf x}}
\newcommand{\by}{{\bf y}}
\newcommand{\bz}{{\bf z}}
\newcommand{\ones}{{\bf 1}}
\newcommand{\ba}{{\bf a}}
\newcommand{\bbb}{{\bf b}}
\newcommand{\bc}{{\bf c}}
\newcommand{\be}{{\bf e}}
\newcommand{\nspl}{{n_{\rm spl}}}
\newcommand{\dist}{{\rm dist }}
\newcommand{\hor}{\text{---}}
\newcommand{\E}{\mathop{{\mathbb E}}}
\newcommand{\nat}{\mathbb{N}}
\newcommand{\real}{\mathbb{R}}
\newcommand{\voc}{\mathcal V}
\newcommand{\data}{\mathcal X}
\newcommand{\ttt}[1]{\text{{\it#1}}}
\newcommand{\dotprod}[1]{ \langle  #1 \rangle }
\newcommand{\dotprodbis}[1]{ \left\langle  #1 \right\rangle_F }
\newcommand{\dotprodbig}[1]{ \Big\langle  #1 \Big\rangle_F }
\newcommand{\tr}{{\rm Tr}} 
\newcommand{\risk}{\mathcal R}
\newcommand{\cpt}{\mathfrak f}
\newcommand{\vphi}{\varphi}
\newcommand{\hot}{\zeta}
\newcommand{\margin}{{\mathfrak M}}
\newcommand{\bbeta}{\boldsymbol{\beta} }
\theoremstyle:=definition,remark,plain\do{%
\expandafter\g@addto@macro\csname th@\theoremstyle\endcsname{%
\addtolength\thm@preskip\parskip
}%
}
\newtheorem{theorem}{Theorem}
\newtheorem{definition}{Definition}
\newtheorem{lemma}{Lemma}
\newtheorem{proposition}{Proposition}
\newtheorem{claim}{Claim}
\newtheorem{assumption}{Assumption}
\title{Feature Collapse}
\author[1]{Thomas Laurent}
\author[ ]{James H. von Brecht}
\author[2]{Xavier Bresson}
\affil[1]{Loyola Marymount  University,   \texttt{tlaurent@lmu.edu}}
\affil[2]{National University of Singapore,  \texttt{xaviercs@nus.edu.sg}}
\date{}
\begin{document}

\maketitle

\begin{abstract}
\noindent We formalize and study a phenomenon called \emph{feature collapse} that makes precise the intuitive idea that entities playing a similar role in a learning task receive similar representations. As feature collapse requires a notion of task, we leverage a simple but prototypical NLP task to study it. We start by showing experimentally that feature collapse goes hand in hand with generalization. We then prove that, in the large sample limit,  distinct words that play identical roles in this NLP task receive identical local feature representations in a neural network. This analysis reveals the crucial role that normalization mechanisms, such as LayerNorm, play in feature collapse and in generalization.
\end{abstract}

\section{Introduction}
Many machine learning practices implicitly rely, at least in some measure, on the belief that good generalization requires good features.
Despite this, the notion of `good features' remains  vague and carries many potential meanings.
One definition is  that  features/representations should only encode the information necessary to do the task at hand, and discard  any unnecessary information as noise.
  For example, two distinct patches of grass should map to essentially identical representations even if these patches differ in pixel space. Intuitively, a network that gives the same representation to many distinct patches of grass has learned  the \emph{`grass'} concept. We call this phenomenon \emph{feature collapse}, meaning that a learner gives same features to entities that play similar roles for the task at hand.
 
We aim to give some mathematical precision to this notion, so that it has a clear meaning, and to investigate the relationship between collapse, normalization, and generalization. As feature collapse cannot be studied in a vacuum, since the notion only makes sense within the context of a specific task, we use a simple but prototypical NLP task to formalize and study the concept.
 In order to make intuitive the theoretical results which constitute the core of our work, we begin our presentation in section \ref{section:datamodel} with a set of visual experiments that illustrate the key ideas that are later mathematically formalized in section \ref{section:theory}.
In these experiments, we explore the behavior of a simple neural network on our NLP task, and make the following observations:
 \begin{enumerate}[leftmargin=22pt]
 \item[(i)] If words in the corpus have \emph{identical} frequencies then feature collapse occurs. Words that play the same role for the task at hand receive identical embeddings, and the learner generalizes well.
 \item[(ii)] If words in the corpus have \emph{distinct} frequencies (e.g. their frequencies follow the Zipf’s law) then feature collapse does not take place in the absence of a normalizing mechanism. Frequent words receive larger embeddings than rare words, and this failure to collapse features leads to poorer generalization.
 \item[(iii)] Including a  normalization mechanism (e.g. LayerNorm) restores feature collapse and, as a consequence, restores the generalization performance of the learner.
 \end{enumerate}
 
 These observations motivate our theoretical investigation into the feature collapse phenomenon. We show that the optimization problems associated to our empirical investigation have explicit analytical solutions under certain symmetry assumptions placed on the NLP task. These analytical solutions allow us to get a  precise theoretical  grasp on the phenomena (i), (ii) and (iii) observed empirically. Concretely, we provide a rigorous proof of \emph{feature collapse} in the context of our NLP data model. Distinct  words that play the same role for the task at hand receive the same features. Additionally, we show that normalization is the key to obtain collapse and good generalization, when words occur with distinct frequencies. These contributions provide a theoretical framework for understanding two empirical phenomena  that occur in actual machine learning practice: entities that play similar roles receive similar representations, and normalization is key in order to obtain good representations.

\subsection{Related works}  In pioneering work \cite{papyan2020prevalence}, a series of experiments on popular network architectures and popular  image classification tasks revealed a striking phenomenon --- a well-trained network  gives identical representations, in its last layer, to training points that belongs to the same class.  In a $K$-class classification task we therefore see the emergence, in the last layer,  of $K$ vectors coding for the $K$ classes. Additionally, these $K$ vectors `point' in `maximally opposed' directions. This phenomenon, coined \emph{neural collapse}, has been studied extensively since its discovery. A recent line of theoretical works 
(e.g. \cite{mixon2020neural, lu2020neural, wojtowytsch2020emergence, fang2021exploring, zhu2021geometric, ji2021unconstrained, tirer2022extended, zhou2022optimization}) investigate neural collapse in the context of the  so-called \emph{unconstrained feature model}, which treats the representations of the training points in the last layer as free optimization variables that are not constrained by the previous layers. Under various assumptions and with various losses, and under the unconstrained feature model, these works prove that the $K$ vectors coding for the $K$ classes indeed have maximally opposed directions.

The phenomenon we study in this work, \emph{feature collapse},  has superficial similarity to neural collapse but in detail is quite different. Feature collapse describes the emergence of `good' local features in a neural network, and in particular, it has no meaningful instantiation within the unconstrained feature model framework. To give an illustrative example of the difference, \emph{neural collapse} refers to a phenomenon where all images from the same class receive the same representation at the end of the network. By contrast, \emph{feature collapse} refers to the phenomenon where all image patches that play the same role for the task at hand, such as two distinct patches of grass, receive the same representation. This distinction makes feature collapse harder to define and analyze because it is fundamentally  a task dependent  phenomenon. It demands both a well-defined `input notion' (such as patch) as well as a well-defined notion of `task', so that the statement ``patches that play the same role in the task'' has content. In contrast,  \emph{neural collapse} is mostly a task agnostic phenomenon.

\subsection{Limitations}
Our theoretical results are applicable only in the large sample limit and under certain symmetry assumptions placed on the NLP task.
In this idealized scenario, the problems we examine possess perfect symmetry and homogeneity.
This allows us to derive  symmetric and homogeneous  analytical solutions  that describe the weights of a trained network. 
Importantly,  these analytical solutions still provide valuable predictions  when the large sample limit and symmetry  assumptions are relaxed. Indeed,
our experiments demonstrate that the weights of networks trained with a small number of samples closely approximate the idealized analytical solution. 
Quantifying the robustness of these analytical solutions to under-sampling is technically challenging, but of great interest.

\subsection{Reproducibility}
The codes for all our experiments are available at \url{https://github.com/xbresson/feature_collapse}, where we provide notebooks that reproduces all the figures shown in this work.

\section{A tale of feature collapse} \label{section:datamodel}

We begin by more fully telling the empirical tale that motivates our theoretical investigation of feature collapse. It starts with a simple model in which sentences of length $L$ are generated  from some underlying set  of  latent variables that encode the $K$ classes of a classification task.  Figure \ref{figure:datamodel} illustrates the basic idea. \\  The left side  of the figure  depicts a vocabulary of $n_w=12$ word tokens and $n_c = 3$ concept tokens
$$
\voc = \{ \text{potato, cheese, carrots, chicken}, \ldots \} \qquad \text{and} \qquad \mathcal C = \{\text{vegetable, dairy, meat}\}
$$ 
with the $12$ words partitioned into the  $3$ equally sized concepts. A sentence  $\bx \in \mathcal V^L$ is a sequence of $L$ words  ($L=5$ on the figure), and a latent variable $\bz \in \mathcal C^L$  is a sequence of $L$  concepts.
 The latent variables generate sentences. For example
\begin{equation*} 
\bz =  [ \,\text{dairy}, \, \text{veggie},\, \text{meat} ,\, \text{veggie} , \,  \text{dairy}\,] \;\;\;\;\;  \overset{\text{generates}}{\longrightarrow} \;\; \;\;\;
 \bx =  [\,\text{cheese}, \, \text{carrot},\, \text{pork}, \,\text{potato},\, \text{butter} \, ]
 \end{equation*}
 with the sentence on the right obtained by sampling each word  at random from the corresponding concept. The first word represents a random sample from the dairy concept ({\it butter, cheese, cream, yogurt}) according to the dairy distribution (square box at left), the second word represents a random sample from the vegetable concept ({\it potato, carrot,  leek, lettuce}) according to the vegetable distribution, and so forth. At right, figure   \ref{figure:datamodel} depicts a classification task with $K=3$ categories prescribed by  the three latent variables $\bz_1, \bz_2, \bz_3 \in \mathcal C^L$. Sentences generated by the latent variable $\bz_k$  share the same label $k$, yielding a  classification problem that requires a learner to classify sentences among $K$ categories. This task provides a clear way of studying the extent to which feature collapse occurs, as all words in a concept clearly play the same role. An intuitively `correct' solution should therefore map all words in a concept to the same representation.


   \begin{figure}[t] 
         \centering
          \includegraphics[width=1\linewidth]{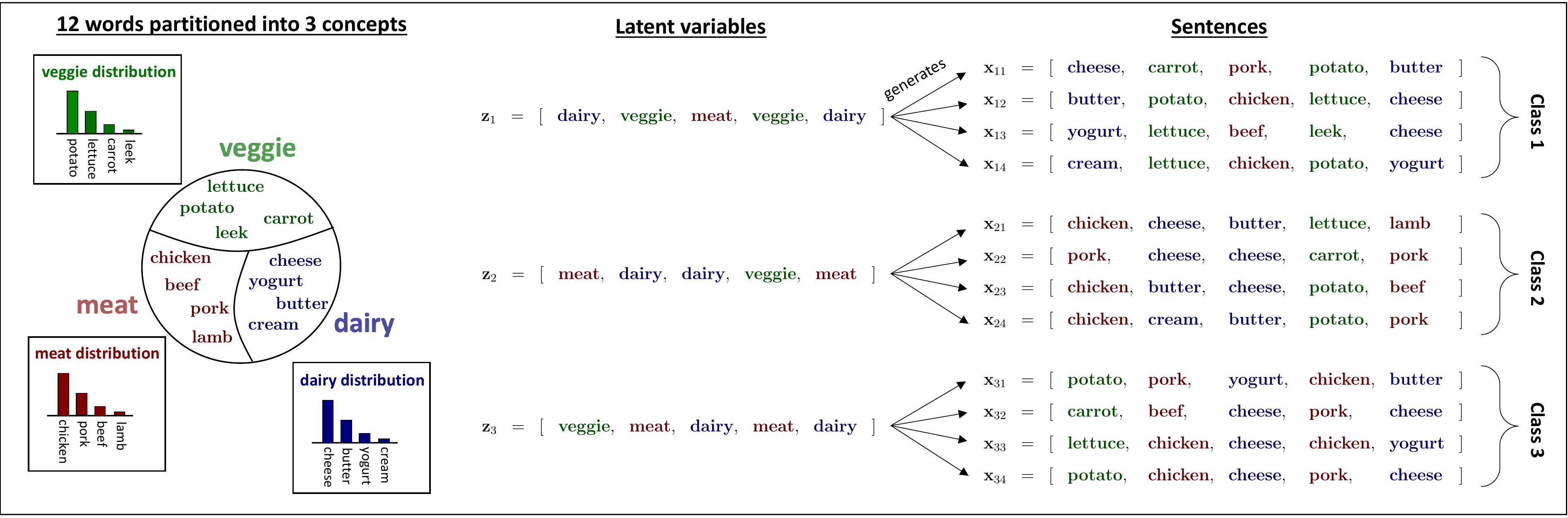}
             \caption{
             Data model with parameters set to
             $n_c=3$, $n_w=12$,   $L=5$,  and $K=3$.}
             \label{figure:datamodel}
\end{figure}

 \begin{wrapfigure}[15]{r}{0.24\textwidth}
 \vspace{-0.4cm}
 \centering
    \includegraphics[totalheight=0.12\textheight]{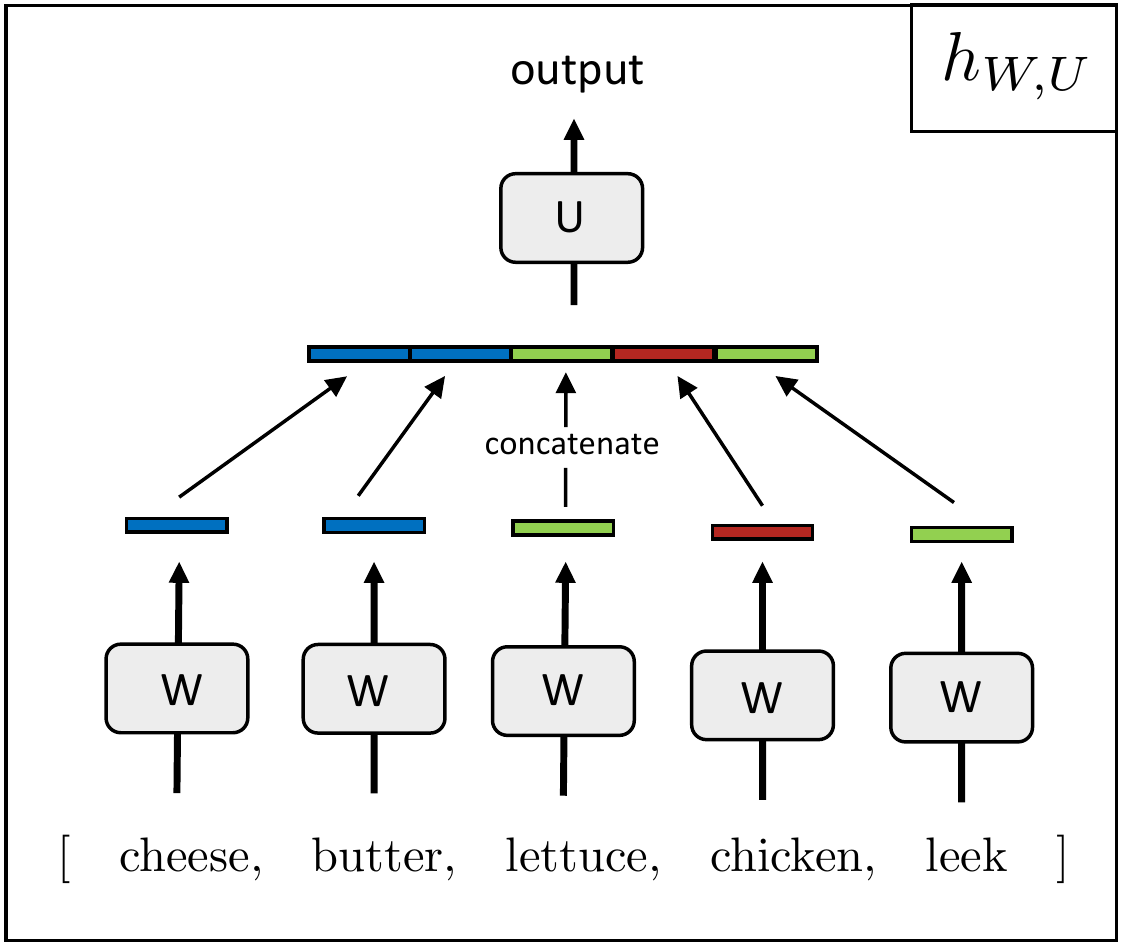} \\ \vspace{0.12cm}
       \includegraphics[totalheight=0.133\textheight]{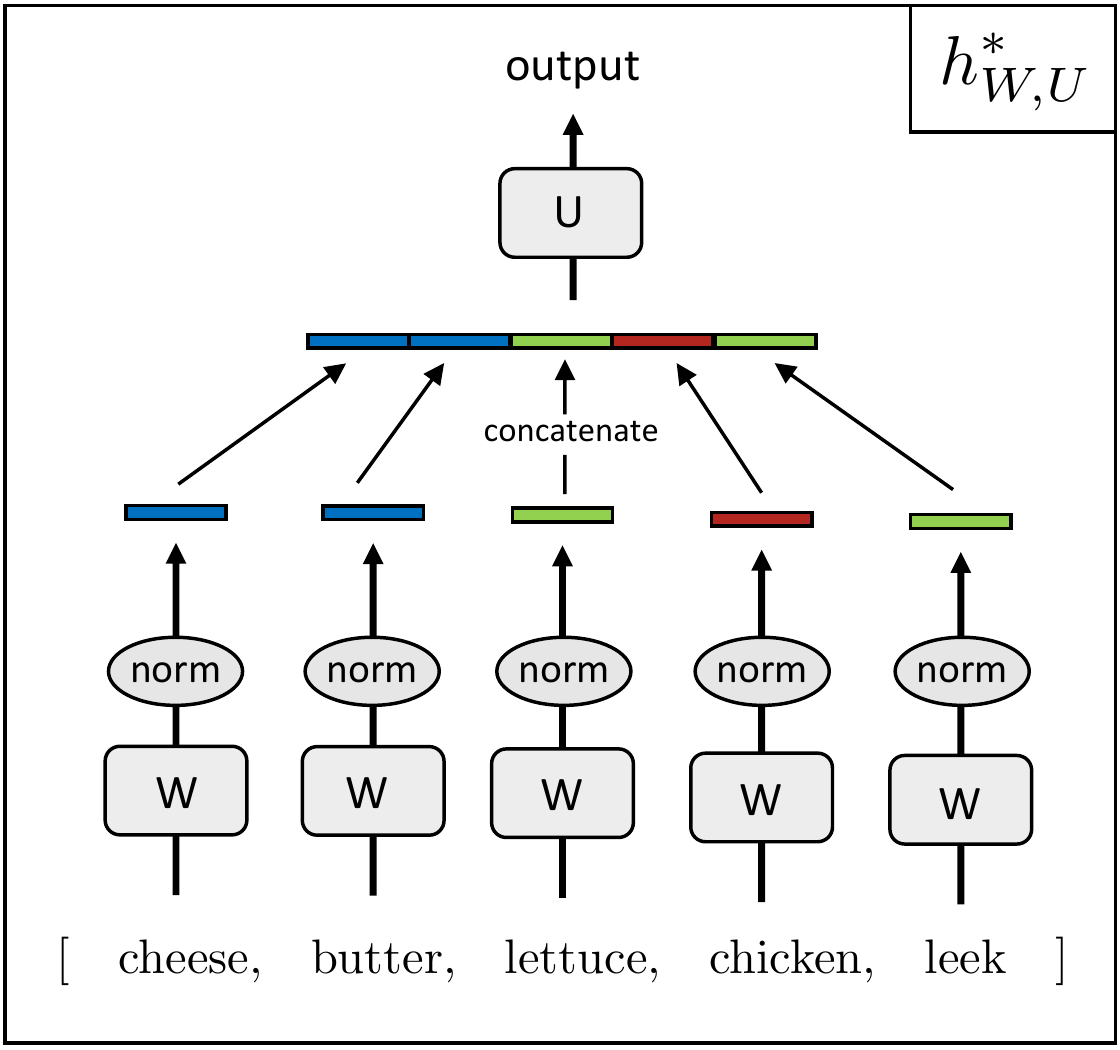}
    \caption{networks}
    \label{figure:net}
\end{wrapfigure}

 We use two similar networks to empirically study if and when this phenomenon occurs.
 The first network $\bx \mapsto h_{W,\,U}(\bx)$, depicted on the top panel of figure   \ref{figure:net}, 
 starts by embedding each word in a sentence by    applying a $d \times n_w$ matrix $W$ to the one-hot representation of each word. 
  It then concatenates these $d$-dimensional
  embeddings of each word into a single vector. Finally, it applies a linear transformation $U$ 
  to produce
   a $K$-dimensional score vector $\by = h_{W,\,U}(\bx)$ with one entry for each of the $K$ classes.  
The $d \times n_w$ embedding matrix $W$ and the $K \times Ld$ matrix $U$ of linear weights are the only learnable parameters, and the network has no nonlinearities. The second network $\bx \mapsto h^{*}_{W,\,U}(\bx)$, depicted at bottom, differs only by the application of a LayerNorm module (c.f. \cite{ba2016layer}) to the word embeddings prior to the concatenation. For simplicity we use a LayerNorm module which does not contain any learnable parameters; the module simply removes the mean and divides by the standard deviation of its input vector. As for the first network, the only learnable weights are  the matrices $W \in \real^{d \times n_w}$ and $U \in \real^{K \times Ld}$.

If feature collapse occurs then these networks will give identical representations to words that play the same role. For example, the four words \emph{butter}, \emph{cheese}, \emph{cream} and \emph{yogurt} all belong to the \emph{dairy} concept, and we should see this clearly reflected in the weights. At the level of the the word embeddings $W$ this has a transparent meaning; all words belonging to the \emph{dairy} concept (indeed any concept) should receive similar embeddings, and these embeddings should allow for distinguishing between concepts. Now partition the linear transformation
\begin{align}\label{eq:upart00}
U = \begin{bmatrix}
 \hor\bu_{1,1} \hor &  \hor\bu_{1,2} \hor & \cdots  &\hor\bu_{1,L} \hor \\
 \hor\bu_{2,1} \hor &  \hor\bu_{2,2} \hor & \cdots  &\hor\bu_{2,L} \hor \\
 \vdots  &\vdots  & & \vdots \\
  \hor\bu_{K,1} \hor &  \hor\bu_{K,2} \hor & \cdots  &\hor\bu_{K,L} \hor 
\end{bmatrix}
\end{align}
into its components $\bu_{k,\,\ell} \in \real^d$ that `see' the embeddings of the $\ell$${{\rm th}}$ concept $z_{k,\,\ell}$ from the $k$${{\rm th}}$ class. For example, if $z_{k,\,\ell} =$ {\it veggie} then the latent variable $\bz_k$ contains the {\it veggie} concept in the $\ell$${{\rm th}}$ position. If $W$ properly encodes concepts then we expect  the vector $\bu_{k,\,\ell}$ to give  a strong response when presented with the embedding of a word that belongs to the {\it veggie} concept. So we would expect $\bu_{k,\,\ell}$ to align with the embeddings of the words that belong to the \ttt{veggie} concept, and so feature collapse would occur in this manner as well.
  
If feature collapse does, in fact, play an important role then we  should observe it empirically in well-trained networks that exhibit good generalization performance. To test this hypothesis we use the standard cross entropy loss
\[
\ell(\by, k) = -   \log \left(  \frac{\exp\left( y_k\right)}{\sum_{k'=1}^K \exp\left( y_{k'}\right)}\right) \qquad \text{ for } \by \in \real^K
\]
and then minimize the corresponding regularized empirical risks
       \begin{align} \label{empirical_risk}
&\mathcal R_{\rm emp}(W,U) = \frac{1}{K} \frac{1}{\nspl}  \sum_{k=1}^K   \sum_{i=1}^{\nspl}    \ell\big( \; h_{W,U}\left(\bx_{k,i} \right) \; , \; k  \; \big)  + \frac{\lambda}{2}\|U\|_F^2  +  \frac{\lambda}{2} \|W\|_F^2   \\
&\mathcal R^*_{\rm emp}(W,U) = \frac{1}{K} \frac{1}{\nspl}  \sum_{k=1}^K   \sum_{i=1}^{\nspl}    \ell\big( \; h^*_{W,U}\left(\bx_{k,i} \right) \; , \; k  \; \big)  + \frac{\lambda}{2}\|U\|_F^2      \label{empirical_risk_LN}
\end{align}
of each network via stochastic gradient descent.  The $\bx_{k,\,i}$ denote the $i$-${{\rm th}}$ sentence of the $k$-${{\rm th}}$ category in the training set, and so each of the $K$ categories has $n_{{\rm spl}}$ representatives. 

For the parameters of the architecture, loss, and training procedure,  we use an embedding dimension of $d=100$, a weight decay of $\lambda=0.001$,  a mini-batch size of $100$ and a constant learning rate $0.1$, respectively,  for all experiments. The regularization terms play no essential role apart from making proofs easier; the empirical picture remains the same without weight decay. We do not penalize $\|W\|_F^2$  in equation \eqref{empirical_risk_LN} since the LayerNorm module implicitly regularizes the matrix $W$. For the parameters of the data model, we use $n_c=3$ so that we may think of the concepts as being  {\it vegetable}, {\it dairy}, and  {\it meat}. But any $n_c$ would work, as the theoretical section will make clear. Finally, we will work in the regime where the number of classes is large (e.g. $K=1000$) but the number of sample per class is small (e.g. $\nspl=5$). In this regime a learner is forced to discover features that are meaningful for many categories, therefore promoting generalization.


\subsection{The uniform case}
We start with an instance of the task from figure \ref{figure:datamodel} with parameters
   \begin{equation*} 
n_c =3, \quad n_w = 1200, \quad L=15,  \quad  K=1000
   \end{equation*}
and with uniform word distributions. So each of the $n_c = 3$ concepts ({\it vegetable}, {\it dairy}, and  {\it meat}) contain $400$ words and the corresponding distributions (the veggie distribution, the dairy distribution, and the meat distribution) are uniform. We form $K=1000$ latent variables $\bz_1, \ldots, \bz_{1000}$ by selecting them uniformly at random from the set $\mathcal C^L$, which simply means that any concept sequence $\bz = [z_1,\,\ldots,\,z_L]$ has an equal probability of occurrence. We then construct a training set by generating  $\nspl=5$ data points from each latent variable. We then train both networks $h,\,h^*$ and evaluate their generalization performance; both achieve $100\%$ accuracy on test points.

     We therefore expect that both networks exhibit feature collapse. To illustrate this collapse, we start by visualizing in  figure \ref{figure:first_experiment}  the
      learnable parameters $W, U$ of the network $h_{W,\,U}$ after training. The
       embedding 
     \begin{wrapfigure}[18]{r}{0.21\textwidth}
  \begin{center}
 \vspace{-0.22cm}
 \centering
    \includegraphics[totalheight=0.145\textheight]{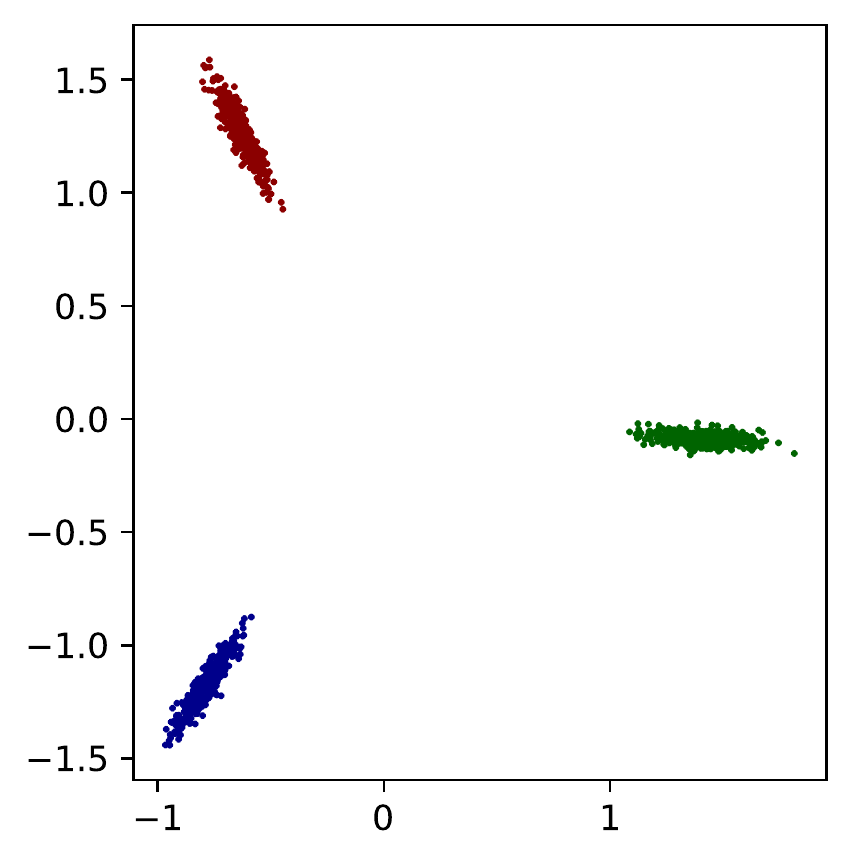} \\ \vspace{0.2cm}
       \includegraphics[totalheight=0.135\textheight]{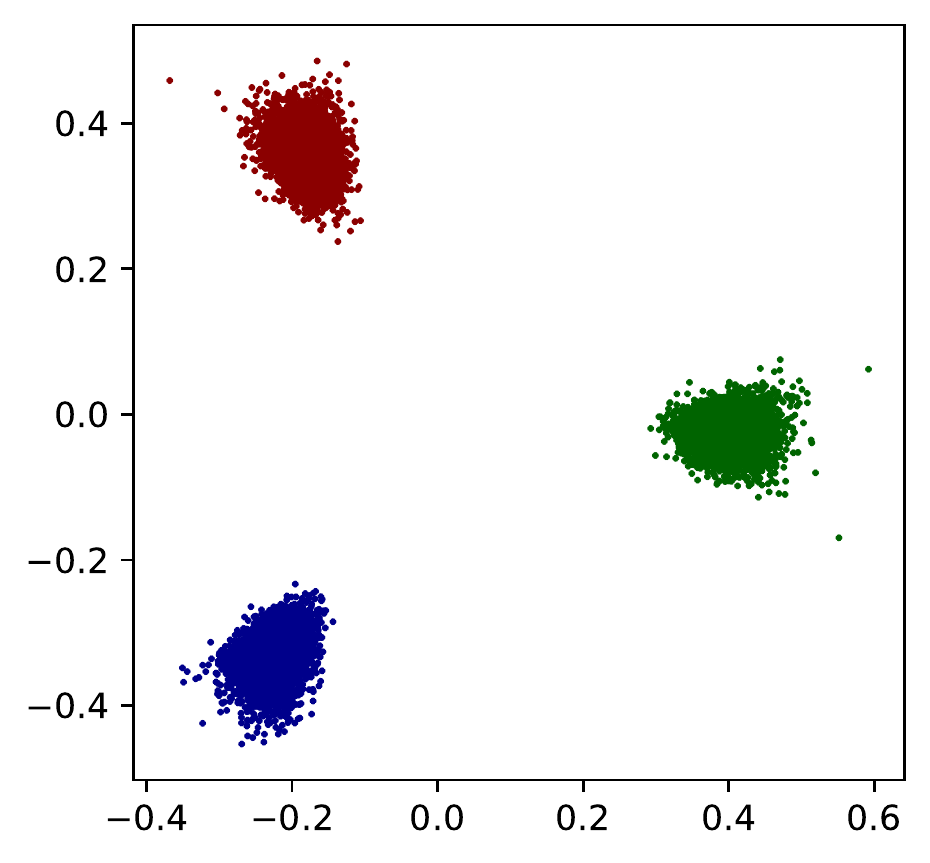}
  \end{center}
    \caption{$W$ and $U$}
    \label{figure:first_experiment}
\end{wrapfigure}   
       matrix $W$ contains $n_w = 1200$ columns.
 Each column is a vector in $\real^{100}$  and corresponds to a word embedding.  
 The top panel of figure \ref{figure:first_experiment}
depicts these $1200$    
 word embeddings after dimensionality reduction via PCA. 
 The top singular values $\sigma_1=34.9$, $\sigma_2=34.7$ and $\sigma_3 = 0.001$ associated with the PCA indicate that the word embeddings essentially live in a 2 dimensional subspace of $\real^{100}$, and so the PCA paints an accurate picture of the distribution of word embeddings. We then color code each word embedding accorded to its concept, so that all embeddings  of words within a concept receive the same color (say all \ttt{veggie} words in green, all \ttt{dairy} words in blue, and so forth). As the figure illustrates, words from the same concept receive nearly identical embeddings, and these embeddings form an equilateral triangle or two-dimensional simplex. We therefore observe collapse of features into a set of $n_c=3$ \emph{equi-angular vectors} at the level of word embeddings. The bottom panel of figure \ref{figure:first_experiment} illustrates collapse for the parameters $U$ of the linear layer. We partition the matrix $U$ into vectors $\bu_{k,\,\ell} \in \real^{100}$ via \eqref{eq:upart00} and visualize them once again with PCA. As for the word embeddings, the singular values of the PCA ($\sigma_1=34.9$, $\sigma_2=34.6$ and $\sigma_3 = 0.0003$) reveal that the vectors $\bu_{k,\,\ell}$ essentially live in a two dimensional subspace of $\real^{100}$. 
 We color code each $\bu_{k,\,\ell}$ according to the concepts contained in the corresponding latent variable (say $\bu_{k, \ell}$ is green if  $z_{k, \ell}=$ {\it veggie}, and so forth). The figure indicates that vectors $\bu_{k, \ell}$ that correspond to a same concept collapse around a single vector. A similar analysis applied to the weights of the network $h^*_{W,U}$ tells the same story, provided we examine the actual word features (i.e. the embeddings \emph{after} the LayerNorm) rather than the weights $W$ themselves.
 
In theorem \ref{theorem:1} and \ref{theorem:3} (see section \ref{section:theory}) we prove the correctness of this empirical picture. We show that the weights of $h$ and $h^*$ collapse into the configurations illustrated  on figure \ref{figure:first_experiment} in the large sample limit. Moreover, this limit captures the empirical solution very well. For example, the word embeddings in figure \ref{figure:first_experiment} have a norm equal to $1.41 \pm 0.13$, while we predict a norm of $1.42214$ theoretically. Within the framework of our data model, these theorems provide justification of the fact that entities that play a similar role for a task receive similar representations.

\subsection{The long-tailed case}
At a superficial glance it appears as if the nonlinearity (LayerNorm) plays no essential role, as both networks $h,\,h^*$, in the previous experiment, exhibit feature collapse and generalize perfectly. To probe this issue further, we continue our investigation by conducting a similar experiment (keeping $n_c=3$, $n_w=1200$, $L=15$,  and $K=1000$) but with non-uniform, long-tailed word distributions within each of the $n_c = 3$ concepts. For concreteness, say the \ttt{veggie} concept contains the $400$ words
  \begin{equation*}
   \ttt{potato}, \;\;  \ttt{lettuce}, \;\; \ldots\ldots , \;\;  \ttt{arugula}, \;\; \ttt{parsnip},  \ldots\ldots ,   \;\; \ttt{achojcha}
   \end{equation*}
where \ttt{achojcha} is a rare vegetable that grows in the Andes mountains. We form the \emph{veggie} distribution by sampling \ttt{potato} with probability $C/1$, sampling \ttt{lettuce} with probability $C/2$, and so forth down to \ttt{achojcha} that has probability $C/400$ of being sampled  ($C$ is chosen so that all the probabilities sum to $1$). This ``$1 / i$'' power law distribution has a long-tail, meaning that relatively infrequent words such as  \ttt{arugula} or \ttt{parsnip} collectively capture a significant portion of the mass. 
  Natural data in the form of text or images typically exhibit long-tailed distributions   \cite{salakhutdinov2011learning, zhu2014capturing, liu2019large, feldman2020does, feldman2020neural}. 
  For instance, the frequencies of words in natural  text approximately conform to the  ``$1 / i$'' power law distribution (also known as Zipf's law  \cite{zipf1935}) which motivates the specific choice made in this experiment. Many datasets of interest display some form of long-tail behavior, whether at the level of object occurrences in computer vision or the frequency of words or topics in NLP, and effectively addressing these long-tail behaviors is frequently a challenge for the learner.
  
   \begin{figure}[t]
     
     \begin{subfigure}[b]{0.32\linewidth}
                  \includegraphics[width=0.4415\linewidth]{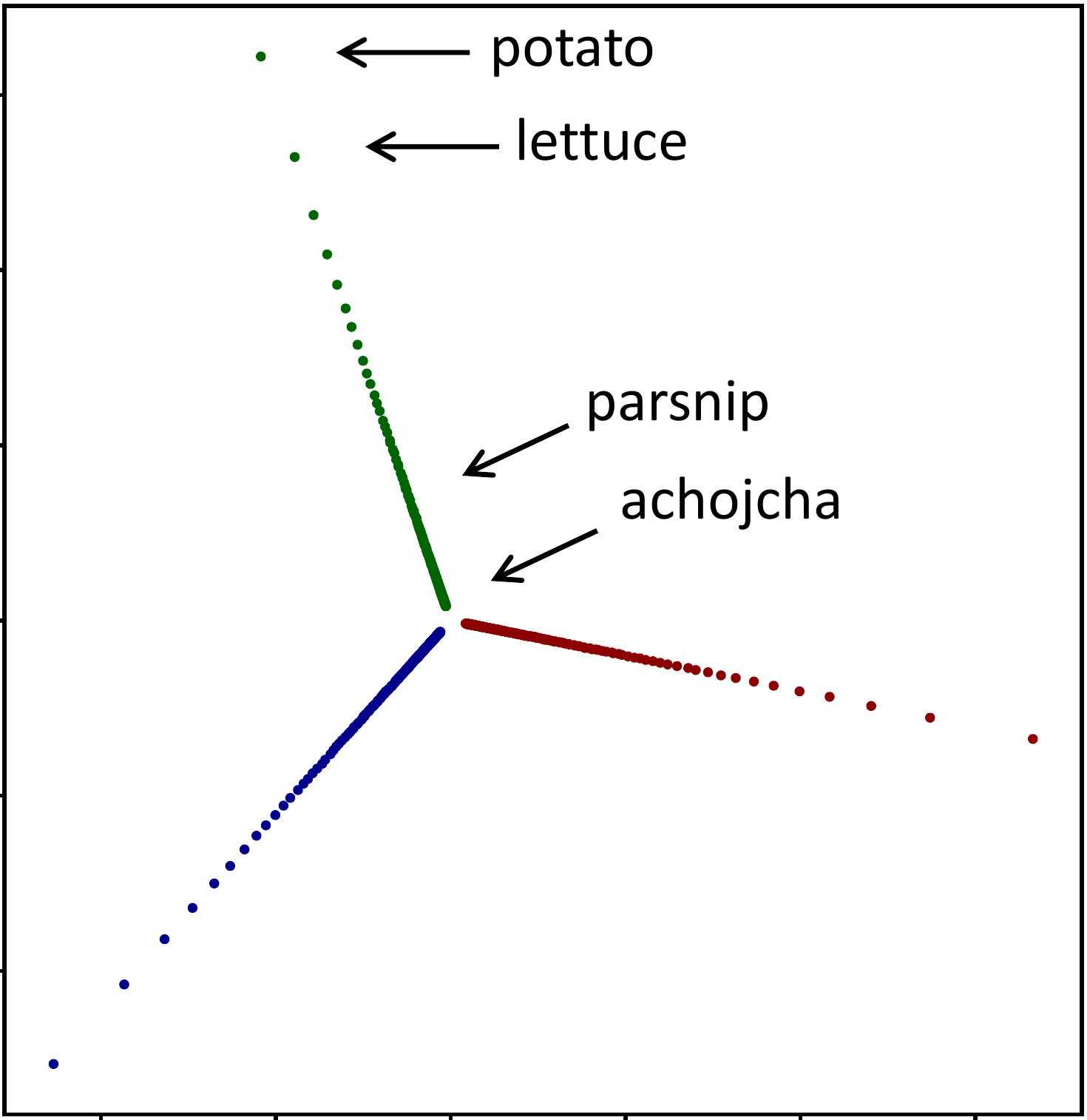}
          \includegraphics[width=0.438\linewidth]{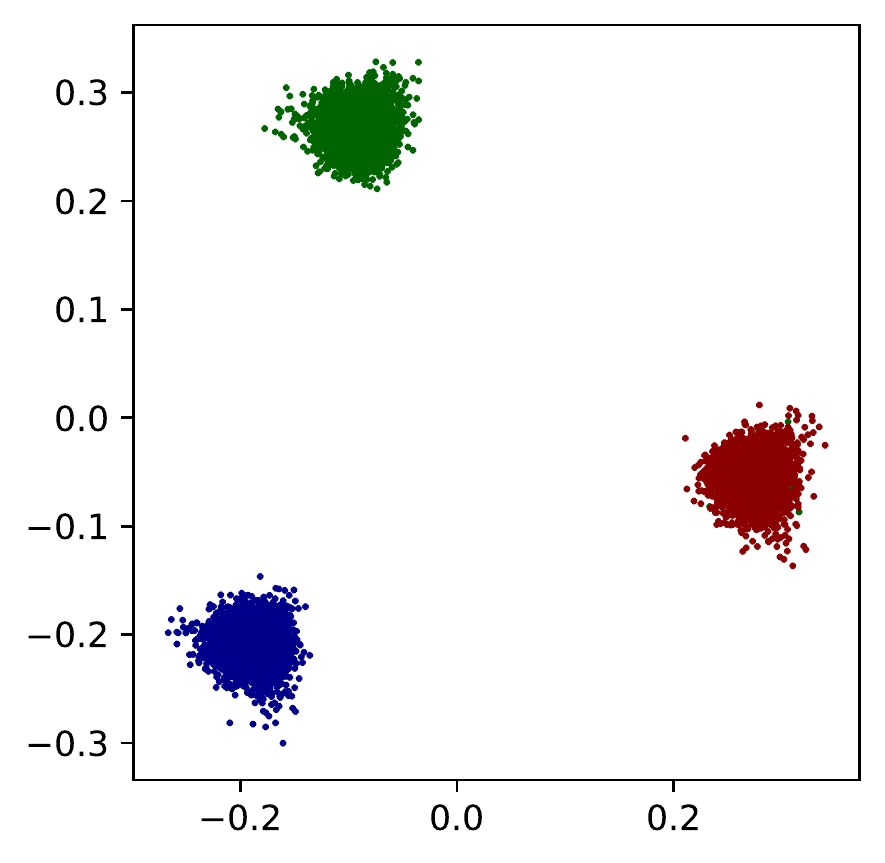} 
         \caption{$h$ trained on the {\bf large} \\ training set. 
         Test acc. $= 100\%$}
     \end{subfigure}
          \begin{subfigure}[b]{0.32\linewidth}
         \includegraphics[width=0.49\linewidth]{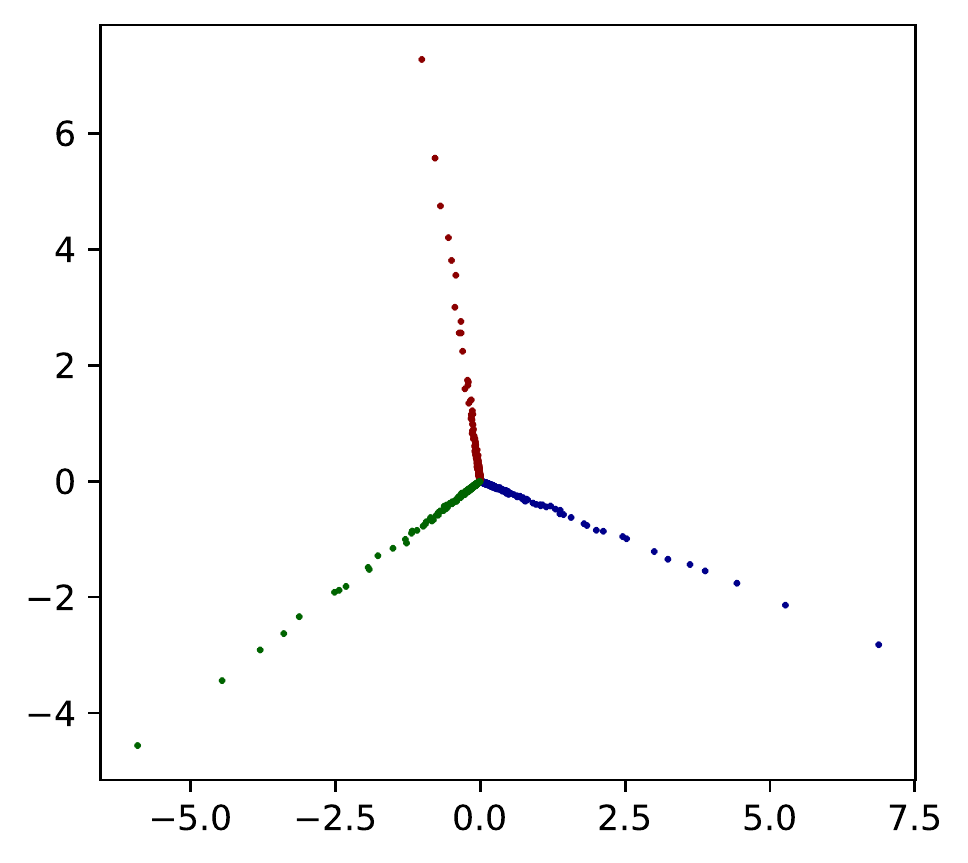} 
          \includegraphics[width=0.485\linewidth]{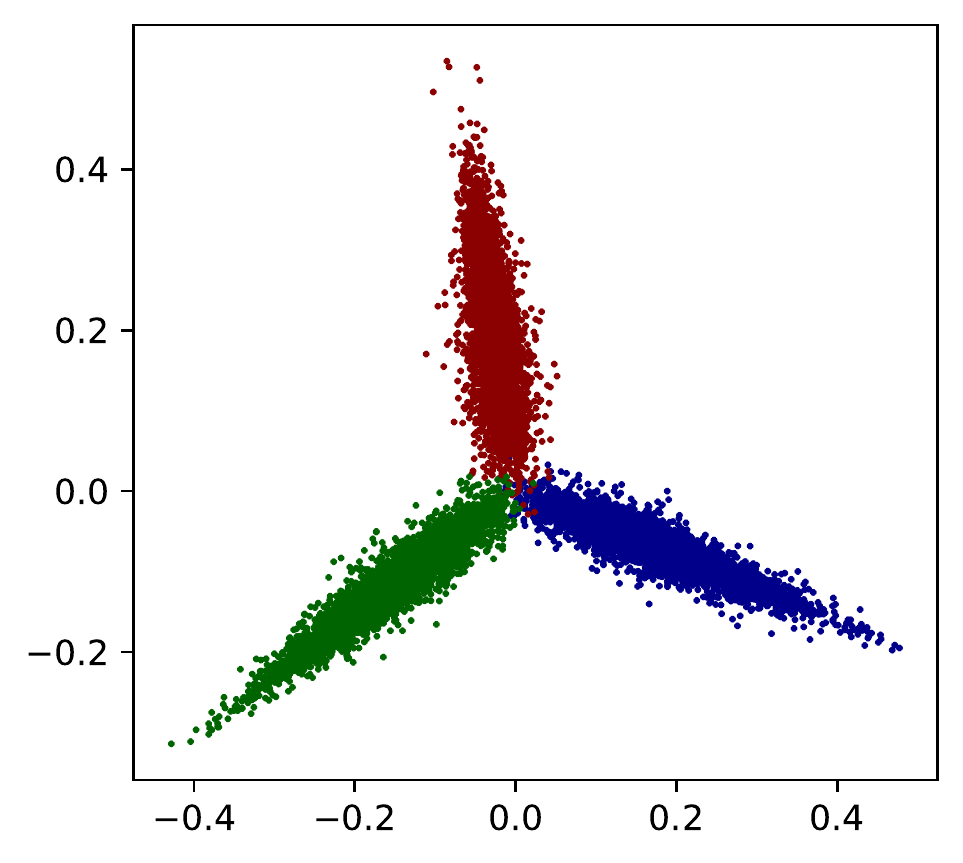} 
                  \caption{$h$ trained on the {\bf small}\\ training set. 
         Test acc. $= 45\%$}
     \end{subfigure}
          \hspace{0.25cm}
          \begin{subfigure}[b]{0.32\linewidth}
         \includegraphics[width=0.438\linewidth]{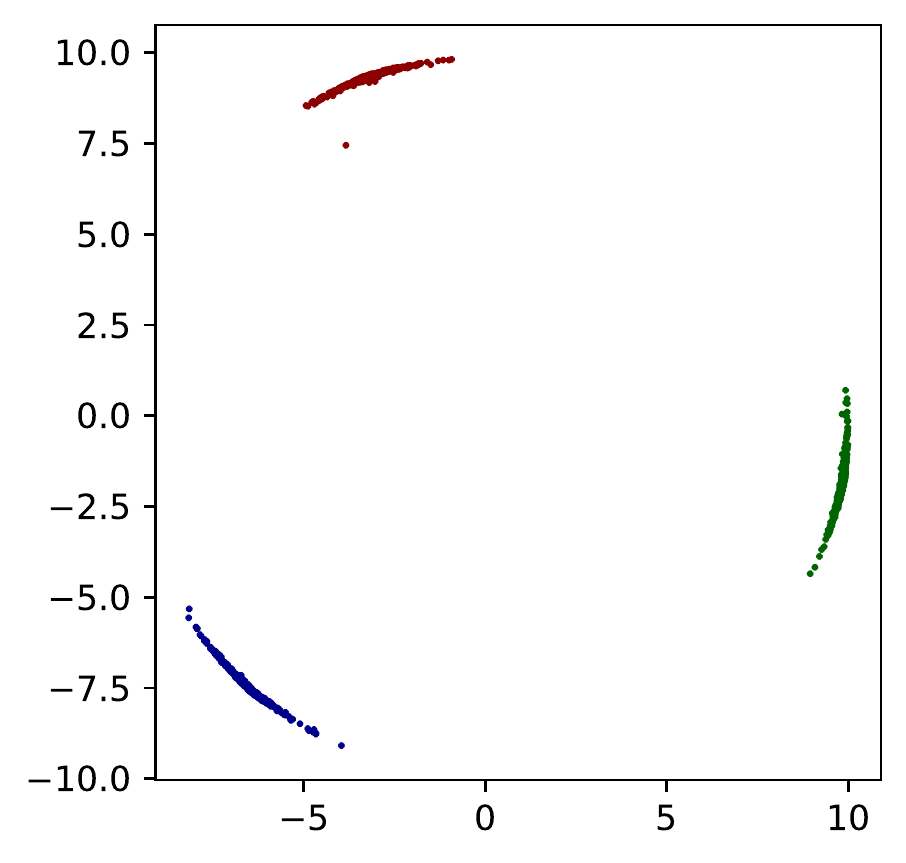} 
          \includegraphics[width=0.455\linewidth]{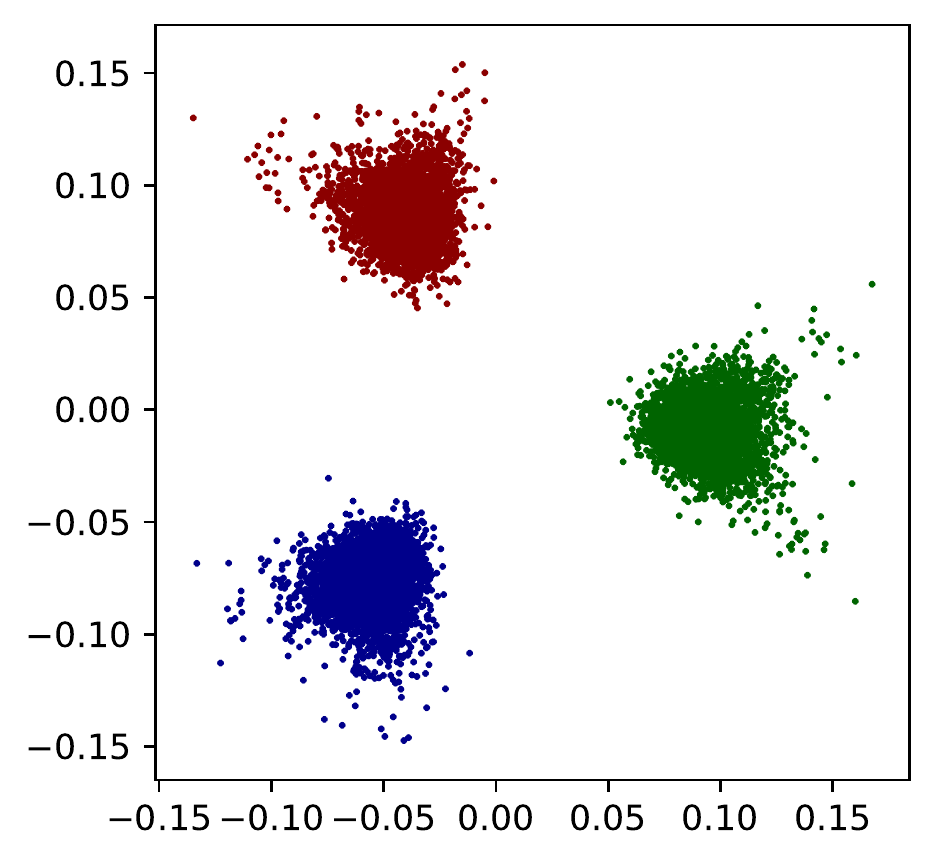} 
            \caption{$h^*$ trained on the {\bf small}\\ training set. 
         Test acc. $= 100\%$}
     \end{subfigure}
            \caption{Visualization of matrices $W$ (left in each subfigure) and $U$ (right in each subfigure)  }
        \label{figure:zipf}
\end{figure}

To investigate the impact of a long-tailed word distributions, we first randomly select the latent variables  $\bz_1, \ldots, \bz_{1000}$ uniformly at random as before. We then use them to build two distinct training sets. We build a large training set by generating $\nspl=500$ training points per latent variable and a small training set by generating $\nspl=5$ training points per latent variable. We use the ``$1 / i$'' power law distribution when sampling words from concepts in both cases. We then train  $h$ and $h^*$ on both training sets and evaluate their generalization performance. When trained on the large training set, both are $100\%$ accurate at test time  (as they should be --- the large training set has $500,000$ total samples). A significant difference emerges between the two networks when trained on the small training set. The network $h$ achieves a test accuracy of $45\%$ while $h^*$ remains $100\%$ accurate.

  We once again visualize the weights of each network to study the relationship between generalization and collapse. Figure \ref{figure:zipf}(a)  depicts the weights of $h_{W,U}$ (via dimensionality reduction and color coding) after training   on the  large  training set. The word embeddings are on the left sub-panel and  the linear weights $\bu_{k,\,\ell}$ on the right sub-panel. Words that belong to the same concept still receive embeddings that are aligned, however, the magnitude of these embeddings depends upon word frequency. The most frequent words in a concept  (e.g. \ttt{potato})  have the largest embeddings while the least frequent words (e.g. \ttt{achojcha}) have the smallest embeddings. In other words, we observe `directional collapse' of the embeddings, but the magnitudes do not collapse. In contrast, the linear weights $\bu_{k, \ell}$ mostly concentrate around three well-defined, equi-angular locations; they collapse in both direction and magnitude. 

 A major contribution of our work (c.f. theorem \ref{theorem:2} in the next section) is a theoretical insight that explains the configurations observed in figure  \ref{figure:zipf}(a), and in particular, explains why the magnitudes of word embeddings depend on their frequencies.


 Figure \ref{figure:zipf}(b) illustrates the weights of $h_{W,U}$ after training on the small training set. While the word embeddings exhibit a similar pattern as in figure \ref{figure:zipf}(a), the linear weights $\bu_{k, \ell}$ remain dispersed and fail to collapse. This leads to poor generalization performance ($45\%$ accuracy at test time).

To summarize, when the training set is large, the linear weights $\bu_{k, \ell}$ collapse correctly and the network $h_{W,U}$ generalizes well. When the training set is small the linear weights fail to collapse, and the network fails to generalize. This phenomenon can be attributed to the long-tailed nature of the word distribution. To see this, say that
 $$
\bz_k =  [\, \text{veggie}, \, \text{dairy}, \, \text{veggie}, \, \ldots, \,\text{meat} ,\, \text{dairy}\, ] 
 $$
represents the $k^{{\rm th}}$ latent variable for the sake of concreteness. With only $\nspl = 5$ samples for this latent variable, we might end up in a situation where the 5 words selected to represent the first occurrence of the \ttt{veggie} concept have very different frequencies than  the five words selected  to represent the third occurrence of the \ttt{veggie} concept.  Since word embeddings have magnitudes that depend on their frequencies, this will result in a serious imbalance between the vectors $\bu_{k,\,1}$ and  $\bu_{k,\,3}$ that code for the first and third occurrence of the \ttt{veggie} concept. This leads to two vectors $\bu_{k,\,1},\,\bu_{k,\,3}$ that code for the same concept but have different magnitudes (as seen on  figure  \ref{figure:zipf}(b)), so features do not properly collapse. This imbalance results from the `noise' introduced by sampling only $5$ training points per latent variable. Indeed, if $\nspl=500$ then each occurrence of the veggie concept will exhibit a similar mix of frequent and rare words, $\bu_{k,\,1}$ and $\bu_{k,\,3}$ will have roughly same magnitude, and full collapse will take place (c.f. figure  \ref{figure:zipf}(a)).  
Finally, the poor generalization ability of $h_{W,U}$ when the training set is small really stems from the long-tailed nature of the word distribution. The failure mechanism occurs due to the relatively balanced mix of rare and frequent words that occurs with long-tailed data. If the data were dominated by a few very frequent words, then all rare words combined would just contribute small perturbations and would not adversely affect performance.

  We conclude this section by examining the weights of the network $h^*_{W,U}$ after training on the small training set. The left panel of figure  \ref{figure:zipf}(c) provides a visualization  of  the word  embeddings \emph{after} the LayerNorm module. These word representations collapse both in \emph{direction} and \emph{magnitude}; they do not depend on word frequency since the LayerNorm forces vectors to have identical magnitude.  The right panel of  figure \ref{figure:zipf}(c)  depicts the linear weights $\bu_{k, \,\ell}$ and shows  that they properly collapse. As a consequence, $h^*_{W,U}$ generalizes perfectly (100\% accurate) even with only $\nspl=5$ sample per class. Normalization plays a crucial role by ensuring that word representations do not depend upon word frequency. In turn, this prevents the undesired mechanism that  causes $h_{W,U}$ to have uncollapsed linear weights $\bu_{k, \ell}$ when trained on the small training set.
Theorem \ref{theorem:3} in the next section proves the correctness of this picture. The weights of the network $h^*$ collapse to the `frequency independent' configuration of figure  \ref{figure:zipf}(c) in the large sample limit.
      
\section{Theory} \label{section:theory}

While the empirical results paint a clear picture, a handful of compelling experiments alone do not constitute strong evidence. Nevertheless, our main contributions show that these experiments properly illustrate the truth of the matter. We start by proving that the weights of the network $h_{W,U}$ collapse into the configurations in figure \ref{figure:first_experiment}  when words have identical  frequencies  (c.f. theorems \ref{theorem:1}). In theorem \ref{theorem:2} we provide theoretical justification of the fact that, when words have distinct frequencies, the word embeddings of  $h_{W,U}$ must depend on frequency in the manner that figure \ref{figure:zipf}(a) illustrates. Finally, in theorem \ref{theorem:3} we show that the weights of the network $h^*_{W,U}$ exhibit full collapse  even when words have distinct frequencies.
Each of these theorems hold in the large $n_{{\rm spl}}$ limit and under some symmetry assumptions on the latent variables (see the appendix for all proofs).    When taken together, these theorems provide a solid theoretical understanding, at least within the context of our data model, of the empirically well-known facts 
that entities that play similar roles for a task receive similar representations, and that normalization is key in order to obtain good representations.


\paragraph{Notation.} 
The set of concepts, which up to now was $\mathcal C = \{\text{veggie, dairy, meat}\}$, will  be represented in this section by the more   abstract  $\mathcal C = \{1, \ldots, n_c\}$.
We let $s_c := n_w/n_c$ denote the number of words per concept, and represent the vocabulary by
\[
\voc \; = \;  \big\{ \; (\alpha,\beta) \in \nat^2 \;  : \;   1 \le \alpha \le n_c  \;\text{ and } \; 1 \le \beta \le s_c \; \big\}
\]
So elements of $\voc$ are tuples of the form $(\alpha,\beta)$ with $1 \le \alpha \le n_c$ and $1 \le \beta \le s_c$, and we think of the tuple  $(\alpha,\beta)$ as representing the $\beta^{{\rm th}}$ word of the $\alpha^{{\rm th}}$ concept.
Each concept $\alpha \in \cC$ comes equipped with a probability distribution $p_\alpha : \{1, \ldots, s_c  \} \to [0,1]$ over the words within it, so that 
$
p_\alpha(\beta)
$
is the probability of selecting the $\beta^{{\rm th}}$ word when sampling out of the $\alpha^{{\rm th}}$ concept. For simplicity we assume that the word distributions within each concept follow identical laws, so that
$$
p_\alpha(\beta) = \mu_{\beta}  \qquad \text{ for all $(\alpha, \beta) \in \voc$} 
$$
for some positive scalars $\mu_\beta>0$ that sum to $1$. We think of  $\mu_\beta$ as being the `frequency' of word $(\alpha,\beta)$ in the vocabulary. For example, choosing $\mu_\beta = 1 / s_c$ gives uniform word distributions while $\mu_\beta \propto 1 / \beta$ corresponds to Zipf's law. We use the definitions

 \vspace{-0.4cm}

\[
\data = \mathcal V^L \qquad \text{ and } \qquad \mathcal Z = \mathcal C^L,
\]

 \vspace{-0.2cm}

for the data space and latent space, respectively. The elements of  data space $\mathcal X$ correspond to sequences $\bx = [ (\alpha_1,\beta_1),\ldots, (\alpha_L,\beta_L)]$ of $L$ words, while elements of the latent space $\mathcal Z$ correspond to sequences $\bz = [\alpha_1,\, \ldots,\,\alpha_L]$ of $L$ concepts. For a given latent variable $\bz$ we write
$ \bx \sim \mathcal D_\bz$
to indicate that the data point $\bx$ was generated by that latent variable.  Formally, $\mathcal D_\bz: \mathcal X \to [0,1]$ is a distribution, whose formula can be found in the appendix.

\paragraph{Word embeddings, LayerNorm, and word representations.}  We use $\bw_{(\alpha,\beta)} \in \real^d$ to denote the \emph{embedding} of word $(\alpha,\beta) \in \voc$. 
The collection of all $\bw_{(\alpha,\beta)}$ determines the columns of the matrix $W \in \real^{d\times n_w}$.
 These embeddings feed into a LayerNorm module \emph{without learnable parameters}
\begin{equation*}
 \vphi(\bv)= \frac{\bv - {\rm mean}(\bv)  \ones_d }{ \sigma(\bv)}    \quad \text{where } \quad
{\rm mean}(\bv) = \frac{1}{d} \sum_{i=1}^d v_i
\;\;\; \text{and} \;\;\;
\sigma^2(\bv) =  \frac{1}{d} \sum_{i=1}^d  \big(v_i - {\rm mean}(\bv) \big)^2,
\end{equation*}
producing outputs in the form of word features. So the LayerNorm module converts a word embedding $\bw_{(\alpha,\,\beta)}$ into a word feature $\varphi(\bw_{(\alpha,\,\beta)})$, and we call this feature a \emph{word representation}. 

 \paragraph{Equiangular vectors.}  
 We call a collection of $n_c$ vectors $\cpt_1, \ldots, \cpt_{n_c} \in \real^d$ \emph{equiangular} if the relations
  \begin{equation}
  \sum_{\alpha=1}^{n_c} \cpt_\alpha =0   \qquad \text{ and }  \qquad  
\dotprod{\cpt_\alpha , \cpt_{\alpha'} } = \begin{cases}
1 & \text{if } \alpha = \alpha'\\
-1/(n_c-1)   & \text{otherwise} 
\end{cases}
 \label{equi} 
\end{equation}
hold for all possible pairs $\alpha, \alpha' \in [n_c]$ of concepts. For example, three vectors $\cpt_1,\cpt_2, \cpt_3 \in \real^{100}$ are equiangular exactly when they have unit norms, live in a two dimensional subspace of $\real^{100}$, and form the vertices of an equilateral triangle in this subspace. This example exactly corresponds to the configurations in figure \ref{figure:first_experiment} and \ref{figure:zipf} (up to a scaling factor). Similarly, four vectors $\cpt_1,\cpt_2, \cpt_3, \cpt_4 \in \real^{100}$ are equiangular when they have unit norms, live in a three dimensional subspace of $\real^{100}$ and form the vertices of a regular tetrahedron in this subspace. The neural collapse literature refers to satisfying \eqref{equi} as the vertices of the  `Simplex Equiangular Tight Frame,' but we use  \emph{equiangular} for the sake of conciseness. We will sometimes require $\cpt_1, \ldots, \cpt_{n_c} \in \real^d$ to also satisfy
\begin{equation*}
\dotprod{\cpt_\alpha , \ones_{d}} = 0 \label{modified2}  \qquad \text{ for all } \alpha \in [n_c],
  \end{equation*}
in which case we say $\cpt_1, \ldots, \cpt_{n_c} \in \real^d$ form a collection of \emph{mean-zero equiangular vectors}.

\paragraph{Collapse configurations.} 
Our empirical investigations reveal two distinct candidate solutions for the features $(W,\,U)$ of the network $h_{W,\,U}$  and one candidate solution for the features $(\varphi(W),\,U)$ of the network $h^{*}_{W,\,U}$. We therefore isolate each of these possible candidates as a definition before turning to the statements of our main theorems. We begin by defining the type of collapse observed when training the network $h_{W,\,U}$ with uniform word distributions (c.f. figure \ref{figure:first_experiment}).
\begin{definition}[Type-I Collapse] \label{def:1} The weights $(W,U)$ of the network $h_{W,U}$ form a type-I collapse configuration if and only if the conditions
\begin{enumerate}
\item[\rm i)] There exists $c \ge 0$ so that $\bw_{(\alpha,\beta)} = c    \,\cpt_\alpha$ \;  for all $(\alpha,\beta) \in \voc$.
\item[\rm ii)] There exists $c' \ge 0$ so that $\bu_{k, \ell} =  c'\,  \cpt_\alpha$ \; for all  $(k,\ell)$ satisfying $z_{k, \ell}=\alpha$ and all $\alpha \in \mathcal C$.
\end{enumerate}
hold for some collection $\cpt_1, \ldots, \cpt_{n_c} \in \real^d$ of equiangular vectors.
\end{definition}
\noindent Recall that the network $h^{*}_{W,\,U}$ exhibits collapse as well, up to the fact that the word representations $\varphi(\bw_{\alpha,\,\beta})$ collapse rather than the word embeddings themselves. Additionally, the LayerNorm also fixes the magnitude of the word representations. We isolate these differences in the next definition.
\begin{definition}[Type-II Collapse]   \label{def:2} The weights $(W,U)$ of the network $h^*_{W,U}$ form a type-II collapse configuration if and only if the conditions 
\begin{enumerate}
\item[\rm i)] $\vphi(\bw_{(\alpha,\beta)}) = \sqrt{d}    \,\cpt_\alpha$ \;  for all $(\alpha,\beta) \in \voc$.
\item[\rm ii)] There exists $c \ge 0$ so that $\bu_{k, \ell} =  c\,  \cpt_\alpha$ \; for all  $(k,\ell)$ satisfying $z_{k,\ell}=\alpha$ and all $\alpha \in \mathcal C$.
\end{enumerate}
hold for some collection $\cpt_1, \ldots, \cpt_{n_c} \in \real^d$ of mean-zero equiangular vectors.
\end{definition}
\noindent Finally, when training the network $h_{W,\,U}$ with non-uniform word distributions (c.f. figure  \ref{figure:zipf}(a)) we observe collapse in the direction of the word embeddings $\mathbf{w}_{(\alpha,\,\beta)}$ but their magnitudes depend upon word frequency. We therefore isolate this final observation as
\begin{definition}[Type-III Collapse]  \label{def:3}  The weights $(W,U)$ of the network $h_{W,U}$ form a type-III collapse configuration if and only if
\begin{enumerate}
\item[\rm i)] There exists positive scalars $r_{\beta} \ge 0$ so that $\bw_{(\alpha,\,\beta)} =  r_\beta  \,\cpt_\alpha$ \;  for all $(\alpha,\beta) \in \voc$.
\item[\rm ii)] There exists $c \ge 0$ so that $\bu_{k, \ell} =  c\,  \cpt_\alpha$ \; for all  $(k,\ell)$ satisfying $z_{k, \ell}=\alpha$ and all $\alpha \in \mathcal C$.
\end{enumerate}
hold for some collection $\cpt_1, \ldots, \cpt_{n_c} \in \real^d$ of equiangular vectors.
\end{definition}
\noindent In a type-III collapse we allow the word embedding $\bw_{(\alpha,\,\beta)}$ to have a frequency-dependent magnitude $r_\beta $ while in type-I collapse we force all embeddings to have the same magnitude; this makes type-I collapse a special case of type-III collapse, but not vice-versa.

\subsection{Proving collapse}
Our first result proves that the words embeddings $\bw_{(\alpha,\,\beta)}$ and linear weights $\bu_{k,\,\ell}$ exhibit type-I collapse in an appropriate large-sample limit. When turning from experiment (c.f. figure \ref{figure:first_experiment}) to theory we study the true risk
\begin{align} 
&\mathcal R(W,U) = \frac{1}{K}  \sum_{k=1}^{K}   \E_{  \;\;\bx \sim \mathcal D_{\bz_k}} \Big[  \ell(h_{W,U}(\bx) , k ) \Big]  + \frac{\lambda}{2} \left( \|W\|_F^2 +  \|U\|_F^2  \right)  \label{true_risk}  
\end{align}
rather than the empirical risk $\mathcal R_{\rm emp}(W,U)$ and place a symmetry assumption on the latent variables.
\begin{assumption}[Latent Symmetry] \label{assumption:symmetry}
 For every $k \in [K]$, $r \in [L]$, $\ell \in[L]$, and $\alpha \in [n_c]$ the identities
\begin{equation} \label{sym00}
 \left| \Big\{ k' \in[K]   : \dist(\bz_k,\bz_{k'}) = r \text{ and }  z_{k'\!,\ell} = \alpha \Big\} \right| =  \begin{cases}
\frac{K}{|\mathcal Z|}  { L-1 \choose r} (n_c-1)^r  & \text{ if } z_{k,\ell} = \alpha \\ \\
\frac{K}{|\mathcal Z|}  { L-1 \choose r-1} (n_c-1)^{r-1} & \text{ if } z_{k,\ell} \neq \alpha 
  \end{cases}
\end{equation}
hold, with $\dist(\bz_k, \bz_{k'})$ denoting the Hamming distance between a pair ($\bz_k, \bz_{k'}$) of latent variables.
\end{assumption}
With this assumption in hand we may state our first main result
\begin{theorem}[Full Collapse of $h$]\label{theorem:1}
Assume uniform sampling $\mu_\beta = 1 / s_c\,$ for each word distribution. Let $\tau \ge 0$ denote the unique minimizer of the strictly convex function
$$
H(t) :=  \log \left( 1 -  \frac{K}{n_c^L}  +  \frac{K}{n_c^L} \Big( 1+ (n_c-1) e^{-\eta t } \Big)^L \right) + \lambda t \qquad \text{where }\quad  \eta =    \frac{n_c}{n_c-1} \; \frac{1}{\sqrt{n_w KL}}
$$
and assume that the latent variables  $\bz_1,\,\ldots,\,\bz_K$ are mutually distinct and satisfy the symmetry assumption \ref{assumption:symmetry}. Then any $(W,U)$ in a type-I collapse configuration 
with constants  $c = \sqrt{\tau / n_w}$ and $c' = \sqrt{\tau /(KL)}$ 
is a global minimizer of \eqref{true_risk}.
\end{theorem}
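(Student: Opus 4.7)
The plan is to prove global optimality of the type-I collapse configuration by (a) reducing the nonconvex risk to a convex surrogate via two Jensen-type inequalities, and (b) certifying the candidate as a minimizer of the surrogate via KKT. First I would symmetrize over the within-concept word index: for any $(W,U)$, set $\bar{\bw}_\alpha := \frac{1}{s_c}\sum_\beta \bw_{(\alpha,\beta)}$ and let $\bar W$ be the resulting embedding matrix. Since uniform sampling makes the logits $y_{k'}(\bx)$ linear in the random embeddings $\bw_{\bx_\ell}$, Jensen applied to the convex map $\by \mapsto \log\sum_{k'} e^{y_{k'}}$ gives $\E_{\bx \sim \mathcal D_{\bz_k}}[\ell(h_{W,U}(\bx), k)] \geq \ell(h_{\bar W, U}(\bz_k), k)$, while convexity of $\|\cdot\|^2$ gives $\|\bar W\|_F^2 \leq \|W\|_F^2$. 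Together these yield $\mathcal{R}(W,U) \geq \mathcal{R}(\bar W, U)$, so I may restrict to embeddings of the form $\bw_{(\alpha,\beta)} = \bv_\alpha$ that depend only on concept.

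I next pass to a convex surrogate. Collect the $\bv_\alpha$ as columns of $V \in \real^{d\times n_c}$ and the $\bu_{k,\ell}$ as columns of an auxiliary matrix $U_{\mathrm{stk}} \in \real^{d \times KL}$ (so $\|U_{\mathrm{stk}}\|_F = \|U\|_F$), and introduce the score matrix $B := V^{\!\top} U_{\mathrm{stk}} \in \real^{n_c \times KL}$. The logits are then recovered from $B$ through the linear map $A(B)_{kk'} := \sum_\ell B_{z_{k,\ell},(k',\ell)}$. AM-GM combined with the standard nuclear-Frobenius inequality $\|V^{\!\top} U_{\mathrm{stk}}\|_* \leq \|V\|_F \|U_{\mathrm{stk}}\|_F$ gives $\tfrac{1}{2}(s_c \|V\|_F^2 + \|U\|_F^2) \geq \sqrt{s_c}\,\|B\|_*$, hence $\mathcal{R}(\bar W, U) \geq \mathcal{F}(B)$ where
$$\mathcal{F}(B) \;:=\; \frac{1}{K}\sum_k \ell\bigl(A(B)_{k,:},\,k\bigr) \;+\; \lambda\sqrt{s_c}\,\|B\|_*.$$
Since $A$ is linear, log-sum-exp is convex, and the nuclear norm is convex, $\mathcal{F}$ is convex in $B$ and its global minima are characterized by first-order optimality.

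I then verify KKT for $\mathcal{F}$ at the candidate $B^*_{\alpha,(k',\ell)} := cc'\dotprod{\cpt_\alpha, \cpt_{z_{k',\ell}}}$. Evaluating \eqref{sym00} at $r=0$ with distinct latents forces $K = n_c^L$, and $H$ reduces to $L\log(1 + (n_c-1)e^{-\eta t}) + \lambda t$. The softmax probabilities at $B^*$ take the form $e^{-\eta\tau\, d_{k,k'}}/(1+(n_c-1)e^{-\eta\tau})^L$; applying \eqref{sym00} to telescope $\sum_r \binom{L-1}{r}(n_c-1)^r e^{-\eta\tau r} = (1+(n_c-1)e^{-\eta\tau})^{L-1}$ in the two cases $\alpha = z_{k',\ell}$ and $\alpha\neq z_{k',\ell}$ yields the closed form
$$\nabla \mathcal{F}_{\mathrm{data}}(B^*)_{\alpha,(k',\ell)} \;=\; -\frac{n_c Q}{K}\Bigl[\delta_{\alpha,z_{k',\ell}}-\tfrac{1}{n_c}\Bigr],\qquad Q := \frac{e^{-\eta\tau}}{1+(n_c-1)e^{-\eta\tau}}.$$
A direct SVD of $B^*$ (using $\sum_\alpha \cpt_\alpha = 0$, which gives $B^*$ uniform singular values along an $(n_c{-}1)$-dimensional subspace) shows its canonical nuclear-norm subgradient is $U_0 V_0^{\!\top} = \sqrt{n_c/(KL)}\,[\delta_{\alpha,z_{k',\ell}}-1/n_c]$. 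Since both $-\nabla \mathcal{F}_{\mathrm{data}}$ and $U_0 V_0^{\!\top}$ have column range in $\mathrm{col}(U_0)$, no residual subgradient component is admissible, and the stationarity equation $\nabla \mathcal{F}_{\mathrm{data}} + \lambda\sqrt{s_c}\,U_0 V_0^{\!\top} = 0$ reduces exactly to $Q = (\lambda/n_c)\sqrt{n_w K/L}$, which is $H'(\tau)=0$. Hence KKT holds and $B^*$ globally minimizes $\mathcal{F}$.

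To close the argument I would verify that every inequality in the reduction is tight at the claimed configuration: Jensen is tight because the embeddings are already $\beta$-constant; AM-GM is tight because $s_c\|V\|_F^2 = n_w c^2 = \tau = KL(c')^2 = \|U\|_F^2$; and the nuclear-Frobenius bound is tight by the explicit SVD of $B^*$ computed above. Consequently $\mathcal{R}(W^*,U^*) = \mathcal{F}(B^*) = \min_B \mathcal{F}(B)$, and the chain $\mathcal{R}(W,U) \geq \mathcal{R}(\bar W, U) \geq \mathcal{F}(B) \geq \mathcal{F}(B^*) = \mathcal{R}(W^*,U^*)$ delivers global optimality. The main technical obstacle will be the combinatorial bookkeeping inside the KKT step: one has to use \eqref{sym00} to telescope the binomial softmax sums separately in the two cases $\alpha = z_{k',\ell}$ and $\alpha\neq z_{k',\ell}$, and then match the result against the SVD-derived subgradient; every other ingredient is routine convex analysis.
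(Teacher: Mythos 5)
Your proof is correct and takes a genuinely different route from the paper. The paper's strategy is: derive a sharp lower bound on the unregularized risk via three nested Jensen steps leading to the ``equimargin'' set $\mathcal E$ (their Theorem~\ref{theorem:buffalo}); show any minimizer lies in $\mathcal N$; then characterize the maximizers of the bilinear form $\langle \hat U, WQ^T Z\rangle_F$ under a norm constraint as a set $\mathcal B^I_c$; finally show the type-I configurations equal $\mathcal E\cap\mathcal N\cap\mathcal B^I_c$ and compute the risk there to pin down $c$. You instead symmetrize once over the within-concept index (a single Jensen step), pass to the convex surrogate $\mathcal F(B)=\text{data}(B)+\lambda\sqrt{s_c}\|B\|_*$ via the AM-GM/nuclear-Frobenius bound, and then verify stationarity (KKT) of the explicit candidate $B^*$ directly. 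This is cleaner and more local: the paper's clever averaging over distance spheres $S_r(k)$ and the auxiliary quantities $\overline{\mathfrak M}, \overline{\overline{\mathfrak M}}$ are replaced in your version by the explicit softmax calculation and the SVD of $B^*$. Two things to note. First, your argument is tailored to the forward implication only; the paper's machinery (in particular assumption \ref{symconv} and the set $\mathcal E\cap\mathcal N\cap\mathcal B^I_c$) is set up so that the \emph{converse} (every minimizer is type-I) falls out with little extra work, which your nuclear-norm relaxation does not obviously give since the factorization $B=V^T U_{\mathrm{stk}}$ is not unique. Second, your justification that the hypotheses force $K=n_c^L$ cites $r=0$, but assumption \ref{assumption:symmetry} as stated only ranges over $r\in\{1,\dots,L\}$; the conclusion is still correct, but the right argument is to sum $|S_r(k)|=\frac{K}{n_c^L}\binom{L}{r}(n_c-1)^r$ over $r\in[L]$, equate it with $K-1$ (using mutual distinctness), and solve. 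With that fix the KKT bookkeeping you outline goes through exactly: the two telescoping binomial sums give the $-\frac{(n_c-1)Q}{K}$ and $\frac{Q}{K}$ gradient entries, the SVD of $B^*$ has rank $n_c-1$ with equal singular values $\sqrt{KL/n_c}$, the subgradient condition forces the residual term to vanish, and matching scalars recovers $H'(\tau)=0$.
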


We also prove two strengthenings of this theorem in the appendix. First, under an additional technical assumption on the latent variables $\bz_1, \ldots, \bz_K$ we prove its converse; any $(W,U)$ that minimizes \eqref{true_risk} must be in a  type-I collapse configuration (with the same constants $c, c'$). This additional assumption is mild but technical, so we state it in  section  \ref{section:sym} of the appendix. We also prove that if $d>n_w$ then $\mathcal R(W,U)$ does not have spurious local minimizers; all local minimizers are global (see appendix \ref{section:spurious}).

The symmetry assumption, while odd at a first glance, is both needed and natural.  Indeed, a type-I collapse configuration is highly symmetric and perfectly homogeneous. We therefore expect that such configurations could only solve an analogously `symmetric' and `homogeneous' optimization problem. In our case this means using the true risk  \eqref{true_risk} rather than the empirical risk \eqref{empirical_risk}, and imposing that the latent variables satisfy the symmetry assumption. 
This assumption means that all latent variables play interchangeable roles, or at an intuitive level, that there is no `preferred' latent variable. 
 To understand this better, consider the extreme case $K=n_c^L$ and $\{\bz_1, \ldots, \bz_K\} = \mathcal Z$, meaning that  all latent variables in $\mathcal Z$ are involved in the task.  The identity \eqref{sym00} then holds by simple combinatorics. 
 We may therefore think of  \eqref{sym00} as an equality that holds in the large $K$ limit, so it is neither impossible nor  unnatural. We refer to section  \ref{section:sym} of the appendix for a more in depth discussion of assumption \ref{assumption:symmetry}.


While  theorem \ref{theorem:1} proves global optimality of  type-I collapse configurations in the limit of large $\nspl$ and large $K$, these solutions still provide valuable predictions when $K$ and $\nspl$ have small to moderate values. For example, in the setting of figure \ref{figure:first_experiment} ($\nspl=5$ and $K=1000$) the theorem predicts that word embeddings should  have a norm $c = \sqrt{\tau / n_w} =1.42214$ (with $\tau$ obtained by minimizing  $H(t)$ numerically). By experiment we find that, on average, word embeddings have norm $1.41$ with standard deviation $0.13$. To take another example, when $K=50$ and $\nspl=100$ (and keeping $n_c=3$, $n_w=1200$, $L=15$) the theorem predicts that words embeddings should have norm $0.61602$. This compares well against the values $0.61 \pm 0.06$ observed in experiments. The idealized solutions of the theorem capture their  empirical counterparts very well.



For non-uniform $\mu_\beta$ we expect $h_{W,\,U}$ to exhibit type-III collapse rather than type-I collapse. Additionally, in our long-tail experiments, we observe that frequent words (i.e. large $\mu_\beta$) receive large embeddings. We now prove that this is the case in our next theorem. To state it, consider the following system of $s_c+1$ equations
\begin{align}
 & \frac{\lambda}{L} \; \frac{r_\beta}{c}  \left(  n_c-1 +  \exp\left(  \frac{n_c}{n_c-1} c\, r_\beta   \right)\right)   =  {\mu_\beta}   \qquad \text{ for all } 1 \le \beta \le s_c \label{00sys1} \\
 &  \sum_{\beta = 1}^{s_c} \left(\frac{r_\beta}{c} \right)^2 = L n_c^{L-1} \label{00sys2}
\end{align}
for the unknowns $(c,r_1, \ldots, r_{s_c})$. If the regularization parameter $\lambda$ is small enough, namely
 \begin{equation}
 \lambda^2<  \frac{L}{n_c^{L+1}} \sum_{\beta=1}^{s_c} \mu_\beta^2 \label{00lambdabound}
\end{equation}
then \eqref{00sys1}--\eqref{00sys2} has a unique solution. This solution defines the magnitudes of the word embeddings.  The left hand side of \eqref{00sys1} is an increasing function of $r_\beta$,  so $\mu_\beta < \mu_{\beta'}$ implies  $r_\beta < r_{\beta'}$ and more frequent words receive larger embeddings.

\begin{theorem}[Directional Collapse of $h$] \label{theorem:2}
Assume $\lambda$ satisfies \eqref{00lambdabound}, $K=n_c^L$ and $\{\bz_1, \ldots, \bz_K\} = \mathcal Z$.  Suppose $(W,\,U)$ is in a type-III collapse configuration for some constants  $(c,r_1, \ldots, r_{s_c})$. Then $(W,U)$ is a critical point of the true risk \eqref{true_risk} if and only if $(c,r_1, \ldots, r_{s_c})$ solve the system \eqref{00sys1}--\eqref{00sys2}. 
\end{theorem}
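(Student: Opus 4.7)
The plan is to compute the Euclidean gradient of the true risk directly at a type-III configuration and show that every component reduces to a scalar multiple of an equiangular vector $\cpt_\alpha$. Since $\cpt_\alpha \neq 0$, the vanishing of the gradient is then equivalent to a set of scalar equations that will match \eqref{00sys1}--\eqref{00sys2}. The crucial simplification comes from the assumption $K=n_c^L$ and $\{\bz_1,\ldots,\bz_K\}=\mathcal Z$: every concept sequence appears exactly once as a latent variable, so sums over $k$ or $k'$ can be reorganized combinatorially by the ``signature'' $\epsilon_\ell = 1$ if $z_{k,\ell}=z_{k',\ell}$ and $\epsilon_\ell = -1/(n_c-1)$ otherwise.

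First I observe that, at a type-III configuration, the logit takes the form $y_{k'}(\bx) = c\sum_\ell r_{\beta_\ell}\,\epsilon_\ell$ when $\bx \sim \mathcal D_{\bz_k}$, by the equiangular inner-product formula. Because the set of latent variables is all of $\mathcal Z$, the softmax denominator factorizes across positions: $\sum_{k'} e^{y_{k'}(\bx)} = \prod_\ell Q(r_{\beta_\ell})$ with $Q(r)=e^{cr}+(n_c-1)e^{-cr/(n_c-1)}$. Each softmax probability $p_{k'}(\bx)$ then also factorizes as a product over positions, and consequently every relevant expectation depends on only one word index at a time. This product structure is the main analytical lever.

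For $\nabla_{\bw_{(\alpha,\beta)}}\mathcal R$, the chain rule produces terms $(p_{k'}-\delta_{k,k'})\bu_{k',\ell}$ summed over $k'$ whenever word $(\alpha,\beta)$ appears at position $\ell$. Since $\bu_{k',\ell}=c\cpt_{z_{k',\ell}}$, grouping latent variables by signature relative to $\bz_k$ and using the telescoping identity $\sum_{\alpha'\neq\alpha}\cpt_{\alpha'}=-\cpt_\alpha$ collapses the sum over $k'$ into a scalar times $\cpt_\alpha$. Combining the product factorization of $p_{k'}$ with the normalization $A_+ + (n_c-1)A_- = 1$, where $A_+ = \sum_\beta \mu_\beta e^{cr_\beta}/Q(r_\beta)$ and $A_- = \sum_\beta \mu_\beta e^{-cr_\beta/(n_c-1)}/Q(r_\beta)$, eliminates all contributions from positions $\ell' \neq \ell$. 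Counting the $LK/n_c$ pairs $(k,\ell)$ with $z_{k,\ell}=\alpha$ and setting the coefficient of $\cpt_\alpha$ to zero yields exactly \eqref{00sys1}.

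The computation for $\nabla_{\bu_{k',\ell^*}}\mathcal R$ is analogous: group latent variables by signature relative to $\bz_{k'}$, use the telescoping identity together with the product factorization, and obtain a scalar multiple of $\cpt_{z_{k',\ell^*}}$. Substituting the per-$\beta$ relation from \eqref{00sys1} into this scalar and invoking $K=n_c^L$ converts the vanishing condition into $\sum_\beta (r_\beta/c)^2 = L n_c^{L-1}$, which is \eqref{00sys2}. The main obstacle is carrying out the combinatorial reorganization cleanly: for each signature pattern one must track the $(n_c-1)$-factor multiplicities and the $\pm$ sign introduced by the telescoping identity. Once this bookkeeping is in place, the normalization $A_+ + (n_c-1)A_- = 1$ produces a dramatic cancellation across all positions $\ell' \neq \ell$, and both scalar equations of the system fall out, giving the desired if-and-only-if characterization.
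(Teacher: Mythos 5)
Your proposal follows essentially the same route as the paper's proof: compute the gradient of $\mathcal R_0$ at a type-III configuration, exploit $\{\bz_1,\ldots,\bz_K\}=\mathcal Z$ to factorize the softmax over positions (the paper's $\psi$ is your $Q$), use the equiangular/telescoping identity $\sum_{\alpha'\neq\alpha}\cpt_{\alpha'}=-\cpt_\alpha$ to reduce each gradient to a scalar multiple of $\cpt_\alpha$, and then read off the critical-point system, obtaining \eqref{00sys2} by substituting \eqref{00sys1}. The only cosmetic difference is that you bundle the concept-level and word-level marginalizations at spectator positions into the single identity $A_+ + (n_c-1)A_- = 1$, whereas the paper cancels the concept sum against $\psi$ and the word sum against $\sum_\beta\mu_\beta=1$ separately; the bookkeeping via the quantity $\Phi_{(\alpha,\beta),(k,\ell)}$ is otherwise the same.
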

Essentially this theorem shows that word embeddings \emph{must} depend on word frequency and so feature collapse fails. Even in the fully-sampled case $K=n_c^L$ and $\{\bz_1, \ldots, \bz_K\} = \mathcal Z$ a network exhibiting type-I collapse is never critical if the word distributions are non-uniform. While we conjecture global optimality of the solutions in theorem \ref{theorem:2} under appropriate symmetry assumptions, we have no proof of this yet. Inequality \eqref{00lambdabound} is the natural one for  theorem  \ref{theorem:2}, for if $\lambda$ is too large the trivial solution $(W,U)=(0,0)$ is the only one. 

Our final theorem completes the picture; it shows that normalization restores global optimality of fully-collapsed configurations. For the network $h^{*}_{W,\,U}$ with LayerNorm, we use the appropriate limit
 \begin{align} 
&\mathcal R^*(W,U) = \frac{1}{K}  \sum_{k=1}^{K}   \E_{  \;\;\bx \sim \mathcal D_{\bz_k}} \Big[  \ell(h^*_{W,U}(\bx) , k ) \Big]  + \frac{\lambda}{2}   \|U\|_F^2    \label{true_risk_LN}  
\end{align}
of the associated empirical risk and place no assumptions on the sampling distribution.
\begin{theorem}[Full Collapse of $h^*$] \label{theorem:3} 
Assume the non-degenerate condition $\mu_\beta > 0$ holds. Let $\tau \ge 0$ denote the unique minimizer of the strictly convex function
\[
H^*(t) =  \log \left( 1 -  \frac{K}{n_c^L}  +  \frac{K}{n_c^L} \Big( 1+ (n_c-1) e^{-\eta^* t } \Big)^L \right) + \frac{\lambda}{2} t^2 \qquad  \text{where } \eta^* =    \frac{n_c}{n_c-1} \; \frac{1}{\sqrt{ KL/d}} 
\]
and assume the latent variables $\bz_1, \ldots, \bz_K$ are mutually distinct and satisfy  assumption \ref{assumption:symmetry}. Then any $(W,\,U)$ in a type-II collapse configuration with constant $c = \tau / \sqrt{KL}$   is a  global minimizer of \eqref{true_risk_LN}.
\end{theorem}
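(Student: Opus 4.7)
My first move is to reparameterize. Since $\mathcal R^*$ regularizes only $U$ and its data loss depends on $W$ purely through the word representations $\bv_{(\alpha,\beta)} := \varphi(\bw_{(\alpha,\beta)})$, I will replace the $\bw_{(\alpha,\beta)}$ by $\bv_{(\alpha,\beta)}$ constrained to the mean-zero sphere $S := \{\bv \in \real^d : \dotprod{\bv,\ones_d}=0,\ \|\bv\|^2 = d\}$, onto which $\varphi$ is surjective. The target is the two-sided sandwich $\mathcal R^*(V,U) \geq H^*(\|U\|_F) \geq H^*(\tau)$, realized with equality at any type-II configuration of constant $c = \tau/\sqrt{KL}$.

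\textbf{Upper bound: value at a type-II configuration.} I plan to evaluate $\mathcal R^*$ directly at such a configuration. For $\bx \sim \mathcal D_{\bz_k}$ the scores satisfy $\tilde h(\bx)_{k'} = c\sqrt d \sum_\ell \dotprod{\cpt_{z_{k',\ell}}, \cpt_{z_{k,\ell}}}$, which by the equiangular relations \eqref{equi} depends on the pair $(k,k')$ only through $r := \dist(\bz_k, \bz_{k'})$ and gives $\tilde h(\bx)_{k'} - \tilde h(\bx)_k = -\tfrac{n_c}{n_c-1} c\sqrt d\, r = -\eta^* \tau\, r$. Summing the symmetry assumption \ref{assumption:symmetry} over $\alpha \in [n_c]$ produces the count $|\{k' : \dist(\bz_k, \bz_{k'}) = r\}| = \tfrac{K}{n_c^L}\binom{L}{r}(n_c-1)^r$, and the binomial theorem then collapses $1+\sum_{k'\neq k} e^{-\eta^*\tau r}$ into $1 - \tfrac{K}{n_c^L} + \tfrac{K}{n_c^L}(1+(n_c-1)e^{-\eta^*\tau})^L$. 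Adding the regularization $\tfrac{\lambda}{2}\|U\|_F^2 = \tfrac{\lambda}{2}\tau^2$ reproduces $H^*(\tau)$ exactly.

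\textbf{Lower bound and main obstacle.} For an arbitrary feasible pair $(V,U)$, I start with Jensen's inequality applied to the convex cross-entropy loss $\by \mapsto \log\sum_{k'} e^{y_{k'}} - y_k$, pushing $\E_{\bx \sim \mathcal D_{\bz_k}}$ through: this yields $\E_\bx\,\ell(\tilde h(\bx), k) \geq \ell(\E_\bx \tilde h(\bx), k)$, where $\E_\bx \tilde h(\bx)_{k'} = \sum_\ell \dotprod{\bu_{k',\ell}, \bar\bv_{z_{k,\ell}}}$ and $\bar\bv_\alpha := \sum_\beta \mu_\beta \bv_{(\alpha,\beta)}$. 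Convexity of $\|\cdot\|^2$ and $\sum_\beta \mu_\beta = 1$ force $\|\bar\bv_\alpha\|^2 \leq d$, and mean-zeroness is inherited from the individual $\bv_{(\alpha,\beta)}$. The hard part, and the main obstacle, is then to lower bound the resulting expression $\tfrac{1}{K}\sum_k \bigl[\log\sum_{k'}\exp(\sum_\ell \dotprod{\bu_{k',\ell}, \bar\bv_{z_{k,\ell}}}) - \sum_\ell \dotprod{\bu_{k,\ell}, \bar\bv_{z_{k,\ell}}}\bigr]$ by the scalar quantity $G(\|U\|_F) := H^*(\|U\|_F) - \tfrac{\lambda}{2}\|U\|_F^2$. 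My strategy is to group pairs $(k,k')$ by Hamming distance $r$ using the symmetry assumption, then chain a Cauchy--Schwarz bound --- pairing $\|U\|_F$ against the effective scale $\sqrt d$ --- with a second convexity step that bundles the equiangular-gap factor $1-(-1/(n_c-1))=n_c/(n_c-1)$ together with the way $\|U\|_F^2$ distributes across the $KL$ slots to assemble the precise constant $\eta^* = (n_c/(n_c-1))/\sqrt{KL/d}$ that appears in $H^*$. Once $\mathcal R^*(V,U) \geq H^*(\|U\|_F)$ is secured, strict convexity of $H^*$ --- immediate for the $\tfrac{\lambda}{2}t^2$ piece, and following from log-convexity of the first summand by a direct second-derivative calculation --- yields the unique minimizer $\tau$ and closes the proof.
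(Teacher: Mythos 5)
Your overall strategy matches the paper's: evaluate $\mathcal{R}^*$ explicitly on a type-II configuration to get the upper value $H^*(\tau)$, and sandwich the risk from below by $H^*(\|U\|_F)$ for arbitrary $(W,U)$. The upper-bound computation is correct and is essentially the paper's Lemma G in the appendix. The first Jensen step (pushing $\E_{\bx\sim\mathcal D_{\bz_k}}$ through the loss to reduce to the averaged representations $\bar\bv_\alpha = \sum_\beta \mu_\beta\bv_{(\alpha,\beta)}$) is also sound and appears in the paper as the averaging over $\mathcal D_{\bz_k}$ that produces $\overline X_k = Q^TZ_k$, with $\|\bar\bv_\alpha\|\le\sqrt d$ by convexity.

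The gap is in what you call the main obstacle, and it is a real one. After the first Jensen step you still have a sum over $k$ of $\log\bigl(1+\sum_{k'\ne k} e^{-m_{k,k'}}\bigr)$ with $m_{k,k'} = \langle\hat U_k-\hat U_{k'},\bar V Z_k\rangle_F$, and you need \emph{two more} Jensen steps (averaging over $k'\in S_r(k)$ at fixed $r$, then averaging over $k$) to collapse this into a function of a single scalar. The crucial ingredient that makes those steps close is a mean-value identity that the symmetry assumption confers on the latent variables: $Z_k - \frac{1}{|S_r(k)|}\sum_{j\in S_r(k)} Z_j = \theta_r Z_k + A_r$ with $\theta_r = \frac{n_c}{n_c-1}\frac{r}{L}$ (Lemma~\ref{lemma:sym}(iii) in the appendix). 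That identity is where the factor $n_c/(n_c-1)$ and hence $\eta^*$ actually enter the lower bound --- not, as you suggest, from the equiangular inner-product gap, which is only available after you know the optimizer is equiangular and so cannot be invoked for arbitrary $(W,U)$. Two further ingredients you omit are also load-bearing: (a) the identity for the averaged margin $\mathfrak N_{W,U}(r) = \frac{\theta_r}{K}\langle\hat U, \bar VZ\rangle_F$ requires $\sum_k\hat U_k = 0$ to kill the $A_r$ contribution, so you must first argue that global minimizers can be centered (the paper's Lemma on $\mathcal N$); and (b) you still need to show that the bilinear form $\langle\hat U,\bar V Z\rangle_F$ is maximized at $\|\hat U\|_F\sqrt{dKL}$ over all mean-zero $\bar V$ with $\|\bar\bv_\alpha\|\le\sqrt d$ and $\|\hat U\|_F$ fixed. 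Your Cauchy--Schwarz chain could in principle do this using $ZZ^T=\frac{KL}{n_c}I_{n_c}$ (another consequence of Lemma~\ref{lemma:sym}), but as written it is a sketch; the paper solves the corresponding constrained problem via KKT conditions and needs the mean-zero projection explicitly. Filling in (a), (b), and the two remaining Jensen steps with the mean-value identity is exactly the content of the paper's Sections C and E, and is not supplied by a one-shot Cauchy--Schwarz.
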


As for theorem \ref{theorem:1}, we prove the converse  under an additional technical assumption on the latent variables. Any $(W,U)$ that minimizes \eqref{true_risk_LN} must be in a type-II collapse configuration with $c = \tau / \sqrt{KL}$. The proof and exact statement  can be found in section \ref{section:thm3} of the appendix.

\paragraph{Acknowledgements.}
 Xavier Bresson is supported by NUS Grant ID R-252-000-B97-133.

\bibliography{bibliography}
\bibliographystyle{plain}


\newpage
\appendix

\newtheorem*{assumption1}{Assumption 1}
\newtheorem*{definition1}{Definition 1}
\newtheorem*{definition2}{Definition 2}
\newtheorem*{definition3}{Definition 3}
\newtheorem*{theorem1}{Theorem 1}
\newtheorem*{theorem2}{Theorem 2}
\newtheorem*{theorem3}{Theorem 3}

 \setcounter{lemma}{0}
  \setcounter{definition}{0}
   \setcounter{proposition}{0}
    \setcounter{assumption}{0}

\renewcommand*{\thetheorem}{\Alph{theorem}}
\renewcommand*{\thelemma}{\Alph{lemma}}
\renewcommand*{\thedefinition}{\Alph{definition}}
\renewcommand*{\thecorollary}{\Alph{corollary}}
\renewcommand*{\theproposition}{\Alph{proposition}}
\renewcommand*{\theassumption}{\Alph{assumption}}
\renewcommand*{\theclaim}{\Alph{claim}}


\begin{center}
{\Large \bf Appendix }
\end{center}

\

Section \ref{section:notation}  provides formulas for the networks $h_{W,U}$ and $h^*_{W,U}$  depicted on figure \ref{figure:net} of the main paper, and formula  for the distribution $\mathcal D_{\bz_k}: \data \to [0,1]$  underlying the data model depicted on figure \ref{figure:datamodel} of the main paper. We also use this section to introduce various notations that our proofs will rely on.

Section \ref{section:sym} is devoted to the symmetry assumptions that we impose on the latent variables.
We start with an in depth discussion of assumption 1 from the main paper. This assumption is  required for theorem \ref{theorem:1} and \ref{theorem:3} to hold. We then present and discuss an additional technical assumption on the latent variables (c.f. assumption \ref{symconv}) that we will use to prove the \emph{converse} of theorems \ref{theorem:1} and \ref{theorem:3}. 

Whereas the first two sections are essentially devoted to notations and discussions, most of the analysis occurs in section  \ref{section:lowerbound}, \ref{section:thm1}, \ref{section:thm3} and \ref{section:thm2}. 
We start by deriving a sharp lower bound for  the unregularized  risk in section \ref{section:lowerbound}. Theorem 1 from the main paper, as well as its converse, are proven in section \ref{section:thm1}.
Theorem \ref{theorem:3} and its converse are proven in section \ref{section:thm3}. Finally we prove theorem \ref{theorem:2} in section \ref{section:thm2}.

We conclude this appendix by proving in section \ref{section:spurious} 
that if $d> \min(n_w,KL)$,  then the risk associated to the network $h_{W,U}$   does not have spurious local minimizers; all local minimizers are global. This proof follows the same strategy that was used in \cite{zhu2021geometric}.

\section{Preliminaries and notations}\label{section:notation}

\subsection{Formula for the neural networks}
Recall that the vocabulary is the set
\[
\voc = \{(\alpha,\beta) \in \nat^2 :  1 \le \alpha \le n_c \text{ and } 1 \le \beta \le s_c\}, 
\]
and that we think of  the tuple $(\alpha,\beta) \in \voc $ as representing 
 the $\beta^{th}$ word of the $\alpha^{th}$ concept. The data space is $\data = \voc^L$, and  a  sentence $\bx \in \mathcal X$ is a sequence of 
 $L$ words:
$$
\bx = [ (\alpha_1,\beta_1),\ldots, (\alpha_L,\beta_L)]  \qquad  1 \le \alpha_\ell \le n_c \text{ and }   1 \le \beta_\ell \le s_c.
$$
The two neural networks $h,h^*$ studied in this work process such a sentence $\bx \in \data$ in multiple steps: 
\begin{enumerate}
\item Each word $(\alpha_\ell,\beta_\ell)$ of the sentence is encoded into  a one-hot vector.
\item These one-hot vectors are multiplied by a matrix $W$ to produce word embeddings that live in a $d$-dimensional space.
\item Optionally (i.e. in the case of the network $h^*$),  these word embeddings  go through a LayerNorm module without learnable parameters. 
\item The word embeddings are concatenated   and  then goes through a linear transformation $U$. 
\end{enumerate}
We now formalize these 4 steps, and in the process,  we set the notations on which we will rely in all our proofs.

\paragraph{Step 1: One-hot encoding.}  Without loss of generality, we choose the following one-hot encoding scheme: 
  word $(\alpha,\beta) \in \voc $ receives the one-hot vector
which has a $1$ in entry $(\alpha-1) s_c + \beta$ and $0$ everywhere else. To formalize this, we define the one-hot encoding function
\begin{equation} \label{hot}
\hot(\alpha,\beta)  = \be_{ (\alpha-1) s_c + \beta }
\end{equation}
where $\be_i$  denotes the $i^{th}$ basis vector of $\real^{n_w}$.
The  one-hot encoding function $\hot$ can also be applied to a sequence of words.  
Given a sentence $\bx = [ (\alpha_1,\beta_1),\ldots, (\alpha_L,\beta_L)] \in \data$ we let
\begin{equation} \label{hot2}
 \hot(\bx) :=  \begin{bmatrix}
 \vert & \vert & & \vert \\
  \hot(\alpha_1,\beta_1) & \hot(\alpha_2,\beta_2) & \ldots & \hot(\alpha_L,\beta_L)  \\
   \vert & \vert & & \vert 
  \end{bmatrix} \in \real^{n_w \times L}
\end{equation}
and so $\hot$ maps sentences to $n_w \times L$ matrices.

\paragraph{Step 2: Embedding.} The embedding matrix $W$ has $n_w$ columns and each of these columns belongs to $\real^d$.  Since $\hot(\alpha,\beta)$ denote the one-hot vector associated to word $(\alpha,\beta) \in \voc$, we define the \emph{embedding} of word $(\alpha,\beta)$ by 
\begin{equation}
\bw_{(\alpha,\beta)} := W \;  \hot(\alpha,\beta)  \; \in \real^d \label{wordembedding} .
\end{equation}
Due to   \eqref{hot}, this means that $\bw_{(\alpha,\beta)}$ is the $j^{th}$ column of $W$, where $j = (\alpha-1) s_c + \beta $.
The embedding matrix $W$  can  therefore be visualized as follow (for concreteness we choose $n_c=3$ and $n_w=12$ as in figure 1 of the main paper):
\begin{align*}
W = &
\begingroup 
\setlength\arraycolsep{3pt}
\begin{bmatrix}
&\vert & \vert & \vert & \vert & &\vert & \vert & \vert & \vert& & \vert & \vert & \vert & \vert &\\
&\bw_{(1,1)} & \bw_{(1,2)} & \bw_{(1,3)} & \bw_{(1,4)} & \qquad  & \bw_{(2,1)} & \bw_{(2,2)} & \bw_{(2,3)} & \bw_{(2,4)} & \qquad &\bw_{(3,1)} & \bw_{(3,2)} & \bw_{(3,3)} & \bw_{(3,4)}  &\\ \vspace{-0.3cm} \\
&\vert & \vert & \vert & \vert & & \vert & \vert & \vert & \vert & & \vert & \vert & \vert & \vert &
\end{bmatrix}
\endgroup
\\
& \;\;  \underbrace{\hspace{3.6cm}}_{\text{\tiny Embeddings of the  words  in the $1^{st}$ concept.}}
\hspace{0.5cm}\underbrace{\hspace{3.5cm}}_{\text{\tiny Embeddings of the  words in the $2^{nd}$ concept.}}
\hspace{0.5cm}\underbrace{\hspace{3.5cm}}_{\text{\tiny Embeddings of the  words in the $3^{rd}$ concept.}}
 \end{align*}
Given a sentence $\bx = [ (\alpha_1,\beta_1),\ldots, (\alpha_L,\beta_L)] \in \data$, appealing to \eqref{hot2} and \eqref{wordembedding}, we find that   
\begin{equation} \label{zebra}
W \hot(\bx) =
\begingroup 
\begin{bmatrix}
\vert & \vert &  & \vert  \\
\bw_{(\alpha_1,\beta_1)} & \bw_{(\alpha_2,\beta_2)} & \cdots & \bw_{(\alpha_L, \beta_L)} \\ \vspace{-0.3cm} \\
\vert & \vert &  & \vert  \\
\end{bmatrix}
\endgroup
\in \real^{d \times L}
\end{equation}
and therefore  $W \hot(\bx)$ is the matrix that contains the $d$-dimensional embeddings of the words that constitute the sentence $\bx \in \data$.

 \paragraph{Step 3: LayerNorm.}  Recall from the main paper that the  LayerNorm function $\vphi: \real^d \to \real^d$ is defined by
 \begin{equation*}
 \vphi(\bv)= \frac{\bv - {\rm mean}(\bv)  \ones_d }{ \sigma(\bv)}    \quad \text{where } \quad
{\rm mean}(\bv) = \frac{1}{d} \sum_{i=1}^d v_i
\;\;\; \text{and} \;\;\;
\sigma^2(\bv) =  \frac{1}{d} \sum_{i=1}^d  \big(v_i - {\rm mean}(\bv) \big)^2,
\end{equation*}
We will often apply this function column-wise to a matrix. For example if $V$ is the $d \times m$ matrix
$$
V = \begin{bmatrix}
& \vert & \vert &  & \vert & \\
&\bv_1 &\bv_2 & \cdots & \bv_m &\\
&\vert & \vert & & \vert &
\end{bmatrix}, 
\qquad \text{then} \qquad 
\vphi(V) = \begin{bmatrix}
& \vert & \vert &  & \vert & \\
&\vphi(\bv_1) &\vphi(\bv_2) & \cdots & \vphi(\bv_m )&\\
&\vert & \vert & & \vert &
\end{bmatrix} 
$$
Applying $\vphi$ to \eqref{zebra} gives 
\begin{equation} \label{zebra2}
\vphi\Big( W \hot(\bx) \Big) =
\begingroup 
\begin{bmatrix}
\vert & \vert &  & \vert  \\ \vspace{-0.1cm}  \\
\vphi\Big( \bw_{(\alpha_1,\beta_1)}  \Big) & \vphi\Big( \bw_{(\alpha_2,\beta_2)}  \Big) & \cdots & \vphi\Big( \bw_{(\alpha_L, \beta_L)}  \Big)  \\ \vspace{-0.1cm} \\
\vert & \vert &  & \vert  \\
\end{bmatrix}
\endgroup
\in \real^{d \times L}
\end{equation}
and so $\vphi\left( W \hot(\bx) \right)$
contains the \emph{word representations} of the words from the input sentence (recall that by word representations we mean the word embeddings \emph{after} the LayerNorm). 

\paragraph{Step 4: Linear Transformation.} Recall from the main paper that 
\begin{align}\label{eq:upart}
U = \begin{bmatrix}
 \hor\bu_{1,1} \hor &  \hor\bu_{1,2} \hor & \cdots  &\hor\bu_{1,L} \hor \\
 \hor\bu_{2,1} \hor &  \hor\bu_{2,2} \hor & \cdots  &\hor\bu_{2,L} \hor \\
 \vdots  &\vdots  & & \vdots \\
  \hor\bu_{K,1} \hor &  \hor\bu_{K,2} \hor & \cdots  &\hor\bu_{K,L} \hor 
\end{bmatrix} \in \real^{K \times Ld}
\end{align}
where each vector $\bu_{k,\ell}$ belongs to $\real^d$. The neural networks $h_{W,U}$ and $h_{W,U}^*$ are then given by the formula
\begin{align}
& h_{W,U}(\bx) = U \;  \text{Vec} \left[  W \hot(\bx)   \right] \\
& h_{W,U}^*(\bx) = U \;  \text{Vec} \left[  \vphi\Big( W \hot(\bx) \Big)   \right]
\end{align}
where ${\rm Vec}: \real^{d \times L} \to \real^{dL}$ is the function that takes as input a $d \times L$ matrix and flatten it out into a vector with $dL$ entries (with the first column filling the first $d$ entries of the vector, the second column filling the next $d$ entries, and so forth). It will prove convenient to gather the $L$ vectors $\bu_{k,\ell}$ that constitute the $k^{th}$ row of $U$ into the matrix
\begin{equation} \label{U_k}
\hat U_k  = 
 \begin{bmatrix}
 \vert & \vert &  & \vert  \\
\bu_{k,1} &\bu_{k,2} & \cdots & \bu_{k,L} \\
\vert & \vert & & \vert 
\end{bmatrix} \in \real^{d \times L}
\end{equation}
With this notation, we have the following alternative expressions for the networks $h_{W,U}$ and $h_{W,U}^*$
\begin{equation} \label{def:net}
 h_{W,U}(\bx) = \begin{bmatrix} 
  \vspace{-0.2cm} \\
 \dotprodbig{ \; \hat U_1 \; , \;  W \, \hot(\bx)} \\  \vspace{-0.2cm} \\
 \dotprodbig{\; \hat U_2 \;, \; W \, \hot(\bx)} \\  \\
 \vdots \\   \\
 \dotprodbig{ \; \hat U_K  \;, \; W \hot(\bx)} \\ \vspace{-0.2cm}
 \end{bmatrix}
 \qquad \text{ and } \qquad
  h_{W,U}^*(\bx) = \begin{bmatrix} 
  \vspace{-0.2cm} \\
 \dotprodbig{ \; \hat U_1 \; , \;  \vphi\Big( W \, \hot(\bx) \Big) }  \\  \vspace{-0.2cm} \\
 \dotprodbig{\; \hat U_2 \;, \;  \vphi\Big( W \, \hot(\bx)  \Big)} \\  \\
 \vdots \\   \\
 \dotprodbig{ \; \hat U_K  \;, \;  \vphi\Big( W \hot(\bx)  \Big)} \\ \vspace{-0.2cm}
 \end{bmatrix}
 \end{equation}
 where $\langle \cdot, \cdot \rangle_F$ denote the Frobenius inner product between matrices (see next subsection for a definition).
 
  Finally, we use $\hat U$ to denote the matrix obtained by  concatenating the matrices $\hat U_1, \ldots, \hat U_K$, that is
 \begin{equation} \label{hatU}
 \hat U := \begin{bmatrix}
 \hat U_1 & \hat U_2 & \cdots & \hat U_K
 \end{bmatrix} \in \real^{d \times KL}
 \end{equation}
 The matrix $\hat U$, which is nothing but a reshaped version of the original weight matrix $U \in \real^{K \times Ld}$, will play a crucial role in our analysis.

 \subsection{Basic properties of the Frobenius inner product}
 We recall that the Frobenius inner product between two matrices $A,B \in \real^{m\times n}$ is defined by
 $$
 \dotprodbis{A,B} = \sum_{i=1}^m \sum_{j=1}^n A_{ij} B_{ij}
 $$
 and that the Frobenius norm of a matrix $A\in \real^{m\times n}$ is given by 
 $
 \|A \|_F  = \sqrt{ \dotprodbis{A,A} }
 $.
 In the course of our proofs, we will constantly appeal to the following property of the Frobenius inner product, so we state it in a lemma once and for all.
 \begin{lemma} \label{lemma:magictrace} Suppose $A \in \real^{m \times n}$, $B \in \real^{m \times r}$ and $C \in \real^{r \times n}$. Then 
 \[
  \dotprodbis{A,BC}  =  \dotprodbis{B^TA,C}  \qquad \text{ and } \qquad  \dotprodbis{A,BC}  =  \dotprodbis{AC^T,B}
  \]
 \end{lemma}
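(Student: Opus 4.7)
The plan is to reduce both identities to a single triple-indexed sum and rearrange the factors. Expanding the Frobenius inner product coordinate-wise gives
\[
\dotprodbis{A,BC} \;=\; \sum_{i=1}^{m}\sum_{j=1}^{n} A_{ij}(BC)_{ij} \;=\; \sum_{i=1}^{m}\sum_{j=1}^{n}\sum_{k=1}^{r} A_{ij}\,B_{ik}\,C_{kj},
\]
so the whole lemma really just records the fact that this triple sum is symmetric in which factor we group first.

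For the first identity, I would regroup the sum as $\sum_{k,j}\bigl(\sum_i B_{ik}A_{ij}\bigr)C_{kj} = \sum_{k,j}(B^{T}A)_{kj}C_{kj} = \dotprodbis{B^{T}A,C}$. For the second, I would instead regroup as $\sum_{i,k}B_{ik}\bigl(\sum_j A_{ij}C_{kj}\bigr) = \sum_{i,k}B_{ik}(AC^{T})_{ik} = \dotprodbis{AC^{T},B}$. Both regroupings only require checking that the dimensions line up, which is immediate from the hypothesis $A\in\real^{m\times n}$, $B\in\real^{m\times r}$, $C\in\real^{r\times n}$: the products $B^{T}A\in\real^{r\times n}$ and $AC^{T}\in\real^{m\times r}$ are well-defined and conformable with $C$ and $B$ respectively.

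An equivalent and perhaps slightly slicker route would be to note $\dotprodbis{A,BC}=\tr(A^{T}BC)$ and then invoke the cyclic property of the trace, writing $\tr(A^{T}BC)=\tr((B^{T}A)^{T}C)$ for the first identity and $\tr(A^{T}BC)=\tr(CA^{T}B)=\tr((AC^{T})^{T}B)$ for the second. Either formulation is essentially a one-line verification, and there is no real obstacle here — the lemma is stated only so that later proofs can freely move a factor from one side of the Frobenius pairing to the other without redoing the indices each time.
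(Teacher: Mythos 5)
Your proposal is correct, and the trace/cyclic-property variant you mention at the end is exactly the proof the paper gives; your index-regrouping argument is an equally valid, more elementary derivation of the same fact. Either route is fine, and both amount to a one-line verification.
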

 \begin{proof}
 The Frobenius inner product can be expressed as
 $
  \dotprodbis{A,B} = \tr(A^TB)
 $,
 and so we have
 \begin{align*}
  \dotprodbis{A,BC} & =   \tr(A^TBC)  =  \tr\left( \left( B^TA \right)^TC\right)  =  \dotprodbis{B^TA,C}. 
 \end{align*}
 Using the cyclic property of the trace, we also get
 \begin{align*}
  \dotprodbis{A,BC} & =   \tr(A^TBC) =  \tr(C A^TB) =  \tr\left( \left( A C^T \right)^TB\right)=  \dotprodbis{AC^T,B} 
 \end{align*}
\end{proof}

\subsection{The task,  the  data model, and the distribution $\mathcal D_{\bz_k}$}
Recall that  $\mathcal C = \{1, \ldots, n_c\}$ represents the set of concepts, and that $\mathcal Z = \mathcal C^L$ is the latent space. 
We aim to study a classification task in which the $K$ classes are defined by  $K$ latent variables
$$\bz_1 , \ldots, \bz_k \in \mathcal Z$$
We write $\bx \sim \mathcal D_{\bz_k}$ to indicate that the sentence $\bx \in \data$ is generated by the latent variable $\bz_k \in \mathcal Z$  (see figure 1 of the main paper for a visual illustration). Formally, $\mathcal D_{\bz_k}$ is a probability distribution on the data space $\data$, and we now give the formula for its p.d.f.
First, recall that $\mu_\beta>0$ stands for the probability of sampling the $\beta^{th}$ word of the $\alpha^{th}$ concept. 
Let us denote the  $k^{th}$ latent variable  by
$$
\bz_k = [\; z_{k,1} \; ,  \; z_{k,2} \; ,  \; \ldots \; ,  \;z_{k,L} \; ]  \in \mathcal Z
$$   
where  $1 \le z_{k,\ell}  \le n_c$.
The probability of sampling the sentence
$$
\bx = [ \; (\alpha_1,\beta_1) \; ,   \; (\alpha_2,\beta_2) \; \;\ldots \;, \; (\alpha_L,\beta_L) \;]  \in \mathcal X  
$$
according to $\mathcal D_{\bz_k}$ is then given by the formula
$$
\mathcal D_{\bz_k} \left( \{ \bx\} \right) =  \prod_{\ell=1}^L \;  \ones_{\{ \alpha_\ell = z_{k,\ell}\}} \; \mu_{\beta_\ell} 
$$
Note that $\mathcal D_{\bz_k} \left( \{ \bx\} \right)>0$ if and only if $[z_{k,1},  \ldots,  z_{k,L} ] =  [ \alpha_1,\ldots, \alpha_L]$. So a sentence $\bx$ has a non-zero probability of being  generated by the latent variable $\bz_k$ only if its words match the concepts in $\bz_k$. If this is the case, then the probability of sampling $\bx$ according to $\mathcal D_{\bz_k}$  is simply given by the product of the frequencies of the words contained in $\bx$. 

We use $\data_k$ to denote the support of the distribution $ \mathcal D_{\bz_k}$, that is
$$
\data_k := \{ \bx \in \data : \mathcal D_{\bz_k}(\bx) >0 \}
$$
and we note that if the latent variables $\bz_1, \ldots, \bz_K$ are mutually distinct, then
$
\data_j \cap \data_{k}  = \emptyset 
$
for all $j\neq k$.
Since the $K$ latent variables define the $K$ classes of our classification problem, we may alternatively define $\data_k$ by
$$
\data_k = \{ \bx \in \data :  \bx \text{ belongs to the $k^{th}$ category} \}
$$

To each latent variable 
$
\bz_k = [\; z_{k,1} \; ,  \; z_{k,2} \; ,  \; \ldots \; ,  \;z_{k,L} \; ] 
$ we associate
a matrix
\begin{gather} \label{Z_k}
Z_k = 
 \begin{bmatrix}
 \vert & \vert &  & \vert \\
 \be_{z_{k,1}} &
\be_{z_{k,2}} &
\cdots &
 \be_{z_{k,L}} \\
 \vert & \vert &  & \vert 
\end{bmatrix}   \in \real^{n_c \times L}  
\end{gather}
In other words, the matrix $Z_k$ provides a one-hot representation of the concepts contained in the latent variable $\bz_k$. Concatenating the matrices $Z_1, \ldots, Z_K$ gives the matrix 
\begin{equation} \label{Z}
 Z = \begin{bmatrix} Z_1 & Z_2 & \cdots & Z_K \end{bmatrix} \in \real^{n_c \times KL}
\end{equation}
which is reminiscent of  the matrix $\hat U$ defined by \eqref{hatU}.

We encode the way words are partitioned into concepts into a `partition matrix' $P \in \real^{n_c \times n_w}$.
For example, if we have $12$ words and $3$ concepts,  then the partition matrix  is 
\begin{equation} \label{P}
P = 
\begin{bmatrix}
1 & 1 & 1 & 1 & 0 & 0 & 0 & 0 & 0 & 0 & 0 & 0    \\
 0 & 0 & 0 & 0   & 1 & 1 & 1 & 1 & 0 & 0 & 0 & 0  \\
 0 & 0 & 0 & 0  & 0 & 0 & 0 & 0 & 1 & 1 & 1 & 1   
\end{bmatrix} \in \real^{ n_c \times n_w} ,
\end{equation}
indicating that the first 4 words belong to concept 1, the next 4 words belongs to concept 2, and so forth.
Formally,  recalling that  $\hot(\alpha,\beta)$ is the the one-hot encoding of word $(\alpha,\beta) \in \voc$, the matrix $P$ is defined 
 the relationship
\begin{equation}
P \;  \hot(\alpha,\beta) = \be_\alpha \qquad \text{ for all } (\alpha,\beta) \in \voc.
\end{equation}
Importantly, note that the matrix $P$ maps datapoints to their associated latent variables. Indeed, if $\bx=  [ (\alpha_1,\beta_1),  \ldots , (\alpha_L,\beta_L)] $ is generated by the latent variable $\bz_k$ (meaning that $\bx \in \data_k$), then we have that
\begin{equation} \label{Phot}
P\, \hot(\bx) = P
\begin{bmatrix}
 \vert & \vert & & \vert \\
  \hot(\alpha_1,\beta_1) & \hot(\alpha_2,\beta_2) & \ldots & \hot(\alpha_L,\beta_L)  \\
   \vert & \vert & & \vert 
  \end{bmatrix} 
  = 
  \begin{bmatrix}
 \vert & \vert & & \vert \\
  \be_{\alpha_1} & \be_{\alpha_2} & \ldots & \be_{\alpha_L}  \\
   \vert & \vert & & \vert 
  \end{bmatrix} =
 Z_k 
\end{equation}
where the last equality is due to  definition \eqref{Z_k} of the matrix $Z_k$.

Another important matrix for our analysis will be the matrix $Q \in \real^{n_c \times n_w}$. In the concrete case where we have $12$ words and $3$ concepts, this matrix takes the form
\begin{equation} \label{matrixQ}
Q = 
\begin{bmatrix}
\mu_1 & \mu_2 & \mu_3 & \mu_4 & 0 & 0 & 0 & 0 & 0 & 0 & 0 & 0    \\
 0 & 0 & 0 & 0   &\mu_1 & \mu_2 & \mu_3 & \mu_4 & 0 & 0 & 0 & 0  \\
 0 & 0 & 0 & 0  & 0 & 0 & 0 & 0 & \mu_1 & \mu_2 & \mu_3 & \mu_4   
\end{bmatrix} \in \real^{ n_c \times n_w} 
\end{equation}
and, in general, it is defined by the relationship
\begin{equation} \label{Qhot}
Q \;  \hot(\alpha,\beta) = \mu_\beta \, \be_\alpha \qquad \text{ for all } (\alpha,\beta) \in \voc.
\end{equation}

\section{Symmetry assumptions on the latent variables} \label{section:sym}

In subsection \ref{sub:B1} we provide an in depth discussion of the symmetry assumption  required for theorems 1 and 3 to hold. 
In subsection \ref{sub:B2} we present and discuss the assumption that will be needed to prove the \emph{converse} of theorems 1 and 3.

\subsection{Symmetry assumption needed for theorem \ref{theorem:1} and \ref{theorem:3}} \label{sub:B1}
To better understand  the symmetry assumption 1 from the main paper, let us start by considering the extreme case
\begin{equation} \label{alllatent}
K = n_c^L \qquad \text{and} \qquad \{\bz_1, \bz_2,  \ldots, \bz_K \} = \mathcal Z,
\end{equation}
meaning that $\bz_1, \ldots, \bz_K$  are mutually distinct and represent all possible latent variables in $\mathcal Z$.  In this case, we easily obtain the formula
\begin{equation} \label{babycount}
\left|\Big\{ j \in[K]   : \dist(\bz_j,\bz_{1}) = r  \text{ and } z_{j,L} = z_{1,L} \Big\}\right| =  { L-1 \choose r} (n_c-1)^r 
\end{equation}
where $\dist(\bz_j,\bz_1)$ is the Hamming distance between the latent variables $\bz_j$ and $\bz_1$.
To see this, note that the left side of \eqref{babycount}
counts the number of latent variables $\bz_j$ that \emph{differs} from $\bz_1$ at $r$ locations and \emph{agrees} with $\bz_1$ at the last location $\ell=L$.  This number is clearly equal to 
the right side of \eqref{babycount} since
 we need to choose $r$ positions out of the first $L-1$ positions, and then, for each chosen position $\ell$,  we  need to choose a concept out of the $n_c-1$ concepts that differs  from $\bz_{1,\ell}$. A similar reasoning shows that, if 
 $z_{1,L} \neq \alpha$, then
 \begin{equation} \label{babycount2}
\left|\Big\{ j \in[K]   : \dist(\bz_j,\bz_{1}) = r  \text{ and } z_{j,L} = \alpha \Big\}\right| =  { L-1 \choose r-1} (n_c-1)^{r-1} 
\end{equation}
where the term ${ L-1 \choose r-1}$ arises from the fact that we only need to choose $r-1$ positions, since $\bz_1$ and $\bz_j$ differ in their last position $\ell=L$.
Suppose now that the random variables $\bz_1, \ldots, \bz_K$ are selected uniformly at random from $\mathcal Z$, and say, for the sake of concreteness, that 
$$
K =    \frac{1}{5} \; n_c^L 
$$   
so that $\bz_1, \ldots, \bz_K$ represent $20\%$ of all possible latent variables (note that  $|\mathcal Z| = n_c^L$). Then  \eqref{babycount} -- \eqref{babycount2} should be replaced by
\begin{align} \label{babycount3}
&\left|\Big\{ j \in[K]   : \dist(\bz_j,\bz_{1}) = r  \text{ and } z_{j,L} = z_{1,L} \Big\}\right| \approx \frac{1}{5}  { L-1 \choose r} (n_c-1)^r  \\
&\left|\Big\{ j \in[K]   : \dist(\bz_j,\bz_{1}) = r  \text{ and } z_{j,L} = \alpha \Big\}\right| \approx \frac{1}{5}  { L-1 \choose r-1} (n_c-1)^{r-1} \quad \text{for } \alpha \neq z_{1,L}  \label{babycount4}
\end{align}
where the equality only holds approximatively due to the random choice of the latent variables. 
In the above example, we chose $\bz_1$ as our `reference' latent variables and we `froze' the concept appearing in position $\ell=L$. These choices were clearly arbitrary.
In general, when $K$ is large, we have 
\begin{equation} \label{sym_approx}
 \left| \Big\{ j \in[K]   : \dist(\bz_j,\bz_{k}) = r \text{ and }  z_{j,\ell} = \alpha \Big\} \right| \approx  \begin{cases}
\frac{K}{n_c^L}  { L-1 \choose r} (n_c-1)^r  & \text{ if } z_{k,\ell} = \alpha \\ \\
\frac{K}{n_c^L}  { L-1 \choose r-1} (n_c-1)^{r-1} & \text{ if } z_{k,\ell} \neq \alpha 
  \end{cases}
\end{equation}
and this approximate equality  hold for most $k\in[K]$, $r \in [L]$, $\ell \in[L]$, and $\alpha \in [n_c]$.
The symmetry assumption \ref{assumption:symmetry} from the main paper requires \eqref{sym_approx} to hold not approximatively, but exactly. For convenience we restate below this symmetry assumption:
\begin{assumption}[Latent Symmetry] \label{symstrong}
 For every $k \in [K]$, $r \in [L]$, $\ell \in[L]$, and $\alpha \in [n_c]$ the identities
\begin{equation} \label{sym}
 \left| \Big\{ j \in[K]   : \dist(\bz_j,\bz_{k}) = r \text{ and }  z_{j,\ell} = \alpha \Big\} \right| =  \begin{cases}
\frac{K}{n_c^L}  { L-1 \choose r} (n_c-1)^r  & \text{ if } z_{k,\ell} = \alpha \\ \\
\frac{K}{n_c^L}  { L-1 \choose r-1} (n_c-1)^{r-1} & \text{ if } z_{k,\ell} \neq \alpha 
  \end{cases}
\end{equation}
hold.
\end{assumption}
To be clear, if the latent variables  $\bz_1, \ldots, \bz_K$ are selected uniformly at random from $\mathcal Z$, then  they will only    \emph{approximatively}  satisfy assumption \ref{symstrong}.  Our analysis, however,  is conducted in  the idealized  case where the latent variables \emph{exactly} satisfy the symmetry assumption.  Specifically, we show that, in the idealized case where 
assumption \ref{symstrong} is \emph{exactly} satisfied, then the weights $W$ and $U$ of the network are  given by some explicit analytical formula. Importantly, as it is explained in the main paper, our experiments demonstrate that these idealized analytical formula 
provide very good approximations for the weights observed in experiments when the latent variables are selected uniformly at random.

In the next lemma, we isolate  three  properties which hold for any latent variables satisfying  assumption \ref{symstrong}. 
Importantly, when proving collapse, we will only rely on these three properties --- we will never explicitly need assumption \ref{symstrong}.
We will see shortly that these three properties, in essence, amount to saying that all position $\ell \in [L]$ and all concepts $\alpha \in [n_c]$ plays interchangeable roles for the latent variables. There are no `preferred' $\ell$ or $\alpha$, and this is exactly what will allow us to derive symmetric analytical solutions.

Before stating our lemma, let us define the `sphere' of radius $r$ centered around the $k^{th}$ latent variable
\begin{equation}
S_r(k)  := \Big\{ j \in[K]   : \dist(\bz_j,\bz_{k}) = r  \Big\} \qquad \text{ for } r,k \in [L]
\end{equation}
With this notation in hand we may now state 
\begin{lemma} \label{lemma:sym} Suppose the latent variables $\bz_1, \ldots, \bz_K$ satisfy the symmetry assumption \ref{symstrong}.  Then $\bz_1, \ldots, \bz_K$ satisfies the following properties:
\begin{enumerate}
\item[(i)]  $|S_r(j)| = |S_r(k)|$ for all $r \in[L]$ and all $j,k \in [K]$.
\item[(ii)] The equalities  $$ \sum_{k=1}^K Z_k = \frac{K}{n_c} \ones_{n_c} \ones^T_L \qquad \text{and} \qquad  \displaystyle ZZ^T = \frac{KL}{n_c} I_{n_c}$$
hold, with $I_{n_c}$ denoting the $n_c \times n_c$ identity matrix.
\item[(iii)] There exists $\theta_1, \ldots, \theta_L > 0$ and matrices $A_1, \ldots, A_L \in \real^{n_c \times L}$ such that
$$
Z_k - \frac{1}{|S_r(k)|} \sum_{j \in S_r(k)}  Z_{j}  =    \theta_r Z_k  + A_r
$$
holds for all $r \in[L]$,  all $j \in [K]$, and all $k \in [K]$.
\end{enumerate}
\end{lemma}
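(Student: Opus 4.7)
My plan is to prove the three items essentially in the order they are stated, since each one is a fairly direct entrywise/counting consequence of Assumption \ref{symstrong}. The key observation is that the right-hand sides of \eqref{sym} depend only on $r$ and on whether $z_{k,\ell}=\alpha$, so once I fix $r$ the two numbers
\[
a_r := \tfrac{K}{n_c^L}\tbinom{L-1}{r}(n_c-1)^r, \qquad b_r := \tfrac{K}{n_c^L}\tbinom{L-1}{r-1}(n_c-1)^{r-1}
\]
play the central role. I will use them throughout.

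For (i), I would write $|S_r(k)| = \sum_{\alpha=1}^{n_c}|\{j\in[K]:\dist(\bz_j,\bz_k)=r,\ z_{j,\ell}=\alpha\}|$ for any fixed $\ell\in[L]$. Applied to \eqref{sym}, exactly one term in the sum corresponds to $\alpha=z_{k,\ell}$ (contributing $a_r$) and the remaining $n_c-1$ terms correspond to $\alpha\ne z_{k,\ell}$ (each contributing $b_r$). Hence $|S_r(k)|=a_r+(n_c-1)b_r$, which is independent of $k$.

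For (ii), I read off $(\sum_k Z_k)_{\alpha,\ell}=|\{k:z_{k,\ell}=\alpha\}|$, and I can compute this quantity by summing the assumption over $r\in\{0,\dots,L\}$ with a \emph{fixed} reference index (say the first $k$ with $z_{k,\ell}=\alpha$). Using the binomial identity $\sum_{r=0}^{L-1}\binom{L-1}{r}(n_c-1)^r=n_c^{L-1}$ (and the analogous identity after the shift $r\mapsto r-1$), both cases of \eqref{sym} collapse to $K/n_c$. This gives $\sum_k Z_k=(K/n_c)\ones_{n_c}\ones_L^T$. Then $(ZZ^T)_{\alpha,\alpha'}=\sum_{k,\ell}\ones_{\{z_{k,\ell}=\alpha\}}\ones_{\{z_{k,\ell}=\alpha'\}}$ is $0$ off-diagonal and equals $\sum_\ell (K/n_c)=KL/n_c$ on the diagonal, proving $ZZ^T=(KL/n_c)I_{n_c}$.

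For (iii), I would compute $\sum_{j\in S_r(k)} Z_j$ entrywise. Its $(\alpha,\ell)$-entry is exactly $|\{j:\dist(\bz_j,\bz_k)=r,\ z_{j,\ell}=\alpha\}|$, so by \eqref{sym} it equals $a_r$ when $z_{k,\ell}=\alpha$ and $b_r$ otherwise. In matrix form this is
\[
\sum_{j\in S_r(k)} Z_j \;=\; (a_r-b_r)\,Z_k \;+\; b_r\,\ones_{n_c}\ones_L^T.
\]
Dividing by $N_r:=|S_r(k)|=a_r+(n_c-1)b_r$ and subtracting from $Z_k$ gives
\[
Z_k - \tfrac{1}{N_r}\sum_{j\in S_r(k)} Z_j \;=\; \tfrac{n_c b_r}{N_r}\,Z_k \;-\; \tfrac{b_r}{N_r}\,\ones_{n_c}\ones_L^T,
\]
so I set $\theta_r:=n_c b_r/N_r$ and $A_r:=-(b_r/N_r)\ones_{n_c}\ones_L^T$. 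Note that $A_r$ does not depend on $k$, as required, and that for $r\in[L]$ we have $b_r>0$ and hence $\theta_r>0$. I do not anticipate a genuine obstacle here; the only mild subtlety is bookkeeping the two cases of \eqref{sym} into a single matrix identity, which the splitting $a_r\ones_{\{z_{k,\ell}=\alpha\}}+b_r\ones_{\{z_{k,\ell}\ne\alpha\}}=b_r+(a_r-b_r)(Z_k)_{\alpha,\ell}$ handles cleanly.
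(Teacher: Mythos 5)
Your arguments for (i) and (iii) are correct and take essentially the same route as the paper. For (i) you decompose $|S_r(k)| = \sum_{\alpha} |\{j : \dist(\bz_j,\bz_k)=r,\ z_{j,\ell}=\alpha\}| = a_r + (n_c-1)b_r$, which is manifestly independent of $k$; the paper does the same and only additionally simplifies via Pascal's rule to $\frac{K}{n_c^L}\binom{L}{r}(n_c-1)^r$, a closed form it needs later. For (iii), writing $\sum_{j\in S_r(k)} Z_j = (a_r-b_r)Z_k + b_r\ones_{n_c}\ones_L^T$ and reading off $\theta_r = n_c b_r/N_r$, $A_r = -(b_r/N_r)\ones_{n_c}\ones_L^T$ with $N_r = a_r+(n_c-1)b_r$ is a clean matrix-form repackaging of the paper's entrywise computation; one checks $n_c b_r/N_r = \frac{n_c}{n_c-1}\frac{r}{L}$, matching the paper's $\theta_r$, and $b_r>0$ for $r\in[L]$ gives $\theta_r>0$ as required.

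There is a small but genuine slip in (ii). You fix a reference $k$ with $z_{k,\ell}=\alpha$ and propose to sum \eqref{sym} over $r\in\{0,\dots,L\}$, but Assumption \ref{symstrong} is stated only for $r\in[L]=\{1,\dots,L\}$; the $r=0$ term is $|\{j : \bz_j=\bz_k\}|$, about which the assumption is silent. Your computation for the subcase $z_{k,\ell}=\alpha$ implicitly treats that term as $K/n_c^L$, which is not given and would have to be derived (doing so essentially reproves the identity you want). The subcase $z_{k,\ell}\neq\alpha$ is fine, because there the $r=0$ count vanishes identically and the shifted binomial sum gives $K/n_c$. The paper exploits exactly this asymmetry and chooses the reference $k$ with $z_{k,\ell}\neq\alpha$: then every $j$ with $z_{j,\ell}=\alpha$ is at Hamming distance at least $1$ from $\bz_k$, so only $r\in[L]$ and only the second case of \eqref{sym} are needed. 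The fix to your argument is a one-word change in the choice of reference index, but as written the $z_{k,\ell}=\alpha$ branch does not close.
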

We will prove this lemma shortly, but for now let us start by getting some intuition about properties (i), (ii) and (iii). Property  (i) is transparent:  it  states that all latent variables have the same number of `distance-$r$ neighbors'. Recalling how matrix $Z_k$  was defined (c.f. \eqref{Z_k}), we see that the first identity of  (ii)   is equivalent to
\begin{equation} \label{equal_num_cpt}
\left| \left\{ k \in [K] : z_{k,\ell} = \alpha \right\} \right| = \frac{K}{n_c} \qquad \text{for all $\ell \in[L]$ and all $\alpha \in [n_c]$}.
\end{equation}
This means  that the number of latent variables that have concept $\alpha$ in position $\ell$ is equal to $K/n_c$. In other words, 
 each concept is equally represented at each position $\ell$. We now  turn to the second identity of statement (ii).  Recalling the definition \eqref{Z} of matrix $Z$, we see that $ZZ^T \in \real^{n_c \times n_c}$  is a diagonal matrix since each column of $Z$ contains a single nonzero entry. One can also easily see that  the $\alpha^{th}$ entry of the diagonal is 
 $$
 \left[ZZ^T\right]_{\alpha,\alpha} = |\{ (k,\ell) \in [K] \times [L] : z_{k,\ell}=\alpha \}|, 
 $$
 which is the total number of times concept $\alpha$ appears in the latent variables. Overall, the identity  $ZZ^T = \frac{KL}{n_c} I_{n_c}$ is therefore equivalent to the statement
 \begin{equation*}
  |\{ (k,\ell) \in [K] \times [L] : z_{k,\ell}=\alpha \}  |=  \frac{KL}{n_c}  \qquad \text{for all $\alpha \in [n_c]$} 
 \end{equation*}
and it is therefore a direct consequence of \eqref{equal_num_cpt}.

Property (iii) is harder to interpret. Essentially it is a type of mean value property that states that summing over the latent variables  which are at distance $r$ of $\bz_k$ gives back $\bz_k$. We will see that this mean value property plays a key role in our analysis.

To conclude this subsection, we prove lemma \ref{lemma:sym}.
\begin{proof}[Proof of lemma \ref{lemma:sym}] We start by proving statement (i). Since  $S_r(k)  = \{ j \in[K]   : \dist(\bz_j,\bz_{k}) = r\}$, we clearly have that 
\begin{align}
|S_r(k)|  &= \sum_{\alpha=1}^{n_c} \left| \Big\{ j \in[K]   : \dist(\bz_j,\bz_{k}) = r \text{ and }  z_{j,\ell} = \alpha \Big\} \right| 
\end{align}
We then use identity \eqref{sym} and Pascal's rule to find
\begin{align}
 |S_r(k)|  & = \left(n_c-1\right)  \left( \frac{K}{|\mathcal Z|}  { L-1 \choose r-1} (n_c-1)^{r-1}\right) \nonumber
+ \frac{K}{|\mathcal Z|}  { L-1 \choose r} (n_c-1)^r  \nonumber \\
& = \frac{K}{|\mathcal Z|} (n_c-1)^r   \left(  {L-1 \choose r-1}  +   { L-1 \choose r}  \right) \nonumber \\
& = \frac{K}{|\mathcal Z|}  {L \choose r}  (n_c-1)^r   \label{size_of_SSS}
\end{align}
which clearly implies that  $|S_r(k)| = |S_r(j)|$ for all $j,k \in [K]$ and all $r \in [L]$.

We now turn to the first identity of t (ii). As previously mentioned, this identity is equivalent to \eqref{equal_num_cpt}.  Choose $k$  such that  $\bz_{k,\ell} \neq \alpha$. 
Then any any latent variable $\bz_j$ with 
$z_{j,\ell} = \alpha$ is at least at a distance $1$ of $\bz_k$ and we may write
\begin{align}
\left| \left\{ j  \in [K]: z_{j,\ell} = \alpha \right\} \right| & = 
\sum_{r=1}^L  \left| \Big\{ j \in[K]   : \dist(\bz_j,\bz_{k}) = r \text{ and }  z_{j,\ell} = \alpha \Big\} \right| \\
& = \sum_{r=1}^L \frac{K}{n_c^L}  { L-1 \choose r-1} (n_c-1)^{r-1} 
\end{align}
which is equal to $K/n_c$ according to the binomial theorem. The second identity of  (ii), as mentioned earlier, is a direct consequence of the first identity.

We finally turn to statement (iii).   Appealing to \eqref{size_of_SSS}, we find that,  
\begin{align*}
 \frac{ \left| \Big\{ j \in[K]   : \dist(\bz_j,\bz_{k}) = r \text{ and }  z_{j,\ell} = \alpha \Big\} \right| }{|S_r(k)| }  = \frac{\frac{K}{|\mathcal Z|}  { L-1 \choose r} (n_c-1)^r}{\frac{K}{|\mathcal Z|}  { L \choose r} (n_c-1)^r} = \frac{{L-1 \choose r} }{{L \choose r}} 
 = \frac{L-r}{L}  
\end{align*}
 if $z_{k,\ell} = \alpha$. 
On the other hand, if  $z_{k,\ell} \neq \alpha$, we obtain 
\begin{align*}
 \frac{ \left| \Big\{ j \in[K]   : \dist(\bz_j,\bz_{k}) = r \text{ and }  z_{j,\ell} = \alpha \Big\} \right|  }{|S_r(k)| }  = \frac{ \frac{K}{|\mathcal Z|}   { L-1 \choose r-1} (n_c-1)^{r-1}}{\frac{K}{|\mathcal Z|}  { L \choose r} (n_c-1)^r} = \frac{1}{n_c-1} \frac{{L-1 \choose r-1} }{{L \choose r}}
 =  \frac{1}{n_c-1}  \frac{r}{L}  
\end{align*}
 Fix $\ell \in [L]$ and assume that $z_{k,\ell}=\alpha^\star $. We then have
\begin{align*}
\frac{1}{|S_r(k)|} \sum_{j \in S_r(k)}  \be_{z_{j,\ell}}  & = \frac{1}{|S_r(k)|} \sum_{\alpha =1}^{n_c}  \left| \Big\{j \in S_r(k) : \bz_{j,\ell} = \be_\alpha \Big\}  \right|  \; \be_\alpha\\
& =  \sum_{\alpha=1}^{n_c}  \frac{ \left| \Big\{ j \in[K]   : \dist(\bz_j,\bz_{k}) = r \text{ and }  z_{j,\ell} = \alpha \Big\} \right|  }{|S_r(k)| }  \;\; \be_\alpha\\
& = \frac{L-r}{L} \, \be_{\alpha^\star}+   \frac{1}{n_c-1}  \frac{r}{L}    \sum_{\alpha \neq \alpha^\star} \be_\alpha \\
& = \frac{L-r}{L} \, \be_{\alpha^\star} -  \frac{1}{n_c-1}  \frac{r}{L}  \; \be_{\alpha^\star} +   \frac{1}{n_c-1}  \frac{r}{L}    \sum_{\alpha=1}^{n_c} \be_\alpha \\
& = \left(1 - \frac{n_c}{n_c-1}  \frac{r}{L}  \right) \; \be_{\alpha^\star} +   \frac{1}{n_c-1}  \frac{r}{L}  \ones_{n_c}
\end{align*}
Recalling that $z_{k,\ell}=\alpha^\star$, the above implies that
\begin{align} \label{vlam}
\be_{z_{k,\ell}} - \frac{1}{|S_r(k)|} \sum_{j \in S_r(k)}  \be_{z_{j,\ell}} =   \frac{n_c}{n_c-1}  \frac{r}{L} \; \be_{z_{k,\ell}} -   \frac{1}{n_c-1}  \frac{r}{L}  \ones_{n_c}
\end{align}
Finally, recalling that 
\begin{gather*}
Z_k = 
 \begin{bmatrix}
 \vert & \vert &  & \vert \\
 \be_{z_{k,1}} &
\be_{z_{k,2}} &
\cdots &
 \be_{z_{k,L}} \\
 \vert & \vert &  & \vert 
\end{bmatrix}   \in \real^{n_c \times L}  
\end{gather*}
we see that \eqref{vlam} can be written in matrix format as
$$
Z_k - \frac{1}{|S_r(k)|} \sum_{j \in S_r(k)}  Z_{j}  =     \frac{n_c}{n_c-1}  \frac{r}{L}  Z_k    -   \frac{1}{n_c-1}  \frac{r}{L}  \ones_{n_c} \ones^T_L
$$
and therefore the scalars $\theta_r$ and the matrices $A_r$ appearing in statement (iii) are given by the formula
$\theta_r =   \frac{n_c}{n_c-1}  \frac{r}{L}  $ and $A_r= -   \frac{1}{n_c-1}  \frac{r}{L}  \ones_{n_c} \ones^T_L$.
\end{proof}

\subsection{Symmetry assumption needed for the converse of theorem \ref{theorem:1} and \ref{theorem:3}} \label{sub:B2}
In this subsection we present the symmetry assumption that will be needed to prove the converse of theorem \ref{theorem:1} and \ref{theorem:3}.  This assumption, as we will shortly see, is quite mild and is typically satisfied even for small values of $K$.

For each pair of latent variables $(\bz_j,\bz_k)$ we define the matrix
$$
\Gamma^{(j,k)} :=  Z_j(Z_j-Z_{k})^T \in \real^{n_c \times n_c}.
$$
We also define 
\begin{equation} \label{calA}
\mathcal A := \Big\{ A \in \real^{n_c \times n_c} :  \text{There exists $a,b \in \real$ s.t. $A = a I_{n_c} + b \ones_{n_c} \ones^T_{n_c} $} \Big\}
\end{equation}
which is  the set of  matrices whose diagonal entries are equal to some constant  and whose off-diagonal entries are equal to some possibly different constant.
We may now state our symmetry assumption.
\begin{assumption}  \label{symconv}Any positive semi-definite matrix $A \in \real^{n_c \times n_c}$ that satisfies
\begin{equation} \label{system_sym}
\dotprodbis{ \; A \; , \;   \Gamma^{(j,k)} -  \Gamma^{(j',k')}  } = 0 \qquad   \forall j,k,j',k' \in [K]  \text{ s.t. } \dist(\bz_{j},\bz_{k}) = \dist(\bz_{j'},\bz_{k'})
\end{equation}
must belongs to $\mathcal A$.
\end{assumption}
Note that \eqref{system_sym} can be viewed as a linear system of equations for the unknown  $A \in \real^{n_c \times n_c}$,  with one equation for each quadruplet  $(j,k,j',k')$  satisfying  $\dist(\bz_{j},\bz_{k}) = \dist(\bz_{j'},\bz_{k'})$. To put it differently, each quadruplet  $(j,k,j',k')$  satisfying  $\dist(\bz_{j},\bz_{k}) = \dist(\bz_{j'},\bz_{k'})$ adds one equation to the system, and our assumption requires that we have enough of these equations so that all  positive semi-definite solutions are constrained  to live in the set $\mathcal A$. Since a symmetric matrix has $(n_c+1)n_c/2$ distinct entries, we would expect that $(n_c+1)n_c/2$ quadruplets should be enough to fully determine the matrix. This number of quadruplets is easily achieved even for small values of $K$. So  assumption \ref{symconv} is  quite mild.

 The next lemma  states that assumption \ref{symconv} is satisfied when $K=n_c^L$. In light of the above discussion this is not surprising, since the choice $K=n_c^L$ leads to a system with a number of equations much larger than
 $(n_c+1) n_c/2$. 
  The proof, however, is instructive: it simply handpicks $(n_c+1) n_c/2 - 2$
 quadruplets  to determine  the entries of the matrix $A$. The  `$-2$' arises from the fact   $\mathcal A$ is a 2 dimensional subspace, and therefore $(n_c+1) n_c/2 - 2$ equations are `enough' to constrain $A$ to be in $\mathcal A$.
 \begin{lemma}
 Suppose $K=n_c^L$ and $\{\bz_1, \ldots,\bz_K\} = \mathcal Z$.
Then $\bz_1, \ldots,\bz_K$ satisfy the symmetry assumption \ref{symconv}.
\end{lemma}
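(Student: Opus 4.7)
The plan is to handpick a small family of distance-$1$ latent-variable pairs and extract, from the orthogonality conditions in \eqref{system_sym}, just enough linear constraints to pin $A$ down to the two-parameter family $\mathcal A$. Since $\{\bz_1,\ldots,\bz_K\}=\mathcal Z$ is the full product set $\mathcal C^L$, I have complete freedom to choose latent variables.

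First, for each ordered pair of distinct concepts $(\alpha,\alpha')\in[n_c]^2$, introduce the pair $(\bz_j,\bz_k)\in\mathcal Z^2$ defined by $z_{j,1}=\alpha$, $z_{k,1}=\alpha'$, and $z_{j,\ell}=z_{k,\ell}=1$ for all $\ell\ge 2$. Then $Z_j-Z_k$ is the rank-one matrix whose only nonzero column is $\be_\alpha-\be_{\alpha'}$, and a one-line computation gives
$$\Gamma^{(j,k)}=Z_j(Z_j-Z_k)^T=\be_\alpha(\be_\alpha-\be_{\alpha'})^T=\be_\alpha\be_\alpha^T-\be_\alpha\be_{\alpha'}^T.$$
All such pairs have Hamming distance $1$, so applying \eqref{system_sym} to two of them indexed by $(\alpha,\alpha')$ and $(\beta,\beta')$ yields the scalar identity
$$A_{\alpha\alpha}-A_{\alpha\alpha'}=A_{\beta\beta}-A_{\beta\beta'}\qquad\text{for all }\alpha\ne\alpha'\text{ and all }\beta\ne\beta'.\quad(\dagger)$$

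Next, to recover the diagonal structure, specialize $(\beta,\beta')=(\alpha',\alpha)$ in $(\dagger)$:
$$A_{\alpha\alpha}-A_{\alpha\alpha'}=A_{\alpha'\alpha'}-A_{\alpha'\alpha}.$$
Since $A$ is positive semi-definite, hence symmetric, the off-diagonal terms cancel and I conclude $A_{\alpha\alpha}=A_{\alpha'\alpha'}$ for every pair $\alpha\ne\alpha'$; all diagonal entries of $A$ coincide. Substituting this back into $(\dagger)$ collapses it to $A_{\alpha\alpha'}=A_{\beta\beta'}$ for \emph{any} two pairs of distinct indices, so all off-diagonal entries of $A$ coincide as well. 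These two facts together are precisely the statement that $A=aI_{n_c}+b\ones_{n_c}\ones_{n_c}^T$ for some scalars $a,b$, i.e., $A\in\mathcal A$.

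The argument is short because the hand-picked family of distance-$1$ pairs already supplies the $(n_c+1)n_c/2-2$ independent linear conditions advertised in the paragraph preceding the lemma, which reduce the $(n_c+1)n_c/2$-dimensional space of symmetric matrices to the two-dimensional subspace $\mathcal A$. I do not anticipate any real obstacle: every latent variable invoked exists by $\mathcal Z=\mathcal C^L$; the computation of $\Gamma^{(j,k)}$ for distance-$1$ pairs is immediate; and the two specializations of $(\dagger)$ are elementary. The one observation worth flagging is that the positive semi-definiteness hypothesis enters only through the symmetry of $A$, so the conclusion of the lemma actually holds for every symmetric $A$ satisfying \eqref{system_sym}.
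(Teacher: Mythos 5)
Your proof is correct, and it takes a genuinely different and somewhat tighter route than the paper's. The paper factors $A = C^TC$ and works with inner products of the columns of $C$; it then needs two families of pairs: distance-$1$ triples of the form $[2,1,\ldots],[3,1,\ldots],[4,1,\ldots]$ to equate off-diagonal entries, and distance-$L$ triples of the form $[1,\ldots,1],[2,\ldots,2],[3,\ldots,3]$ to equate diagonal entries. You instead compute $\Gamma^{(j,k)}$ directly as the rank-one matrix $\be_\alpha\be_\alpha^T - \be_\alpha\be_{\alpha'}^T$, obtain the single family of scalar relations $A_{\alpha\alpha}-A_{\alpha\alpha'}=A_{\beta\beta}-A_{\beta\beta'}$ from distance-$1$ pairs alone, and then extract first the diagonal equality (via the swap $(\beta,\beta')=(\alpha',\alpha)$ together with the symmetry of $A$) and then the off-diagonal equality by back-substitution. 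This buys you two things: it avoids the $C^TC$ factorization and hence isolates cleanly that only the \emph{symmetry} of $A$ is used, as you note; and it works uniformly for all $n_c \ge 2$, whereas the paper's handpicked triples presume $n_c$ large enough that three or four distinct concepts exist (the paper's argument silently needs $n_c \ge 3$ for the diagonal step and, as literally written, $n_c \ge 4$ for the off-diagonal step, though the generalization to $n_c=3$ is routine). One small point worth making explicit, which you implicitly use: to go from the paper's style of claim ``off-diagonals in a common row are equal'' to ``all off-diagonals are equal'' requires invoking symmetry of $A$ to hop between rows; your version bypasses this entirely since $(\dagger)$ already relates arbitrary rows.
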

\begin{proof}
Let $A = C^T C$ be a positive semi-definite matrix that solve satisfies \eqref{system_sym}. We use $\bc_\alpha$ to denote the $\alpha^{th}$  column of $C$.
Since $\{\bz_1, \ldots,\bz_K\} = \mathcal Z$,  we can find $i,j,k \in [K]$ such that 
  \begin{align*}
  \bz_i & = [2 , 1 , 1,  \ldots, 1] \in \mathcal Z \\
   \bz_j & = [3 , 1, 1, \ldots, 1]  \in \mathcal Z\\
    \bz_k & = [4 , 1 , 1,  \ldots, 1]  \in \mathcal Z
  \end{align*}
  Using lemma \ref{lemma:magictrace} and recalling the definition \eqref{Z_k} of the matrix $Z_k$, we  get
  \begin{align*}
  \dotprodbis{ \; A \; , \;   \Gamma^{(i,j)}   }  &=
  \dotprodbis{C^T C ,  Z_i(Z_i -Z_j)^T}  \\
  &=     \dotprodbis{C(Z_i-Z_j), CZ_i}    \\ 
  &= \dotprodbis{CZ_i, CZ_i} - \dotprodbis{CZ_j, CZ_i} \\
  & =  \Big(  \dotprod{\bc_{2} , \bc_{2} } +  (L-1) \dotprod{\bc_{1} , \bc_{1} }  \Big)    - \Big( \dotprod{\bc_{2} , \bc_{3} } +   (L-1) \dotprod{\bc_{1} , \bc_{1} } \Big) \\
  & =  \dotprod{\bc_{2} , \bc_{2} }  -  \dotprod{\bc_{2} , \bc_{3} } 
  \end{align*}
  Similarly we obtain that 
  $$
   \dotprodbis{ \; A \; , \;   \Gamma^{(i,k)}   }  =   \dotprod{\bc_{2} , \bc_{2} }  -  \dotprod{\bc_{2} , \bc_{4} } 
  $$
 Since $\dist(\bz_i,\bz_j) = \dist(\bz_i,\bz_k)=1$, and since $A$ satisfies  \eqref{system_sym}, we must have 
  $$
   \dotprodbis{ \; A \; , \;   \Gamma^{(i,j)}   } =  \dotprodbis{ \; A \; , \;   \Gamma^{(i,k)}   } 
  $$
which in turn implies that
  $$
 A_{2,3} =  \dotprod{\bc_{2} , \bc_{3} }  =  \dotprod{\bc_{2} , \bc_{4} }  = A_{2,4}
  $$
This argument easily generalizes to show that all off-diagonal entries of the matrix $A$ must be equal to some constant $b \in \real$.

  We now take care of the diagonal entries. Since $\{\bz_1, \ldots,\bz_K\} = \mathcal Z$,  we can find $i',j',k' \in [K]$ such that 
   \begin{align*}
  \bz_{i'} & = [1 , 1, \ldots , 1] \in \mathcal Z  \\
  \bz_{j'} & = [2, 2, \ldots, 2] \in \mathcal Z  \\
   \bz_{k'}& = [3 , 3, \ldots, 3]  \in \mathcal Z 
   \end{align*}
 As before, we compute
 \begin{align*}
  \dotprodbis{ \; A \; , \;   \Gamma^{(i',j')}   }  
  = \dotprodbis{CZ_{i'}, CZ_{i'}} - \dotprodbis{CZ_{j'}, CZ_{i'}} 
   = L  \dotprod{\bc_{1} , \bc_{1} }      -    L \dotprod{\bc_{1} , \bc_{2} }
   =  L  \dotprod{\bc_{1} , \bc_{1} }      -    L b
  \end{align*}
  where we have used the fact that the off diagonal entries are all equal to $b$. Similarly we obtain 
 \begin{align*}
  \dotprodbis{ \; A \; , \;   \Gamma^{(j',k')}   }  
  & =  L  \dotprod{\bc_{2} , \bc_{2} }      -    L b
  \end{align*}
 Since $\dist(\bz_{i'} , \bz_{j'}) = \dist( \bz_j' , \bz_{k'}) = L$, we must have  
  $
   \dotprodbis{ A  ,   \Gamma^{(i',j')}   } =  \dotprodbis{  A ,    \Gamma^{(j',k')}   } 
  $
   which implies that $A_{1,1} = A_{2,2}$. This argument generalizes to show that all diagonal entries of $A$ are equal.
\end{proof}

\section{Sharp lower bound on the unregularized risk} \label{section:lowerbound}

In this section we derive a sharp lower bound for the \emph{unregularized} risk associated with the network $h_{W,U}$,
  \begin{align} \label{calR0} 
 \mathcal R_0(W,U) := \frac{1}{K}  \sum_{k=1}^{K}   \E_{  \;\;\bx \sim \mathcal D_{\bz_k}} \Big[  \ell(h_{W,U}(\bx) , k ) \Big],  
\end{align}
where  $\ell: \real^K \to \real$ is the  cross entropy loss
\[
\ell(\by, k) = -   \log \left(  \frac{\exp\left( y_k\right)}{\sum_{j=1}^K \exp\left( y_{j}\right)}\right) \qquad \text{ for } \by \in \real^K
\]
The $k^{th}$ entry of the output $\by = h_{W,U}(\bx)$ of the neural network, according to formula \eqref{def:net}, is given by
\[
y_k =  \dotprodbig{ \; \hat U_k \; , \;  W \, \hot(\bx)} 
\]
Recalling that $\data_k$ is the support of the distribution 
$\mathcal D_{\bz_k}: \data \to [0,1]$,
 we find that the unregularized risk can be expressed as
\begin{align*} 
 \mathcal R_0(W,U) 
& =    \frac{1}{K}  \sum_{k=1}^{K}  \sum _{\bx \in \data_k}  \ell(h_{W,U}(\bx) , k ) \;   \mathcal D_{\bz_k}(\bx) \\
& =   \frac{1}{K}  \sum_{k=1}^{K}  \sum _{\bx \in \data_k}  -\log\left(  \frac{e^{ \dotprodbis{\hat U_k, W \hot(\bx)}}}{ \sum_{j=1}^K e^{ \dotprodbis{\hat U_{j}, W  \hot(\bx) }}  }\right)    \;   \mathcal D_{\bz_k}(\bx) \\
& =   \frac{1}{K}  \sum_{k=1}^{K}   \sum _{\bx \in \data_k}  \log\left( 1 +  \sum_{j \neq k}   e^{ - \dotprodbis{\hat U_k - \hat U_{j} , W  \hot(\bx) }}  \right)    \;   \mathcal D_{\bz_k}(\bx) 
\end{align*}
where we did the slight abuse of notation of writing  $\mathcal D_{\bz_k}(\bx)$ instead of  $\mathcal D_{\bz_k}( \{ \bx \})$. Note that a data points $\bx$ that belongs to class $k$ is correctly classified by the the network $h_{W,U}$ if and only if
$$
\dotprodbig{ \; \hat U_k \; , \;  W \, \hot(\bx)}  > \dotprodbig{ \; \hat U_j \; , \;  W \, \hot(\bx)}  \quad \text{for all } j \neq k
$$
With this in mind, we introduce the following definition:
\begin{definition}[Margin] Suppose $\bx \in \data_k$. Then the margin between data point $\bx$ and class $j$ is 
$$
\margin_{W,U}(\bx,j):=  \dotprodbis{\hat U_{k} - \hat U_j, W  \hot(\bx)}
$$
\end{definition}
With this definition in hand, the unregularized risk can conveniently be expressed as 
\begin{equation}  \label{surf}
 \mathcal R_0(W,U)  =  \frac{1}{K}  \sum_{k=1}^{K}   \sum _{\bx \in \data_k}  \log\left( 1 +  \sum_{j \neq k}   e^{ - \margin_{W,U}(\bx , j)  } \right)  \;   \mathcal D_{\bz_k}(\bx)
\end{equation}
and  a data point $\bx \in \data_k$ is correctly classified by the network if and only if the margins $\margin_{W,U}(\bx,j)$ are all strictly positive (for $j\neq k$). We then introduce a definition that will play crucial role in our analysis.
\begin{definition}[Equimargin Property] 
If
$$
 \dist(\bz_{k},\bz_{j}) = \dist(\bz_{k'},\bz_{j'})  \quad  \Longrightarrow    \quad  \margin_{W,U}(\bx,j) = \margin_{W,U}(\bx',j') \quad   \forall \bx \in \data_k \text{ and }  \forall \bx' \in \data_{k'} 
$$
then we say that $(W,U)$  satisfies the equimargin property. 
\end{definition}
To put it simply, $(W,U)$ satisfies the equimargin property if the margin between data point $\bx \in \data_k$ and class $j$ only depends on $\dist(\bz_{k},\bz_{j})$. 
We denote by $\mathcal E$ the set of all the weights that satisfy the equimargin property
\begin{align}
\mathcal E & = \left\{ (W,U) : (W,U) \text{ satisfies the equimargin property} \right\} 
\end{align}
and by $\mathcal N$ the set of weights for which the submatrices $\hat U_k$ defined by \eqref{U_k} sum to $0$,
\begin{align}
\mathcal N & = \left\{ (W,U) :  \sum_{k=1}^K \hat U_k = 0\right\} 
\end{align}
We will work under the assumption that the latent variables $\bz_1, \ldots, \bz_K$ satisfy the symmetry assumption \ref{symstrong}. According to lemma \ref{lemma:sym}, $|S_r(k)|$ then doesn't depend on $k$, and so we will simply use $|S_r|$ to denote the size of the set $S_r(k)$.
 Lemma \ref{lemma:sym} also states that
$$
Z_k - \frac{1}{|S_r(k)|} \sum_{j \in S_r(k)}  Z_{j}  =    \theta_r Z_k  + A_r
$$
for some matrices $A_1, \ldots, A_L$ and some scalars $\theta_1, \ldots, \theta_L>0$. We use these scalars to define 
\begin{equation}
g(x) :=  \log\left(  \;  1 +  \sum_{r=1}^L   |S_r|   \;  e^{\theta_r x / K}   \;  \right)  \quad \text{}
\end{equation}
and we note that $g: \real \to \real$ is a strictly increasing function. With these definitions in hand we may state the main theorem of this section.
\begin{theorem} \label{theorem:buffalo} 
If the latent variables satisfy the symmetry  assumption \ref{symstrong}, then 
\begin{align}
\mathcal R_0(W,U) &=  g \Big( -   \dotprodbis{\hat U,  W Q^TZ }  \Big) \qquad \text{ for all } (W,U) \in \mathcal N \cap \mathcal E \\
\mathcal R_0(W,U) &>  g \Big( -   \dotprodbis{\hat U,  W Q^TZ }  \Big) \qquad \text{ for all } (W,U) \in \mathcal N \cap \mathcal E^c
\end{align}
 \end{theorem}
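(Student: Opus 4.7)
The plan is to establish $\mathcal R_0(W,U) \geq g(-\dotprodbis{\hat U, WQ^T Z})$ for every $(W,U) \in \mathcal N$ by chaining three Jensen-type inequalities applied to the nested convex structure of the cross-entropy loss, and to verify that the joint equality condition of those three steps is precisely the equimargin property. The key preliminary observation is that $WQ^T Z_k$ equals the matrix of expected embeddings of a sentence $\bx \sim \mathcal D_{\bz_k}$ (the $\ell$-th column of $Q^T Z_k$ collects $\sum_\beta \mu_\beta \hot(z_{k,\ell},\beta)$), so the expected margin is
\begin{align*}
\tilde m_{k,j} := \E_{\bx \sim \mathcal D_{\bz_k}}\big[\margin_{W,U}(\bx, j)\big] = \dotprodbis{\hat U_k - \hat U_j,\; WQ^T Z_k}.
\end{align*}

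I would then proceed through three Jensen steps. First, because $\by \mapsto \log(1 + \sum_{j\neq k} e^{-y_j})$ is convex, pushing the expectation over $\bx$ inside replaces each $\margin_{W,U}(\bx,j)$ by $\tilde m_{k,j}$, with equality iff $\margin_{W,U}(\cdot, j)$ is $\mathcal D_{\bz_k}$-almost surely constant for every $j$. Second, I would group the inner sum by Hamming distance, noting that $|S_r(k)| = |S_r|$ by Lemma~B(i), and apply Jensen to $t \mapsto e^{-t}$ on each shell to replace $\tilde m_{k,j}$ by the shell average $\bar m_{k,r}$; equality iff $\tilde m_{k,j}$ is constant over $j \in S_r(k)$. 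Third, since $F(x_1, \ldots, x_L) := \log(1 + \sum_r |S_r| e^{-x_r})$ is strictly convex, applying Jensen to the average over $k$ replaces $\bar m_{k,r}$ by $\hat m_r := \tfrac{1}{K}\sum_k \bar m_{k,r}$; equality iff $\bar m_{k,r}$ is constant in $k$.

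The crux is then to compute $\hat m_r$ on $\mathcal N$. Expanding $\hat m_r$ using the formula for $\tilde m_{k,j}$ and swapping summation via the symmetric relation $j \in S_r(k) \iff k \in S_r(j)$, I would invoke Lemma~B(iii) in its rearranged form $\sum_{k\in S_r(j)} Z_k = |S_r|\big((1-\theta_r)Z_j - A_r\big)$. The $Z_j$ piece contributes $-\tfrac{1-\theta_r}{K}\dotprodbis{\hat U, WQ^T Z}$, while the $A_r$ piece contributes a factor of $\dotprodbis{\sum_j \hat U_j,\, WQ^T A_r}$ which vanishes on $\mathcal N$. Combining with the diagonal contribution $\tfrac{1}{K}\dotprodbis{\hat U, WQ^T Z}$ yields $\hat m_r = (\theta_r/K)\,\dotprodbis{\hat U, WQ^T Z}$ exactly, and substituting into $F(\hat m)$ produces $g(-\dotprodbis{\hat U, WQ^T Z})$.

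Finally, I would observe that the three equality conditions collected from the Jensen steps assembled together assert that $\margin_{W,U}(\bx, j)$ depends only on $r = \dist(\bz_k, \bz_j)$ for $\bx \in \data_k$, which is exactly $\mathcal E$; strict convexity of $t \mapsto e^{-t}$ and of $F$ (noting that the usual constant-shift null direction of log-sum-exp is broken by the fixed $0$ coordinate) guarantees strict Jensen inequalities whenever the corresponding condition fails, giving $\mathcal R_0(W,U) > g(-\dotprodbis{\hat U, WQ^T Z})$ on $\mathcal N \cap \mathcal E^c$. The main obstacle is the combinatorial bookkeeping: checking that the three equality conditions precisely reproduce the three-way uniformity over $\bx \in \data_k$, over $j \in S_r(k)$, and over $k$ in the equimargin definition, and that the $A_r$ remainder from Lemma~B(iii) really does vanish under $\mathcal N$ --- the latter step being precisely where the $\mathcal N$ constraint does its work.
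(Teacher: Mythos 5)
Your proposal is correct and follows essentially the same strategy as the paper: the paper's Lemma D applies exactly your three Jensen steps (first pushing the expectation over $\bx$ through the strictly convex map $\by \mapsto \log(1+\sum_{j\neq k}e^{-y_j})$, then Jensen for $e^{-t}$ over each Hamming shell $S_r(k)$, then strict convexity of $\log(1+\sum_r |S_r|e^{v_r})$ over $k$), with the joint equality condition characterized as the equimargin property; and the paper's Lemma E carries out the same computation of $\mathfrak N_{W,U}(r)$, using $\overline X_k = Q^T Z_k$, the index swap $j\in S_r(k)\iff k\in S_r(j)$, Lemma B(iii), and the $\mathcal N$ constraint to kill the $A_r$ remainder, arriving at $\mathfrak N_{W,U}(r)=(\theta_r/K)\dotprodbis{\hat U, WQ^T Z}$.
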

 We recall that the matrices $\hat U$, $Q$, and $Z$ where defined in section \ref{section:notation} (c.f. \eqref{hatU}, \eqref{matrixQ} and \eqref{Z}).
The remainder of this section is devoted to the proof of the above theorem.

 
  \subsection{Proof of the theorem}
We will use two lemmas to prove the theorem. The first one (lemma \ref{lemma:buffalo1} below) simply leverages the strict convexity of the various components defining the unregularized risk $\mathcal R_0$. Recall that if $f: \real^d \to \real$ is strictly convex, and if the strictly positive scalars $p_1, \ldots, p_n>0$  sum to $1$, then
\begin{equation}
f\left(\sum_{i=1}^n p_i \bv_i \right) \le  \sum_{i=1}^n p_i f(\bv_i)
\end{equation}
and that equality holds if and only if $\bv_1 = \bv_2 = \ldots = \bv_n$. For this first lemma, the only property we need on the latent variables is that $|S_{r}(k)| = |S_r(j)| = |S_r|$ for all $j,k \in [K]$ and all $r \in [L]$.

Define the quantity
\begin{equation} \label{average_margin}
  \mathfrak N_{W,U}(r) =   \frac{1}{K}  \frac{1}{|S_r|}    \sum_{k=1}^{K}   \sum_{j \in S_r(k)} \sum _{\bx \in \data_k}  \margin_{W,U}(\bx , j)   \;  \mathcal D_{\bz_k}(\bx) 
\end{equation}
which should be viewed as the averaged margin between data points and classes which are at a distance $r$ of one another. We then have the following lemma:
  
\begin{lemma} \label{lemma:buffalo1} If $|S_{r}(k)| = |S_r(j)|$ for all $j,k \in [K]$ and all $r \in [L]$, then
\begin{align}
& \mathcal R_{0}(W,U) =   \log\left( 1 +  \sum_{r=1}^L   |S_r|  e^{ -  \mathfrak N_{W,U}(r)  } \right)  \qquad \text{ for all } (W,U) \in \mathcal E \\ 
& \mathcal R_{0}(W,U) >   \log\left( 1 +  \sum_{r=1}^L   |S_r|  e^{ -  \mathfrak N_{W,U}(r)  } \right)  \qquad \text{ for all } (W,U) \notin \mathcal E
\end{align}
\end{lemma}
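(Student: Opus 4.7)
My plan is to bound $\mathcal R_0(W,U)$ from below via two successive applications of Jensen's inequality, and then identify the equality case with the equimargin property. First I would partition the inner sum in \eqref{surf} by the Hamming distance $r = \dist(\bz_k, \bz_j)$,
\[
\mathcal R_0(W,U) = \frac{1}{K} \sum_{k=1}^{K} \sum_{\bx \in \data_k} \mathcal D_{\bz_k}(\bx) \log\!\left( 1 + \sum_{r=1}^L \sum_{j \in S_r(k)} e^{-\margin_{W,U}(\bx,j)} \right),
\]
and define the inner average $\tilde m_r(k, \bx) := \frac{1}{|S_r|} \sum_{j \in S_r(k)} \margin_{W,U}(\bx, j)$, which is well-defined thanks to the hypothesis $|S_r(k)|=|S_r|$. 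Strict convexity of $t \mapsto e^{-t}$ together with monotonicity of $\log(1+\cdot)$ then yields
\[
\mathcal R_0(W,U) \;\geq\; \frac{1}{K} \sum_{k=1}^{K}\sum_{\bx \in \data_k} \mathcal D_{\bz_k}(\bx)\, \psi\!\left(\tilde m_1(k,\bx), \ldots, \tilde m_L(k,\bx)\right),
\]
where $\psi : \real^L \to \real$ is defined by $\psi(\mathbf{t}) := \log\!\left(1 + \sum_{r=1}^L |S_r|\, e^{-t_r}\right)$, with equality if and only if, for every $(k,\bx,r)$, the margins $\margin_{W,U}(\bx,j)$ with $j \in S_r(k)$ are all constant.

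The second step is to apply Jensen to $\psi$ against the probability measure $q_{k,\bx} := \tfrac{1}{K} \mathcal D_{\bz_k}(\bx)$ on pairs $(k, \bx)$ with $\bx \in \data_k$. I claim $\psi$ is strictly convex, and I would verify this by a short Hessian computation: with $f_r = |S_r| e^{-t_r}$ and $D = 1 + \sum_r f_r$, one finds $\nabla^2 \psi = \tfrac{1}{D}\mathrm{diag}(f_1, \ldots, f_L) - \tfrac{1}{D^2} f f^\top$, and Cauchy--Schwarz gives $v^\top (\nabla^2 \psi)\, v \geq \tfrac{1}{D^2} \sum_r f_r v_r^2$, which is strictly positive unless $v = 0$ since every $f_r>0$. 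Because $\sum_{k, \bx} q_{k, \bx} \tilde m_r(k, \bx) = \mathfrak N_{W,U}(r)$ directly from \eqref{average_margin}, Jensen's inequality then produces
\[
\sum_{k, \bx} q_{k, \bx}\, \psi\!\left(\tilde m_1(k, \bx), \ldots, \tilde m_L(k, \bx)\right) \;\geq\; \psi\!\left(\mathfrak N_{W,U}(1), \ldots, \mathfrak N_{W,U}(L)\right),
\]
and chaining the two bounds delivers the claimed lower bound on $\mathcal R_0$.

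To characterize the equality case, the first step is tight if and only if for every $(k, \bx, r)$ the margins $\margin_{W,U}(\bx, j)$ with $j \in S_r(k)$ all coincide with $\tilde m_r(k, \bx)$; strict convexity of $\psi$ makes the second step tight if and only if $(\tilde m_r(k, \bx))_{r=1}^L$ is independent of $(k, \bx)$ across the support of $q$. Taken together these two conditions force $\margin_{W,U}(\bx, j)$ to depend on $(\bx, j)$ only through $\dist(\bz_k, \bz_j)$, which is precisely $(W,U) \in \mathcal E$. Conversely, on $\mathcal E$ both Jensen steps degenerate to equalities and $\mathfrak N_{W,U}(r)$ reduces to the common margin value at distance $r$, producing the stated identity. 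The chief technical obstacle is pinning down the \emph{strict} convexity of $\psi$: plain log-sum-exp is only convex, being translation-invariant along the diagonal, and it is the additive ``$1$'' inside the logarithm that breaks this symmetry and makes the Hessian strictly positive definite; once this is in hand, the remaining steps are routine bookkeeping.
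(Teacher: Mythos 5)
Your proof is correct and reaches the same conclusion, but it takes a somewhat more economical route than the paper. The paper's proof uses three successive applications of (strict) Jensen: first averaging over $\bx \in \data_k$ via strict convexity of $(v_j)_{j\neq k} \mapsto \log(1 + \sum_{j\neq k} e^{v_j})$, then averaging over $j \in S_r(k)$ via strict convexity of $\exp$, and finally averaging over $k$ via strict convexity of $(v_r) \mapsto \log(1+\sum_r |S_r| e^{v_r})$, with intermediate quantities $\overline{\margin}(k,j)$ and $\overline{\overline{\margin}}(k,r)$ tracked explicitly. You instead do only two Jensen steps: first you average over $j \in S_r(k)$ \emph{inside} the logarithm for each fixed $(k,\bx)$ using strict convexity of $e^{-t}$ plus strict monotonicity of $\log(1+\cdot)$, and then you apply a single Jensen to $\psi(\mathbf t) = \log(1+\sum_r |S_r| e^{-t_r})$ against the joint probability measure $q_{k,\bx} = \tfrac{1}{K}\mathcal D_{\bz_k}(\bx)$, thereby fusing the paper's ``average over $\bx$'' and ``average over $k$'' passes into one. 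This is tidier bookkeeping and avoids the $\overline{\margin}$, $\overline{\overline{\margin}}$ intermediates; the only price is that you must track carefully how the two equality conditions (margins constant on each $S_r(k)$ slice, and $(\tilde m_r(k,\bx))_r$ constant in $(k,\bx)$) jointly reconstitute the equimargin property, which you do correctly. You also add genuine value by actually verifying the strict convexity of $\psi$ via the Hessian computation $\nabla^2 \psi = \tfrac{1}{D}\mathrm{diag}(f) - \tfrac{1}{D^2} f f^\top \succ 0$ and the Cauchy--Schwarz bound $(\sum_r f_r v_r)^2 \le (D-1)\sum_r f_r v_r^2 < D\sum_r f_r v_r^2$, something the paper merely asserts; your observation that the ``$+1$'' inside the logarithm is what upgrades ordinary log-sum-exp from convex-but-degenerate to strictly convex is exactly the right point and worth making explicit.
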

\begin{proof}
Using the strict convexity of the function $f: \real^{K-1} \to \real$ defined by
 $$f(v_1, \ldots, v_{k-1}, v_{k+1}, \ldots , v_K) = \log \Big( 1+\sum_{j \neq k} e^{v_j}\Big)$$
  we obtain
\begin{multline*}  \mathcal R_0(W,U)  =  \frac{1}{K}  \sum_{k=1}^{K}   \sum _{\bx \in \data_k}  \log\left( 1 +  \sum_{j \neq k}   e^{ - \margin(\bx , j)  } \right)  \;   \mathcal D_{\bz_k}(\bx) 
 \ge    \frac{1}{K}  \sum_{k=1}^{K}   \log\left( 1 +  \sum_{j \neq k}   e^{ - \sum _{\bx \in \data_k}  \margin(\bx , j)    \mathcal D_{\bz_k}(\bx)  } \right)  
\end{multline*}
and equality holds if and only if, for all $k \in [K]$, we have that
\begin{equation} \label{buffalo1}
 \margin(\bx , j) =  \margin(\by , j) \qquad \text{ for all $\bx,\by \in \data_k$ and all $j \neq k$}
\end{equation}
We then let
\[
\overline{\mathfrak M}(k,j) = \sum _{\bx \in \data_k}  \margin(\bx , j)    \mathcal D_{\bz_k}(\bx)  
\]
and  use the strict convexity of the exponential function to obtain
\begin{align*}
   \frac{1}{K}  \sum_{k=1}^{K} 
     \log\left( 1 +  \sum_{j \neq k}   e^{ - \overline{\mathfrak M}(k,j)  } \right) 
    &  =   \frac{1}{K}  \sum_{k=1}^{K} 
    \log\left( 1 +  \sum_{r=1}^L \sum_{j \in S_r(k)}   e^{ - \overline{\mathfrak M}(k,j)  } \right)  \\
    &  =   \frac{1}{K}  \sum_{k=1}^{K} 
    \log\left( 1 +  \sum_{r=1}^L   |S_r|   \;  \frac{1}{|S_r|} \sum_{j \in S_r(k)}  e^{ - \overline{\mathfrak M}(k,j) } \right) \\
    &   \ge   \frac{1}{K}  \sum_{k=1}^{K}    \log\left( 1 +  \sum_{r=1}^L   |S_r|  e^{ -   \frac{1}{|S_r|}  \sum_{j \in S_r(k)} \overline{\mathfrak M}(k,j) } \right)
\end{align*}
Moreover, equality holds if and only if, for all $k\in[K]$ and all $r \in [L]$, we have that 
\begin{equation}  \label{buffalo2}
 \overline{\mathfrak M}(k,i)  =  \overline{\mathfrak M}(k,j) \quad \text{for all } i,j \in S_r(k) 
\end{equation}
We finally set 
\begin{equation*}
\overline{\overline{\mathfrak M}}(k,r) =  \frac{1}{|S_r|}  \sum_{j \in S_r(k)} \overline{\mathfrak M}(k,j)
\end{equation*}
and use  the strict convexity of the function 
 $f(v_1, \ldots, v_{L} ) = \log\left( 1+\sum_{r=1}^{L} |S_r| e^{v_r}\right)$ to get
\begin{align*}
   \frac{1}{K}  \sum_{k=1}^{K}    \log\left( 1 +  \sum_{r=1}^L   |S_r|  e^{ -  \overline{\overline{\mathfrak M}}(k,r)  } \right) 
  & \ge   \log\left( 1 +  \sum_{r=1}^L   |S_r|  e^{ -    \frac{1}{K}  \sum_{k=1}^{K}   \overline{\overline{\mathfrak M}}(k,r)  } \right) 
\end{align*}
Moreover equality holds if and only if, for all $k\in[K]$ and all $r \in [L]$, we have that 
\begin{equation}  \label{buffalo3}
 \overline{\overline{\mathfrak M}}(k,r)  =  \overline{\overline{\mathfrak M}}(k',r)   \quad \text{for all } k,k' \in [K]  \text{ and all } r \in [L] 
\end{equation}

Importantly, note that
 $$\frac{1}{K}  \sum_{k=1}^{K}   \overline{\overline{\mathfrak M}}(k,r) =  \frac{1}{K}  \frac{1}{|S_r|}    \sum_{k=1}^{K}   \sum_{j \in S_r(k)} \sum _{\bx \in \data_k}  \margin_{W,U}(\bx , j)   \;  \mathcal D_{\bz_k}(\bx)  $$
 which is precisely how  $\mathfrak N_{W,U}(r)$ was defined (c.f. \eqref{average_margin}).
To conclude the proof, we remark that conditions \eqref{buffalo1}, \eqref{buffalo2} and \eqref{buffalo3} are all satisfied if and only if $(W,U)$ satisfies the equi-margin property. 
\end{proof}
We now show that, if  assumption \ref{symstrong} holds,    $\mathfrak N_{W,  U}(r)$ can be expressed in a simple way.
\begin{lemma}  \label{lemma:buffalo2}
 Assume that the latent variables satisfy the symmetry assumption \ref{symstrong}. Then
\begin{equation} \label{formula00}
 \mathfrak N_{W,U}(r) =
\frac{ \theta_r}{K}  \dotprodbis{\hat U,  W Q^T Z }  \qquad \text{ for all } (W,U) \in \mathcal N
 \end{equation}
\end{lemma}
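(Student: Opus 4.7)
The plan is to reduce $\mathfrak N_{W,U}(r)$ to a single Frobenius inner product in three moves: (a) replace the data-space sum by an algebraic quantity, (b) symmetrize the sum over $(k,j)$-pairs at Hamming distance $r$ using the fact that Hamming distance is symmetric, and (c) invoke the mean-value identity in lemma~\ref{lemma:sym}(iii) to finish. The key initial observation is
\[
\sum_{\bx \in \data_k} \hot(\bx)\, \mathcal D_{\bz_k}(\bx) \;=\; Q^T Z_k,
\]
i.e.\ the expected one-hot matrix $\E_{\bx \sim \mathcal D_{\bz_k}}[\hot(\bx)]$ equals $Q^T Z_k$. This follows column-by-column from the definition \eqref{Qhot} of $Q$: the $\ell$-th column of $\hot(\bx)$ is $\hot(z_{k,\ell},\beta_\ell)$ with $\beta_\ell \sim \mu$, whose expectation is $\sum_\beta \mu_\beta\, \hot(z_{k,\ell},\beta) = Q^T \be_{z_{k,\ell}}$, which is precisely the $\ell$-th column of $Q^T Z_k$.

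With this in hand, the definition of the margin and linearity of $\dotprodbis{\cdot\,,\,\cdot}$ give
\[
\mathfrak N_{W,U}(r) \;=\; \frac{1}{K |S_r|} \sum_{k=1}^K \sum_{j \in S_r(k)} \dotprodbis{\hat U_k - \hat U_j \,,\, W Q^T Z_k},
\]
where I used that by lemma~\ref{lemma:sym}(i) the size $|S_r(k)|$ does not depend on $k$. I would then split this into the $\hat U_k$-piece and the $\hat U_j$-piece. The $\hat U_k$-piece collapses immediately to $|S_r|\sum_k \dotprodbis{\hat U_k , W Q^T Z_k}$. For the $\hat U_j$-piece I would use the symmetry $j \in S_r(k) \iff k \in S_r(j)$ of the Hamming distance to swap the order of summation, rewriting it as $\sum_k \dotprodbis{\hat U_k \,,\, W Q^T \sum_{j \in S_r(k)} Z_j}$. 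Combining the two pieces yields
\[
\mathfrak N_{W,U}(r) \;=\; \frac{1}{K} \sum_{k=1}^K \dotprodbis{\hat U_k \,,\, W Q^T \Big( Z_k - \tfrac{1}{|S_r|}\!\!\sum_{j \in S_r(k)}\! Z_j \Big)}.
\]

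Now lemma~\ref{lemma:sym}(iii) replaces the bracketed matrix by $\theta_r Z_k + A_r$. The $\theta_r Z_k$ contribution reassembles, via the column-block structure $\hat U = [\hat U_1 \cdots \hat U_K]$ and $Z = [Z_1 \cdots Z_K]$, into $\frac{\theta_r}{K}\dotprodbis{\hat U , W Q^T Z}$, which is exactly the claimed right-hand side. The $A_r$ contribution factors out of the $k$-sum as $\dotprodbis{\sum_k \hat U_k \,,\, W Q^T A_r}$ (since $A_r$ does not depend on $k$), and this vanishes because $(W,U) \in \mathcal N$ by hypothesis. This closes the proof.

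The main obstacle is really just bookkeeping: correctly executing the Hamming-symmetric swap of the double sum, and noticing that what appears inside the resulting inner product is precisely the mean-value quantity controlled by lemma~\ref{lemma:sym}(iii). Once those two moves are lined up, the constraint $(W,U) \in \mathcal N$ automatically kills the only term that is not proportional to $\dotprodbis{\hat U, W Q^T Z}$, and the scalar $\theta_r/K$ falls out correctly.
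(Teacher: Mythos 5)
Your proof is correct and follows essentially the same route as the paper's: rewrite the inner $\bx$-sum as $\dotprodbis{\hat U_k - \hat U_j, W Q^T Z_k}$ via the identity $\sum_{\bx\in\data_k}\hot(\bx)\mathcal D_{\bz_k}(\bx)=Q^T Z_k$, swap the $(k,j)$ double sum using the symmetry of Hamming distance, invoke lemma~\ref{lemma:sym}(iii), and kill the $A_r$ term using $(W,U)\in\mathcal N$. The only cosmetic difference is the order in which these moves appear.
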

\begin{proof}
We let 
$$
 \overline X_k =  \sum _{\bx \in \data_k}  \hot(\bx)  \mathcal D_{\bz_k}(\bx) 
$$
and note that the averaged margin can be expressed as
\begin{align}
  \mathfrak N_{W,U}(r) & =   \frac{1}{K}  \frac{1}{|S_r|}    \sum_{k=1}^{K}   \sum_{j \in S_r(k)} \sum _{\bx \in \data_k}  \margin_{W,U}(\bx , j)  \;   \mathcal D_{\bz_k}(\bx) \nonumber \\
  & =    \frac{1}{K}  \frac{1}{|S_r|}    \sum_{k=1}^{K}   \sum_{j \in S_r(k)} \sum _{\bx \in \data_k}  \dotprodbis{\hat U_k -\hat U_{j}, W  \hot(\bx)}  \mathcal D_{\bz_k}(\bx)  \nonumber \\
    & =    \frac{1}{K}  \frac{1}{|S_r|}    \sum_{k=1}^{K}   \sum_{j \in S_r(k)}   \dotprodbis{\hat U_k -\hat U_{j},   \overline X_k}  \nonumber \\
    & =  \frac{1}{K}    \sum_{k=1}^K \dotprodbis{\hat U_k, W\overline X_k}  \;\; - \;\;    \frac{1}{K}  \frac{1}{|S_r|}  \sum_{k=1}^K \sum_{j \in S_r(k)} \dotprodbis{\hat U_{j}, W\overline X_k}  \label{tututu} 
\end{align}
Let
$$
a^{(r)}_{k,j} = \begin{cases} 1 & \text{ if } \dist(\bz_k, \bz_{j} ) =r \\
0 &  \text{otherwise}
\end{cases}
$$
and rewrite the  second term in \eqref{tututu} as
\begin{align*}
 \frac{1}{K}  \frac{1}{|S_r|}  \sum_{k=1}^K \sum_{j \in S_r(k)} \dotprodbis{\hat U_{j}, W\overline X_k} &=  \frac{1}{K}  \frac{1}{|S_r|}  \sum_{k=1}^K \sum_{j=1}^K  a^{(r)}_{k,j}  \dotprodbis{U_{j}, W\overline X_k} \\
 &  =  \frac{1}{K}  \frac{1}{|S_r|}  \sum_{j=1}^K   \sum_{k=1}^K a^{(r)}_{j,k}  \dotprodbis{\hat U_{k}, W\overline X_j} \\
  &  =  \frac{1}{K}  \frac{1}{|S_r|}  \sum_{k=1}^K \sum_{j \in S_r(k)} \dotprodbis{\hat U_{k}, W\overline X_j} \\
   &  =  \frac{1}{K}  \sum_{k=1}^K \dotprodbis{ \; \hat U_{k} \; , \;  W  \;  \frac{1}{|S_r|}   \sum_{j \in S_r(k)}  \overline X_j \; }
\end{align*}  
Combining this with \eqref{tututu} we obtain
\begin{equation} \label{sususu}
 \mathfrak N_{W,U}(r) =  \frac{1}{K}  \sum_{k=1}^K \dotprodbis{ \;\hat U_k \; , \;  W \Big( \overline X_k-    \frac{1}{|S_r|} \sum_{j \in S_r(k)} \overline X_{j} \Big) \;}
\end{equation}
From
formula \eqref{matrixQ}, we see that  row  $\alpha$ of the matrix $Q$ is given by the formula
\begin{equation} \label{Qrow}
Q^T \be_\alpha = \sum_{\beta=1}^{s_c} \hot(\alpha,\beta) \; \mu_\beta.  
\end{equation}
We then write  $\bz_k = [\alpha_1, \ldots, \alpha_L]$ and note   that the $\ell^{th}$ column of  $\overline X_k$ can be expressed as
\begin{align}
 \left[\overline X_k \right]_{:,\ell} =  \sum _{\beta=1}^{n_c}  \hot(\alpha_\ell,\beta)  \mu_\beta = Q^T \be_{\alpha_\ell}.
\end{align}
From this we obtain that 
\begin{equation*} 
\overline  X_k =   Q^T Z_k 
\end{equation*}
and therefore \eqref{sususu} becomes
\begin{align*}
 \mathfrak N_{W,U}(r) &=  \frac{1}{K}  \sum_{k=1}^K \dotprodbis{ \;\hat U_k \; , \;  WQ^T \Big( Z_k-    \frac{1}{|S_r|} \sum_{j \in S_r(k)}  Z_{j} \Big) \;} \\
   & = \frac{1}{K}  \sum_{k=1}^K \dotprodbis{ \;\hat U_k \; , \;  W Q^T \Big( \theta_r Z_k  + A_r \Big) \;}
\end{align*}
where we have used the identity
$
Z_k - \frac{1}{|S_r|} \sum_{j \in S_r(k)}  Z_{j}  =    \theta_r Z_k  + A_r
$ to obtain the second equality.
Finally, we use the fact that  $\sum_{k} \hat  U_k =0$ to obtain
\begin{equation*}
 \mathfrak N_{W,U}(r) =   \frac{\theta_r}{K}  \sum_{k=1}^K \dotprodbis{ \;\hat U_k \; , \;  W Q^T Z_k  \;} = \frac{ \theta_r}{K}  \dotprodbis{\hat U,  W Q^T Z }
\end{equation*}
\end{proof}

Combining lemma \ref{lemma:buffalo1} and \ref{lemma:buffalo2} concludes the proof of theorem \ref{theorem:buffalo}.

\section{Proof of theorem \ref{theorem:1} and its converse} \label{section:thm1}
In this section we prove theorem \ref{theorem:1} under assumption \ref{symstrong},  and its converse under assumptions \ref{symstrong} and \ref{symconv}. We start by recalling the definition of a type-I collapse configuration.
\begin{definition}[Type-I Collapse] \label{def:1ap}  The weights $(W,U)$ of the network $h_{W,U}$ form a type-I collapse configuration if and only if the conditions
\begin{enumerate}[topsep=0pt]
\item[\rm i)] There exists $c \ge 0$ so that $\bw_{(\alpha,\beta)} = c    \,\cpt_\alpha$ \;  for all $(\alpha,\beta) \in \voc$.
\item[\rm ii)] There exists $c' \ge 0$ so that $\bu_{k, \ell} =  c'\,  \cpt_\alpha$ \; for all  $(k,\ell)$ satisfying $z_{k, \ell}=\alpha$ and all $\alpha \in \mathcal C$.
\end{enumerate}
hold for some collection $\cpt_1, \ldots, \cpt_{n_c} \in \real^d$ of equiangular vectors.
\end{definition}
It will prove convenient to reformulate this definition using matrix notations. Toward this goal, we define equiangular matrices as follow:
\begin{definition}(Equiangular Matrices) A matrix $\mathfrak F \in \real^{d \times n_c}$ is said to be equiangular if and only if the relations
$$
 \mathfrak F \, \ones_{n_c} = 0 \qquad \text{ and } \qquad \mathfrak F^T \mathfrak F = \frac{n_c}{n_c-1} \;I_{n_c} - \frac{1}{n_c-1} \; \ones_{n_c} \ones_{n_c}^T
$$
hold.
\end{definition}
Comparing the above definition with the definition of equiangular vectors provided in the main paper, we easily see that
a matrix
$$
\mathfrak F = \begin{bmatrix} 
\vert & \vert & & \vert \\
\mathfrak f_1 & \mathfrak f_2 & \cdots & \mathfrak f_{n_c} \\ 
\vert & \vert & & \vert \\
\end{bmatrix} \in \real^{d \times n_c}
$$
is equiangular if and only if its columns $\cpt_1, \ldots, \cpt_{n_c} \in \real^d$ are equiangular. Relations (i) and (ii)   defining a type-I collapse configuration  can now  be expressed  in matrix format as
$$
W = c\; \mathfrak F \; P \qquad \text{ and } \qquad \hat U = c' \; \mathfrak F  \; Z  \qquad \text{ for some equiangular matrix $\mathfrak F$}
$$
where the matrices $Z$ and $P$ are given by formula  \eqref{Z} and \eqref{P}.  We then let
\begin{multline} \label{setOmegaI}
\Omega^{I}_{c} := \Big\{ (W, U) :  \text{There exist an equiangular matrix $\mathfrak F$ such that } \\   W = c\; \mathfrak F \; P \quad   \text{ and } \quad    \hat U = c \; \sqrt{\frac{n_w}{KL}} \;  \mathfrak F  \; Z    \Big\}
\end{multline}
and note that $\Omega^{I}_{c}$ is simply  the set of weights $(W,U)$ which are in a type-I collapse configuration with constant $c$ and $c' = c \; \sqrt{ n_w / (KL)}$.  We now state the main theorem of this section.

\begin{theorem}  \label{theorem:thm1}
Assume uniform sampling $\mu_\beta = 1 / s_c\,$ for each word distribution.  Let $\tau \ge 0$ denote the unique minimizer of the strictly convex function
$$
H(t) :=  \log \left( 1 -  \frac{K}{n_c^L}  +  \frac{K}{n_c^L} \Big( 1+ (n_c-1) e^{-\eta t } \Big)^L \right) + \lambda t \qquad \text{where }\quad  \eta =    \frac{n_c}{n_c-1} \; \frac{1}{\sqrt{n_w KL}}
$$
and let   $c = \sqrt{\tau / n_w}$. 
Then we have the following:
\begin{enumerate}
\item[(i)] If the latent variables $\bz_1, \ldots, \bz_K$ are mutually distinct and satisfy  assumption \ref{symstrong}, then 
$$
 \Omega^{I}_{c}  \subset \arg \min \mathcal R
$$
\item[(ii)] If the latent variables $\bz_1, \ldots, \bz_K$ are mutually distinct and satisfy  assumptions \ref{symstrong} and \ref{symconv}, then 
$$
 \Omega^{I}_{c}  = \arg \min \mathcal R
$$
\end{enumerate}
\end{theorem}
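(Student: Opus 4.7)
The plan is to derive a sharp lower bound $\mathcal R(W,U)\ge H(\|W\|_F\|\hat U\|_F)$ valid on $\mathcal N$, deduce $\mathcal R(W,U)\ge H(\tau)$ globally, and verify that type-I collapse configurations saturate it. For the converse in part (ii) I will track the equality conditions through each step and invoke Assumption \ref{symconv} to extract the equiangular structure.

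First I would reduce to $\mathcal N$ by the standard shift trick: replacing each $\hat U_k$ by $\hat U_k-\bar U$ with $\bar U:=\tfrac{1}{K}\sum_j\hat U_j$ leaves the cross-entropy loss unchanged (softmax absorbs a uniform offset of the logits) and only decreases $\|\hat U\|_F^2$ by Pythagoras. On $\mathcal N$, Theorem \ref{theorem:buffalo} gives $\mathcal R_0(W,U)\ge g(-\dotprodbis{\hat U,WQ^TZ})$ with strict inequality off $\mathcal E$. In the uniform case $Q=P/s_c$, and combining Lemma \ref{lemma:sym}(ii) with $PP^T=s_c I_{n_c}$, two successive Cauchy--Schwarz estimates yield
\begin{equation*}
\dotprodbis{\hat U,WQ^TZ}\le\|\hat U\|_F\|WQ^TZ\|_F\le \|\hat U\|_F\|W\|_F\sqrt{KL/n_w},
\end{equation*}
the second step boiling down to the row-wise inequality $\|\sum_\beta\bw_{(\alpha,\beta)}\|^2\le s_c\sum_\beta\|\bw_{(\alpha,\beta)}\|^2$. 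Pairing this with AM--GM $\tfrac{\lambda}{2}(\|W\|_F^2+\|\hat U\|_F^2)\ge\lambda\|W\|_F\|\hat U\|_F$ and verifying the exponent conversion $\tfrac{n_c}{(n_c-1)LK}\cdot t\sqrt{KL/n_w}=\eta t$, I obtain $\mathcal R(W,U)\ge H(t)\ge H(\tau)$ with $t=\|W\|_F\|\hat U\|_F$.

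For part (i) I would plug a configuration $W=c\mathfrak F P$, $\hat U=c\sqrt{n_w/(KL)}\,\mathfrak F Z$ with $c=\sqrt{\tau/n_w}$ into each inequality and verify tightness: $\mathfrak F\ones_{n_c}=0$ places $(W,U)\in\mathcal N$; the per-position identity $\dotprod{\cpt_\alpha-\cpt_{\alpha'},\cpt_\alpha}=n_c/(n_c-1)$ for $\alpha\ne\alpha'$ gives $\margin(\bx,j)=cc'\tfrac{n_c}{n_c-1}\dist(\bz_k,\bz_j)$ so $(W,U)\in\mathcal E$; and $\tr(\mathfrak F^T\mathfrak F)=n_c$ yields $\|W\|_F=c\sqrt{n_w}=\|\hat U\|_F$ with $\|W\|_F\|\hat U\|_F=c^2 n_w=\tau$. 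All inequalities are then tight, so $\mathcal R(W,U)=H(\tau)$, proving (i).

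For part (ii), assume $\mathcal R(W,U)=H(\tau)$. Strict convexity of $H$ and equality in AM--GM force $\|W\|_F=\|\hat U\|_F$ and $\|W\|_F\|\hat U\|_F=\tau$. Equality in the inner Cauchy--Schwarz forces $\bw_{(\alpha,\beta)}$ to be independent of $\beta$, so $W=\mathfrak W P$ for some $\mathfrak W\in\real^{d\times n_c}$; equality in the outer Cauchy--Schwarz (noting $WQ^T=\mathfrak W PP^T/s_c=\mathfrak W$) forces $\hat U=\alpha\,\mathfrak W Z$ for some scalar $\alpha$. Then $(W,U)\in\mathcal N$ gives $\mathfrak W\ones_{n_c}=0$. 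Expanding the margin, $\margin(\bx,j)=\alpha\dotprodbis{\mathfrak W^T\mathfrak W,\Gamma^{(k,j)}}$, so $(W,U)\in\mathcal E$ translates into $\dotprodbis{\mathfrak W^T\mathfrak W,\Gamma^{(j,k)}-\Gamma^{(j',k')}}=0$ on all equal-distance pairs. Since $\mathfrak W^T\mathfrak W$ is positive semi-definite, Assumption \ref{symconv} forces $\mathfrak W^T\mathfrak W=aI_{n_c}+b\ones\ones^T$; combined with $\mathfrak W\ones=0$ this fixes $b=-a/n_c$ and hence $\mathfrak W^T\mathfrak W=c^2\mathfrak F^T\mathfrak F$ for some equiangular $\mathfrak F$ and $c=\sqrt{a(n_c-1)/n_c}$, which yields $\mathfrak W=c\mathfrak F$ up to a harmless left rotation absorbed into $\mathfrak F$. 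Matching norms then pins down $c=\sqrt{\tau/n_w}$ and $\alpha=\sqrt{n_w/(KL)}$, placing $(W,U)\in\Omega^I_c$. The main obstacle is precisely this last stretch: promoting qualitative equi-margin into the rigid equiangular factorisation of $\mathfrak W$, which is exactly what Assumption \ref{symconv} is engineered to enable.
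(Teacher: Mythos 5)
Your proof is correct and takes a genuinely different route from the paper. The paper proceeds by solving a constrained bilinear maximization problem via KKT conditions (Lemma~\ref{lemma:bilinearI} in the appendix: maximize $\dotprodbis{\hat U, WQ^TZ}$ subject to $\tfrac12(\|W\|_F^2+\|\hat U\|_F^2)=c^2 n_w$), characterizing the set $\mathcal B^I_c$ of maximizers explicitly, and then running a three-way comparison argument (Lemma~\ref{lemma:silvermonkeyI}) between an arbitrary global minimizer, a collapse configuration with matching total Frobenius norm, and the lower bound of Theorem~\ref{theorem:buffalo}; the value $H$ only appears afterwards, in Lemma~\ref{lemma:valueI}, to select the optimal $c$. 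You instead pass directly from Theorem~\ref{theorem:buffalo} to the inequality $\mathcal R(W,U)\ge H(\|W\|_F\|\hat U\|_F)\ge H(\tau)$ on $\mathcal N$ via two Cauchy--Schwarz steps (using $ZZ^T=\tfrac{KL}{n_c}I_{n_c}$ from Lemma~\ref{lemma:sym} and $\|WP^T\|_F^2\le s_c\|W\|_F^2$) and AM--GM on the ridge term, verifying the exponent rate $\eta$ by direct calculation. This short-circuits the KKT analysis entirely; what it buys is a much more transparent converse in part (ii), since the equality cases of Cauchy--Schwarz and AM--GM are elementary and deliver $\|W\|_F=\|\hat U\|_F$, $W=\mathfrak W P$, and $\hat U=\alpha\mathfrak W Z$ immediately, after which the final extraction of the equiangular Gram structure from equimargin plus Assumption~\ref{symconv} coincides with the paper's Lemma~\ref{tournette2}. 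What the paper's route buys in exchange is reusability: the parametric bilinear lemma $\mathcal B^I_c$ and the comparison lemma are structured to transfer verbatim to the LayerNorm case (Theorem~\ref{theorem:thm3}), where the constraint set ${\rm Range}(\vphi)$ is nonlinear and your Cauchy--Schwarz decoupling would not apply as directly. One small stylistic nit: in the converse you first invoke strict convexity of $H$ to pin $t=\tau$ and then equality in AM--GM; logically you need both, and equality in the first Cauchy--Schwarz also requires $\alpha\ge 0$, which is automatic here since a negative $\alpha$ would make the inner product negative — worth one line for completeness.
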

Note that (i) states that any $(W,U) \in \Omega_{c}^I$ is a minimizer of the regularized risk --- this corresponds to theorem \ref{theorem:1} from the main paper. Statement (ii) assert that any minimizer of the regularized risk must belong to $\Omega_{c}^I$ --- this is the converse of theorem \ref{theorem:1}. 
The remainder of this section is devoted to the proof of theorem \ref{theorem:thm1}. We will assume uniform sampling 
\begin{equation*} 
\mu_\beta = 1 / s_c \qquad \text{for all } \beta \in [s_c]
\end{equation*}
 everywhere in this section --- all  lemmas and propositions are proven under this assumption, even when not explicitly stated.

\subsection{The bilinear optimization problem} \label{sub:bilinear}
From theorem \ref{theorem:buffalo}, it is clear that the quantity
$$
 \dotprodbis{\hat U,  W Q^TZ }
$$
plays an important role in our analysis.
In this subsection   we consider the bilinear optimization problem
\begin{align}
& \text{maximize } \;  \dotprodbis{\hat U,  W Q^TZ }    \label{base1}  \\
& \text{subject to } \quad  \frac{1}{2} \left( \|W\|_F^2 + \|\hat U\|_F^2 \right) =  c^2  \, n_w  \label {base2}
\end{align}
where $c \in \real$ is some  constant.
The following lemma identifies all solutions of this optimization problem.
\begin{lemma} \label{lemma:bilinearI}  Assume the latent variables  satisfy  assumption \ref{symstrong}.
Then  $(W,U)$ is a solution of the optimization problem \eqref{base1} -- \eqref{base2}  if and only if it belongs to the set
\begin{multline} \label{bigB}
\mathcal B^I_{c} =\Big\{ (W, U) :  \text{There exist a matrix $F \in \real^{d \times n_c}$ with $\|F\|_F^2 = n_c$  }  \\ \text{ such that }     W = c \,   F  P \text{ and }  \hat U  =  c \; \sqrt{\frac{n_w}{KL}} \;  F   Z  \Big\}
\end{multline}
\end{lemma}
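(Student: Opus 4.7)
My plan is to reduce the claim to a classical bilinear optimisation with a closed-form answer. Under uniform sampling $\mu_\beta = 1/s_c$, one reads off from \eqref{matrixQ} that $Q = s_c^{-1} P$, so setting $A := Q^T Z = s_c^{-1} P^T Z \in \real^{n_w \times KL}$, the problem becomes: maximise $\dotprodbis{\hat U, W A}$ subject to $\tfrac{1}{2}(\|W\|_F^2 + \|\hat U\|_F^2) = c^2 n_w$. I will bound this from above by chaining Cauchy--Schwarz, the operator-norm inequality, and AM--GM, and then characterise the pairs $(W,\hat U)$ that achieve all three equalities simultaneously.

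\textbf{Upper bound.} The chain
\[
\dotprodbis{\hat U, W A} \;\le\; \|\hat U\|_F \, \|W A\|_F \;\le\; \|\hat U\|_F \, \|W\|_F \, \|A\|_{\mathrm{op}} \;\le\; \tfrac{1}{2}\bigl(\|\hat U\|_F^2 + \|W\|_F^2\bigr) \|A\|_{\mathrm{op}} \;=\; c^2 n_w \, \|A\|_{\mathrm{op}}
\]
reduces everything to computing $\|A\|_{\mathrm{op}}$. Using $P P^T = s_c I_{n_c}$ (an immediate consequence of the block structure of $P$) together with $Z Z^T = (KL/n_c) I_{n_c}$ from part~(ii) of lemma~\ref{lemma:sym}, I expect to obtain $A A^T = (KL/n_w) \cdot (s_c^{-1} P^T P)$, where $s_c^{-1} P^T P$ is the orthogonal projection onto $\mathrm{range}(P^T)$, an $n_c$-dimensional subspace of $\real^{n_w}$. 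Hence $\|A\|_{\mathrm{op}}^2 = KL / n_w$ and the optimum is bounded above by $c^2 \sqrt{n_w K L}$.

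\textbf{Equality cases and identification of $\mathcal B^I_c$.} All three inequalities are saturated simultaneously if and only if (a) $\hat U = \lambda W A$ for some $\lambda \ge 0$, (b) every row of $W$ lies in the top left-singular subspace of $A$, which by the previous step is $\mathrm{range}(P^T)$, and (c) $\|W\|_F = \|\hat U\|_F$. Condition (b) forces $W = F P$ for some $F \in \real^{d \times n_c}$. With this form, $W A = s_c^{-1} F P P^T Z = F Z$, so (a) gives $\hat U = \lambda F Z$; moreover $\|W\|_F^2 = s_c \|F\|_F^2$ and $\|\hat U\|_F^2 = \lambda^2 (KL/n_c) \|F\|_F^2$, so (c) pins down $\lambda^2 = n_w/(KL)$, and the sphere constraint pins down $\|F\|_F^2 = c^2 n_c$. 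Rescaling $F \mapsto F/c$ (for $c>0$) then recovers exactly the parametrisation in \eqref{bigB}; the degenerate case $c=0$ forces $W = 0, \hat U = 0$ and is trivial. Verifying conversely that every element of $\mathcal B^I_c$ attains the upper bound is a one-line substitution.

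\textbf{Main obstacle.} The only non-routine step is identifying the top left-singular subspace of $A$ explicitly, and this reduces cleanly to diagonalising the block-diagonal matrix $P^T P$ once the symmetry identity $Z Z^T = (K L / n_c) I_{n_c}$ is on the table. Thus the symmetry assumption \ref{symstrong} enters only through this single consequence of lemma~\ref{lemma:sym}; everything else is bookkeeping based on the equality characterisations of Cauchy--Schwarz and AM--GM.
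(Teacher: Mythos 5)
Your proof is correct, and it reaches the same characterization as the paper via a closely parallel but more streamlined route. The paper's argument splits the problem: it first shows by a rescaling argument that any optimum must have $\|W\|_F = \|\hat U\|_F$, then eliminates $\hat U$ via Cauchy--Schwarz, and finally applies KKT conditions to the reduced problem $\max \|WP^T\|_F^2$ s.t. $\|W\|_F^2 = c^2 n_w$, arguing that the Lagrange multiplier must be $\nu = s_c$ because $WP^T = 0$ is suboptimal. You instead compress the rescaling step and the KKT analysis into a single chain of Cauchy--Schwarz, operator-norm, and AM--GM inequalities, with the crucial observation that $A A^T$ is a scalar multiple of the orthogonal projection $s_c^{-1} P^T P$ onto $\mathrm{range}(P^T)$, so that the operator-norm inequality is tight precisely when the rows of $W$ lie in that range. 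This is arguably cleaner: it avoids the Lagrangian apparatus entirely, makes the role of the projection structure explicit, and handles the forward implication and its converse with the same bookkeeping. Both arguments rest on the same two algebraic identities, $PP^T = s_c I_{n_c}$ and $ZZ^T = (KL/n_c) I_{n_c}$, the latter being exactly where assumption~\ref{symstrong} enters, as you correctly note. The one thing worth making explicit in a polished writeup: when you say equality in the operator-norm step forces $W = W\Pi$ with $\Pi = s_c^{-1}P^TP$, spell out that this is because $\|WA\|_F^2 = \|A\|_{\mathrm{op}}^2\, \tr(W\Pi W^T)$ while $\|W\|_F^2\|A\|_{\mathrm{op}}^2 = \|A\|_{\mathrm{op}}^2\,\tr(WW^T)$, and $\Pi$ being a projection makes the gap $\tr(W(I-\Pi)W^T) \ge 0$ vanish iff $W(I-\Pi)=0$; that is the single place the degeneracy of the singular spectrum of $A$ is used.
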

 Note that the set $\mathcal B^I_{c}$  is very similar to the set $\Omega^I_{c}$  that defines type-I collapse configuration (c.f.  \eqref{setOmegaI}). In particular,  since an equiangular matrix has $n_c$ columns of norm $1$, it  always satisfies 
 $\| \mathfrak F \|_F^2 = n_c$, 
and therefore we have the inclusion
\begin{equation} \label{inclusionI}
\Omega^I_{c} \subset \mathcal B^I_{c}. 
\end{equation}
The remainder of this subsection is devoted to the proof of the lemma.

First note that the lemma is trivially true if $c=0$, so we may assume $c\neq0$ for the remainder of the proof.
Second, we note that since $\mu_\beta = 1/s_c$, then the matrices $P$ and $Q$ defined by \eqref{P} and \eqref{matrixQ} are scalar multiple of one another. We may therefore replace the matrix $Q$ appearing in \eqref{base1} by $P$, wich leads to
\begin{align}
& \text{maximize } \;  \dotprodbis{\hat U,  W P^TZ }    \label{optA}  \\
& \text{subject to } \quad  \frac{1}{2} \left( \|W\|_F^2 + \|\hat U\|_F^2 \right) =  c^2  \, n_w  \label {optB}
\end{align}
We now  show that any $(W,\hat U) \in \mathcal B_c^I$ satisfies the constraint \eqref{optB} and have objective value equal to $ s_c\, c^2    \sqrt{ KL n_w} $.
\begin{claim} If $(W,\hat U) \in \mathcal B_c^I$, then 
\begin{align*}
 \frac{1}{2} \left( \|W\|_F^2 + \|\hat U\|_F^2 \right) =  c^2  \, n_w      \qquad \text{ and } \qquad  \dotprodbis{\hat U,  W P^TZ }  =  c^2 \, s_c \,  \sqrt{ KL n_w}  
\end{align*}
\end{claim}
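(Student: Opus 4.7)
The plan is to exploit the very rigid structure of the set $\mathcal B_c^I$: every pair $(W,\hat U) \in \mathcal B_c^I$ is built out of a single matrix $F \in \real^{d \times n_c}$ with $\|F\|_F^2 = n_c$, together with the fixed right-multipliers $P$ and $Z$. Since $P$ has $s_c$ ones in row $\alpha$ (exactly at the columns corresponding to the $s_c$ words of concept $\alpha$) and zeros elsewhere, one immediately reads off $PP^T = s_c I_{n_c}$. Similarly, statement (ii) of Lemma \ref{lemma:sym} (which is available because the latent variables are assumed to satisfy Assumption \ref{symstrong}) gives $ZZ^T = \frac{KL}{n_c} I_{n_c}$. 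These two identities are the only nontrivial ingredients.

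For the norm identity, I would compute using Lemma \ref{lemma:magictrace}:
\begin{equation*}
\|W\|_F^2 = c^2 \dotprodbis{FP, FP} = c^2 \dotprodbis{F^TF, PP^T} = c^2 s_c \|F\|_F^2 = c^2 s_c n_c = c^2 n_w,
\end{equation*}
and analogously, using $\hat U = c\sqrt{n_w/(KL)}\, FZ$,
\begin{equation*}
\|\hat U\|_F^2 = c^2 \frac{n_w}{KL}\, \dotprodbis{F^TF, ZZ^T} = c^2 \frac{n_w}{KL} \cdot \frac{KL}{n_c} \cdot n_c = c^2 n_w.
\end{equation*}
Adding and halving then yields $\tfrac{1}{2}(\|W\|_F^2 + \|\hat U\|_F^2) = c^2 n_w$.

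For the inner product identity, I would substitute the parametrizations directly and simplify with $PP^T = s_c I_{n_c}$:
\begin{equation*}
\dotprodbis{\hat U, W P^T Z} = c^2 \sqrt{\tfrac{n_w}{KL}} \dotprodbis{FZ,\, F P P^T Z} = c^2 s_c \sqrt{\tfrac{n_w}{KL}} \dotprodbis{FZ, FZ},
\end{equation*}
and then use the computation of $\|FZ\|_F^2 = KL$ from the previous paragraph to conclude
\begin{equation*}
\dotprodbis{\hat U, W P^T Z} = c^2 s_c \sqrt{\tfrac{n_w}{KL}} \cdot KL = c^2 s_c \sqrt{KL\, n_w}.
\end{equation*}

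There is no real obstacle here: the claim is essentially a warm-up bookkeeping check that confirms the parametrization of $\mathcal B_c^I$ was correctly normalized so as to saturate the constraint in \eqref{base2} and to achieve a uniform objective value across the whole set. The only point worth emphasizing is that $ZZ^T = \frac{KL}{n_c} I_{n_c}$ genuinely requires the latent symmetry hypothesis, so the uniform-sampling assumption of this section alone would not suffice; otherwise the manipulations are straightforward applications of $\dotprodbis{A,BC} = \dotprodbis{F^TF, \cdot}$-style identities from Lemma \ref{lemma:magictrace}.
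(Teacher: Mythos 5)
Your proof is correct and follows essentially the same route as the paper: substitute the parametrization $W = cFP$, $\hat U = c\sqrt{n_w/(KL)}\,FZ$, then invoke $PP^T = s_c I_{n_c}$ and $ZZ^T = \tfrac{KL}{n_c} I_{n_c}$ (the latter from Lemma \ref{lemma:sym}, hence requiring the latent-symmetry assumption, as you rightly note) together with the Frobenius identities of Lemma \ref{lemma:magictrace} to reduce everything to $\|F\|_F^2 = n_c$. The only cosmetic difference is that you pass through $\dotprodbis{F^TF, PP^T}$ where the paper writes $\dotprodbis{FPP^T, F}$, which is the same computation.
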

\begin{proof}
Assume  $(W,U) \in \mathcal B_c^I$.
From  definition \eqref{P} of the matrix $P$, we have  $PP^T = s_c  I_{n_c}$, and therefore
\begin{align*}
\| W \|^2_F = c^2\|FP \|^2_F = c^2 \dotprodbis{  FP ,FP }  = c^2 \dotprodbis{ FPP^T , F } =    c^2 \, s_c \,  \| F \|_F^2  = c^2 \, s_c \, n_c  = c^2 \, n_w
\end{align*}
where we have used the fact that $s_c = n_w/n_c$. Using  $ZZ^T = \frac{KL}{n_c} I$ from lemma \ref{lemma:sym}, we obtain 
\begin{align*}
\| FZ  \|^2_F =  \dotprodbis{  FZ ,FZ }  =  \dotprodbis{  FZZ^T ,F }  =   \left(   \frac{KL}{n_c} \right) \| F \|_F^2  = KL
\end{align*}
As a consequence we have 
\begin{align*}
\| \hat U \|^2_F = c^2 \frac{n_w}{KL} \| FZ  \|^2_F = c^2 \, n_w
\end{align*}
and, using  $PP^T = s_c  I_{n_c}$ one more time, 
\begin{align*}
 \dotprodbis{\hat U,  W P^TZ } = c^2 \sqrt{ \frac{n_w}{KL} }  \dotprodbis{ FZ,  FP P^TZ } =  c^2 \, s_c \,  \sqrt{ \frac{n_w}{KL} }  \dotprodbis{ FZ,  FZ } =  c^2 \, s_c \,  \sqrt{ KL n_w}  
\end{align*}
\end{proof}

We then prove that $W$ and $\hat U$ must have same Frobenius norm if they solve the optimization problem.
\begin{claim}
If $(W,U)$ is a solution of \eqref{optA} -- \eqref{optB}, then
\begin{equation} \label{samenorm}
\|W\|_F^2 = \|\hat U\|_F^2  = c^2  \, n_w
\end{equation}
\end{claim}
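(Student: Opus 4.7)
The plan is to show that at any maximizer of the bilinear program \eqref{optA}--\eqref{optB}, the AM-GM inequality must be tight between $\|W\|_F^2$ and $\|\hat U\|_F^2$.

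First I would introduce the shorthand $M = P^T Z \in \real^{n_w \times KL}$ and combine two standard inequalities. By Cauchy--Schwarz on the Frobenius inner product,
$$\dotprodbis{\hat U,\, WM} \;\le\; \|\hat U\|_F \cdot \|WM\|_F.$$
Next, using the identity $\|WM\|_F^2 = \dotprodbis{W^TW,\, MM^T}$ and the fact that for any PSD matrix $S$ and symmetric $A$ one has $\dotprodbis{S, A} \le \lambda_{\max}(A)\,\mathrm{tr}(S)$, I would bound $\|WM\|_F^2 \le \|MM^T\|_{\rm op} \cdot \|W\|_F^2$. Combining these,
$$\dotprodbis{\hat U,\, WM} \;\le\; \sqrt{\|MM^T\|_{\rm op}}\cdot \|W\|_F \cdot \|\hat U\|_F.$$

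Then I would apply AM-GM to the product $\|W\|_F\cdot\|\hat U\|_F$ under the constraint \eqref{optB}:
$$\|W\|_F \cdot \|\hat U\|_F \;\le\; \tfrac{1}{2}\big(\|W\|_F^2 + \|\hat U\|_F^2\big) \;=\; c^2 n_w,$$
with equality if and only if $\|W\|_F^2 = \|\hat U\|_F^2$. This yields the universal upper bound $\dotprodbis{\hat U, WM} \le c^2 n_w \sqrt{\|MM^T\|_{\rm op}}$ on the objective. Using $ZZ^T = \tfrac{KL}{n_c}I_{n_c}$ from lemma \ref{lemma:sym} together with the block structure of $P^TP$ (block-diagonal with blocks $\ones_{s_c}\ones_{s_c}^T$), I would compute $\|MM^T\|_{\rm op} = \tfrac{KL}{n_c}\cdot s_c = \tfrac{KL\,n_w}{n_c^2}$, so the upper bound becomes $c^2 s_c \sqrt{KL\,n_w}$.

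Finally, I would invoke the \emph{previous} claim, which has already shown that every $(W,U) \in \mathcal B^I_c$ achieves exactly this value $c^2 s_c \sqrt{KL\,n_w}$ while satisfying the constraint. Hence this is the maximum value of the bilinear program, and if $(W,U)$ is any solution then \emph{both} the Cauchy--Schwarz/operator-norm step and the AM-GM step must be saturated. Saturation of AM-GM forces $\|W\|_F^2 = \|\hat U\|_F^2$, which together with $\tfrac12(\|W\|_F^2 + \|\hat U\|_F^2) = c^2 n_w$ yields the claimed identity $\|W\|_F^2 = \|\hat U\|_F^2 = c^2 n_w$. There is no substantial obstacle here: the only computation needing care is the explicit operator norm of $MM^T$, which falls out of the symmetry identities for $Z$ and the block structure of $P$.
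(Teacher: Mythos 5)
Your proof is correct, but it takes a genuinely different route from the paper. The paper's argument is a self-contained rescaling-by-contradiction: if $\|W\|_F^2 = c^2 n_w + \epsilon$ and $\|\hat U\|_F^2 = c^2 n_w - \epsilon$ with $\epsilon \ne 0$, one replaces $(W,\hat U)$ by $\big(\sqrt{\tfrac{c^2 n_w}{c^2 n_w+\epsilon}}\,W,\ \sqrt{\tfrac{c^2 n_w}{c^2 n_w-\epsilon}}\,\hat U\big)$, which still satisfies \eqref{optB} and multiplies the bilinear objective by $\sqrt{c^4 n_w^2 / (c^4 n_w^2 - \epsilon^2)} > 1$, contradicting optimality. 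This is a one-line argument that uses nothing beyond the homogeneity of the objective in each factor. Your argument instead derives an explicit universal upper bound $\dotprodbis{\hat U, WM} \le c^2 s_c \sqrt{KL\,n_w}$ by chaining Cauchy--Schwarz, the PSD trace bound $\dotprodbis{W^TW, MM^T} \le \|MM^T\|_{\rm op}\|W\|_F^2$, an explicit spectral computation $\|MM^T\|_{\rm op} = \tfrac{KL}{n_c}\cdot s_c$, and AM-GM; you then use the previous claim (that elements of $\mathcal B^I_c$ achieve this value) to conclude the bound is tight and hence AM-GM must be saturated. Your computation of $\|MM^T\|_{\rm op}$ is correct (since $MM^T = \tfrac{KL}{n_c}P^TP$ and $P^TP$ is block-diagonal with rank-one blocks $\ones_{s_c}\ones_{s_c}^T$), and the logic of forcing equality in AM-GM is sound. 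The trade-off: the paper's rescaling argument is shorter, needs no knowledge of the optimal value, and does not depend on the previous claim; your approach is heavier but produces the optimal objective value as a by-product, which is perhaps more informative. One minor cosmetic remark: the paper first disposes of the $c=0$ case separately, which you should also do since the AM-GM saturation argument is vacuous when both norms are forced to zero by the constraint.
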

\begin{proof}
We prove it by contradiction. Suppose $(W,\hat U)$ is a solution of \eqref{base1}--\eqref{base2} with   $\|W \|_F^2 \neq \|\hat U \|_F^2$. 
Since the average of $\|W\|_F^2$ and  $\|\hat U\|_F^2$ is equal to  $c^2   n_w>0$ according to the constaint, there must then exists
 $\epsilon \neq 0$ such that
$$
\|W \|_F^2 = c^2 n_w + \epsilon \qquad \text{ and } \qquad  \|\hat U\|_F^2  = c^2 n_w - \epsilon 
$$ 
Let 
$$
W_0=  \sqrt{\frac{c^2 n_w}{c^2 n_w+ \epsilon}}  \; W \qquad \text{ and } \qquad \hat U_0 =  \sqrt{\frac{c^2 n_w}{c^2 n_w- \epsilon}}  \; \hat U
$$ 
and note that  
$$
\|W_0\|_F^2 = \|\hat U_0\|_F^2  = c^2  \, n_w
$$
and therefore $(W_0,\hat U_0)$  clearly satisfies the constraint.  We also have 
$$
 \dotprodbis{\hat U_0,  W_0 P^TZ } =  \sqrt{\frac{c^4 n_w^2}{c^4 n_w^2-\epsilon^2}}    \dotprodbis{\hat U ,  W  P^TZ } > \dotprodbis{\hat U ,  W  P^TZ } 
$$
since $\epsilon \neq 0$ and therefore $(W,\hat U)$ can not be a maximizer, which is a contradiction. 
\end{proof}

As a consequence of the above claim, the optimization problem \eqref{optA} -- \eqref{optB} is equivalent to
\begin{align}
& \text{maximize } \; \dotprodbis{\hat U,  W P^TZ }  \label{optAAA}  \\
& \text{subject to } \quad  \|W\|_F^2  = c^2 \, n_w  \quad \text{and} \quad \|\hat U\|_F^2 = c^2 \, n_w \label{optBBB}
\end{align}
We then have
\begin{claim} If $(W,\hat U)$ is a solution of \eqref{optAAA} -- \eqref{optBBB}, then $(W,\hat U) \in \mathcal B_c^I$. 
\end{claim}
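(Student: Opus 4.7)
The plan is to use Cauchy-Schwarz after projecting $W$ and $\hat U$ onto the row spaces of $P$ and $Z$, respectively. Since $PP^T = s_c I_{n_c}$ and (by Lemma~\ref{lemma:sym}) $ZZ^T = \tfrac{KL}{n_c} I_{n_c}$, both $P$ and $Z$ have orthogonal rows, so there are natural orthogonal decompositions
\[
W \;=\; F_1 P + W^\perp, \qquad \hat U \;=\; F_2 Z + \hat U^\perp,
\]
where $F_1 := \tfrac{1}{s_c} W P^T$, $F_2 := \tfrac{n_c}{KL} \hat U Z^T$, $W^\perp P^T = 0$, and $\hat U^\perp Z^T = 0$. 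Pythagoras then gives $\|W\|_F^2 = s_c \|F_1\|_F^2 + \|W^\perp\|_F^2$ and $\|\hat U\|_F^2 = \tfrac{KL}{n_c}\|F_2\|_F^2 + \|\hat U^\perp\|_F^2$.

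Next I would rewrite the objective purely in terms of $F_1$ and $F_2$. Using Lemma~\ref{lemma:magictrace} and the orthogonality relations $\hat U^\perp Z^T = 0$ and $W^\perp P^T = 0$,
\[
\dotprodbis{\hat U, W P^T Z} \;=\; s_c \dotprodbis{\hat U, F_1 Z} \;=\; s_c \dotprodbis{F_2 Z, F_1 Z} \;=\; \frac{s_c \, KL}{n_c}\dotprodbis{F_1,F_2} \;=\; \frac{n_w KL}{n_c^2}\dotprodbis{F_1,F_2},
\]
since $s_c = n_w/n_c$. Cauchy-Schwarz then bounds this by $\tfrac{n_w KL}{n_c^2}\|F_1\|_F\|F_2\|_F$, and the norm constraints give $\|F_1\|_F^2 \le c^2 n_c$ and $\|F_2\|_F^2 \le c^2 n_c n_w/(KL)$, so the objective is at most $c^2 s_c \sqrt{KL\, n_w}$. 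By the previous claim this upper bound is attained by elements of $\mathcal B^I_c$, so any maximizer must achieve equality in each of the three inequalities.

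Finally I would read off the equality conditions. The norm inequalities force $W^\perp = 0$ and $\hat U^\perp = 0$, hence $W = F_1 P$ and $\hat U = F_2 Z$. Cauchy-Schwarz equality forces $F_2 = \alpha F_1$ for some $\alpha \ge 0$ (positivity because the inner product must be positive to achieve the maximum), and matching the norms gives $\alpha = \sqrt{n_w/(KL)}$ and $\|F_1\|_F^2 = c^2 n_c$. Setting $F := F_1/c$ yields $\|F\|_F^2 = n_c$, $W = cFP$, and $\hat U = c\sqrt{n_w/(KL)}\, F Z$, which is exactly the defining property of $\mathcal B^I_c$. The only mildly delicate points are tracking the constants through the $s_c, n_c, n_w, KL$ bookkeeping and confirming that the proportionality constant in the Cauchy-Schwarz equality must indeed be nonnegative; both are routine once the decomposition above is in place.
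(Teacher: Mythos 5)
Your proof is correct and takes a genuinely different route from the paper's. The paper first maximizes over $\hat U$ alone (getting $\hat U \propto W P^T Z$), reduces to maximizing $\|WP^T\|_F^2$ over $W$ alone, and then analyzes the KKT system $WP^TP = \nu W$ to conclude $\nu = s_c$ and hence $W$ lies in the row space of $P$. You instead treat $W$ and $\hat U$ symmetrically: decompose each orthogonally onto the row spaces of $P$ and $Z$ (which works precisely because $PP^T = s_c I_{n_c}$ and $ZZ^T = \tfrac{KL}{n_c}I_{n_c}$ make the rows orthogonal), observe the objective depends only on the projections $F_1, F_2$, and close with Cauchy--Schwarz. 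Your argument is more elementary --- it avoids Lagrange multipliers entirely and makes the equality analysis (kill the orthogonal components, align $F_1$ and $F_2$) visually transparent --- while the paper's KKT approach is perhaps the more routine maneuver in this literature. One small point worth stating explicitly in a polished write-up: the degenerate case $c = 0$ should be dispatched first (the constraints then force $W = \hat U = 0$, which trivially lies in $\mathcal B_0^I$), since for $c > 0$ you need $F_1, F_2 \neq 0$ before the Cauchy--Schwarz equality condition cleanly yields $F_2 = \alpha F_1$ with $\alpha > 0$. The paper handles this at the outset of the lemma, and you implicitly rely on it; otherwise the constant bookkeeping and equality analysis are sound.
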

Note that according to the first claim, all $(W,\hat U) \in \mathcal B_c^I$ have same objective value, and therefore, according to the above claim, they must all be maximizer.  As a consequence,  proving the above claim will conclude the proof of lemma \ref{lemma:bilinearI}.

\begin{proof}[Proof of the claim]
Maximizing \eqref{optAAA} over $\hat U$ first gives
\begin{equation} \label{recover}
\hat U = c\, \sqrt{n_w}  \frac{W P^TZ}{\| W P^TZ \|_F}
\end{equation}
and therefore the optimization problem \eqref{optAAA} -- \eqref{optBBB}  reduces to
\begin{align*}
& \text{maximize } \quad   \|W P^TZ\|^2_F     \\
& \text{subject to } \quad  \|W\|_F^2  = c^2 \, n_w 
\end{align*}
Using  $ZZ^T = \frac{KL}{n_c} I$ from lemma \ref{lemma:sym} we then get
\begin{equation*}
 \|  W P^T Z  \|^2_F =\dotprodbis{  W P^T Z,   W P^T Z }  = \dotprodbis{  W P^T ZZ^T,   W P^T } =   \frac{KL}{n_c}  \| W P^T \|_F^2
\end{equation*}
and therefore the  problem further reduces to
\begin{align*}
& \text{maximize } \; \|W P^T\|_F^2     \\
& \text{subject to } \quad  \|W\|_F^2  = c^2 \, n_w 
\end{align*}
The KKT conditions for this optimization problem are
\begin{align}
& WP^TP = \nu W \label{babar} \\
& \| W \|_F^2 = c^2 \, n_w  \label{bibir}
\end{align} 
where $\nu \in \real$ is the Lagrange multiplier.   

Assume  that $(W,\hat U)$ is a solution of the original optimization problem \eqref{optAAA} -- \eqref{optBBB}. Then, according to the above discussion,  $W$ must satisfy \eqref{babar} -- \eqref{bibir}. Right multiplying \eqref{babar} by $P^T$, and using $PP^T = s_c  I_{n_c}$,  gives
$$
s_c WP^T = \nu WP^T
$$
So either $\nu = s_c$ or $WP^T=0$. The latter is not possible since the choice $WP^T=0$ leads to an objective value equal to zero in the original optimization problem  \eqref{optAAA} -- \eqref{optBBB}. We must therefore have 
$\nu = s_c$, and equation \eqref{babar} becomes
\begin{equation} \label{hahaha}
W = \frac{1}{s_c} W P^T P
\end{equation}
which can obviously be written as 
$$
W = c \, FP
$$
by setting $F:= \frac{1}{c \, s_c} W P^T$.
Since $W$ satisfies  \eqref{bibir} we must have 
\begin{equation}
c^2 \, n_w =  \| W \|^2_F =  c^2 \|  FP \|^2_F = c^2 \dotprodbis{  FP ,FP }  = c^2  \dotprodbis{ FPP^T , F } =   c^2 \, s_c  \| F \|_F^2,
\end{equation}
and so   $\| F \|_F^2 =  n_w / s_c = n_c$. 

According to \eqref{recover}, $\hat U$ bust be a scalar multiple of  the matrix 
$$
W P^T Z =  (c FP) P^T Z =  c \, s_c \, F Z
$$ 
Using the fact that  $ZZ^T = \frac{KL}{n_c} I$  and $\|F\|_F^2 = n_c$ we then obtain that 
\begin{equation}
 \|  FZ \|^2_F = \dotprodbis{  FZ ,FZ }  =\dotprodbis{ FZZ^T , F } =    \frac{KL}{n_c}  \| F \|_F^2 =  KL
\end{equation}
and so equation \eqref{recover} becomes
\begin{align}
\hat U = c\, \sqrt{n_w}  \frac{W P^TZ}{\| W P^TZ \|_F} =  c\, \sqrt{n_w}  \frac{F Z}{\sqrt{KL}} 
\end{align}
which concludes the proof.
\end{proof}

\subsection{Proof of collapse}
Recall that the regularized risk associated with the network $h_{W,U}$ is defined by
\begin{equation} \label{flute}
\mathcal R(W,U) =  \mathcal R_0(W,U) +  \frac{\lambda}{2} \left( \|W\|_F^2 +  \|U\|_F^2  \right) 
\end{equation}
and recall that the set of weights in type-I collapse configuration is
\begin{multline} \label{OmegaI}
\Omega^{I}_{c} = \Big\{ (W, U) :  \text{There exist an equiangular matrix $\mathfrak F$ such that } \\   W = c\; \mathfrak F \; P \quad   \text{ and } \quad    \hat U = c \; \sqrt{\frac{n_w}{KL}} \;  \mathfrak F  \; Z    \Big\}
\end{multline}

This subsection is devoted to the proof of the following proposition.
\begin{proposition}  \label{proposition:collapseI}
We have the following:
\begin{enumerate}
\item[(i)] If the latent variables $\bz_1, \ldots, \bz_K$ are mutually distinct and satisfy  assumption \ref{symstrong}, then there exists $c\in \real$ such that
$$
 \Omega^{I}_{c}  \subset \arg \min \mathcal R
$$
\item[(ii)] If the latent variables $\bz_1, \ldots, \bz_K$ are mutually distinct and satisfy  assumptions \ref{symstrong} and \ref{symconv}, then any $(W, U)$ that minimizes $\mathcal R$ must belong to $ \Omega^{I}_{c}$ for some $c\in \real$. 
\end{enumerate}
\end{proposition}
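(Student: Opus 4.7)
The plan is to lower bound $\mathcal R(W,U)$ by a scalar function of $t := \tfrac12(\|W\|_F^2 + \|U\|_F^2)$, recognize this function as $H(t)$, and show that equality is achieved precisely on $\Omega^{I}_c$ with $c = \sqrt{\tau/n_w}$. The chain has three links: a translation reduction to $\mathcal{N}$, Theorem \ref{theorem:buffalo} to lower-bound $\mathcal R_0$, and Lemma \ref{lemma:bilinearI} to control the resulting bilinear term.

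First I would observe that $\mathcal R_0$ is invariant under the shift $\hat U_k \mapsto \hat U_k - \bar U$ with $\bar U = \tfrac{1}{K}\sum_k \hat U_k$, since each margin depends only on differences $\hat U_k - \hat U_j$; however $\|U\|_F^2 = \sum_k \|\hat U_k\|_F^2$ strictly decreases under this shift unless $\bar U = 0$. So any minimizer of $\mathcal R$ must lie in $\mathcal{N}$. For $(W,U) \in \mathcal{N}$, Theorem \ref{theorem:buffalo} yields $\mathcal R_0(W,U) \ge g(-\dotprodbis{\hat U, WQ^T Z})$ with equality iff $(W,U) \in \mathcal{E}$. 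Since $\mu_\beta = 1/s_c$ implies $Q = (1/s_c)P$, Lemma \ref{lemma:bilinearI} controls the bilinear term: for fixed $t$, choosing $c = \sqrt{t/n_w}$ so that $\tfrac12(\|W\|_F^2 + \|U\|_F^2) = c^2 n_w = t$, the supremum of $\dotprodbis{\hat U, WQ^T Z}$ equals $t\sqrt{KL/n_w}$ and is achieved precisely on $\mathcal{B}^{I}_c$. Because $g$ is strictly increasing, $\mathcal R(W,U) \ge g(-t\sqrt{KL/n_w}) + \lambda t$, and a direct substitution using the explicit expressions for $|S_r|$ and $\theta_r$ from Lemma \ref{lemma:sym} identifies this lower bound as $H(t)$. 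Strict convexity of $H$ then gives $H(t) \ge H(\tau)$, so $\mathcal R(W,U) \ge H(\tau)$.

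For part (i), any $(W,U) \in \Omega^{I}_c$ with $c = \sqrt{\tau/n_w}$ lies in $\mathcal{N} \cap \mathcal{E} \cap \mathcal{B}^{I}_c$ and has $t = \tau$. Indeed, $\Omega^{I}_c \subset \mathcal{B}^{I}_c$ since any equiangular $\mathfrak F$ satisfies $\|\mathfrak F\|_F^2 = n_c$; $\Omega^{I}_c \subset \mathcal{N}$ since $\mathfrak F \ones_{n_c} = 0$ combined with $\sum_k Z_k = (K/n_c)\ones_{n_c}\ones_L^T$ from Lemma \ref{lemma:sym} gives $\sum_k \hat U_k = 0$; and the equimargin property is verified by a direct computation that reduces $\margin_{W,U}(\bx,j)$ to a scalar multiple of $\dist(\bz_k, \bz_j)$ for $\bx \in \mathcal{X}_k$. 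Every inequality in the chain is therefore an equality, so $\mathcal R(W,U) = H(\tau)$ and $\Omega^{I}_c \subset \arg\min \mathcal R$.

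For part (ii), any minimizer saturates every inequality in the chain, so $(W,U) \in \mathcal{N} \cap \mathcal{E} \cap \mathcal{B}^{I}_c$ with $c = \sqrt{\tau/n_w}$. Writing $W = cFP$ and $\hat U = c\sqrt{n_w/(KL)}\,FZ$ for some $F \in \real^{d \times n_c}$ with $\|F\|_F^2 = n_c$, the identity $P\hot(\bx) = Z_k$ for $\bx \in \mathcal{X}_k$ reduces the margin to $\margin_{W,U}(\bx,j) = c^2 \sqrt{n_w/(KL)}\,\dotprodbis{F^T F, \Gamma^{(k,j)}}$. Equimargin then forces $\dotprodbis{F^T F, \Gamma^{(k,j)} - \Gamma^{(k',j')}} = 0$ whenever $\dist(\bz_k,\bz_j) = \dist(\bz_{k'},\bz_{j'})$, so Assumption \ref{symconv} (applied to the PSD matrix $F^T F$) gives $F^T F = a I_{n_c} + b \ones_{n_c}\ones_{n_c}^T$, with $a + b = 1$ enforced by $\|F\|_F^2 = n_c$. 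Finally $(W,U) \in \mathcal{N}$ together with $\sum_k Z_k = (K/n_c)\ones_{n_c}\ones_L^T$ force $F\ones_{n_c} = 0$, hence $0 = F^T F \ones_{n_c} = (a + b n_c)\ones_{n_c}$. Solving gives $a = n_c/(n_c-1)$ and $b = -1/(n_c-1)$, which are precisely the equiangular Gram conditions, so $(W,U) \in \Omega^{I}_c$.

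The main obstacle is the last step: the bilinear optimization alone only pins down $\|F\|_F^2 = n_c$, a large degenerate family. Collapsing down to the equiangular frame requires both additional reductions — equimargin combined with Assumption \ref{symconv} restricts $F^T F$ to the two-dimensional family $\mathcal{A}$, and the mean-zero condition $F\ones_{n_c} = 0$ inherited from $\mathcal{N}$ selects the unique equiangular element of that family.
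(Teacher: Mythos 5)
Your proof is correct and follows essentially the same route as the paper: reduce to $\mathcal{N}$ by a centering argument, apply Theorem \ref{theorem:buffalo} to bound $\mathcal{R}_0$, control the bilinear term via Lemma \ref{lemma:bilinearI}, and then close the gap from $\mathcal{B}^I_c$ to $\Omega^I_c$ by combining the equimargin property with Assumption \ref{symconv} and the mean-zero constraint. The only (cosmetic) difference is that you fold the determination of the optimal constant $c = \sqrt{\tau/n_w}$ into the argument by parametrizing the lower bound as a scalar function $H(t)$ and minimizing over $t$, whereas the paper's Proposition \ref{proposition:collapseI} leaves $c$ unspecified and defers its computation to Lemma \ref{lemma:valueI}.
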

This proposition states that, under appropriate symmetry assumption, the weights of the network $h_{W,U}$ do collapse into a type-I configuration. This proposition however does not provide the value of the constant $c$ involved in the collapse. Determining this constant will be done in the subsection  \ref{sub:constants}. 

We start with a simple lemma.
\begin{lemma} \label{lemma:inN} Any  global minimizer of \eqref{flute} must belong to $\mathcal N$.
\end{lemma}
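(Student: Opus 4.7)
The plan is to use the shift-invariance of cross-entropy combined with the strict convexity of the Frobenius norm. The core observation is that adding a common matrix to every block $\hat U_k$ only shifts each coordinate of the network output by a common scalar (depending on $\bx$), leaving the cross-entropy unchanged, while the $\|U\|_F^2$ term is strictly minimized when the average $\tfrac{1}{K}\sum_k \hat U_k$ vanishes.

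Concretely, given any candidate $(W,U)$, let $\bar U = \tfrac{1}{K}\sum_{k=1}^K \hat U_k \in \real^{d\times L}$ and define $U' \in \real^{K\times Ld}$ by replacing each block with $\hat U'_k := \hat U_k - \bar U$. Then by construction $\sum_k \hat U'_k = 0$, so $(W,U') \in \mathcal N$. For the unregularized part, note that for every $\bx \in \data$,
\[
[h_{W,U'}(\bx)]_k = \dotprodbig{\hat U_k - \bar U, W\hot(\bx)} = [h_{W,U}(\bx)]_k - \dotprodbig{\bar U, W\hot(\bx)},
\]
i.e.\ the score vector is shifted by the same scalar $\dotprodbis{\bar U, W\hot(\bx)}$ in every coordinate. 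Since the cross-entropy loss $\ell(\cdot,k)$ satisfies $\ell(\by + t\ones_K,k) = \ell(\by,k)$ for all $t\in\real$, this shift leaves every term of $\mathcal R_0$ invariant, giving $\mathcal R_0(W,U') = \mathcal R_0(W,U)$.

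For the regularizer, I would use that $\|U\|_F^2 = \sum_{k=1}^K \|\hat U_k\|_F^2$ and then apply the standard bias-variance identity
\[
\sum_{k=1}^K \|\hat U_k - \bar U\|_F^2 \;=\; \sum_{k=1}^K \|\hat U_k\|_F^2 \;-\; K\,\|\bar U\|_F^2,
\]
so that $\|U'\|_F^2 \le \|U\|_F^2$, with strict inequality whenever $\bar U \neq 0$. Since $\|W\|_F^2$ is untouched, combining the two yields $\mathcal R(W,U') \le \mathcal R(W,U)$, strictly unless $(W,U)\in\mathcal N$. Hence any global minimizer of \eqref{flute} must satisfy $\bar U = 0$, i.e.\ belongs to $\mathcal N$.

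This argument is essentially routine and I do not expect any serious obstacle; the only point requiring a little care is the bookkeeping that identifies $\|U\|_F^2$ with $\sum_k \|\hat U_k\|_F^2$ via the reshaping \eqref{eq:upart}--\eqref{hatU}, which is immediate from the definition. Notably, the lemma does not require uniform sampling, assumption \ref{symstrong}, or any other structural hypothesis on the latent variables or word frequencies: it relies only on shift-invariance of $\ell$ and strict convexity of $\|\cdot\|_F^2$.
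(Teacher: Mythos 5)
Your proof is correct and follows essentially the same route as the paper: subtract the block-average $\bar U$ from each $\hat U_k$, observe that $\mathcal R_0$ is unchanged (you spell out the shift-invariance of cross-entropy, which the paper leaves implicit), and show the regularizer strictly decreases unless $\bar U=0$ (you use the bias--variance identity where the paper expands the square directly, but these are the same computation).
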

\begin{proof}
Let $(W^\star,U^\star)$ be a global minimizer. Define
$
B = \frac{1}{K} \sum_{k=1}^K U^\star_k 
$
and 
$$
 U_0= \begin{bmatrix} U_1^\star - B & U_2^\star - B & \cdots & U_K^\star-B \end{bmatrix}
$$
From the definition of the unregularized risk we have
$
\mathcal R_0(W^\star;  U_0) =   \mathcal R_0(W^\star;  U^\star)
$
and therefore
\begin{align*}
\frac{1}{K} \left( \mathcal R(W^\star;  U_0) -   \mathcal R(W^\star;  U^\star) \right) &=
\frac{\lambda}{2} \frac{1}{K}\sum_{k=1}^K \left(  \|U^\star_k-B\|_F^2 - \|U^\star_k\|_F^2 \right) \\
&= \frac{\lambda}{2} \frac{1}{K}\sum_{k=1}^K \left(  \|B\|_F^2  -2 \dotprodbis{B, U^\star_k}  \right) \\
& =  \frac{\lambda}{2}  \left(  \|B\|_F^2  -2 \dotprodbis{B, \frac{1}{K}\sum_{k=1}^KU^\star_k}  \right) \\
& =  - \frac{\lambda}{2}    \|B\|_F^2 
\end{align*}
So $B$ must be equal to zero, otherwise we would have $\mathcal R(W^\star,  U_0) < \mathcal R(W^\star, U^\star)  $.
\end{proof}

The next lemma bring together the bilinear optimization problem from subsection \ref{sub:bilinear} and  the sharp lower bound on the unregularized risk that we derived in section \ref{section:lowerbound}.

\begin{lemma} \label{lemma:silvermonkeyI}
Assume the latent variables  satisfy  assumption \ref{symstrong}.
Assume also that  $(W^\star,U^\star)$ is a global minimizer of  \eqref{flute} and let $c\in \real$ be such that
$$
  \frac{1}{2} \left( \|W^\star\|_F^2 +  \|U^\star\|_F^2  \right) = c^2 \, n_w.
$$
 Then the following hold:
\begin{enumerate}
\item[(i)] Any $(W,U)$ that belongs to
$
  \mathcal N \cap \mathcal E \cap \mathcal B^I_c
$
is also a global minimizer of \eqref{flute}.
\item[(ii)] If $\mathcal N \cap \mathcal E \cap \mathcal B^I_{c} \neq \emptyset$, then $(W^\star,U^\star)$ must belong to
$
  \mathcal N \cap \mathcal E \cap \mathcal B^I_c.
$

\end{enumerate}\end{lemma}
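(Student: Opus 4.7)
The plan is to assemble the three prior ingredients into a two-step variational argument. The key observation is that the regularized risk
\[
\mathcal R(W,U) = \mathcal R_0(W,U) + \frac{\lambda}{2}\bigl(\|W\|_F^2 + \|U\|_F^2\bigr)
\]
depends on $(W,U)$ only through $\mathcal R_0$ and the total squared norm, and since $\|U\|_F^2=\|\hat U\|_F^2$ holds by construction of $\hat U$, once we restrict to weights satisfying $\tfrac12(\|W\|_F^2+\|\hat U\|_F^2)=c^2 n_w$ the regularizer becomes the fixed constant $\lambda c^2 n_w$. Minimizing $\mathcal R$ on this sphere therefore reduces to minimizing $\mathcal R_0$.

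First I would use Lemma \ref{lemma:inN} to place $(W^\star,U^\star)$ in $\mathcal N$ and restrict attention to $\mathcal N$ throughout. For any $(W,U)\in\mathcal N$ on the sphere, Theorem \ref{theorem:buffalo} gives
\[
\mathcal R_0(W,U) \;\ge\; g\bigl(-\dotprodbis{\hat U, WQ^TZ}\bigr),
\]
with equality iff $(W,U)\in\mathcal E$. Since $g$ is strictly increasing, this lower bound is smallest when the bilinear form $\dotprodbis{\hat U, WQ^TZ}$ is largest, and Lemma \ref{lemma:bilinearI} identifies the maximizers of that bilinear form on the sphere as exactly $\mathcal B^I_c$. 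Denoting the maximal value by $M^\star$, we obtain the chain
\[
\mathcal R_0(W,U) \;\ge\; g(-\dotprodbis{\hat U, WQ^TZ}) \;\ge\; g(-M^\star)
\]
valid for every $(W,U)\in\mathcal N$ on the sphere, with equality throughout iff $(W,U)\in\mathcal E\cap \mathcal B^I_c$.

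For part (i), any $(W,U)\in\mathcal N\cap\mathcal E\cap\mathcal B^I_c$ automatically sits on the sphere (this is essentially the first claim in the proof of Lemma \ref{lemma:bilinearI}) and saturates the chain, so $\mathcal R(W,U)=g(-M^\star)+\lambda c^2 n_w$. Applying the same chain to $(W^\star,U^\star)$ gives $\mathcal R(W^\star,U^\star)\ge g(-M^\star)+\lambda c^2 n_w$, and global optimality of $(W^\star,U^\star)$ forces the reverse inequality, so $(W,U)$ is also a global minimizer. For part (ii), pick any $(W_0,U_0)$ in the nonempty intersection: its value $g(-M^\star)+\lambda c^2 n_w$ upper-bounds $\mathcal R(W^\star,U^\star)$, while the chain lower-bounds it by the same quantity, so every inequality along the chain at $(W^\star,U^\star)$ must be an equality, forcing $(W^\star,U^\star)\in\mathcal E\cap\mathcal B^I_c$ (membership in $\mathcal N$ having already been secured).

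I do not expect any real obstacle: all the heavy lifting was done upstream in Theorem \ref{theorem:buffalo} and Lemma \ref{lemma:bilinearI}, and the only care needed is the harmless bookkeeping that $\|U\|_F^2=\|\hat U\|_F^2$ and that the norm constraint defining the bilinear optimization coincides with the normalization of $(W^\star,U^\star)$ built into the definition of $c$.
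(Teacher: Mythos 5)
Your argument is correct and takes essentially the same route as the paper: invoke Lemma~\ref{lemma:inN} to place the minimizer in $\mathcal N$, then combine the sharp lower bound from Theorem~\ref{theorem:buffalo} with the identification of bilinear maximizers from Lemma~\ref{lemma:bilinearI} to build a two-step chain whose saturation characterizes $\mathcal E\cap\mathcal B^I_c$. The paper's proof writes the same chain directly in terms of $\dotprodbis{\hat U^\star,W^\star Q^TZ}$ versus $\dotprodbis{\hat U,WQ^TZ}$ (and handles part~(ii) by a short contradiction), whereas you factor out the maximal bilinear value $M^\star$ and argue directly; this is a cosmetic rather than substantive difference, and your bookkeeping that $\|U\|_F=\|\hat U\|_F$ and that $\mathcal B^I_c$ lies on the sphere is exactly what the paper uses as well.
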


\begin{proof}
Recall from theorem \ref{theorem:buffalo} that
\begin{align}
\mathcal R_0(W,U) &=  g \Big( -   \dotprodbis{\hat U,  W Q^TZ }  \Big) \qquad \text{ for all } (W,U) \in \mathcal N \cap \mathcal E \\
\mathcal R_0(W,U) &>  g \Big( -   \dotprodbis{\hat U,  W Q^TZ }  \Big) \qquad \text{ for all } (W,U) \in \mathcal N \cap \mathcal E^c \label{kokokaa}
\end{align}
We start by proving (i).  
If  $(W, U) \in \mathcal N \cap \mathcal E \cap \mathcal B^I_c$,  then we have 
\begin{align*}
\mathcal R_0 (W^\star,U^\star) 
& \ge g\left(  - \dotprodbis{\hat U^\star,   W^\star Q^TZ }\right)  && \text{[because $( W^\star,U^\star) \in \mathcal N$ due to lemma \ref{lemma:inN} ]} \\
&\ge g\left(  - \dotprodbis{\hat U,  \ W Q^TZ }\right)  && \text{[because $( W,U) \in \mathcal B^I_c$ and $g$ is increasing]}  \\
 & = \mathcal R_0(W,U) & & \text{[because $( W,U) \in \mathcal N \cap \mathcal E$ ]} 
\end{align*}
Since
$(W, U) \in \mathcal B_c^I$ we must have 
 $ \frac{1}{2} \left( \|W\|_F^2 + \| U\|_F^2 \right) = c^2 \, n_c =  \frac{1}{2} \left( \|W^\star\|_F^2 + \| U^\star\|_F^2 \right)  $. Therefore  $\mathcal R(W,U) \le \mathcal R(W^\star,U^\star) $ and  $(W,U)$ is a minimizer. 

We now prove (ii) by contradiction.
Suppose that $( W^\star, U^\star) \notin  \mathcal N \cap \mathcal E \cap \mathcal B^I_c$. This must mean that  
$$ ( W^\star, U^\star) \notin \mathcal E \cap \mathcal B^I_c $$
since it clearly belongs to $\mathcal N$.
If ($ W^\star, U^\star) \notin \mathcal E$ then the first inequality in the above computation is strict according to \eqref{kokokaa}. If ($ W^\star, U^\star) \notin  \mathcal B^I_c$ then the second inequality is strict because $g$ is strictly increasing.
\end{proof}

The above lemma establishes connections between the set of minimizers of the risk and the set $ \mathcal E \cap \mathcal N \cap \mathcal B^I_c$. The next two 
lemmas shows that the set $ \mathcal E \cap \mathcal N \cap \mathcal B^I_c$ is closely related to the set of collapsed configurations  $\Omega_c^{I}$. In other words we use the set $ \mathcal E \cap \mathcal N \cap \mathcal B^I_c$ as a bridge between the set of minimizers and the set of type-I collapse configurations.

\begin{lemma} \label{tournette}
 If the latent variables satisfy the  symmetry assumption \ref{symstrong}, then
\[
\Omega_c^{I} \subset \mathcal E \cap \mathcal N \cap \mathcal B^I_c
\]
\end{lemma}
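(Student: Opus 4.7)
The plan is to verify directly that any $(W,U) \in \Omega_c^I$ lies in each of the three sets $\mathcal B_c^I$, $\mathcal N$, and $\mathcal E$. Write $W = c\,\mathfrak F P$ and $\hat U = c\sqrt{n_w/(KL)}\,\mathfrak F Z$ for some equiangular matrix $\mathfrak F$, and in particular $\hat U_k = c\sqrt{n_w/(KL)}\,\mathfrak F Z_k$ for each $k$.

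The containment $\Omega_c^I \subset \mathcal B_c^I$ is immediate: an equiangular $\mathfrak F$ has $n_c$ columns of unit norm, so $\|\mathfrak F\|_F^2 = n_c$, and this is precisely the only requirement on $F$ in the definition \eqref{bigB} of $\mathcal B_c^I$ that goes beyond what already defines $\Omega_c^I$. This is the inclusion \eqref{inclusionI} already pointed out in the text.

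For $(W,U) \in \mathcal N$, I would sum the block columns of $\hat U$:
\[
\sum_{k=1}^K \hat U_k = c\sqrt{n_w/(KL)}\; \mathfrak F \sum_{k=1}^K Z_k .
\]
By lemma \ref{lemma:sym}(ii), which uses assumption \ref{symstrong}, $\sum_k Z_k = \frac{K}{n_c}\ones_{n_c}\ones_L^T$, and by the equiangular property $\mathfrak F \ones_{n_c} = 0$. The product vanishes, so $(W,U) \in \mathcal N$.

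The main computation is the equimargin property. For $\bx \in \data_k$, using \eqref{Phot} we get $W\hot(\bx) = c\,\mathfrak F P\hot(\bx) = c\,\mathfrak F Z_k$, so
\[
\margin_{W,U}(\bx,j) = \dotprodbis{\hat U_k-\hat U_j,\, W\hot(\bx)} = c^2\sqrt{n_w/(KL)}\;\dotprodbis{\mathfrak F(Z_k-Z_j),\,\mathfrak F Z_k}.
\]
Applying lemma \ref{lemma:magictrace} and using the equiangular identity $\mathfrak F^T \mathfrak F = \frac{n_c}{n_c-1}I_{n_c} - \frac{1}{n_c-1}\ones_{n_c}\ones_{n_c}^T$, one computes $\dotprodbis{Z_k, \mathfrak F^T\mathfrak F Z_k} = L$ (each column contributes a diagonal entry equal to $1$) and $\dotprodbis{Z_j, \mathfrak F^T\mathfrak F Z_k} = (L-r)\cdot 1 + r\cdot(-\tfrac{1}{n_c-1}) = L - \tfrac{n_c r}{n_c-1}$ where $r = \dist(\bz_k,\bz_j)$. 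Subtracting gives
\[
\margin_{W,U}(\bx,j) = c^2\sqrt{n_w/(KL)}\;\frac{n_c}{n_c-1}\,\dist(\bz_k,\bz_j),
\]
which depends only on the Hamming distance between the two latent variables and is independent of the particular $\bx \in \data_k$. This is exactly the equimargin property, so $(W,U) \in \mathcal E$.

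There is no real obstacle here: once the matrix reformulation of type-I collapse is in place, the three inclusions are one-line (for $\mathcal B_c^I$), a single application of lemma \ref{lemma:sym}(ii) plus $\mathfrak F\ones=0$ (for $\mathcal N$), and a direct Gram-matrix calculation (for $\mathcal E$). The slightly delicate step is simply keeping track of which symmetry is being invoked where; only the computation for $\mathcal N$ actually uses assumption \ref{symstrong}, while the equimargin calculation works regardless.
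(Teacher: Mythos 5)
Your proof is correct and follows essentially the same route as the paper: the $\mathcal B^I_c$ inclusion is the observation in \eqref{inclusionI}, the $\mathcal N$ inclusion uses lemma \ref{lemma:sym}(ii) together with $\mathfrak F\ones_{n_c}=0$, and the $\mathcal E$ inclusion is the same Gram-matrix computation of $\margin_{W,U}(\bx,j)=cc'\tfrac{n_c}{n_c-1}\dist(\bz_j,\bz_k)$ that the paper writes out column-by-column. Your closing remark --- that only the $\mathcal N$ step actually invokes assumption \ref{symstrong} --- is also accurate and a nice clarification.
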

\begin{proof}
We already know from \eqref{inclusionI} that $\Omega^I_{c} \subset \mathcal B^I_{c}$.  
We now show that $\Omega^I_{c} \subset  \mathcal E$.
Suppose 
  $(W,U) \in \Omega^I_c$. Then there exists an equiangular  matrix $\mathfrak F  \in \real^{d \times n_c}$  such that
\begin{equation*}
   W = c \;   \mathfrak  F \;  P \qquad  \text{ and }  \qquad  \hat U  = c' \;    \mathfrak F \;  Z  
\end{equation*}
where $c' =  c \sqrt{ n_w / (KL)}$. Recall from \eqref{Phot} that
$$
P \hot(\bx) =  Z_k \qquad \text{for all $\bx \in \data_k$.}
$$
Consider two latent variables
$$
\bz_k = [\alpha_1, \ldots, \alpha_L] \quad \text{ and } \bz_j = [\alpha'_1, \ldots, \alpha'_L] 
$$
and assume $\bx$ is generated by $\bz_k$, meaning that  $\bx \in \data_k$. We then have 
\begin{align*}
\margin_{W,U}(\bx , j) &= \dotprodbis{\hat U_{k} - \hat U_{j}, W \hot(\bx)} \\
& = c \, c'  \;  \dotprodbis{\mathfrak F\; Z_{k} -\mathfrak F\; Z_{j}, \mathfrak F\; P \hot(\bx)} \\
& =    c\, c' \;   \dotprodbis{\mathfrak F\; Z_{k} -\mathfrak F\; Z_{j}, \mathfrak F\; Z_k}  \\
& =   c\, c'  \; \sum_{\ell =1}^L \dotprodbis{ \; \cpt_{\alpha_{\ell } } -\cpt_{\alpha'_{\ell } }  \;,\;  \cpt_{\alpha_{\ell} }  \; } \\
& = c\, c' \;\; \left( L -  \sum_{\ell =1}^L \dotprodbis{ \; \cpt_{\alpha'_{\ell } }  \;,\;  \cpt_{\alpha_{\ell} }  \; }     \right) 
\end{align*}
Since $\mathfrak f_1, \ldots, \mathfrak f_{n_c}$ are equiangular, we have 
$$
\sum_{\ell =1}^L \dotprodbis{ \; \cpt_{\alpha'_{\ell } }  \;,\;  \cpt_{\alpha_{\ell} }  \; }      =   \Big( L - \dist(\bz_j,\bz_{k}) \Big) - \frac{1}{n_c-1} \dist(\bz_j,\bz_{k}) = L - \frac{n_c}{n_c-1} \dist(\bz_j,\bz_{k}). 
$$
Therefore 
$$
\margin_{W,U}(\bx , j) = cc' \frac{n_c}{n_c-1} \dist(\bz_j,\bz_{k}) 
$$
and it is  clear that the margin  only depends on  $\dist(\bz_j,\bz_{k})$,  and therefore $(W,U)$ satisfies the equimargin property.

Finally we show that $\Omega^I_{c} \subset  \mathcal N$.  
 Suppose  $(W,U) \in \Omega^I_c$.  From property (ii) of lemma \ref{lemma:sym} we have
$$
\sum_{k=1}^K Z_k = \frac{K}{n_c} \ones_{n_c} \ones^T_L
$$
Therefore, 
\begin{align*}
\sum_{k=1}^K \hat U_k =   c' \sum_{k=1}^K    \mathfrak F \; Z_k =  c' \;\;  \frac{K}{n_c}  \mathfrak F \; \ones_{n_c} \ones^T_L = 0
\end{align*}
where we have used the fact that  $\mathfrak F \; \ones_{n_c} = 0$.
\end{proof}

\begin{lemma} \label{tournette2} 
If  the latent variables satisfy  assumptions \ref{symstrong} and \ref{symconv},  then 
\[
\Omega^{I}_{c}   =  \mathcal E \cap \mathcal N \cap \mathcal B^I_{c} 
\]
\end{lemma}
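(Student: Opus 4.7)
The plan is to prove the reverse inclusion, since lemma \ref{tournette} together with the observation \eqref{inclusionI} already provides $\Omega^I_c \subset \mathcal{E} \cap \mathcal{N} \cap \mathcal{B}^I_c$. So I will fix $(W,U) \in \mathcal{E} \cap \mathcal{N} \cap \mathcal{B}^I_c$. The case $c = 0$ forces $(W,U) = (0,0)$ and is immediate, so I assume $c > 0$. The definition of $\mathcal{B}^I_c$ supplies a matrix $F \in \real^{d \times n_c}$ with $\|F\|_F^2 = n_c$ such that $W = c F P$ and $\hat U = c' F Z$ with $c' = c\sqrt{n_w/(KL)}$; the goal reduces to showing that $F$ is equiangular.

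First I would rewrite the margin in terms of $F^T F$. For $\bx \in \data_k$, using $P\hot(\bx) = Z_k$ from \eqref{Phot} together with the inner-product manipulations of lemma \ref{lemma:magictrace} and the cyclic property of the trace,
\begin{equation*}
\margin_{W,U}(\bx,j) \;=\; cc'\dotprodbis{F(Z_k-Z_j),\,FZ_k} \;=\; cc'\dotprodbis{F^T F,\, Z_k(Z_k-Z_j)^T} \;=\; cc'\dotprodbis{F^T F,\, \Gamma^{(k,j)}}.
\end{equation*}
Since $cc' > 0$, the equimargin property $(W,U) \in \mathcal{E}$ translates directly into the hypothesis of assumption \ref{symconv} for the positive semi-definite matrix $F^T F$: for every quadruple $(k,j,k',j')$ with $\dist(\bz_k,\bz_j) = \dist(\bz_{k'},\bz_{j'})$ one has $\dotprodbis{F^T F,\, \Gamma^{(k,j)} - \Gamma^{(k',j')}} = 0$. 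Assumption \ref{symconv} therefore forces $F^T F = aI_{n_c} + b\,\ones_{n_c}\ones_{n_c}^T$ for some scalars $a, b \in \real$.

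Finally I would pin down $a$ and $b$ using the remaining two pieces of data. The condition $(W,U) \in \mathcal{N}$, namely $\sum_k \hat U_k = 0$, combined with the identity $\sum_k Z_k = \frac{K}{n_c}\ones_{n_c}\ones_L^T$ from lemma \ref{lemma:sym}, yields $F\ones_{n_c} = 0$. Applying $F^T F$ to $\ones_{n_c}$ then gives $a + b n_c = 0$, so $b = -a/n_c$. The normalization $\|F\|_F^2 = n_c$ from $\mathcal{B}^I_c$ gives $\tr(F^T F) = an_c + bn_c = a(n_c-1) = n_c$, whence $a = n_c/(n_c-1)$ and $b = -1/(n_c-1)$. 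These are exactly the coefficients that define an equiangular matrix, and combined with $F\ones_{n_c} = 0$ the matrix $F$ is equiangular. Hence $(W,U) \in \Omega^I_c$.

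The main obstacle is not computational but structural: translating the equimargin property into a linear condition on the PSD matrix $F^T F$ and recognizing that this is precisely the hypothesis of assumption \ref{symconv}. This is also where the extra assumption \ref{symconv} (over and above \ref{symstrong} used in lemma \ref{tournette}) becomes essential; once the bridge is crossed, the trace identities pin down the two scalars $a,b$ unambiguously and the equiangular structure drops out.
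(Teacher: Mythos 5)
Your proof is correct and follows essentially the same route as the paper: factor $(W,\hat U)$ through $F$ via $\mathcal B^I_c$, extract $F\ones_{n_c}=0$ from membership in $\mathcal N$ together with $\sum_k Z_k=\tfrac{K}{n_c}\ones_{n_c}\ones_L^T$, rewrite the margin as $cc'\langle F^TF,\Gamma^{(k,j)}\rangle_F$ so that equimargin becomes exactly the hypothesis of assumption \ref{symconv}, and then pin down $a,b$ from $F\ones_{n_c}=0$ and the trace normalization. The only (minor) refinement over the paper's write-up is your explicit handling of $c=0$; the paper's step dividing through by $cc'$ tacitly assumes $c>0$, so your edge-case note is a small but genuine bit of extra care.
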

\begin{proof}
From the previous lemma we know that $
\Omega^{I}_{c}   \subset \mathcal E \cap \mathcal N \cap \mathcal B^I_{c}  
$ so we need to  show that
$$
 \mathcal E \cap \mathcal N \cap \mathcal B^I_{c} \subset \Omega^{I}_{c}. 
$$
Let $ (W,U) \in \mathcal E \cap \mathcal N \cap \mathcal B^I_{c} $. Since $(W,U)$  belongs to $\mathcal B^I_c$, there exists a matrix $F \in \real^{d \times n_c}$ with $\|F\|_F^2 = n_c$ such that
\begin{equation}
   W = c \;   F \;  P \qquad  \text{ and }  \qquad  U  = c' \;    F \;  Z  \label{montblanc}
\end{equation}
where $c' =  c \sqrt{ n_w / (KL)}$.
Our goal is to show that $F$ is equiangular, meaning that it satisfies the two relations
\begin{equation} \label{lac_leman}
 F \, \ones_{n_c} = 0 \qquad \text{ and } \qquad  F^T  F = \frac{n_c}{n_c-1} \;I_{n_c} - \frac{1}{n_c-1} \; \ones_{n_c} \ones_{n_c}^T.
\end{equation}

The first relation is easily obtained. Indeed, using the fact that $ (W,U) \in \mathcal N$ together with the identity
$
\sum_{k=1}^K Z_k = \frac{K}{n_c} \ones_{n_c} \ones^T_L
$
(which hold due to lemma \ref{lemma:sym}),  we obtain  
$$
0=\sum_{k=0}^K U_k =  c'   \sum_{k=0}^K F Z_k  =  c' \frac{K}{n_c} F \ones_{n_c} \ones_L^T. 
$$
We then note that the matrix $F \ones_{n_c} \ones_L^T$ is the zero matrix if and only if
 $F \ones_{n_c} =0$. 
 
We now prove the second equality of \eqref{lac_leman}. Assume that  $\bx \in \data_k$.  Using the fact that $
P \hot(\bx) =  Z_k
$ together with \eqref{montblanc}, we obtain
\begin{align}
\margin_{W,U}(\bx , j) &= \dotprodbis{\hat U_{k} - \hat U_{j}, W \hot(\bx)} \nonumber \\
& = c\, c' \;  \dotprodbis{F\; Z_{k} - F\; Z_{j},  F\; P \hot(\bx)}  \nonumber \\
& =    c\,c'   \dotprodbis{F\; Z_{k} -F\; Z_{j}, F\; Z_k}   \nonumber \\
& =  c\,c'   \dotprodbis{F^TF(Z_{k} -Z_{j}), Z_k}  \nonumber \\
& =  c\,c'    \dotprodbis{\; F^TF \; , \; \Gamma^{(k,j)} \;} \label{clement}
\end{align}
We recall that the matrices
$$
\Gamma^{(k,j)} =  Z_k(Z_k-Z_{j})^T \in \real^{n_c \times n_c}.
$$
are precisely the ones involved in the statement of assumption \ref{symconv}.
Since $(W,U) \in \mathcal E$, the margins must only depend on the distance between the latent variables.  Due to \eqref{clement},  we can be express this as 
\begin{equation*} 
\dotprodbis{ \; F^TF \; , \;   \Gamma^{(j,k)}} =  \dotprodbis{ \; F^TF \; ,  \; \Gamma^{(j',k')}  }  \qquad   \forall j,k,j',k' \in [K]  \text{ s.t. } \dist(\bz_{j},\bz_{k}) = \dist(\bz_{j'},\bz_{k'})
\end{equation*}
Since the  $F^TF$ is clearly positive semi-definite, we may then use assumption  \ref{symconv} to conclude that
$
F^TF \in \mathcal A
$. Recalling definition \eqref{calA} of the set $\mathcal A$, we therefore have
\begin{equation} \label{lac_annecy}
F^T F =  a  \;I_{n_c} + b\; \ones_{n_c} \ones_{n_c}^T
\end{equation}
for some $a,b \in \real $. To conclude our proof, we need to show that
\begin{equation} \label{sol99}
a = \frac{n_c}{n_c-1} \qquad \text{and} \qquad b = -\frac{1}{n_c-1}.
\end{equation}

Combining \eqref{lac_annecy} with the first equality of \eqref{lac_leman},  we obtain
\begin{equation} \label{hh1}
0 = F^T F\,  \ones_{n_c} = a  \;\ones_{n_c} + b\; \ones_{n_c} \ones_{n_c}^T \ones_{n_c} = (a+b n_c) \ones_{n_c}
\end{equation}
 Combining \eqref{lac_annecy} with the fact that $\|F\|_F^2 = n_c$,  we obtain
 \begin{equation}\label{hh2}
 n_c = \|F\|_F^{2} = \tr(F^TF) = n_c(a+b)
 \end{equation}
 The constants $a,b \in \real$, according to \eqref{hh1} and \eqref{hh2} must therefore solve the system
 $$
 \begin{cases}
 a+ bn_c &= 0 \\
 a + b &= 1
 \end{cases}
 $$
 and one can easily check that the solution of this system is precisely given by \eqref{sol99}.
\end{proof}

We conlude this subsection by proving proposition \ref{proposition:collapseI}.

\begin{proof}[Proof of Proposition  \ref{proposition:collapseI}]  Let $(W^\star, U^\star)$ be a global minimizer of $\mathcal R$ and let $c \in \real$ be such that
$$
 \frac{1}{2} \left( \|W^\star\|_F^2 +  \|U^\star\|_F^2  \right) = c^2 \, n_w
$$
If the  latent variables satisfies assumption \ref{symstrong}, we can use  lemma \ref{tournette} together with  the first statement of lemma \ref{lemma:silvermonkeyI} to obtain
\[
\Omega_c^{I} \subset \mathcal E \cap \mathcal N \cap \mathcal B^I_c \subset \arg \min \mathcal R,
\]
which is precisely statement (i) of the proposition.

We now prove statement (ii) of the proposition.
If the  latent variables satisfies assumption \ref{symstrong} and \ref{symconv} then  lemma \ref{tournette2} asserts that
\[
\Omega_c^{I} = \mathcal E \cap \mathcal N \cap \mathcal B^I_c 
\]
The set $\Omega_c^{I}$ is clearly not empty (because the set of equiangular matrices is not empty), and we may therefore  use  the second statement of lemma \ref{lemma:silvermonkeyI} to obtain that
$$
(W^\star, U^\star) \; \in \;  \mathcal E \cap \mathcal N \cap \mathcal B^I_c  \;  = \;  \Omega_c^{I}
$$
\end{proof}

\subsection{Determining the constant $c$} \label{sub:constants} 
The next lemma provides an explicit formula for the regularized risk of a network whose weights are in type-I collapse configuration with constant $c$. 

\begin{lemma} \label{lemma:valueI} 
Assume the latent variables satisfy assumption \ref{symstrong}.  If  the pair of weights  $(W,U)$ belongs to  $\Omega^{I}_{c}$, then
\begin{equation} \label{evaleval}
\mathcal R(W,U) =   \log \left( 1 -  \frac{K}{n_c^L}  +  \frac{K}{n_c^L} \Big( 1+ (n_c-1) e^{-\eta \, n_w c^2} \Big)^L \right) \; + \;  \lambda \, n_w c^2 
\end{equation}
where  $\eta =   \frac{n_c}{n_c-1} \sqrt{\frac{1}{n_w KL}} $.
\end{lemma}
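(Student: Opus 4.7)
The plan is to evaluate both terms of the regularized risk $\mathcal R(W,U) = \mathcal R_0(W,U) + \tfrac{\lambda}{2}(\|W\|_F^2+\|U\|_F^2)$ directly on the set $\Omega^I_c$, invoking the structural facts already established in the preceding subsections.

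First I would handle the regularization term. By definition of $\Omega^I_c$ we have $W = c\,\mathfrak F P$ and $\hat U = c\sqrt{n_w/(KL)}\,\mathfrak F Z$ for an equiangular matrix $\mathfrak F$. Since $\|\mathfrak F\|_F^2 = n_c$ (equiangularity), $PP^T = s_c I_{n_c}$ and $ZZ^T = (KL/n_c)I_{n_c}$ by Lemma \ref{lemma:sym}, the same short computation that appears in the first claim of subsection \ref{sub:bilinear} gives $\|W\|_F^2 = \|\hat U\|_F^2 = c^2 n_w$. Noting that $\|U\|_F = \|\hat U\|_F$ (one is a reshaping of the other), the regularization contributes exactly $\lambda\,n_w c^2$.

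Next I would handle the unregularized risk. By Lemma \ref{tournette}, any $(W,U)\in\Omega^I_c$ lies in $\mathcal N\cap\mathcal E$, so Theorem \ref{theorem:buffalo} yields $\mathcal R_0(W,U) = g\bigl(-\dotprodbis{\hat U, WQ^T Z}\bigr)$. Under uniform sampling $Q = P/s_c$, and using the identities above I would compute
\begin{equation*}
\dotprodbis{\hat U, WQ^T Z} \;=\; \tfrac{1}{s_c}\dotprodbis{\hat U, WP^T Z} \;=\; \tfrac{c^2\sqrt{n_w/(KL)}}{s_c}\dotprodbis{\mathfrak F Z,\mathfrak F PP^T Z} \;=\; c^2\sqrt{n_w KL},
\end{equation*}
where the last step uses $PP^T = s_c I_{n_c}$ and $\|\mathfrak F Z\|_F^2 = KL$.

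Finally I would plug this into the definition of $g$. With $\theta_r = \tfrac{n_c}{n_c-1}\tfrac{r}{L}$ and $|S_r| = \tfrac{K}{n_c^L}\binom{L}{r}(n_c-1)^r$ from Lemma \ref{lemma:sym} and its proof, the exponent inside $g$ becomes $-\theta_r c^2\sqrt{n_w KL}/K = -r\,\eta\,n_w c^2$, matching the definition of $\eta$. Hence
\begin{equation*}
\mathcal R_0(W,U) = \log\!\left(1 + \tfrac{K}{n_c^L}\sum_{r=1}^L \binom{L}{r}(n_c-1)^r e^{-r\eta n_w c^2}\right),
\end{equation*}
and a single application of the binomial theorem to the sum (extended to include $r=0$ and subtracting $1$) collapses this into $\log\bigl(1 - \tfrac{K}{n_c^L} + \tfrac{K}{n_c^L}(1+(n_c-1)e^{-\eta n_w c^2})^L\bigr)$. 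Adding the regularization piece yields \eqref{evaleval}. There is no real obstacle here; the work is purely bookkeeping, and the only mildly delicate point is matching the normalization constants so that the exponent simplifies to $-r\eta n_w c^2$, which falls out cleanly once one writes $\eta n_w c^2 = \tfrac{n_c}{n_c-1}c^2\sqrt{n_w/(KL)}$.
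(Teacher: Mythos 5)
Your proof is correct, and it takes a genuinely different route from the paper's. The paper's proof of Lemma~\ref{lemma:valueI} is a self-contained direct computation: it evaluates the margin $\margin_{W,U}(\bx,j)=c^2\sqrt{n_w/(KL)}\,\tfrac{n_c}{n_c-1}\dist(\bz_j,\bz_k)$ for $(W,U)\in\Omega^I_c$ from scratch (repeating, in fact, the margin computation already carried out in the proof of Lemma~\ref{tournette}), then plugs this into the formula \eqref{surf} for $\mathcal R_0$ and applies the binomial theorem. You instead observe that $\Omega^I_c\subset\mathcal N\cap\mathcal E$ by Lemma~\ref{tournette}, invoke the sharp identity of Theorem~\ref{theorem:buffalo} to reduce $\mathcal R_0$ to $g\bigl(-\dotprodbis{\hat U,\,WQ^TZ}\bigr)$, and then evaluate the bilinear form using $Q=P/s_c$, $PP^T=s_cI_{n_c}$, and $\|\mathfrak FZ\|_F^2=KL$. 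Both paths arrive at the identical exponent $-r\eta n_w c^2$ and close with the same binomial-theorem step. Your route is more economical because it reuses the machinery already built in Section~\ref{section:lowerbound} instead of recomputing margins; the cost is that it explicitly uses uniform sampling (through $Q=P/s_c$) when it rewrites the bilinear form, whereas the paper's direct margin computation never touches $Q$ at all and so incidentally shows that the margin formula for $\Omega^I_c$ is independent of the word distribution $\mu_\beta$. Since the whole of Section~\ref{section:thm1} operates under the uniform-sampling assumption anyway, this difference is immaterial to the lemma.
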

From the above lemma it is clear that if the pair $(W,U) \in \Omega^I_c$  minimizes  $\mathcal R$, then the constant $c$ must minimize  the right hand side of \eqref{evaleval}.   
Therefore combining lemma  \ref{lemma:valueI} with proposition \ref{proposition:collapseI} concludes the proof of theorem  \ref{theorem:thm1}.

\paragraph{Remark} In the previous subsections, we only relied on relations (i), (ii) and (iii) of lemma \ref{lemma:sym} to prove collapse. Assumption \ref{symstrong} was never fully needed.
In this section however, in order to determine the specific values of the constant involved in the collapse, we will need  the actual combinatorial values  provided by assumption \ref{symstrong}.

The remainder of this section is devoted to the proof of lemma \ref{lemma:valueI}.
\begin{proof}[Proof of lemma \ref{lemma:valueI}]
Recall from \eqref{surf} that the unregularized risk can be expressed as
\[
 \mathcal R_0(W,U)  =  \frac{1}{K}  \sum_{k=1}^{K}   \sum _{\bx \in \data_k}  \log\left( 1 +  \sum_{j \neq k}   e^{ - \margin_{W,U}(\bx , j)  } \right)  \;   \mathcal D_{\bz_k}(\bx)
\]
We also recall that the set $\Omega^{I}_{c}$ is given by 
\begin{multline} \label{setOmegaI}
\Omega^{I}_{c} = \Big\{ (W, U) :  \text{There exist an equiangular matrix $\mathfrak F$ such that } \\   W = c\; \mathfrak F \; P \quad   \text{ and } \quad    \hat U = c \; \sqrt{\frac{n_w}{KL}} \;  \mathfrak F  \; Z    \Big\}
\end{multline}
 and that
$
P \hot(\bx) =  Z_k 
$
for all $\bx \in \data_k$ (see equation \eqref{Phot} from section \ref{section:notation}).
Consider two latent variables
$$
\bz_k = [\alpha_1, \ldots, \alpha_L] \quad \text{ and } \bz_j = [\alpha'_1, \ldots, \alpha'_L] 
$$
and assume $\bx$ is generated by $\bz_k$, meaning that  $\bx \in \data_k$. 
\begin{align*}
\margin_{W,U}(\bx , j) &= \dotprodbis{\hat U_{k} - \hat U_{j}, W \hot(\bx)} \\
& = c^2 \,  \sqrt{\frac{n_w}{KL}}  \;  \dotprodbis{\mathfrak F\; Z_{k} -\mathfrak F\; Z_{j}, \mathfrak F\; P \hot(\bx)} \\
& =    c^2\,  \sqrt{\frac{n_w}{KL}} \;   \dotprodbis{\mathfrak F\; Z_{k} -\mathfrak F\; Z_{j}, \mathfrak F\; Z_k}  \\
& =   c^2\,  \sqrt{\frac{n_w}{KL}}   \; \sum_{\ell =1}^L \dotprodbis{ \; \cpt_{\alpha_{\ell } } -\cpt_{\alpha'_{\ell } }  \;,\;  \cpt_{\alpha_{\ell} }  \; } \\
& = c^2\, \sqrt{\frac{n_w}{KL}}  \;\; \left( L -  \sum_{\ell =1}^L \dotprodbis{ \; \cpt_{\alpha'_{\ell } }  \;,\;  \cpt_{\alpha_{\ell} }  \; }     \right) 
\end{align*}
Since $\mathfrak f_1, \ldots, \mathfrak f_{n_c}$ are equiangular, we have 
$$
\sum_{\ell =1}^L \dotprodbis{ \; \cpt_{\alpha'_{\ell } }  \;,\;  \cpt_{\alpha_{\ell} }  \; }      =   \Big( L - \dist(\bz_j,\bz_{k}) \Big) - \frac{1}{n_c-1} \dist(\bz_j,\bz_{k}) = L - \frac{n_c}{n_c-1} \dist(\bz_j,\bz_{k}). 
$$
Therefore 
$$
\margin_{W,U}(\bx , j) = c^2\, \sqrt{\frac{n_w}{KL}}  \frac{n_c}{n_c-1} \dist(\bz_j,\bz_{k}) 
$$
Letting $\omega =  \sqrt{\frac{n_w}{KL}}  \frac{n_c}{n_c-1} $ we therefore obtain
\begin{align}
 \mathcal R_0(W,U)  &=  \frac{1}{K}  \sum_{k=1}^{K}   \sum _{\bx \in \data_k}  \log\left( 1 +  \sum_{j \neq k}   e^{ - \omega c^2 \dist(\bz_j,\bz_k) } \right)  \;   \mathcal D_{\bz_k}(\bx) \nonumber \\
 & = \frac{1}{K}  \sum_{k=1}^{K}   \log\left( 1 +  \sum_{j \neq k}   e^{ - \omega c^2 \dist(\bz_j,\bz_k) } \right)  \label{vavava}
\end{align}
where we have used the quantity inside the $\log$ does not depends on $\bx$.
We
proved in section \ref{section:sym} (see  equation \eqref{size_of_SSS})  that if the latent variables satisfy assumption \ref{symstrong}, then
$$
|S_r| = \frac{K}{n_c^L}  {L \choose r}  (n_c-1)^r
$$
Using this identity we obtain
 \begin{align*}
  \sum_{j \neq k}   e^{ - \omega c^2 \dist(\bz_j,\bz_k) } &  = \sum_{r=1}^L  \left| \left\{ j: \dist(\bz_j, \bz_k) = r \right\} \right|  \;  e^{- \omega c^2  r  }  \\
  & =    \frac{K}{n_c^L} \sum_{r=1}^L  { L \choose r} (n_c-1)^r \; e^{- \omega c^2  r  }  \\
  & =   -  \frac{K}{n_c^L}  +  \frac{K}{n_c^L} \sum_{r=0}^L  { L \choose r} (n_c-1)^r \;  e^{- \omega c^2  r  }  \\
   & =  -  \frac{K}{n_c^L}  +  \frac{K}{n_c^L} \Big( 1+ (n_c-1) e^{-\omega c^2} \Big)^L 
 \end{align*}
where we have used the binomial theorem to obtain the last equality. The above quantity does not depends on $k$, therefore \eqref{vavava} can be expressed as
$$
 \mathcal R_0(W,U) = \log \left( 1 -  \frac{K}{n_c^L}  +  \frac{K}{n_c^L} \Big( 1+ (n_c-1) e^{-\omega \, c^2} \Big)^L \right) 
$$
We then remark that the matrix   $\mathfrak F \; P$ has $n_w$  columns, and that each of these columns has norm $1$. Similarly,  the    $\mathfrak F  \; Z$ has $KL$ columns of length $1$. We therefore have
\[
\frac{1}{2} \left( \|W\|_F^2 + \|\hat U\|_F^2 \right)  = \frac{1}{2} \left(  c^2  \| \mathfrak F \; P\|_F^2 +  c^2 \; \frac{n_w}{KL}   \left\|  \mathfrak F  \; Z  \right\|_F^2 \right) = c^2 n_w.
\]
To conclude the proof we simply remark that $\omega = n_w \eta$.
\end{proof}

\section{Proof of theorem \ref{theorem:3} and its converse} \label{section:thm3}
In this section we prove theorem \ref{theorem:3} under assumption \ref{symstrong},  and its converse under assumptions \ref{symstrong} and \ref{symconv}. We start by recalling the definition of a type-II collapse configuration.

\begin{definition}[Type-II Collapse]   The weights $(W,U)$ of the network $h^*_{W,U}$ form a type-II collapse configuration if and only if the conditions 
\begin{enumerate}[topsep=0pt]
\item[\rm i)] $\vphi(\bw_{(\alpha,\beta)}) = \sqrt{d}    \,\cpt_\alpha$ \;  for all $(\alpha,\beta) \in \voc$.
\item[\rm ii)] There exists $c \ge 0$ so that $\bu_{k, \ell} =  c\,  \cpt_\alpha$ \; for all  $(k,\ell)$ satisfying $z_{k,\ell}=\alpha$ and all $\alpha \in \mathcal C$.
\end{enumerate}
hold for some collection $\cpt_1, \ldots, \cpt_{n_c} \in \real^d$ of mean-zero equiangular vectors.
\end{definition}

As in the previous section we will reformulate the above definition using matrix notations. Toward this aim we make the following definition: 

\begin{definition}(Mean-Zero Equiangular Matrices) 
\label{def:2ap} 
A matrix $\mathfrak F \in \real^{d \times n_c}$ is said to be a  mean-zero  equiangular matrix  if and only if the relations
$$
  \ones^T_{d} \; \mathfrak F  = 0, \qquad   \mathfrak F \, \ones_{n_c} = 0 \qquad \text{ and } \qquad \mathfrak F^T \mathfrak F = \frac{n_c}{n_c-1} \;I_{n_c} - \frac{1}{n_c-1} \; \ones_{n_c} \ones_{n_c}^T
$$
hold.
\end{definition}
Comparing the above definition with the definition of equiangular vectors provided in the main paper, we easily see that
$
\mathfrak F
$
is a mean-zero equiangular matrix  if and only if its columns  are  mean-zero equiangular vectors. 
Relations (i) and (ii) of  definition \ref{def:2ap}  can be conveniently expressed as
$$
\vphi(W) = \sqrt{d}\; \mathfrak F \; P \qquad \text{ and } \qquad \hat U = c \; \mathfrak F  \; Z 
$$
for some equiangular matrix $\mathfrak F$.
We  then set 
\begin{multline} \label{setOmegaII}
\Omega^{II}_{c} = \Big\{ (W, U) :  \text{There exist a mean-zero equiangular matrix $\mathfrak F$ such that } \\   \vphi(W) = \sqrt{d}\; \mathfrak F \; P \quad   \text{ and } \quad    \hat U = c \;  \mathfrak F  \; Z    \Big\}
\end{multline}
and note that $\Omega^{II}_{c}$ is simply  the set of weights $(W,U)$ which are in a type-II collapse configuration.  We now state the main theorem of this section.

\begin{theorem}  \label{theorem:thm3}
Assume the non-degenerate condition $\mu_\beta > 0$ holds. Let $\tau \ge 0$ denote the unique minimizer of the strictly convex function
\[
H^*(t) =  \log \left( 1 -  \frac{K}{n_c^L}  +  \frac{K}{n_c^L} \Big( 1+ (n_c-1) e^{-\eta^* t } \Big)^L \right) + \frac{\lambda}{2} t^2 \qquad  \text{where } \eta^* =    \frac{n_c}{n_c-1} \; \frac{1}{\sqrt{ KL/d}} 
\]

and let   $c = \tau / \sqrt{KL}$. 
Then we have the following:
\begin{enumerate}
\item[(i)] If the latent variables $\bz_1, \ldots, \bz_K$ are mutually distinct and satisfy  assumption \ref{symstrong}, then 
$$
 \Omega^{II}_{c}  \subset \arg \min \mathcal R^*
$$
\item[(ii)] If the latent variables $\bz_1, \ldots, \bz_K$ are mutually distinct and satisfy  assumptions \ref{symstrong} and \ref{symconv}, then 
$$
 \Omega^{II}_{c}  = \arg \min \mathcal R^*
$$
\end{enumerate}
\end{theorem}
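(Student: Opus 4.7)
The plan is to parallel the proof of Theorem \ref{theorem:1}, exploiting the fact that LayerNorm commutes with the column-wise structure of the sentence embeddings: $\vphi(W\hot(\bx)) = \vphi(W)\hot(\bx)$ for every sentence $\bx$. Consequently every step in Section \ref{section:lowerbound} transfers to $h^*$ with $W$ replaced by $\vphi(W)$, yielding for all $(W,U)\in\mathcal N$ that
$$\mathcal R_0^*(W,U) \;\ge\; g\bigl(-\dotprodbis{\hat U,\, \vphi(W)\, Q^T Z}\bigr),$$
with equality iff $(W,U)\in\mathcal E$. The analogue of Lemma \ref{lemma:inN} still applies because its proof only re-centers the rows of $U$ and is indifferent to what happens before the concatenation, so any minimizer of $\mathcal R^*$ must lie in $\mathcal N$.

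Next I would establish a bilinear upper bound that exploits the two structural constraints LayerNorm imposes on $\vphi(W)$: every column is mean-zero and has norm exactly $\sqrt{d}$. By Cauchy--Schwarz and the identity $ZZ^T = \frac{KL}{n_c}I_{n_c}$ of Lemma \ref{lemma:sym},
$$\dotprodbis{\hat U,\, \vphi(W)\, Q^T Z} \;\le\; \|\hat U\|_F\, \sqrt{\tfrac{KL}{n_c}}\,\|\vphi(W)\, Q^T\|_F.$$
Since the $\alpha$-th column of $\vphi(W)Q^T$ is the convex combination $\sum_\beta \mu_\beta \vphi(\bw_{(\alpha,\beta)})$ of vectors of norm $\sqrt{d}$, the triangle inequality gives $\|\vphi(W)Q^T\|_F^2 \le n_c d$, with equality iff within each concept $\alpha$ all $\vphi(\bw_{(\alpha,\beta)})$ with $\mu_\beta>0$ coincide. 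Writing $t = \|\hat U\|_F$ and plugging in the combinatorial values $|S_r| = \frac{K}{n_c^L}{L\choose r}(n_c-1)^r$ and $\theta_r = \frac{n_c}{n_c-1}\frac{r}{L}$ from Lemma \ref{lemma:sym}, a direct substitution into $g$ identifies $g(-t\sqrt{KLd}) + \frac{\lambda}{2}t^2$ with $H^*(t)$. Hence $\mathcal R^*(W,U) \ge H^*(t) \ge H^*(\tau)$.

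For statement (i), I would verify that any $(W,U)\in\Omega^{II}_c$ with $c=\tau/\sqrt{KL}$ saturates every inequality: membership in $\mathcal N$ follows from $\mathfrak F\ones_{n_c}=0$ combined with $\sum_k Z_k \propto \ones_{n_c}\ones_L^T$; the equimargin property follows by the same calculation as in Lemma \ref{tournette}, since margins reduce to a function of $\dist(\bz_j,\bz_k)$; and because $PQ^T = I_{n_c}$, one has $\vphi(W)Q^TZ = \sqrt{d}\mathfrak F Z$ parallel to $\hat U = c\mathfrak F Z$, so both the Cauchy--Schwarz and the triangle inequalities become equalities. The value $t = c\sqrt{KL} = \tau$ minimizes $H^*$ by construction.

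For the converse (ii), I would mimic Lemma \ref{tournette2}. A minimizer must saturate each inequality: the triangle-inequality case forces $\vphi(\bw_{(\alpha,\beta)}) = \sqrt{d}\cpt_\alpha$ for a common mean-zero unit vector $\cpt_\alpha$ per concept (here the non-degeneracy $\mu_\beta>0$ is crucial, as it pins down every word's representation), hence $\vphi(W) = \sqrt{d}FP$ for some $F \in \real^{d\times n_c}$ with $F\ones_{n_c}=0$ and $\|F\|_F^2 = n_c$; Cauchy--Schwarz equality then forces $\hat U = cFZ$; and the equimargin condition rewrites as the linear system $\dotprodbis{F^TF, \Gamma^{(j,k)} - \Gamma^{(j',k')}}=0$, so assumption \ref{symconv} yields $F^TF = \frac{n_c}{n_c-1}I_{n_c} - \frac{1}{n_c-1}\ones_{n_c}\ones_{n_c}^T$, identifying $F$ as a mean-zero equiangular matrix and $(W,U)$ as an element of $\Omega^{II}_c$. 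The main subtlety throughout is that the analysis constrains the word representations $\vphi(\bw_{(\alpha,\beta)})$ rather than the embeddings $\bw_{(\alpha,\beta)}$ themselves, but this is exactly the content of the theorem, and the non-degeneracy hypothesis ensures that every word is pinned down and not merely those that happen to be sampled.
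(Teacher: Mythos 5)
Your proposal is correct and follows the paper's overall scaffolding: reduce $\mathcal R^*$ to $\mathcal R_0(\vphi(W),U) + \tfrac{\lambda}{2}\|U\|_F^2$ via $\vphi(W\hot(\bx)) = \vphi(W)\hot(\bx)$, invoke the sharp lower bound of Section~\ref{section:lowerbound} in terms of $g$ and the equimargin property, and then use assumptions~\ref{symstrong} and~\ref{symconv} to pin the minimizers down to $\Omega^{II}_c$. The one genuinely different choice is how you handle the bilinear step. The paper characterizes the maximizers of $\dotprodbis{\hat U, VQ^TZ}$ over $V\in{\rm Range}(\vphi)$ with $\|\hat U\|_F$ fixed through a KKT / Lagrange-multiplier analysis (Lemma~\ref{lemma:bilinearII}), producing the set $\mathcal B^{II}_c$; you obtain the same bound $\dotprodbis{\hat U, \vphi(W)Q^TZ} \le \|\hat U\|_F\sqrt{KLd}$ directly from Cauchy--Schwarz together with the observation that each column of $\vphi(W)Q^T$ is a $\mu$-weighted convex combination of vectors of norm $\sqrt{d}$, with triangle-inequality saturation precisely when all representations in a concept coincide. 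The two derivations identify the same equality set, but your inequality route is more elementary, collapses the chain into $\mathcal R^*(W,U)\ge H^*(\|\hat U\|_F)\ge H^*(\tau)$ in one pass, and makes the role of the non-degeneracy $\mu_\beta>0$ (every word representation is pinned, not just the sampled ones) especially transparent. One small bookkeeping slip in the converse: saturation of the triangle inequality gives mean-zero columns, i.e.\ $\ones_d^T F = 0$, together with unit column norms (hence $\|F\|_F^2=n_c$), not $F\ones_{n_c}=0$; the relation $F\ones_{n_c}=0$ instead comes from membership in $\mathcal N$ combined with $\sum_k Z_k = \tfrac{K}{n_c}\ones_{n_c}\ones_L^T$, exactly as in Lemma~\ref{tournette-other2}. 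Since you already require the minimizer to lie in $\mathcal N$, the conclusion stands, but that attribution should be corrected.
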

Note that statement (i)   corresponds to theorem \ref{theorem:3} of the main paper, whereas statement (ii) can be viewed as its  converse. To prove  \ref{theorem:thm3} we will follow the same steps than in the previous section. 
 The main difference occurs in the study of the bilinear problem, as we will see in the next subsection.  We will assume 
\begin{equation*} 
\mu_\beta >0 
\end{equation*}
 everywhere in this section --- all  lemmas and propositions are proven under this assumption, even when not explicitly stated. 
 
 Before to go deeper in our study let us state a very simple lemma that expresses the regularized risk $\mathcal R^*$ associated with network $h^*$ in term of the function $\mathcal R_0$ defined by equation \eqref{calR0}.
 \begin{lemma} \label{lemma:simple} Given a pair of weights  $(W,U)$, we have
 \begin{equation} \label{flute88}
\mathcal R^*(W,U) =  \mathcal R_0\Big(\, \vphi(W) \, , \, U \,\Big) +  \frac{\lambda}{2}  \|U\|_F^2  
\end{equation}
 \end{lemma}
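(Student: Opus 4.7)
The proof is essentially a direct unpacking of definitions, and the key algebraic observation is that the LayerNorm commutes with the one-hot selection performed by $\hot(\bx)$. Concretely, since multiplying a matrix on the right by $\hot(\bx)$ merely selects and reorders columns (see equation \eqref{zebra}), and since $\vphi$ is defined column-wise on matrices, I would first verify the identity
\[
\vphi\big(W\,\hot(\bx)\big) \;=\; \vphi(W)\,\hot(\bx)
\]
for every $\bx \in \data$. This is immediate from comparing \eqref{zebra} and \eqref{zebra2}: both sides are the $d\times L$ matrix whose $\ell$-th column is $\vphi(\bw_{(\alpha_\ell,\beta_\ell)})$, where $\bx = [(\alpha_1,\beta_1),\ldots,(\alpha_L,\beta_L)]$.

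Next, I would plug this identity into the definition of $h^*_{W,U}$ given in \eqref{def:net}. Using the column-selection identity above, we have
\[
h^*_{W,U}(\bx) \;=\; U\,\mathrm{Vec}\bigl[\vphi(W\,\hot(\bx))\bigr] \;=\; U\,\mathrm{Vec}\bigl[\vphi(W)\,\hot(\bx)\bigr] \;=\; h_{\vphi(W),\,U}(\bx),
\]
so that the network $h^*$ evaluated at $(W,U)$ agrees pointwise with the network $h$ evaluated at $(\vphi(W),U)$.

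Finally, I would substitute this pointwise equality into the definition \eqref{true_risk_LN} of $\mathcal R^*$ and compare against the definition \eqref{calR0} of $\mathcal R_0$:
\[
\mathcal R^*(W,U) \;=\; \frac{1}{K}\sum_{k=1}^{K}\E_{\bx\sim\mathcal D_{\bz_k}}\!\bigl[\ell(h_{\vphi(W),U}(\bx),k)\bigr] \;+\; \frac{\lambda}{2}\|U\|_F^2 \;=\; \mathcal R_0\bigl(\vphi(W),U\bigr) + \frac{\lambda}{2}\|U\|_F^2,
\]
which yields \eqref{flute88}. There is no substantive obstacle here; the statement is essentially a change-of-variables identity recording that placing LayerNorm between the embedding layer and the linear layer is equivalent to first replacing the embedding matrix $W$ by its column-normalized version $\vphi(W)$ and then applying the un-normalized network. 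This reformulation will be useful in later proofs because it lets us recycle the machinery already developed for $\mathcal R_0$ (in particular the lower bound of theorem \ref{theorem:buffalo}) when analyzing $\mathcal R^*$.
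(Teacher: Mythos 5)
Your proof is correct and follows essentially the same route as the paper's: the key observation in both is that $\vphi$ commutes with one-hot column selection, giving $\vphi(W\,\hot(\bx)) = \vphi(W)\,\hot(\bx)$ and hence $h^*_{W,U} = h_{\vphi(W),U}$ pointwise, after which the identity for the risk follows by direct substitution.
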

 \begin{proof}
Recall from section \ref{section:notation} that
\begin{align*}
& h_{W,U}(\bx) = U \;  \text{Vec} \left[  W \hot(\bx)   \right] \\
& h_{W,U}^*(\bx) = U \;  \text{Vec} \left[  \vphi\Big( W \hot(\bx) \Big)   \right]
\end{align*}
Note that since $\hot(\alpha,\beta)$ is a one hot vector, we obviously have that $ \vphi\left( W \hot(\alpha,\beta)\right) =  \vphi\left( W \right)  \hot(\alpha,\beta) $.
Therefore the the network $h^*$ and $h$ are related as follow:
$$
 h_{W,U}^*(\bx) = U \;  \text{Vec} \left[  \vphi\Big( W \hot(\bx) \Big)   \right] =  U \;  \text{Vec} \Big[  \vphi( W ) \, \hot(\bx)   \Big]  = h_{\vphi(W) , \,U}(\bx) 
$$
As a consequence, the  regularized risk associated with the network $h^*_{W,U}$ can be expressed as
 \begin{align*} 
\mathcal R^*(W,U)  & = \frac{1}{K}  \sum_{k=1}^{K}   \E_{  \;\;\bx \sim \mathcal D_{\bz_k}} \Big[  \ell(h^*_{W,U}(\bx) , k ) \Big]  + \frac{\lambda}{2}   \|U\|_F^2    \\
& = \frac{1}{K}  \sum_{k=1}^{K}   \E_{  \;\;\bx \sim \mathcal D_{\bz_k}} \Big[  \ell(h_{\vphi(W),U}(\bx) , k ) \Big]  + \frac{\lambda}{2}   \|U\|_F^2    \\
& = \mathcal R_0( \,\vphi(W) \, , \, U)  + \frac{\lambda}{2}   \|U\|_F^2
\end{align*}
where $\mathcal R_0$ is the unregularized risk defined in \eqref{calR0}.
\end{proof}

\subsection{The bilinear optimization problem} \label{sub:bilinearII}
Let 
$$
{\rm Range}(\vphi) = \{ V \in \real^{d \times n_w} :  \text{There exist $W \in \real^{d \times n_w}$ such that $V = \vphi(W)$ }   \}
$$
and consider the optimization problem
\begin{align}
& \text{maximize } \;    \dotprodbis{\hat U,  V Q^TZ }  \label{basea}  \\
& \text{subject to } \quad V \in  {\rm Range}(\vphi)   \quad \text{and} \quad   \|\hat U\|_F^2 = KL \,c^2   \label{baseb} 
\end{align}
where the optimization variables are the matrix $V \in \real^{d \times n_w}$ and the matrix $\hat U \in \real^{d \times KL}$. 
\begin{lemma} \label{lemma:bilinearII} Assume the latent variables satisfy assumption \ref{symstrong}.
Then  $(V,U)$ is a solution of the optimization problem \eqref{basea} -- \eqref{baseb} if and only if it belongs to the set
\begin{align} \label{bigBB}
{\mathcal B}^{II}_{c} =\Big\{ (V , U) :  \text{There exist a matrix $F \in  \mathcal F$    such that  }     V = \sqrt{d} \,   F  P \text{ and }  \hat U  =  c \;   F   Z  \Big\}
\end{align}
where  $\mathcal F$  denotes the   set of matrices whose columns have unit length and mean zero, that is
$$
\mathcal F = \{F \in  \real^{d \times n_c} :  \ones_d^T F = 0 \text{ and  the columns of $F$ have unit length}   \}.
$$
\end{lemma}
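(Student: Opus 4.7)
The plan is to reduce the bilinear optimization over $(V,\hat U)$ to two simpler maximizations by applying Cauchy–Schwarz twice, following the same scheme as the type-I bilinear lemma, with the one new ingredient being a triangle-inequality step that exploits the rigid structure of $\mathrm{Range}(\vphi)$. First I would observe that $\vphi$ outputs vectors with mean zero and Euclidean norm $\sqrt d$, and $\vphi(\bw)=\bw$ for any such $\bw$; hence $V \in \mathrm{Range}(\vphi)$ iff every column of $V$ has mean zero and norm $\sqrt d$. Next, for fixed $V$, I would maximize over $\hat U$ by Frobenius Cauchy–Schwarz together with $\|\hat U\|_F^2 = KLc^2$:
\[
 \dotprodbis{\hat U, VQ^TZ} \;\le\; c\sqrt{KL}\,\|VQ^TZ\|_F,
\]
with equality iff $\hat U = c\sqrt{KL}\,VQ^TZ/\|VQ^TZ\|_F$. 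This reduces the problem to maximizing $\|VQ^TZ\|_F^2$ over $V \in \mathrm{Range}(\vphi)$. Using $ZZ^T = \tfrac{KL}{n_c}I_{n_c}$ from Lemma \ref{lemma:sym}(ii) together with Lemma \ref{lemma:magictrace}, I would rewrite $\|VQ^TZ\|_F^2 = \tfrac{KL}{n_c}\|VQ^T\|_F^2$.

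The heart of the argument is a triangle-inequality step. Writing $\bv_{(\alpha,\beta)} := V\hot(\alpha,\beta)$ for the columns of $V$, the definition of $Q$ gives
\[
\|VQ^T\|_F^2 \;=\; \sum_{\alpha=1}^{n_c}\Big\|\sum_{\beta=1}^{s_c}\mu_\beta\,\bv_{(\alpha,\beta)}\Big\|^2.
\]
Since $\|\bv_{(\alpha,\beta)}\| = \sqrt d$ and $\sum_\beta\mu_\beta = 1$, the triangle inequality bounds each summand by $d$, with equality iff all nonzero vectors $\mu_\beta\bv_{(\alpha,\beta)}$ are non-negative multiples of a common direction. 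The hypothesis $\mu_\beta > 0$ promotes this to $\bv_{(\alpha,1)} = \cdots = \bv_{(\alpha,s_c)}$ for every $\alpha$. Calling this common value $\sqrt d\,\cpt_\alpha$, each $\cpt_\alpha$ has unit norm and mean zero, so $F := [\cpt_1 \mid \cdots \mid \cpt_{n_c}]$ lies in $\mathcal F$ and $V = \sqrt d\,F P$.

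To finish the forward direction, I would back-substitute. Because $P\hot(\alpha,\beta) = \be_\alpha$ and $Q^T\be_\alpha = \sum_\beta\mu_\beta\hot(\alpha,\beta)$, a direct computation gives $PQ^T = I_{n_c}$, so $VQ^T = \sqrt d\,F$ and $\|VQ^TZ\|_F = \sqrt{KLd}$; substituting into the formula for the optimal $\hat U$ collapses it to $\hat U = c\,F Z$, exactly the defining relations of $\mathcal B^{II}_c$. The converse direction, that every $(V,\hat U)\in \mathcal B^{II}_c$ is feasible and attains the common optimum $c\,KL\sqrt d$, reduces to a short calculation using $F \in \mathcal F$, $PQ^T = I_{n_c}$ and $ZZ^T = \tfrac{KL}{n_c}I_{n_c}$ (in particular $\|\hat U\|_F^2 = c^2\tr(F^TF\cdot ZZ^T) = c^2 KL$). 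The main obstacle is the triangle-inequality equality case: this is exactly where the non-degeneracy hypothesis $\mu_\beta > 0$ enters, since a vanishing $\mu_\beta$ would remove $\bv_{(\alpha,\beta)}$ from the sum and allow an optimal $V$ that does not factor through a single direction per concept, breaking the clean characterization of $\mathcal B^{II}_c$.
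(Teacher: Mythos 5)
Your argument is correct and reaches the same characterization, but by a genuinely different route for the key step. After the standard reduction (maximize over $\hat U$ by Cauchy--Schwarz, then pass from $\|VQ^TZ\|_F^2$ to $\tfrac{KL}{n_c}\|VQ^T\|_F^2$ via $ZZ^T=\tfrac{KL}{n_c}I_{n_c}$), the paper identifies the maximizers of $\|VQ^T\|_F^2$ over $\mathrm{Range}(\vphi)$ by writing out the KKT system $VQ^TQ = VD_\nu + \ones_d\boldsymbol\lambda^T$ plus norm and mean-zero constraints, showing $\boldsymbol\lambda=0$, tracking the support set $\Xi$ of the diagonal multipliers, and arguing that global maximizers must have $|\Xi|=n_c$. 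You instead observe that $\|VQ^T\|_F^2 = \sum_\alpha\big\|\sum_\beta\mu_\beta\bv_{(\alpha,\beta)}\big\|^2$ with each column of $V$ of norm $\sqrt d$, apply the triangle inequality with $\sum_\beta\mu_\beta=1$ to bound each term by $d$, and read off the equality case --- all $\bv_{(\alpha,\beta)}$ with fixed $\alpha$ coincide, using $\mu_\beta>0$ (which is in force throughout that section of the appendix). Your route is more elementary and self-contained: it gives the optimal value $n_c d$ and the exact equality set in one stroke, avoids any constraint-qualification caveat, and dodges a minor sign issue in the paper where the multiplier $\nu(\alpha,\beta)$ is treated as if known to be nonnegative. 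The paper's KKT route, by contrast, mirrors the template it already set up for the type-I bilinear lemma, so it buys structural uniformity across the two proofs rather than brevity. The back-substitution ($PQ^T=I_{n_c}$, hence $VQ^T=\sqrt d\,F$, $\|VQ^TZ\|_F=\sqrt{KLd}$, $\hat U=cFZ$) and the feasibility/value check for the converse are identical in both.
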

The remainder of this subsection is devoted to the proof of the above lemma.

We start by showing that all $(V,U) \in \mathcal B_c^{II}$ have same objective values and satisfy the constraints.
\begin{claim} If $(V,U) \in \mathcal B_c^{II}$, then 
$$
V \in  {\rm Range}(\vphi)  \quad, \quad    \|\hat U\|_F^2 = KL \,c^2 , \quad \text{ and } \quad  \dotprodbis{\hat U,  V Q^TZ } = c  \sqrt{d} \, KL
$$
\end{claim}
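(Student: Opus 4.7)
The claim is routine given the structure of $\mathcal B_c^{II}$. Suppose $(V,U)\in\mathcal B_c^{II}$, so that $V=\sqrt d\,FP$ and $\hat U=cFZ$ for some $F\in\mathcal F$. I will verify each of the three conclusions separately, in order of increasing computational length.

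First, to show $V\in{\rm Range}(\vphi)$, I would argue that $V$ is in fact a fixed point of $\vphi$ (applied column-wise). Recall $V\,\hot(\alpha,\beta)=\sqrt d\,FP\,\hot(\alpha,\beta)=\sqrt d\,F\be_\alpha$, which is $\sqrt d$ times the $\alpha$-th column of $F$. Since $F\in\mathcal F$, that column has mean zero and unit length, so the corresponding column of $V$ has mean zero and Euclidean norm $\sqrt d$, i.e.\ ${\rm mean}=0$ and $\sigma=1$. Hence $\vphi$ applied column-wise leaves $V$ unchanged, so $V=\vphi(V)\in{\rm Range}(\vphi)$.

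Second, to compute $\|\hat U\|_F^2$, I would use the identity $ZZ^T=\frac{KL}{n_c}I_{n_c}$ provided by property (ii) of lemma \ref{lemma:sym} together with the cyclic property of the trace (lemma \ref{lemma:magictrace}). Namely, $\|\hat U\|_F^2=c^2\dotprodbis{FZ,FZ}=c^2\dotprodbis{F^TF,ZZ^T}=c^2\,\frac{KL}{n_c}\,\|F\|_F^2$. Since every column of $F$ has unit length, $\|F\|_F^2=n_c$, and the result $\|\hat U\|_F^2=c^2KL$ follows.

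Third, the inner product computation reduces, via the same trace manipulations, to showing that $PQ^T=I_{n_c}$. This is the only step where the normalization $\sum_\beta \mu_\beta=1$ enters: applying definitions \eqref{P} and \eqref{matrixQ}, one finds $PQ^T\be_\alpha=P\sum_\beta \mu_\beta\hot(\alpha,\beta)=\bigl(\sum_\beta \mu_\beta\bigr)\be_\alpha=\be_\alpha$. With this in hand, $\dotprodbis{\hat U,VQ^TZ}=c\sqrt d\,\dotprodbis{FZ,FPQ^TZ}=c\sqrt d\,\|FZ\|_F^2=c\sqrt d\,KL$, reusing the computation from the previous paragraph. No step presents a real obstacle; the only thing worth highlighting is that the argument for $V\in{\rm Range}(\vphi)$ proceeds by exhibiting $V$ itself (not some preimage) as the LayerNorm of $V$, which is what makes the mean-zero condition on the columns of $F$ essential.
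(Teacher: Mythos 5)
Your proof is correct and follows the same route as the paper: (1) columns of $V=\sqrt d\,FP$ have mean zero and norm $\sqrt d$, hence $V\in{\rm Range}(\vphi)$; (2) $\|\hat U\|_F^2$ via $ZZ^T=\tfrac{KL}{n_c}I$ and $\|F\|_F^2=n_c$; (3) the inner product via $PQ^T=I_{n_c}$. Your observation that $V$ is actually a \emph{fixed point} of $\vphi$ (rather than merely lying in its range) is a small but welcome sharpening of the paper's terser assertion.
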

\begin{proof}  Assume $(V,U) \in \mathcal B_c^{II}$.
Since the columns of $P$ are one hot vectors in $\real^{n_c}$, the columns of $FP$ have unit length and mean zero. Therefore the columns of $V$ have norm equal to $\sqrt{d}$ and mean zero. Therefore $V \in  {\rm Range}(\vphi)  $.

 Using  $ZZ^T = \frac{KL}{n_c} I$ from lemma \ref{lemma:sym},  together with the fact that $\|F\|_F^2 = n_c$ since its columns have unit length,  we obtain 
\begin{align} \label{FZnorm}
\| FZ  \|^2_F =  \dotprodbis{  FZ ,FZ }  =  \dotprodbis{  FZZ^T ,F }  =   \left(   \frac{KL}{n_c} \right) \| F \|_F^2  = KL
\end{align}
As a consequence we have 
$
\| \hat U \|^2_F  = c^2 \, KL
$. Finally, note that
$$
PQ^T = I_{n_c}
$$
as can clearly be seen from formulas \eqref{P} and \eqref{matrixQ}. We therefore have
\begin{align*}
 \dotprodbis{\hat U,  V Q^TZ } = c \sqrt{d}  \dotprodbis{ FZ,  FP Q^TZ } =   c \sqrt{d}  \dotprodbis{ FZ,  FZ } =  c  \sqrt{d} \, KL
\end{align*}

\end{proof}

We then prove that
\begin{claim} If $(V,\hat U)$ is a solution of \eqref{basea} -- \eqref{baseb}, then $(V,\hat U) \in \mathcal B_c^{II}$. 
\end{claim}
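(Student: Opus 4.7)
The plan is to mirror the structure of the proof of Lemma \ref{lemma:bilinearI} but adapted to the LayerNorm constraint. First I would maximize over $\hat U$ with $V$ held fixed: by Cauchy--Schwarz, $\dotprodbis{\hat U, V Q^T Z} \le \|\hat U\|_F \|V Q^T Z\|_F = c\sqrt{KL}\,\|V Q^T Z\|_F$, with equality iff $\hat U$ is a nonnegative scalar multiple of $V Q^T Z$. This reduces the problem to maximizing $\|V Q^T Z\|_F^2$ over $V \in {\rm Range}(\vphi)$, and using $ZZ^T = (KL/n_c) I_{n_c}$ from lemma \ref{lemma:sym} gives $\|V Q^T Z\|_F^2 = (KL/n_c)\|V Q^T\|_F^2$, so it suffices to maximize $\|V Q^T\|_F^2$.

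The key observation is that the $\alpha^{{\rm th}}$ column of $V Q^T$ equals $\sum_{\beta=1}^{s_c} \mu_\beta \bv_{(\alpha,\beta)}$, where each $\bv_{(\alpha,\beta)}$ is a column of $V$. Because $V \in {\rm Range}(\vphi)$, every column of $V$ has mean zero and Euclidean norm exactly $\sqrt{d}$. The triangle inequality combined with $\sum_\beta \mu_\beta = 1$ then yields
\[
\Bigl\|\sum_{\beta=1}^{s_c} \mu_\beta \bv_{(\alpha,\beta)}\Bigr\| \;\le\; \sum_{\beta=1}^{s_c} \mu_\beta \,\|\bv_{(\alpha,\beta)}\| \;=\; \sqrt{d},
\]
so $\|V Q^T\|_F^2 \le n_c d$ and $\|V Q^T Z\|_F \le \sqrt{dKL}$. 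Because all $\mu_\beta > 0$ (the non-degeneracy hypothesis), the equality case of the triangle inequality forces $\bv_{(\alpha,1)} = \cdots = \bv_{(\alpha,s_c)}$ for every concept $\alpha$; this is the step that truly uses $\mu_\beta>0$ and is the only mildly delicate point in the argument.

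Writing the common value as $\bv_{(\alpha,\beta)} = \sqrt{d}\,\mathfrak f_\alpha$ and collecting the $\mathfrak f_\alpha$ into a matrix $F \in \real^{d\times n_c}$, the unit-norm and mean-zero conditions on the columns of $V$ give exactly $F \in \mathcal F$, and we have $V = \sqrt{d}\,F P$. Using $P Q^T = I_{n_c}$ (an immediate consequence of $\sum_\beta \mu_\beta = 1$) yields $V Q^T Z = \sqrt{d}\,F Z$, and then $\|F Z\|_F^2 = KL$ (as computed in the first claim of this subsection) gives $\|V Q^T Z\|_F = \sqrt{dKL}$. Substituting back into the formula for the optimal $\hat U$ produces $\hat U = c\sqrt{KL}\cdot V Q^T Z/\|V Q^T Z\|_F = c\,F Z$, placing $(V,\hat U)$ in $\mathcal B_c^{II}$ and completing the proof. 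The first claim of this subsection already shows that every element of $\mathcal B_c^{II}$ attains the bound, so combined with this argument we conclude $\mathcal B_c^{II}$ is exactly the maximizer set.
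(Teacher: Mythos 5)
Your proof is correct and arrives at the same conclusion, but the key step --- forcing $\bv_{(\alpha,\beta)}$ to be independent of $\beta$ --- is handled by a genuinely different and more elementary route than the paper's. Both proofs reduce to maximizing $\|VQ^T\|_F^2$ over $V\in{\rm Range}(\vphi)$. The paper then sets up the full KKT system for this constrained problem, introduces Lagrange multipliers $D_\nu$ and $\boldsymbol\lambda$, eliminates $\boldsymbol\lambda$ by left-multiplying by $\ones_d^T$, and argues about the support set $\Xi$ of the remaining multipliers to rule out partial-support critical points before concluding that global maximizers must have $\bv_{(\alpha,\beta)} = \sqrt{d}\,\mathbf f_\alpha$. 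You instead exploit the observation that each column of $V\in{\rm Range}(\vphi)$ has Euclidean norm exactly $\sqrt d$, so the $\alpha^{\rm th}$ column of $VQ^T$, namely $\sum_\beta\mu_\beta\bv_{(\alpha,\beta)}$ with $\sum_\beta\mu_\beta=1$, is bounded in norm by $\sqrt d$ via the triangle inequality. The equality case of the triangle inequality (using $\mu_\beta>0$) then directly forces all $\bv_{(\alpha,\beta)}$ for fixed $\alpha$ to be positive multiples of one another, hence equal since they share the common norm $\sqrt d$. This sidesteps KKT entirely and gives the upper bound $\|VQ^T\|_F^2\le n_c d$ together with its equality conditions in one stroke, which is shorter and arguably more transparent. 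You correctly note that the mean-zero condition on the common vector $\mathbf f_\alpha$ is inherited from the columns of $V$, and the recovery of $\hat U = c\,FZ$ via $PQ^T=I_{n_c}$ and $\|FZ\|_F=\sqrt{KL}$ matches the paper. The one implicit assumption (shared with the paper) is $c>0$ so that the Cauchy--Schwarz equality case uniquely pins down the direction of $\hat U$; the degenerate case $c=0$ is not covered by either argument, but it is irrelevant to the theorem.
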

Note that according to the first claim, all $(V,\hat U) \in \mathcal B_c^{II}$ have same objective value, and therefore, according to the above claim, they must all be maximizer.  As a consequence,  proving the above claim will conclude the proof of lemma \ref{lemma:bilinearII}. 

\begin{proof}[Proof of the claim]
Maximizing  \eqref{basea} -- \eqref{baseb} over  $\hat U$ first gives
\begin{equation} \label{getUU}
\hat U =  {c}{\sqrt{KL}} \;  \frac{V Q^TZ }{\|V Q^TZ\|_F }
\end{equation}
and therefore the optimization problem reduces to 
\begin{align}
& \text{maximize } \; \|  V Q^T Z  \|^2_F     \label{base5500}  \\
& \text{subject to } \quad   V  \in  {\rm Range}(\vphi)  \label{base5600}
\end{align}
Using the fact that  $ZZ^T = \frac{KL}{n_c} I$ we then get
\begin{equation}
 \|  V Q^T Z  \|^2_F =\dotprodbis{ V Q^T Z,  V Q^T Z }  = \dotprodbis{  V Q^T ZZ^T,  V Q^T } =   \frac{KL}{n_c}  \| V  Q^T \|_F^2
\end{equation}
and so the problem further reduces to 
\begin{align}
& \text{maximize } \; \|  V Q^T   \|^2_F  \label{base55000}  \\
& \text{subject to } \quad   V  \in  {\rm Range}(\vphi)  \label{base56000}
\end{align}
Let us define
$$
\bv_{(\alpha,\beta)} : = V \hot(\alpha,\beta)
$$
In other words $\bv_{(\alpha,\beta)}$ is the $j^{th}$ column of $V$,  where $j = (\alpha-1) s_c + \beta$. 
The KKT conditions for the optimization problem \eqref{base55000} --  \eqref{base56000} then   amount to solving the system
\begin{align}
V Q^T Q &= V D_{\nu} + \mathbf{1}_{d} \;  \boldsymbol{\lambda}^T \\
\dotprod{  \bv_{(\alpha,\beta)} ,   \mathbf{1}_d} &= 0  \qquad \text{for all } (\alpha,\beta) \in \voc  \label{constraint-a} \\
\| \bv_{(\alpha,\beta)} \|^2 &= d   \qquad \text{for all } (\alpha,\beta) \in \voc \label{constraint-b}
\end{align}
for $D_{\nu}$ some $n_w \times n_w$ diagonal matrix of Lagrange multipliers for the  constraint \eqref{constraint-b} and $\boldsymbol{\lambda} \in \mathbb{R}^{n_w}$ a vector of Lagrange multipliers for the mean zero constraints. Left multiplying the first equation by $\mathbf{1}^{T}_{d}$ and using the second shows $\boldsymbol{\lambda} = \mathbf{0}_{n_w}$, and so it proves equivalent to find solutions of the reduced system
\begin{align}
V Q^T Q &= V D_{\nu} \label{bnbn}  \\
\dotprod{  \bv_{(\alpha,\beta)} ,   \mathbf{1}_d} &= 0  \qquad \text{for all } (\alpha,\beta) \in \voc  \label{constraint-aa} \\
\| \bv_{(\alpha,\beta)} \|^2 &= d   \qquad \text{for all } (\alpha,\beta) \in \voc \label{constraint-bb}
\end{align}
instead. 
Recalling  the identity $Q \, \hot(\alpha,\beta) = \mu_\beta \be_\alpha$ (see \eqref{Qhot} in section \ref{section:notation})  we obtain
$$
Q^T Q \, \hot(\alpha,\beta) = \mu_\beta \;  Q^T \; \be_\alpha
$$
and so right multiplying \eqref{bnbn}  by  $\hot(\alpha,\beta)$  gives
$$
V  Q^T \; \be_\alpha = \frac{\nu(\alpha,\,\beta)}{\mu_\beta} \,  \bv_{(\alpha,\,\beta)} \qquad \text{for all } (\alpha,\beta) \in \voc
$$
where  we have denoted by $\nu(\alpha,\beta)$  the Lagrange multiplier corresponding to the   constraint  \eqref{constraint-bb}. 
Define the support sets
$$
\Xi_\alpha := \left\{ \beta \in [s_c] : \nu(\alpha,\,\beta) \neq 0 \right\} \qquad \text{and} \qquad \Xi := \left\{ \alpha : \Xi_\alpha \neq \emptyset\right\}
$$
of the Lagrange multipliers. If $\alpha \in \Xi$ then imposing the norm constraint \eqref{constraint-bb} gives
$$
\|V  Q^T \; \be_\alpha\| = \frac{\nu(\alpha,\,\beta)}{\mu_\beta}  \sqrt{d},
$$
and so $\|V  Q^T \; \be_\alpha\|> 0$ if $\alpha \in \Xi$ since $\nu(\alpha,\, \beta) > 0$ for some $ \beta \in [s_c]$ by definition. This implies that the relation
$$
 \bv_{(\alpha,\,\beta)} =  \sqrt{d} \; \frac{V  Q^T \; \be_\alpha}{\|V  Q^T \; \be_\alpha\|} \qquad \text{for all} \qquad (\alpha,\,\beta) \in \Xi \times [s_c]
$$
must hold. As a consequence there exist mean-zero, unit length vectors ${\bf f}_1,\,\ldots,\,{\bf f}_{n_c}$ (namely the normalized $V  Q^T \; \be_\alpha$) so that
$$
\mathbf{v}_{(\alpha,\,\beta)} = \sqrt{d} \; \mathbf{f}_{\alpha} \qquad 
$$
holds for all pairs $(\alpha, \beta)$ with $\alpha \in \Xi$.  Taking a look at \eqref{matrixQ}, we easily see that its $\alpha^{th}$ row of the matrix $Q$ can be written as   $Q^T \; \be_\alpha = \sum_{\beta} \mu_\beta \hot(\alpha,\beta)$, and therefore
$$
V  Q^T \; \be_\alpha = \sum_{\beta \in [s_c]} \mu_\beta V \hot(\alpha,\beta) = \sum_{\beta \in [s_c]} \mu_\beta  \bv_{(\alpha,\,\beta)} = \sqrt{d} \; \mathbf{f}_\alpha\left(\sum_{\beta \in [s_c]} \mu_\beta\right) = \sqrt{d} \; \mathbf{f}_\alpha
$$
holds as well. If $\alpha \notin \Xi$ then $V  Q^T \; \be_\alpha = \mathbf{0}$ since the corresponding Lagrange multiplier vanishes. It therefore follows that
$$
\|VQ^T \|_F^2 = \sum_{\alpha \in [n_c]} \| V  Q^T  \be_\alpha\|^2 = d \sum_{\alpha \in \Xi} \|\mathbf{f}_\alpha\|^2 = d \; |\Xi|
$$
and so global maximizers of \eqref{base55000} -- \eqref{base56000}   must have full support. In other words, there exist mean-zero, unit-length vectors $\mathbf{f}_1,\,\ldots,\,\mathbf{f}_{n_c}$ so that
\begin{equation} \label{arara}
\mathbf{v}_{(\alpha,\, \beta)} = \sqrt{d} \; \mathbf{f}_\alpha 
\end{equation}
holds.  
Equivalently 
$
V = \sqrt{d} \, FP
$ for some $F \in \mathcal F$. We then recover $\hat U$ using \eqref{getUU}.
\begin{align}
\hat U = {c}{\sqrt{KL}} \;  \frac{V Q^TZ }{\|V Q^TZ\|_F } =  {c}{\sqrt{KL}} \;  \frac{FP Q^TZ }{\| FP Q^TZ\|_F } =   {c}{\sqrt{KL}} \;  \frac{ FZ }{\| FZ\|_F } 
\end{align}
where we have used the fact that $PQ^T = I_{n_c}$. To conclude the proof, we use the fact $\| FZ\|_F = \sqrt{KL}$, as was shown in  \eqref{FZnorm}.
\end{proof}

\subsection{Proof of collapse}

Recall from lemma \ref{lemma:simple} that the regularized risk associated with the network $h^*_{W,U}$ can be expressed as
\begin{equation} \label{flute55}
\mathcal R^*(W,U) =  \mathcal R_0\Big( \,\vphi(W) \,, \,U \, \Big) +  \frac{\lambda}{2}  \|U\|_F^2  
\end{equation}
and recall that the set of weights in type-II collapse configuration is
\begin{multline} \label{OmegaII}
\Omega^{II}_{c} = \Big\{ (W, U) :  \text{There exist a mean-zero equiangular matrix $\mathfrak F$ such that } \\   \vphi(W) = \sqrt{d}\; \mathfrak F \; P \quad   \text{ and } \quad    \hat U = c \;  \mathfrak F  \; Z    \Big\}
\end{multline}

This subsection is devoted to the proof of the following proposition.

\

\begin{proposition}  \label{proposition:collapseII} We have the following:
\begin{enumerate}
\item[(i)] If the latent variables $\bz_1, \ldots, \bz_K$ are mutually distinct and satisfy  assumption \ref{symstrong}, then there exists $c\in \real$ such that
$$
 \Omega^{II}_{c}  \subset \arg \min \mathcal R^*
$$
\item[(ii)] If the latent variables $\bz_1, \ldots, \bz_K$ are mutually distinct and satisfy  assumptions \ref{symstrong} and \ref{symconv}, then any $(W, U)$ that minimizes $\mathcal R^*$ must belong to $ \Omega^{II}_{c}$ for some $c\in \real$. 
\end{enumerate}
\end{proposition}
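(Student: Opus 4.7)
The plan is to mirror the proof of Proposition \ref{proposition:collapseI}, using Lemma \ref{lemma:bilinearII} in place of Lemma \ref{lemma:bilinearI}. By Lemma \ref{lemma:simple}, $\mathcal R^*(W,U) = \mathcal R_0(\vphi(W), U) + \tfrac{\lambda}{2}\|U\|_F^2$, so the strategy is to feed $(\vphi(W), U)$ into the sharp lower bound of Theorem \ref{theorem:buffalo} and then identify the equality case via Lemma \ref{lemma:bilinearII}.

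First I would establish the analogue of Lemma \ref{lemma:inN}: every global minimizer $(W^\star, U^\star)$ of $\mathcal R^*$ satisfies $\sum_k \hat U_k^\star = 0$. Subtracting the mean $B = \frac{1}{K}\sum_k \hat U_k^\star$ from each $\hat U_k^\star$ shifts every logit of $h^*$ by the common scalar $\dotprodbis{B, \vphi(W)\hot(\bx)}$, which leaves $\mathcal R_0(\vphi(W), U)$ invariant but strictly decreases $\|U\|_F^2$ unless $B = 0$. Fix now $c \ge 0$ by $\|\hat U^\star\|_F^2 = KL\,c^2$. Since $(W^\star, U^\star) \in \mathcal N$, Theorem \ref{theorem:buffalo} yields $\mathcal R_0(\vphi(W), U) \ge g(-\dotprodbis{\hat U, \vphi(W) Q^T Z})$ with equality iff $(\vphi(W), U) \in \mathcal E$, while Lemma \ref{lemma:bilinearII} identifies the maximum of the right-hand inner product (subject to $\|\hat U\|_F^2 = KL c^2$, and noting that $\vphi(W) \in \mathrm{Range}(\vphi)$ automatically) as attained exactly on $\mathcal B^{II}_c$. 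Unlike the Type-I case, no equal-Frobenius-norm reduction is needed, because $\|\vphi(W)\|_F^2 = d n_w$ is pinned by the LayerNorm. Together these give the analogue of Lemma \ref{lemma:silvermonkeyI}: any $(W,U) \in \mathcal N \cap \mathcal E$ with $(\vphi(W), U) \in \mathcal B^{II}_c$ is a global minimizer, and every global minimizer lies in this intersection.

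Finally I would mimic Lemmas \ref{tournette}--\ref{tournette2} to relate this intersection to $\Omega^{II}_c$. The inclusion $\Omega^{II}_c \subset \mathcal N \cap \mathcal E \cap \{(W,U) : (\vphi(W), U) \in \mathcal B^{II}_c\}$ is a direct verification: a mean-zero equiangular $\mathfrak F$ belongs to $\mathcal F$, the identity $\sum_k Z_k = \tfrac{K}{n_c}\ones_{n_c}\ones_L^T$ from Lemma \ref{lemma:sym} combined with $\mathfrak F \ones_{n_c} = 0$ gives $\sum_k \hat U_k = 0$, and the equimargin property follows from the concept-distance computation of Lemma \ref{tournette} applied to $\vphi(W) = \sqrt d\,\mathfrak F P$ in place of $W$. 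This proves (i). For (ii), writing $(\vphi(W), U) = (\sqrt d\, F P, c F Z)$ with $F \in \mathcal F$, a direct calculation using $P\hot(\bx) = Z_k$ gives
$$
\margin_{W,U}(\bx, j) \;=\; c\sqrt d\,\dotprodbis{F^T F,\; \Gamma^{(k,j)}} \qquad \text{for } \bx \in \data_k,
$$
so equimargin forces $\dotprodbis{F^T F, \Gamma^{(j,k)} - \Gamma^{(j',k')}} = 0$ whenever Hamming distances agree. Since $F^T F$ is positive semidefinite, Assumption \ref{symconv} puts it in $\mathcal A$; the constraint $F\ones_{n_c} = 0$ (derived from $\sum_k \hat U_k = 0$ together with $\sum_k Z_k = \tfrac{K}{n_c}\ones_{n_c}\ones_L^T$, as in Lemma \ref{tournette2}) combined with $\|F\|_F^2 = n_c$ then pins down $F^T F = \tfrac{n_c}{n_c-1}I_{n_c} - \tfrac{1}{n_c-1}\ones_{n_c}\ones_{n_c}^T$, so $F$ is mean-zero equiangular and $(W,U) \in \Omega^{II}_c$.

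The main obstacle is the converse in (ii), where three constraints on $F$ must interlock cleanly: positive semidefiniteness plus Assumption \ref{symconv} yields the two-parameter structure $F^T F = a I_{n_c} + b \ones_{n_c}\ones_{n_c}^T$; the unit-column condition gives $a + b = 1$; and the centering condition $F \ones_{n_c} = 0$ gives $a + b n_c = 0$. Apart from this, every step is a LayerNorm-aware reprise of the Type-I argument in Section \ref{section:thm1}, with the simplification that the norm of $\vphi(W)$ is fixed from the start.
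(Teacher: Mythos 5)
Your proposal is correct and follows essentially the same route as the paper: Lemma~\ref{lemma:simple} to reduce $\mathcal R^*$ to $\mathcal R_0(\vphi(W),U)+\tfrac{\lambda}{2}\|U\|_F^2$, the mean-centering argument to put minimizers in $\mathcal N$, Theorem~\ref{theorem:buffalo} plus Lemma~\ref{lemma:bilinearII} to identify the extremal set $\mathcal B^{II}_c$ (with the correct observation that, unlike the Type-I case, $\|\vphi(W)\|_F^2=dn_w$ is pinned so no norm-balancing step is required), and the analogues of Lemmas~\ref{tournette}--\ref{tournette2} to equate $\mathcal E\cap\mathcal N\cap\mathcal B^{II}_c$ with the (implicit) set $\overline\Omega^{II}_c$ via the margin computation $\margin_{W,U}(\bx,j)=c\sqrt{d}\,\dotprodbis{F^TF,\Gamma^{(k,j)}}$ and the system $a+b=1$, $a+bn_c=0$. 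The paper packages the bridge set as $\overline\Omega^{II}_c$ and states a separate Lemma~\ref{lemma:silvermonkeyII}, but the content is the same as what you outline.
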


As in the previous section, we have the following lemma.
\begin{lemma} \label{lemma:inN2} Any  global minimizer of \eqref{flute55} must belong to $\mathcal N$.
\end{lemma}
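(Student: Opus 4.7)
The plan is to mimic, essentially verbatim, the argument used for Lemma \ref{lemma:inN}. The only structural differences between $\mathcal{R}$ and $\mathcal{R}^*$ are that (a) $\mathcal{R}^*$ passes $W$ through $\varphi$ before feeding it to the unregularized risk, and (b) $\mathcal{R}^*$ omits the $\|W\|_F^2$ penalty. Neither interferes with the averaging trick, since that trick perturbs only $U$.

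Concretely, let $(W^\star, U^\star)$ be a global minimizer of $\mathcal{R}^*$. First I would invoke Lemma \ref{lemma:simple} to rewrite $\mathcal{R}^*(W^\star, U^\star) = \mathcal{R}_0(\varphi(W^\star), U^\star) + \tfrac{\lambda}{2}\|U^\star\|_F^2$. Next define $B = \tfrac{1}{K}\sum_{k=1}^K \hat{U}^\star_k \in \real^{d \times L}$ and construct $U_0$ by replacing each block $\hat{U}^\star_k$ with $\hat{U}^\star_k - B$. The key observation is that $\mathcal{R}_0$ depends on $U$ only through the margins $\margin(\bx, j) = \dotprodbis{\hat{U}_k - \hat{U}_j, \varphi(W^\star)\hot(\bx)}$, which are invariant under subtracting the same matrix $B$ from every block. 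Hence $\mathcal{R}_0(\varphi(W^\star), U_0) = \mathcal{R}_0(\varphi(W^\star), U^\star)$.

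Now I would run the same regularizer computation as in Lemma \ref{lemma:inN}: expanding $\sum_k \|\hat{U}^\star_k - B\|_F^2$ and using $\sum_k \hat{U}^\star_k = KB$ gives
\begin{equation*}
\tfrac{1}{K}\bigl(\mathcal{R}^*(W^\star, U_0) - \mathcal{R}^*(W^\star, U^\star)\bigr) = -\tfrac{\lambda}{2}\|B\|_F^2 \le 0,
\end{equation*}
with equality iff $B = 0$. Since $(W^\star, U^\star)$ is a global minimizer, strict decrease is impossible, forcing $B = 0$, i.e. $\sum_k \hat{U}^\star_k = 0$, which is precisely membership in $\mathcal{N}$.

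There is no real obstacle here; the lemma is essentially cosmetic once Lemma \ref{lemma:simple} has reduced $\mathcal{R}^*$ to the shape of a risk of type $\mathcal{R}_0$ plus a $U$-only regularizer. The only point worth checking carefully is that the perturbation $U^\star \mapsto U_0$ is admissible without any modification to $W^\star$ (so the absence of a $\|W\|_F^2$ term in $\mathcal{R}^*$ is irrelevant) and that the margin-based shift invariance of $\mathcal{R}_0$ does not care whether its first argument is $W^\star$ or $\varphi(W^\star)$.
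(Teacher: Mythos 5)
Your proof is correct and takes exactly the same approach the paper intends: the paper's proof of Lemma \ref{lemma:inN2} simply says ``identical to the proof of Lemma \ref{lemma:inN},'' and you have correctly spelled out that the block-mean-subtraction argument goes through verbatim because the shift invariance of $\mathcal R_0$ does not care whether its first argument is $W$ or $\varphi(W)$, and the absence of the $\|W\|_F^2$ penalty is irrelevant since only $U$ is perturbed.
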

The proof is identical to the proof of lemma   \ref{lemma:inN}.
The next lemma bring together the bilinear optimization problem from subsection \ref{sub:bilinearII} and  the sharp lower bound on the unregularized risk that we derived in section \ref{section:lowerbound}.

\begin{lemma} \label{lemma:silvermonkeyII}  Assume the latent variables satisfy assumption \ref{symstrong}.
Assume also that $(W^\star,U^\star)$ is a global minimizer of  \eqref{flute55} and let $c\in \real$ be such that
$$
 \|U^\star\|_F^2   =  KL \, c^2 
$$
 The the following hold:
 \begin{enumerate}
\item[(i)] Any $(W,U)$ that satisfies
$$
(\vphi(W),U) \in  \mathcal N \cap \mathcal E \cap \mathcal B^{II}_c
$$
is also a global minimizer of $\mathcal R^*$.
\item[(ii)] If $\mathcal N \cap \mathcal E \cap \mathcal B^{II}_{c} \neq \emptyset$, then 
 $$(\vphi(W^\star), U^\star) \in \mathcal N \cap \mathcal E \cap \mathcal B^{II}_{c}$$
\end{enumerate}
 \end{lemma}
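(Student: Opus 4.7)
The plan is to mirror the proof of Lemma \ref{lemma:silvermonkeyI} almost verbatim, with the key substitutions being: use Lemma \ref{lemma:simple} to rewrite $\mathcal R^*(W,U) = \mathcal R_0(\vphi(W),U) + \tfrac{\lambda}{2}\|U\|_F^2$, so that the unregularized piece is controlled by Theorem \ref{theorem:buffalo} evaluated at $(\vphi(W), U)$ rather than at $(W, U)$; and invoke the bilinear result from Lemma \ref{lemma:bilinearII} in place of Lemma \ref{lemma:bilinearI}. The initial observation is that $\mathcal N$ is a condition on $U$ alone, so Lemma \ref{lemma:inN2} immediately gives $(\vphi(W^\star), U^\star) \in \mathcal N$. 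Moreover, $\vphi(W^\star) \in \mathrm{Range}(\vphi)$ trivially, and $\|\hat U^\star\|_F^2 = \|U^\star\|_F^2 = KLc^2$ by the choice of $c$, so $(\vphi(W^\star), U^\star)$ is feasible for the bilinear problem \eqref{basea}--\eqref{baseb}.

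For (i) I would take $(W, U)$ with $(\vphi(W), U) \in \mathcal N \cap \mathcal E \cap \mathcal B^{II}_c$ and chain three inequalities: (a) $\mathcal R_0(\vphi(W^\star), U^\star) \geq g(-\dotprodbis{\hat U^\star, \vphi(W^\star) Q^T Z})$ from Theorem \ref{theorem:buffalo}, because $(\vphi(W^\star), U^\star) \in \mathcal N$; (b) the right-hand side is $\geq g(-\dotprodbis{\hat U, \vphi(W) Q^T Z})$ since $(\vphi(W), U)$ maximizes the bilinear objective by Lemma \ref{lemma:bilinearII} and $g$ is monotone increasing; (c) the latter equals $\mathcal R_0(\vphi(W), U)$ again by Theorem \ref{theorem:buffalo}, since $(\vphi(W), U) \in \mathcal N \cap \mathcal E$. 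Combining with the fact that $\|U\|_F^2 = \|\hat U\|_F^2 = KLc^2 = \|U^\star\|_F^2$ (so the regularizers agree) then yields $\mathcal R^*(W, U) \leq \mathcal R^*(W^\star, U^\star)$, which proves that $(W,U)$ is also a global minimizer.

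For (ii), assuming $\mathcal N \cap \mathcal E \cap \mathcal B^{II}_c \neq \emptyset$, I would fix a reference $(W, U)$ therein and argue by contradiction. If $(\vphi(W^\star), U^\star) \notin \mathcal N \cap \mathcal E \cap \mathcal B^{II}_c$, then since it already lies in $\mathcal N$ it must fail either $\mathcal E$ or $\mathcal B^{II}_c$. In the first case, Theorem \ref{theorem:buffalo} makes inequality (a) above strict; in the second case, strict sub-optimality in the bilinear problem together with the strict monotonicity of $g$ makes inequality (b) strict. Either way the chain produces $\mathcal R^*(W^\star, U^\star) > \mathcal R^*(W, U)$, contradicting the optimality of $(W^\star, U^\star)$.

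The main obstacle is essentially bookkeeping: verifying that norm constraints and membership in $\mathcal N$ transfer cleanly from $(W^\star, U^\star)$ to $(\vphi(W^\star), U^\star)$, and checking that Theorem \ref{theorem:buffalo} applies unchanged after the substitution $W \mapsto \vphi(W)$ (which it does, as the theorem treats its first argument as an abstract matrix). No new analytical machinery beyond what was assembled for Lemma \ref{lemma:silvermonkeyI} is required.
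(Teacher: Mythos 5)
Your proposal follows essentially the same route as the paper's proof: express $\mathcal R^*$ via Lemma~\ref{lemma:simple} as $\mathcal R_0(\vphi(W),U) + \tfrac{\lambda}{2}\|U\|_F^2$, use Lemma~\ref{lemma:inN2} to place $(\vphi(W^\star),U^\star)$ in $\mathcal N$, chain the lower bound from Theorem~\ref{theorem:buffalo}, the bilinear optimality from Lemma~\ref{lemma:bilinearII}, and the equality case of Theorem~\ref{theorem:buffalo}, and finish with a strict-inequality contradiction for part (ii). The bookkeeping you flag (feasibility of $(\vphi(W^\star),U^\star)$ for the bilinear problem, agreement of the regularizer terms) is exactly what the paper verifies, so the argument is correct.
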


\begin{proof}
Recall from theorem \ref{theorem:buffalo} that
\begin{align}
\mathcal R_0(V,U) &=  g \Big( -   \dotprodbis{\hat U,  V Q^TZ }  \Big) \qquad \text{ for all } (V,U) \in \mathcal N \cap \mathcal E \\
\mathcal R_0(V,U) &>  g \Big( -   \dotprodbis{\hat U,  V Q^TZ }  \Big) \qquad \text{ for all } (V,U) \in \mathcal N \cap \mathcal E^c \label{toto}
\end{align}
We start by proving (i). Define $ V^\star = \vphi( W^\star)$, and  assume that $U, V, W$  are such that $\vphi(W) = V$ and  $(V, U) \in \mathcal N \cap \mathcal E \cap \mathcal B_c$.  Then we have 
\begin{align*}
\mathcal R_0(\vphi(W^\star),U^\star) &= \mathcal R_0(V^\star,U^\star) \\
& \ge g\left(  - \dotprodbis{U^\star,  V^\star QZ }\right)  & \text{[because $(V^\star,U^\star) \in \mathcal N$ ]} \\
&\ge g\left(  - \dotprodbis{U,  V QZ }\right)  &  \text{[because $(V,U) \in \mathcal B^{II}_{c} $   ]} \\
 & = \mathcal R_0(V,U)  & \text{[because $(V,U) \in \mathcal N \cap \mathcal E$ ]} \\
 & =  \mathcal R_0(\vphi(W),U) 
\end{align*}
Since $\|U\|_F^2 =  KL \, c^2  = \|U^\star\|_F^2$, we have  $\mathcal R^*(W,U) \le \mathcal R^*(W^\star,U^\star) $ and therefore $(W,U)$ is a minimizer. 

We now prove (ii) by contradiction.
Suppose that $( \vphi(W^\star), U^\star) \notin  \mathcal N \cap \mathcal E \cap \mathcal B^{II}_c$. This must mean that  
$$ ( \vphi(W^\star), U^\star) \notin \mathcal E \cap \mathcal B^{II}_c $$
since it clearly belongs to $\mathcal N$.
If ($ \vphi(W^\star), U^\star) \notin \mathcal E$ then the first inequality in the above computation is strict according to \eqref{toto}. If ($ \vphi(W^\star), U^\star) \notin  \mathcal B^{II}_c$ then the second inequality is strict because $g$ is strictly increasing.
\end{proof}

 The next two 
lemmas shows that the set $ \mathcal E \cap \mathcal N \cap \mathcal B^{II}_c$ is closely related to the set of collapsed configurations  $\Omega_c^{II}$. In order to states these lemmas, the following definition will prove convenient
\begin{multline} \label{barOmegaII}
\overline{\Omega}^{II}_{c} = \Big\{ (V, U) :  \text{There exist a mean-zero equiangular matrix $\mathfrak F$ such that } \\   V = \sqrt{d}\; \mathfrak F \; P \quad   \text{ and } \quad    \hat U = c \;  \mathfrak F  \; Z    \Big\}
\end{multline}
Note that  $(W,U) \in {\Omega}^{II}_{c} $ if and only if  $(\vphi(W),U) \in  \overline{\Omega}^{II}_{c} $. Also, in light of \eqref{bigBB}, the inclusion 
$$
\overline{\Omega}^{II}_{c} \subset  \mathcal  B^{II}_c
$$
is obvious. We now prove the following lemma.
\begin{lemma} \label{tournette-other}
 If the latent variables satisfy the  symmetry assumption \ref{symstrong}, then
\[
\overline{\Omega}_c^{II} \subset \mathcal E \cap \mathcal N \cap \mathcal B^{II}_c
\]
\end{lemma}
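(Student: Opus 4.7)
The plan is to mirror the proof of Lemma \ref{tournette} (the type-I analog), with small modifications to account for the fact that here the pair $(V,U)$ lives in the image of $\vphi$ rather than being an arbitrary embedding matrix, and that the equiangular matrix $\mathfrak F$ additionally satisfies $\ones_d^T \mathfrak F = 0$.

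First, I would establish $\overline{\Omega}_c^{II} \subset \mathcal B^{II}_c$. If $(V,U) \in \overline{\Omega}_c^{II}$, then $V = \sqrt{d}\,\mathfrak F\,P$ and $\hat U = c\,\mathfrak F\,Z$ for some mean-zero equiangular $\mathfrak F$. The defining relations in Definition \ref{def:2ap} immediately yield $\ones_d^T \mathfrak F = 0$ and the diagonal entries of $\mathfrak F^T \mathfrak F$ equal to $1$, so the columns of $\mathfrak F$ are mean-zero and unit length. Hence $\mathfrak F \in \mathcal F$ and comparing with the definition \eqref{bigBB} of $\mathcal B^{II}_c$ gives the inclusion.

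Next, I would show $\overline{\Omega}_c^{II} \subset \mathcal N$. Using property (ii) from Lemma \ref{lemma:sym}, which gives $\sum_{k=1}^K Z_k = \frac{K}{n_c}\ones_{n_c}\ones_L^T$, I compute
\[
\sum_{k=1}^K \hat U_k \;=\; c \sum_{k=1}^K \mathfrak F\, Z_k \;=\; \frac{cK}{n_c}\, \mathfrak F \, \ones_{n_c} \ones_L^T \;=\; 0,
\]
since $\mathfrak F \,\ones_{n_c}=0$ for any mean-zero equiangular matrix.

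Finally, the main content is $\overline{\Omega}_c^{II} \subset \mathcal E$. The computation is essentially identical to the one in the proof of Lemma \ref{tournette}. Fix $(V,U) \in \overline{\Omega}_c^{II}$, pick $\bx \in \data_k$, and write $\bz_k = [\alpha_1,\ldots,\alpha_L]$, $\bz_j = [\alpha'_1,\ldots,\alpha'_L]$. Using the identity $P\hot(\bx)=Z_k$ from \eqref{Phot}, we get $V\hot(\bx) = \sqrt{d}\,\mathfrak F\, Z_k$, and therefore
\[
\margin_{V,U}(\bx,j) \;=\; \dotprodbis{\hat U_k-\hat U_j, V\hot(\bx)} \;=\; c\sqrt{d}\,\sum_{\ell=1}^L \dotprod{\mathfrak f_{\alpha_\ell} - \mathfrak f_{\alpha'_\ell},\mathfrak f_{\alpha_\ell}}.
\]
The equiangular relations in Definition \ref{def:2ap} give $\dotprod{\mathfrak f_\alpha,\mathfrak f_{\alpha'}} = 1$ if $\alpha=\alpha'$ and $-1/(n_c-1)$ otherwise, so summing over positions where the two concept sequences agree vs.\ disagree yields
\[
\sum_{\ell=1}^L \dotprod{\mathfrak f_{\alpha'_\ell},\mathfrak f_{\alpha_\ell}} \;=\; L - \frac{n_c}{n_c-1}\,\dist(\bz_j,\bz_k),
\]
and hence $\margin_{V,U}(\bx,j) = c\sqrt{d}\,\frac{n_c}{n_c-1}\,\dist(\bz_j,\bz_k)$. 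This value depends only on $\dist(\bz_j,\bz_k)$, so the equimargin property holds. None of the three steps is a real obstacle; the only subtlety relative to Lemma \ref{tournette} is the factor $\sqrt{d}$ coming from $V=\sqrt{d}\,\mathfrak F\,P$ instead of $W=c\,\mathfrak F\,P$, and ensuring the extra constraint $\ones_d^T\mathfrak F=0$ (built into mean-zero equiangularity) is not needed for any of these three inclusions — it is needed elsewhere, namely to guarantee $V \in \mathrm{Range}(\vphi)$.
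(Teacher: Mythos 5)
Your proof is correct and mirrors the paper's argument for Lemma~\ref{tournette-other} essentially step by step; it is in fact slightly more complete, since the paper dismisses the inclusion $\overline{\Omega}_c^{II}\subset\mathcal B^{II}_c$ as already known while you verify it explicitly. One small inconsistency in your closing remark: you assert that the constraint $\ones_d^T\mathfrak F = 0$ is not needed for any of the three inclusions, yet your own verification of $\overline{\Omega}_c^{II}\subset\mathcal B^{II}_c$ relies on exactly this identity to conclude that the columns of $\mathfrak F$ are mean-zero and hence that $\mathfrak F\in\mathcal F$.
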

\begin{proof} The proof is almost identical to the one of lemma \ref{tournette}. We repeat it for completeness. 
We already know that $\overline{\Omega}^{II}_{c} \subset \mathcal B^{II}_{c}$.  
We the show that $\overline{\Omega}^{II}_{c}  \subset  \mathcal E$.
Suppose 
  $(V,U) \in \overline{\Omega}^{II}_c$. Then there exists a mean-zero equiangular  matrix $\mathfrak F  \in \real^{d \times n_c}$  such that
\begin{equation*}
   V = \sqrt{d} \;   \mathfrak  F \;  P \qquad  \text{ and }  \qquad  \hat U  = c \;    \mathfrak F \;  Z  
\end{equation*}
Recall from \eqref{Phot} that
$
P \hot(\bx) =  Z_k $ for all $\bx \in \data_k$.
Consider two latent variables
$$
\bz_k = [\alpha_1, \ldots, \alpha_L] \quad \text{ and } \bz_j = [\alpha'_1, \ldots, \alpha'_L] 
$$
and assume $\bx$ is generated by $\bz_k$, meaning that  $\bx \in \data_k$. We then have 
\begin{align*}
\margin_{V,U}(\bx , j) &= \dotprodbis{\hat U_{k} - \hat U_{j}, V \hot(\bx)} \\
& = c \sqrt{d}\;  \dotprodbis{\mathfrak F\; Z_{k} -\mathfrak F\; Z_{j}, \mathfrak F\; P \hot(\bx)} \\
& =    c\, \sqrt{d}\;   \dotprodbis{\mathfrak F\; Z_{k} -\mathfrak F\; Z_{j}, \mathfrak F\; Z_k}  \\
& =   c\, \sqrt{d}  \; \sum_{\ell =1}^L \dotprodbis{ \; \cpt_{\alpha_{\ell } } -\cpt_{\alpha'_{\ell } }  \;,\;  \cpt_{\alpha_{\ell} }  \; } \\
& =  c\, \sqrt{d} \;\;   \dist(\bz_j,\bz_{k})
\end{align*}
From the above computation it is clear that the margin  only depends on  $\dist(\bz_j,\bz_{k})$,  and therefore $(V,U)$ satisfies the equimargin property.

Finally we show that $\overline\Omega^{II}_{c} \subset  \mathcal N$.  Suppose 
  $(V,U) \in \overline \Omega^{II}_c$.  Using the identity
$
\sum_{k=1}^K Z_k = \frac{K}{n_c} \ones_{n_c} \ones^T_L
$
we obtain  
\begin{align*}
\sum_{k=1}^K \hat U_k =   c \sum_{k=1}^K    \mathfrak F \; Z_k =  c \;\;  \frac{K}{n_c}  \mathfrak F \; \ones_{n_c} \ones^T_L = 0
\end{align*}
where we have used the fact that  $\mathfrak F \; \ones_{n_c} = 0$.
\end{proof}

Finally, we have the following lemma.

\begin{lemma} \label{tournette-other2} If  the latent variables satisfy  assumptions \ref{symstrong} and \ref{symconv},  then 
\[
\overline \Omega^{II}_{c}   =  \mathcal E \cap \mathcal N \cap \mathcal B^{II}_{c} 
\]
\end{lemma}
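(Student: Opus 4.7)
The containment $\overline \Omega^{II}_c \subset \mathcal E \cap \mathcal N \cap \mathcal B^{II}_c$ is already supplied by lemma \ref{tournette-other}, so the plan is to prove the reverse inclusion $\mathcal E \cap \mathcal N \cap \mathcal B^{II}_c \subset \overline \Omega^{II}_c$. The argument will closely parallel the proof of lemma \ref{tournette2}, with minor adaptations reflecting the fact that the ``bilinear'' parametrization for the LayerNorm network carries an intrinsic mean-zero condition baked into the set $\mathcal F$.

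Concretely, I would start by picking $(V,U) \in \mathcal E \cap \mathcal N \cap \mathcal B^{II}_c$ and using membership in $\mathcal B^{II}_c$ to write $V = \sqrt{d}\, FP$ and $\hat U = c\, FZ$ for some $F \in \mathcal F$. By definition of $\mathcal F$ the columns of $F$ already satisfy $\ones_d^T F = 0$ and each have unit norm, so in particular $\|F\|_F^2 = n_c$. To upgrade $F$ to a mean-zero equiangular matrix it remains to verify
\[
F\,\ones_{n_c} = 0 \qquad \text{and} \qquad F^T F = \tfrac{n_c}{n_c-1}\,I_{n_c} - \tfrac{1}{n_c-1}\,\ones_{n_c}\ones_{n_c}^T.
\]
The first identity follows exactly as in lemma \ref{tournette2}: combining $(V,U)\in\mathcal N$ (so $\sum_k \hat U_k = 0$) with the identity $\sum_k Z_k = \frac{K}{n_c}\ones_{n_c}\ones_L^T$ from lemma \ref{lemma:sym}(ii) gives $c\,\tfrac{K}{n_c}\,F\ones_{n_c}\ones_L^T = 0$, and hence $F\ones_{n_c}=0$.

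For the second identity, I would compute the margin of an arbitrary $\bx \in \data_k$ against class $j$. Using $P\hot(\bx) = Z_k$ and the factorizations of $V$ and $\hat U$, one gets
\[
\margin_{V,U}(\bx,j) \;=\; \dotprodbis{\hat U_k - \hat U_j,\; V\hot(\bx)} \;=\; c\sqrt{d}\;\dotprodbis{F^T F,\;\Gamma^{(k,j)}},
\]
where $\Gamma^{(k,j)} = Z_k(Z_k-Z_j)^T$ as in assumption \ref{symconv}. The equimargin property $(V,U)\in\mathcal E$ then forces $\dotprodbis{F^T F,\,\Gamma^{(j,k)} - \Gamma^{(j',k')}} = 0$ whenever $\dist(\bz_j,\bz_k) = \dist(\bz_{j'},\bz_{k'})$. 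Since $F^T F$ is positive semi-definite, assumption \ref{symconv} kicks in and yields $F^T F = a\,I_{n_c} + b\,\ones_{n_c}\ones_{n_c}^T$ for some scalars $a,b$. Right-multiplying by $\ones_{n_c}$ and using $F\ones_{n_c}=0$ gives $a + b\,n_c = 0$, and taking the trace and using $\|F\|_F^2 = n_c$ gives $a + b = 1$; solving this $2\times 2$ system yields $a = n_c/(n_c-1)$ and $b = -1/(n_c-1)$, completing the identification of $F$ as a mean-zero equiangular matrix.

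The proof is essentially a transcription of lemma \ref{tournette2}, and the main (only real) obstacle is invoking assumption \ref{symconv} correctly, which is handled identically. In fact this case is slightly cleaner than the type-I case: the mean-zero column condition $\ones_d^T F = 0$ is free from the $\mathcal B^{II}_c$ parametrization rather than something that must be re-derived, so no additional work is needed beyond the two steps above.
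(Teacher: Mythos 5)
Your proposal is correct and follows essentially the same path as the paper's own proof: obtain $F$ from membership in $\mathcal B^{II}_c$, get $F\ones_{n_c}=0$ from $\mathcal N$ and lemma~\ref{lemma:sym}(ii), express the margin as $c\sqrt{d}\,\dotprodbis{F^TF,\Gamma^{(k,j)}}$, and invoke assumption~\ref{symconv} with the equimargin property to pin down $F^TF$. The only cosmetic difference is that you derive $a+b=1$ via the trace $\tr(F^TF)=\|F\|_F^2=n_c$, whereas the paper reads off the diagonal entries of $F^TF$ directly; these are equivalent.
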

\begin{proof} The proof, again, is very similar to the one of lemma \ref{tournette2}.
From the previous lemma we know that $
\overline \Omega^{II}_{c}   \subset \mathcal E \cap \mathcal N \cap \mathcal B^{II}_{c}  
$ so we need to  show that
$$
 \mathcal E \cap \mathcal N \cap \mathcal B^{II}_{c} \subset \overline \Omega^{II}_{c}. 
$$
Let $ (V,U) \in \mathcal E \cap \mathcal N \cap \mathcal B^{II}_{c} $. Since $(V,U)$  belongs to $\mathcal B^{II}_c$, there exists a matrix $F \in \real^{d \times n_c}$ whose columns have unit length and mean $0$  such that
\begin{equation*}
   V = \sqrt{d} \;   F \;  P \qquad  \text{ and }  \qquad  U  = c \;    F \;  Z  
\end{equation*}
Our goal is to show that $F$ is a mean-zero equiangular matrix, meaning that it satisfies the three relations
\begin{equation} \label{lac_leman55}
 \ones^T_{n_c} \,  F = 0 , \qquad  F \, \ones_{n_c} = 0 \qquad \text{ and } \qquad  F^T  F = \frac{n_c}{n_c-1} \;I_{n_c} - \frac{1}{n_c-1} \; \ones_{n_c} \ones_{n_c}^T.
\end{equation}

We already know that the first relation is satisfied since the columns of $F$ have mean $0$.
The second relation is easily obtained. Indeed, using the fact that $ (V,U) \in \mathcal N$ together with the identity
$
\sum_{k=1}^K Z_k = \frac{K}{n_c} \ones_{n_c} \ones^T_L
$
(which hold due to lemma \ref{lemma:sym}),  we obtain  
$$
0=\sum_{k=0}^K U_k =  c'  \sum_{k=0}^K F Z_k  =  c \frac{K}{n_c} F \ones_{n_c} \ones_L^T. 
$$
which implies
 $F \ones_{n_c} =0$. 
 
We now prove the third equality of \eqref{lac_leman55}. Assume that  $\bx \in \data_k$.  Using the fact that $
P \hot(\bx) =  Z_k
$ together with \eqref{montblanc}, we obtain
\begin{align}
\margin_{V,U}(\bx , j) &= \dotprodbis{\hat U_{k} - \hat U_{j}, V \hot(\bx)} \nonumber \\
& = c\,  \sqrt{d}  \;  \dotprodbis{F\; Z_{k} - F\; Z_{j},  F\; P \hot(\bx)}  \nonumber \\
& =    c\, \sqrt{d}   \dotprodbis{F\; Z_{k} -F\; Z_{j}, F\; Z_k}   \nonumber \\
& =  c\, \sqrt{d}    \dotprodbis{F^TF(Z_{k} -Z_{j}), Z_k}  \nonumber \\
& =  c\, \sqrt{d}    \dotprodbis{\; F^TF \; , \; \Gamma^{(k,j)} \;} \label{clement55}
\end{align}
Since $(V,U) \in \mathcal E$, the margins must only depend on the distance between the latent variables.  Due to \eqref{clement55},  we can be express this as 
\begin{equation*} 
\dotprodbis{ \; F^TF \; , \;   \Gamma^{(j,k)}} =  \dotprodbis{ \; F^TF \; ,  \; \Gamma^{(j',k')}  }  \qquad   \forall j,k,j',k' \in [K]  \text{ s.t. } \dist(\bz_{j},\bz_{k}) = \dist(\bz_{j'},\bz_{k'})
\end{equation*}
Since the  $F^TF$ is clearly positive semi-definite, we may then use assumption  \ref{symconv} to conclude that
$
F^TF \in \mathcal A
$. Recalling definition \eqref{calA} of the set $\mathcal A$, we therefore have
\begin{equation} \label{lac_annecy55}
F^T F =  a  \;I_{n_c} + b\; \ones_{n_c} \ones_{n_c}^T
\end{equation}
for some $a,b \in \real $. To conclude our proof, we need to show that
\begin{equation} \label{sol999}
a = \frac{n_c}{n_c-1} \qquad \text{and} \qquad b = -\frac{1}{n_c-1}.
\end{equation}

Combining \eqref{lac_annecy55} with the first equality of \eqref{lac_leman55},  we obtain
\begin{equation*} 
0 = F^T F\,  \ones_{n_c} = a  \;\ones_{n_c} + b\; \ones_{n_c} \ones_{n_c}^T \ones_{n_c} = (a+b n_c) \ones_{n_c}
\end{equation*}
 Since the columns of $F$ have unit length,  the diagonal entries of $F^TF$ must all be equal to $1$, and therefore \eqref{lac_annecy55} implies that  $a+b=1$.
  The constants $a,b \in \real$, according must therefore solve the system
 $$
 \begin{cases}
 a+ bn_c &= 0 \\
 a + b &= 1
 \end{cases}
 $$
 and one can easily check that the solution of this system is precisely given by \eqref{sol999}.
\end{proof}

We conlude this subsection by proving proposition \ref{proposition:collapseII}.

\begin{proof}[Proof of Proposition  \ref{proposition:collapseII}]  Let $(W^\star, U^\star)$ be a global minimizer of $\mathcal R$ and let $c \in \real$ be such that
$$
 \|U^\star\|_F^2   = KL \, c^2 
$$

 We first prove statement (i) of the proposition.
 If the  latent variables satisfies assumption \ref{symstrong}  then  lemma \ref{tournette-other} asserts that
  \[
\overline \Omega_c^{II} \subset \mathcal E \cap \mathcal N \cap \mathcal B^{II}_c 
\]
Assume  $(W,U) \in \Omega_c^{II}$. This implies that  $(\vphi(W),U) \in \overline\Omega_c^{II}$, and 
and therefore $(\vphi(W),U) \in \mathcal E \cap \mathcal N \cap \mathcal B^{II}_c  $. We can then use lemma \ref{lemma:silvermonkeyII} to conclude that $(W,U)$ is a global minimizer of $\mathcal R^*$.

We now prove statement (ii) of the proposition.
If the  latent variables satisfies assumption \ref{symstrong} and \ref{symconv} then  lemma \ref{tournette-other2} asserts that
\[
\overline\Omega_c^{II} = \mathcal E \cap \mathcal N \cap \mathcal B^{II}_c 
\]
The set $\overline \Omega_c^{II}$ is clearly not empty (because the set of mean-zero equiangular matrices is not empty), and we may therefore  use  the second statement of lemma \ref{lemma:silvermonkeyII} to obtain that
$$
(\vphi(W^\star), U^\star) \; \in \;  \mathcal E \cap \mathcal N \cap \mathcal B^{II}_c  \;  = \;  \overline\Omega_c^{II}
$$
which in turn implies  $(W^\star, U^\star)  \in   \Omega_c^{II}$.
\end{proof}

\subsection{Determining the constant $c$} \label{sub:constants-LN}

The next lemma provides an explicit formula for the regularized risk of a network $h^*_{W,U}$  whose weights are in type-II collapse configuration with constant $c$. 

\begin{lemma} \label{lemma:value-LN} Assume the latent variables satisfy assumption \ref{symstrong}.  If  the pair of weights  $(W,U)$ belongs to  $\Omega^{II}_{c}$, then
\begin{equation} \label{evaleval55}
\mathcal R^*(W,U) =   \log \left( 1 -  \frac{K}{n_c^L}  +  \frac{K}{n_c^L} \Big( 1+ (n_c-1) e^{-   \eta^* \sqrt{KL}  \,c} \Big)^L \right) \; + \;   \frac{\lambda}{2}  \left( \sqrt{KL} \, c \right)^2
\end{equation}
where  $\eta^* =   \frac{n_c}{n_c-1} \sqrt{\frac{d}{KL}} $.
\end{lemma}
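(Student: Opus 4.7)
The plan is to follow exactly the template of Lemma~\ref{lemma:valueI}, using Lemma~\ref{lemma:simple} to split $\mathcal{R}^*(W,U) = \mathcal{R}_0(\varphi(W), U) + (\lambda/2)\|U\|_F^2$ and then evaluate each piece on a type-II collapse configuration. The computation is essentially a direct substitution; the only nontrivial bookkeeping is tracking the $\sqrt{d}$ factor that replaces the constant $c$ from the type-I case and checking that $\omega^{*}c = \eta^{*}\sqrt{KL}\,c$.

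First, unpack the definition: if $(W,U) \in \Omega^{II}_c$ then there is a mean-zero equiangular matrix $\mathfrak{F}$ with columns $\mathfrak{f}_1,\ldots,\mathfrak{f}_{n_c}$ so that $\varphi(W) = \sqrt{d}\,\mathfrak{F}P$ and $\hat{U} = c\,\mathfrak{F}Z$. Using the identity $P\hot(\bx) = Z_k$ for $\bx \in \data_k$ (from \eqref{Phot}), I would expand the margin for $\bx \in \data_k$ and any $j \ne k$, writing $\bz_k = [\alpha_1,\ldots,\alpha_L]$ and $\bz_j = [\alpha'_1,\ldots,\alpha'_L]$:
\[
\margin_{\varphi(W),U}(\bx,j) \;=\; c\sqrt{d}\,\dotprodbis{\mathfrak{F}(Z_k-Z_j),\,\mathfrak{F}Z_k} \;=\; c\sqrt{d}\sum_{\ell=1}^L \dotprod{\mathfrak{f}_{\alpha_\ell}-\mathfrak{f}_{\alpha'_\ell},\,\mathfrak{f}_{\alpha_\ell}}.
\]
Applying the equiangular relations ($\|\mathfrak{f}_\alpha\|^2 = 1$ and $\dotprod{\mathfrak{f}_\alpha,\mathfrak{f}_{\alpha'}} = -1/(n_c-1)$ for $\alpha\ne\alpha'$) collapses the sum to $\frac{n_c}{n_c-1}\dist(\bz_j,\bz_k)$, so with $\omega^{*} := \sqrt{d}\,\frac{n_c}{n_c-1}$ we get $\margin = c\,\omega^{*}\,\dist(\bz_j,\bz_k)$. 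Crucially, this depends only on the Hamming distance and is independent of the specific $\bx \in \data_k$.

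Second, I would insert this into the margin form \eqref{surf} of $\mathcal{R}_0$. Because the margin is $\bx$-independent, the sum over $\bx\in\data_k$ weighted by $\mathcal{D}_{\bz_k}$ collapses to a single $\log$ term. Grouping the sum over $j\ne k$ by Hamming distance and applying the combinatorial count $|S_r| = \frac{K}{n_c^L}\binom{L}{r}(n_c-1)^r$ established from Assumption~\ref{symstrong} (equation \eqref{size_of_SSS}), together with the binomial theorem,
\[
\sum_{j\ne k} e^{-c\omega^{*}\dist(\bz_j,\bz_k)} = \frac{K}{n_c^L}\sum_{r=1}^L \binom{L}{r}(n_c-1)^r e^{-c\omega^{*}r} = -\frac{K}{n_c^L} + \frac{K}{n_c^L}\bigl(1+(n_c-1)e^{-c\omega^{*}}\bigr)^L.
\]
Because this quantity is independent of $k$, averaging over $k$ is trivial, yielding the log-term of \eqref{evaleval55}.

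Third, I would handle the regularizer. Using $\mathfrak{F}^T\mathfrak{F} = \frac{n_c}{n_c-1}I_{n_c} - \frac{1}{n_c-1}\ones_{n_c}\ones_{n_c}^T$ we see $\|\mathfrak{F}\|_F^2 = n_c$, and together with $ZZ^T = (KL/n_c)I_{n_c}$ from Lemma~\ref{lemma:sym}(ii) one computes $\|\mathfrak{F}Z\|_F^2 = \tr(\mathfrak{F}^T\mathfrak{F} ZZ^T) = (KL/n_c)\|\mathfrak{F}\|_F^2 = KL$. Hence $\|U\|_F^2 = \|\hat{U}\|_F^2 = c^2 KL$, producing the $(\lambda/2)(\sqrt{KL}\,c)^2$ term. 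To finish, I simply verify $c\omega^{*} = c\sqrt{d}\frac{n_c}{n_c-1} = c\sqrt{KL}\cdot\frac{n_c}{n_c-1}\sqrt{d/(KL)} = c\sqrt{KL}\,\eta^{*}$, matching \eqref{evaleval55}. There is no serious obstacle — the proof is mechanical once the right substitutions are made, and every ingredient (margin collapse, $|S_r|$ count, binomial expansion, Frobenius norm computation) is already available from the preceding development.
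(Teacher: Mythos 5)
Your proposal is correct and follows essentially the same route as the paper's own proof: compute the margin on a type-II collapse configuration to get $c\,\omega^{*}\dist(\bz_j,\bz_k)$ with $\omega^{*}=\sqrt{d}\,\tfrac{n_c}{n_c-1}$, exploit $\bx$-independence, apply the $|S_r|$ count and the binomial theorem, and compute $\|\hat U\|_F^2 = c^2 KL$. The only cosmetic differences are that you invoke Lemma~\ref{lemma:simple} explicitly (which the paper uses implicitly by writing $\margin_{\vphi(W),U}$) and you compute $\|\mathfrak F Z\|_F^2$ via $\tr(\mathfrak F^T\mathfrak F\,ZZ^T)$ rather than by the paper's remark that $\mathfrak F Z$ has $KL$ unit-norm columns; both are equivalent.
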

Combining lemma  \ref{lemma:value-LN} with proposition \ref{proposition:collapseII} concludes the proof of theorem  \ref{theorem:thm3}.

\begin{proof}[Proof of lemma \ref{lemma:value-LN}]
We recall that
\[
 \mathcal R_0(W,U)  =  \frac{1}{K}  \sum_{k=1}^{K}   \sum _{\bx \in \data_k}  \log\left( 1 +  \sum_{j \neq k}   e^{ - \margin_{W,U}(\bx , j)  } \right)  \;   \mathcal D_{\bz_k}(\bx)
\]
and 
\begin{multline} \label{setOmegaII}
\Omega^{II}_{c} = \Big\{ (W, U) :  \text{There exist a mean-zero equiangular matrix $\mathfrak F$ such that } \\   \vphi(W) = \sqrt{d}\; \mathfrak F \; P \quad   \text{ and } \quad    \hat U = c \;  \mathfrak F  \; Z    \Big\}
\end{multline}
Consider two latent variables
$$
\bz_k = [\alpha_1, \ldots, \alpha_L] \quad \text{ and } \bz_j = [\alpha'_1, \ldots, \alpha'_L] 
$$
and assume  $\bx \in \data_k$. Using the identity 
$
P \hot(\bx) =  Z_k
$
we then  obtain  
\begin{align*}
\margin_{\vphi(W),U}(\bx , j) &= \dotprodbis{\hat U_{k} - \hat U_{j}, \vphi(W) \hot(\bx)} \\
& =  c \sqrt{d} \;  \dotprodbis{\mathfrak F\; Z_{k} -\mathfrak F\; Z_{j}, \mathfrak F\; P \hot(\bx)} \\
& =    c \sqrt{d} \;   \dotprodbis{\mathfrak F\; Z_{k} -\mathfrak F\; Z_{j}, \mathfrak F\; Z_k}  \\
& =   c \sqrt{d}   \; \sum_{\ell =1}^L \dotprodbis{ \; \cpt_{\alpha_{\ell } } -\cpt_{\alpha'_{\ell } }  \;,\;  \cpt_{\alpha_{\ell} }  \; } \\
& =  c \sqrt{d} \;\; \left( L -  \sum_{\ell =1}^L \dotprodbis{ \; \cpt_{\alpha'_{\ell } }  \;,\;  \cpt_{\alpha_{\ell} }  \; }     \right) \\
& =  c \sqrt{d}  \frac{n_c}{n_c-1} \dist(\bz_j,\bz_{k}) 
\end{align*}
Letting $\omega^* =  \sqrt{d}  \frac{n_c}{n_c-1} $ we therefore obtain
\begin{align}
 \mathcal R_0(W,U)  &=  \frac{1}{K}  \sum_{k=1}^{K}   \sum _{\bx \in \data_k}  \log\left( 1 +  \sum_{j \neq k}   e^{ -  c \,\omega^* \,  \dist(\bz_j,\bz_k) } \right)  \;   \mathcal D_{\bz_k}(\bx) \nonumber \\
 & = \frac{1}{K}  \sum_{k=1}^{K}   \log\left( 1 +  \sum_{j \neq k}   e^{ -  c\, \omega^* \,  \dist(\bz_j,\bz_k) } \right)  \label{vavava55}
\end{align}
where we have used the quantity inside the $\log$ does not depends on $\bx$.
Using the identity 
$
|S_r| = \frac{K}{n_c^L}  {L \choose r}  (n_c-1)^r
$
 we then obtain  obtain
 \begin{align*}
  \sum_{j \neq k}   e^{ -  c\,\omega \,  \dist(\bz_j,\bz_k) } &  = \sum_{r=1}^L  \left| \left\{ j: \dist(\bz_j, \bz_k) = r \right\} \right|  \;  e^{-  c\,\omega^* \,   r  }  \\
  & =    \frac{K}{n_c^L} \sum_{r=1}^L  { L \choose r} (n_c-1)^r \; e^{-  c\,\omega^* \,   r  }  \\
  & =   -  \frac{K}{n_c^L}  +  \frac{K}{n_c^L} \sum_{r=0}^L  { L \choose r} (n_c-1)^r \;  e^{-  c\,\omega^* \,   r  }  \\
   & =  -  \frac{K}{n_c^L}  +  \frac{K}{n_c^L} \Big( 1+ (n_c-1) e^{- c\,\omega^* \, } \Big)^L 
 \end{align*}
where we have used the binomial theorem to obtain the last equality. The above quantity does not depends on $k$, therefore \eqref{vavava55} can be expressed as
$$
 \mathcal R_0(W,U) = \log \left( 1 -  \frac{K}{n_c^L}  +  \frac{K}{n_c^L} \Big( 1+ (n_c-1) e^{-  c\,\omega^*} \Big)^L \right) 
$$
We then remark that the matrix  $\mathfrak F  \; Z $ has $KL$ columns, and that each of these columns has norm $1$. We therefore have
$$
\|\hat U\|^2_{F} = \| c\, \mathfrak F \, Z \|_F^2 = c^2  KL \qquad  \text{for all } (W,U) \in \Omega^{II}_{c}
$$
To conclude the proof we simply note that $\omega^* = \sqrt{KL} \, \eta^* $.
\end{proof}

\section{Proof of theorem \ref{theorem:2}} \label{section:thm2}

This section is devoted to the proof of theorem \ref{theorem:2} from the main paper, which we recall below for convenience.
\begin{theorem2}[Directional Collapse of $h$]
Assume $K=n_c^L$ and $\{\bz_1, \ldots, \bz_K\} = \mathcal Z$.
 Assume also that  the regularization parameter $\lambda$ satisfies 
 \begin{equation}
 \lambda^2<  \frac{L}{n_c^{L+1}} \sum_{\beta=1}^{s_c} \mu_\beta^2 \label{lambdabound}
\end{equation}
Finally, assume that   $(W,\,U)$ is in a type-III collapse configuration for some constants  $c,r_1, \ldots, r_{s_c} \ge 0$. Then $(W,U)$ is a critical point of  $\risk$ if and only if $(c,r_1, \ldots, r_{s_c})$ solve the system 
\begin{align}
 & \frac{\lambda}{L} \; \frac{r_\beta}{c}  \left(  n_c-1 +  \exp\left(  \frac{n_c}{n_c-1} c\, r_\beta   \right)\right)   =  {\mu_\beta}   \qquad \text{ for all } 1 \le \beta \le s_c \label{sys1} \\
 &  \sum_{\beta = 1}^{s_c} \left(\frac{r_\beta}{c} \right)^2 = L n_c^{L-1} \label{sys2}.
\end{align}
\end{theorem2}
At the end of this section, we also show that if \eqref{lambdabound} holds, then the system \eqref{sys1} -- \eqref{sys2} has a unique solution (see proposition \ref{proposition:unique} in subsection \ref{section:F2}).

The strategy to prove theorem \ref{theorem:2} is straightforward: we simply need to evaluate the gradient of the risk on weights $(W,U)$ which are in a type-III collapse configuration. Setting this gradient to zero will then lead to a system for the constants $c,r_1, \ldots, r_{s_c}$ defining the configuration. While conceptually simple, the gradient computation is quite lengthy.

We start by deriving  formulas for the partial derivatives of $\mathcal R_0$ with respect to the linear weights $\bu_{k,\ell}$ and the word embeddings $\bw_{(\alpha,\beta)}$.  As we will see, 
 $\partial \mathcal R_0 /\partial \bu_{k,\ell}$ and  ${\partial \mathcal R_0}/{ \partial \bw_{(\alpha,\beta)}}$ plays symmetric roles. In order to observe this symmetry, the following notation will prove convenient:
 \begin{align} \label{Phi}
\Phi_{(\alpha,\beta) , (k,\ell)} (W,U) 
& := \frac{1}{K} \sum_{j=1}^K \sum_{\bx \in \data_j} \ones_{\{ x_\ell = (\alpha,\beta) \}} \Big( \ones_{\{j=k\}} - q_{k,W,U}(\bx) \Big) \, \mathcal D_{\bz_j}(\bx)
\end{align}
where 
$$
q_{k,W,U}(\bx) := \frac{e^{\dotprodbis{\hat U_k, W \hot(\bx)}}}{\sum_{k'=1}^K e^{\dotprodbis{U_{k'}, W \hot(\bx)}} }
$$
We may now state the first  lemma of this section:
\begin{lemma} \label{lemma:grad_simple} The partial derivatives of $\mathcal R_0$ with respect to $\bu_{k,\ell}$ and $\bw_{(\alpha,\beta)}$ are given by
\begin{gather*}
 -\frac{\partial \mathcal R_0}{ \partial \bu_{k,\ell}} (W ,U)  =  \sum_{\alpha=1}^{n_c} \sum_{\beta = 1}^{s_c}  \Phi_{(\alpha, \beta), (k,\ell)} (W,U) \; \bw_{(\alpha,\beta)} \\
-\frac{\partial \mathcal R_0}{ \partial \bw_{(\alpha,\beta)}} (W ,U) = \sum_{k=1}^{K} \sum_{\ell=1}^L  \Phi_{(\alpha,\beta), (k,\ell)} (W,U) \; \bu_{k,\ell} 
  \end{gather*}
\end{lemma}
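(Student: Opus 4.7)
The plan is to carry out a direct chain-rule computation, starting from the closed-form expression $y_k = \dotprodbis{\hat U_k, W\hot(\bx)}$ that was derived in section \ref{section:notation}, and then rewriting the result in a way that makes the symmetric role of $(\alpha,\beta)$ and $(k,\ell)$ visible through the coefficients $\Phi_{(\alpha,\beta),(k,\ell)}$.

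First I would record the standard softmax identity
\[
\frac{\partial}{\partial y_k}\ell(\by,j) \;=\; q_{k}(\by) - \ones_{\{j=k\}}, \qquad q_{k}(\by):=\frac{e^{y_k}}{\sum_{k'} e^{y_{k'}}},
\]
so that, substituting $y_k = \dotprodbis{\hat U_k, W\hot(\bx)}$ and $\by = h_{W,U}(\bx)$, the quantity $q_{k,W,U}(\bx)$ used in the definition \eqref{Phi} of $\Phi$ is exactly $q_k(h_{W,U}(\bx))$. Then I would note that $\ell(h_{W,U}(\bx),j)$ depends on the weights only through the scalars $\{y_k\}_{k=1}^K$, so by the chain rule
\[
\frac{\partial}{\partial \bu_{k,\ell}}\,\ell(h_{W,U}(\bx),j) \;=\; (q_{k,W,U}(\bx) - \ones_{\{j=k\}})\,\frac{\partial y_k}{\partial \bu_{k,\ell}},\qquad \frac{\partial}{\partial \bw_{(\alpha,\beta)}}\,\ell(h_{W,U}(\bx),j) \;=\; \sum_{k=1}^K (q_{k,W,U}(\bx) - \ones_{\{j=k\}})\,\frac{\partial y_k}{\partial \bw_{(\alpha,\beta)}}.
\]

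Next I would compute the two inner derivatives. For the $\bu_{k,\ell}$ derivative, the relation $\dotprodbis{\hat U_k, W\hot(\bx)} = \sum_{\ell'=1}^L \dotprod{\bu_{k,\ell'}, \bw_{x_{\ell'}}}$ (using $x_{\ell'}$ for the $\ell'$-th word of $\bx$) gives $\partial y_k/\partial \bu_{k,\ell} = \bw_{x_\ell}$, and one then expands $\bw_{x_\ell} = \sum_{\alpha,\beta} \ones_{\{x_\ell=(\alpha,\beta)\}}\,\bw_{(\alpha,\beta)}$. For the $\bw_{(\alpha,\beta)}$ derivative, the same expression for $y_k$ shows that $\bw_{(\alpha,\beta)}$ appears exactly at the positions $\ell'$ with $x_{\ell'}=(\alpha,\beta)$, so $\partial y_k/\partial \bw_{(\alpha,\beta)} = \sum_{\ell=1}^L \ones_{\{x_\ell=(\alpha,\beta)\}}\,\bu_{k,\ell}$.

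Plugging these back, averaging $\frac{1}{K}\sum_j \sum_{\bx\in\data_j}(\cdot)\,\mathcal D_{\bz_j}(\bx)$, and interchanging the finite sums lets me collect the coefficient of each $\bw_{(\alpha,\beta)}$ (respectively $\bu_{k,\ell}$). That coefficient is precisely
\[
\frac{1}{K}\sum_{j=1}^K\sum_{\bx\in\data_j}\ones_{\{x_\ell=(\alpha,\beta)\}}\bigl(\ones_{\{j=k\}}-q_{k,W,U}(\bx)\bigr)\mathcal D_{\bz_j}(\bx)\;=\;\Phi_{(\alpha,\beta),(k,\ell)}(W,U),
\]
after flipping the overall sign to absorb the $(q - \ones_{\{j=k\}})$ into $(\ones_{\{j=k\}} - q)$. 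This yields both identities simultaneously. There is no real obstacle here: the only thing to be careful about is keeping track of signs when moving from $\partial\ell/\partial y_k = q - \ones_{\{j=k\}}$ to the stated form using $\ones_{\{j=k\}} - q$, and observing that the double-indicator $\ones_{\{x_\ell=(\alpha,\beta)\}}$ is what gives the two derivatives their common symmetric structure through the tensor $\Phi$.
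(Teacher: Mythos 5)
Your proposal is correct and is essentially the same chain-rule computation as the paper's: both differentiate the cross-entropy composed with the bilinear map $(W,U)\mapsto \dotprodbis{\hat U_k, W\hot(\bx)}$, average over the data, and read off the coefficient of $\bw_{(\alpha,\beta)}$ (resp.\ $\bu_{k,\ell}$) as $\Phi_{(\alpha,\beta),(k,\ell)}$. The only cosmetic difference is that the paper routes the calculation through an auxiliary matrix-valued function $f(V_1,\dots,V_K)$ and its matrix derivatives $\partial\mathcal R_0/\partial W$, $\partial\mathcal R_0/\partial \hat U_j$ before projecting onto the vectors $\bw_{(\alpha,\beta)}$, $\bu_{k,\ell}$, whereas you differentiate directly in those vector variables; both are valid and lead to the identical bookkeeping with the indicator $\ones_{\{x_\ell=(\alpha,\beta)\}}$.
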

\begin{proof} Given $K$ matrices  $V_1, \ldots, V_K \in \real^{n_w \times KL}$, we define 
$$
  f(V_1, \ldots, V_K) :=   \frac{1}{K}  \sum_{k=1}^{K}  \sum _{\bx \in \data_k}   \ell \Big( \dotprod{V_1 , \hot(\bx)}_F , \ldots, \dotprod{V_K , \hot(\bx)}_F  \; ;  k\Big)   \; \;   \mathcal D_{\bz_k}(\bx) 
$$
where $\ell(y_1,\ldots, y_K ; k)$ is the cross entropy loss
 \[
\ell(y_1, \ldots, y_K ;  k) = -   \log \left(  \frac{\exp\left( y_k\right)}{\sum_{k'=1}^K \exp\left( y_{k'}\right)}\right) 
\]
The partial derivative of $f$ with respect to the matrix $V_j$ can easily be found to be
\begin{align} \label{dfdV}
-\frac{\partial  f}{ \partial V_j}(V_1, \ldots, V_K)   = \frac{1}{K}  \sum_{k=1}^{K}  \sum _{\bx \in \data_k}   \left(   \ones_{\{j=k\}} -   \frac{e^{\dotprod{V_j , \hot(\bx)}_F }}{\sum_{k'=1}^K e^{\dotprod{V_{k'} , \hot(\bx)}_F }}  \right)\;\;  \hot(\bx)  \; \;   \mathcal D_{\bz_k}(\bx) 
\end{align}
We then recall from \eqref{def:net} that the $k^{th}$ entry of the vector $\by = h_{W,U}(\bx)$ is 
$$
 y_k = \dotprodbig{ \; \hat U_k \; , \;  W \, \hot(\bx)} =  \dotprodbig{ \;  W^T\hat U_k \; , \;   \hot(\bx)} 
$$
and so the unregularized risk can be expressed in term of the function $f$:
\begin{align*}
\mathcal R_0(W,U) &=  \frac{1}{K}  \sum_{k=1}^{K}  \sum _{\bx \in \data_k}   \ell \Big( \dotprod{W^T \hat U_1 , \hot(\bx)}_F , \ldots, \dotprod{W^T\hat U_K , \hot(\bx)}_F  \; ;  k\Big)   \; \;   \mathcal D_{\bz_k}(\bx) \\
&=    f(W^T \hat U_1, \ldots, W^T\hat U_K  ) 
\end{align*}
The chain rule then gives
 \begin{align}
\frac{\partial \risk_0}{\partial W} (W,U) &=  \sum_{j=1}^K \hat U_j \;  \left[ \frac{\partial f}{\partial V_j} (W^T\hat U_1 , \ldots , W^T\hat U_K ) \right]^T \label{dudu1}  \\
\frac{\partial \risk_0}{\partial \hat U_j} (W,U) &= W \;  \left[ \frac{\partial f}{\partial V_j} (W^T\hat U_1 , \ldots , W^T\hat U_K )  \right] \label{dudu2}
\end{align}
Using formula \eqref{dfdV} for $\partial f/ \partial V_j$ and the notation
$$
q_{j,W,U}(\bx) := \frac{e^{\dotprodbis{W^T\hat U_j,  \hot(\bx)}}}{\sum_{k'=1}^K e^{\dotprodbis{W^TU_{k'},  \hot(\bx)}} }
$$
we can express \eqref{dudu1} and \eqref{dudu2} as follow
  \begin{align*}
-\frac{\partial \risk_0}{\partial W} (W,U) &=  \sum_{j=1}^K  \hat U_j   \left[  \frac{1}{K}  \sum_{k=1}^{K}  \sum _{\bx \in \data_k}   \Big(   \ones_{\{j=k\}} -   q_{j,W,U}(\bx)  \Big)\;\;   \hot(\bx)  \; \;   \mathcal D_{\bz_k}(\bx)  \right]^T  \\
-\frac{\partial \risk_0}{\partial \hat U_j} (W,U) &=   W \left[ \frac{1}{K} \sum_{k=1}^{K}  \sum _{\bx \in \data_k}   \Big(   \ones_{\{j=k\}} -   q_{j,W,U}(\bx)  \Big)\; \hot(\bx)  \; \;   \mathcal D_{\bz_k}(\bx)   \right] 
\end{align*}
We now compute the partial derivative of $\mathcal R_0$ with respect to $\bu_{j,\ell}$. Let $\be_\ell \in \real^L$ be the $\ell^{th}$ basis vector. We then have 
\begin{align*}
-\frac{\partial \risk_0}{\partial \bu_{j,\ell}} (W,U) &=
-\left[\frac{\partial \risk_0}{\partial \hat U_j} (W,U) \right] \be_\ell\\
&=   \frac{1}{K}\ \sum_{k=1}^{K}  \sum _{\bx \in \data_k}   \left(   \ones_{\{j=k\}} -   q_{j,W,U}(\bx)  \right)\;\; \left( W  \hot(\bx) \; \be_\ell \right) \; \;   \mathcal D_{\bz_k}(\bx)
\end{align*}
Recall from \eqref{zebra} that $W \hot(\bx)$ is the matrix that contains the $d$-dimensional embeddings of the words that constitute the sentence $\bx \in \data$.
So   $W  \hot(\bx)\,  \be_\ell  $ is simply the embedding of the $\ell^{th}$ word of the sentence $\bx$, and  we can write it as
$$
 W  \hot(\bx) \, \be_\ell = \sum_{\alpha=1}^{n_c} \sum_{\beta=1}^{s_c} \ones_{\{ x_\ell = (\alpha,\beta)\}} \bw_{(\alpha,\beta)}
$$
We therefore have 
\begin{align*}
-\frac{\partial \risk_0}{\partial \bu_{j,\ell}} (W,U) &=  
\frac{1}{K}\ \sum_{k=1}^{K}  \sum _{\bx \in \data_k}   \left(   \ones_{\{j=k\}} -   q_{j,W,U}(\bx)  \right)\; \left( \sum_{\alpha=1}^{n_c} \sum_{\beta=1}^{s_c} \ones_{\{ x_\ell = (\alpha,\beta)\}} \bw_{(\alpha,\beta)} \right) \;   \mathcal D_{\bz_k}(\bx) \\
& =  \sum_{\alpha=1}^{n_c} \sum_{\beta = 1}^{s_c}  \left(
\frac{1}{K}\ \sum_{k=1}^{K}  \sum _{\bx \in \data_k}   \left(   \ones_{\{j=k\}} -   q_{j,W,U}(\bx)  \right)\;\ones_{\{ x_\ell = (\alpha,\beta)\}}  \mathcal D_{\bz_k}(\bx) \right)\; \bw_{(\alpha,\beta)}  \\
&= \sum_{\alpha=1}^{n_c} \sum_{\beta = 1}^{s_c}  \Phi_{(\alpha, \beta), (j,\ell)} (W,U) \; \bw_{(\alpha,\beta)} 
\end{align*}
which is the desired formula.

We now compute the gradient with respect $\bw_{(\alpha,\beta)}$. Recalling that $\hot(\alpha,\beta)$ is the one hot vector associate with word $(\alpha,\beta)$, we have 
  \begin{align*}
  -\frac{\partial \risk_0}{\partial \bw_{(\alpha,\beta)}} (W,U) & = 
-  \left[ \frac{\partial \risk_0}{\partial W} (W,U)  \right] \;\; \hot(\alpha,\beta)
\\&=   \frac{1}{K} \sum_{j=1}^K \sum_{k=1}^{K}  \sum _{\bx \in \data_k}   \left(   \ones_{\{j=k\}} -   q_{j,W,U}(\bx)  \right)\;\;  \left( \hat U_j  \;  \hot(\bx)^T     \hot(\alpha,\beta) \right)\;   \mathcal D_{\bz_k}(\bx)  
\end{align*}
Recall that the  $\ell^{th}$ column of $\hot(\bx)$ is the one-hot encoding of the $\ell^{th}$ word in the sentence $\bx$. Therefore, the $\ell^{th}$ entry of the vector $\hot(\bx)^T    \hot(\alpha,\beta) \in \real^L$ is given by the formula 
$$
 \left[ \hot(\bx)^T    \hot(\alpha,\beta)\right]_\ell  = \begin{cases}
 1 & \text{if } x_\ell = (\alpha,\beta) \\
 0& \text{otherwise}
 \end{cases}
$$
As a consequence 
$$
\hat U_j  \;  \hot(\bx)^T    \hot(\alpha,\beta)  = \sum_{\ell=1}^L  \ones_{ \{x_\ell = (\alpha,\beta)\}} \bu_{j,\ell}
$$
which leads to 
\begin{align*}
  -\frac{\partial \risk_0}{\partial \bw_{(\alpha,\beta)}} (W,U) 
  &=   \frac{1}{K} \sum_{j=1}^K \sum_{k=1}^{K}  \sum _{\bx \in \data_k}   \left(   \ones_{\{j=k\}} -   q_{j,W,U}(\bx)  \right)\;\;  \left(  \sum_{\ell=1}^L  \ones_{ \{x_\ell = (\alpha,\beta)\}} \bu_{j,\ell} \right)\;   \mathcal D_{\bz_k}(\bx) \\
  &=   \sum_{\ell=1}^L   \sum_{j=1}^K  \left(\frac{1}{K} \sum_{k=1}^{K}  \sum _{\bx \in \data_k}   \left(   \ones_{\{j=k\}} -   q_{j,W,U}(\bx)  \right)\;\;    \ones_{ \{x_\ell = (\alpha,\beta)\}}  \right)\;   \mathcal D_{\bz_k}(\bx) \;   \bu_{j,\ell} \\
  & = \sum_{\ell=1}^L \sum_{j=1}^{K}   \Phi_{(\alpha,\beta), (j,\ell)} (W,U) \; \bu_{j,\ell} 
\end{align*}
which is the desired formula.
\end{proof}

\subsection{Gradient of the risk for weights in type-III collapse configuration}
In lemma \ref{lemma:grad_simple} we computed the gradient of the risk for any possible weights $(W,U)$ and for any possible latent variables $\bz_1, \ldots, \bz_K$. In this section we will derive a formula for the gradient when the weights are in type-III collapse configuration and when the latent variables satisfy $\{\bz_1, \ldots, \bz_K\} = \mathcal Z$.
We start by recalling the definition of a type-III collapse configuration.
\begin{definition}[Type-III Collapse]  \label{def:3}  The weights $(W,U)$ of the network $h_{W,U}$ form a type-III collapse configuration if and only if
\begin{enumerate}[topsep=0pt]
\item[\rm i)] There exists positive scalars $r_{\beta} \ge 0$ so that $\bw_{(\alpha,\,\beta)} =  r_\beta  \,\cpt_\alpha$ \;  for all $(\alpha,\beta) \in \voc$.
\item[\rm ii)] There exists $c \ge 0$ so that $\bu_{k, \ell} =  c\,  \cpt_\alpha$ \; for all  $(k,\ell)$ satisfying $z_{k, \ell}=\alpha$ and all $\alpha \in \mathcal C$.
\end{enumerate}
hold for some collection $\cpt_1, \ldots, \cpt_{n_c} \in \real^d$ of equiangular vectors.
\end{definition}
We also define the constant $\gamma \in \real $ and the sigmoid $\sigma: \real \to \real$ as follow:
\begin{equation}
\gamma:= \frac{1}{n_c-1} \qquad \text{and } \qquad \sigma(x) :=  \frac{  1 }{1+ \gamma e^{ \left( 1+ \gamma \right) x  }} 
\end{equation}
The goal of this subsection is to prove the following proposition.
\begin{proposition} \label{proposition:grad} Suppose  $K=n_c^L$ and $\{\bz_1, \ldots, \bz_K\} = \mathcal Z$.
If  the  weights $(W,U)$ are in a type-III collapse configuration with constants $c, r_1, \ldots, r_{s_c} \ge 0$, then 
\begin{align*}
-\frac{\partial\, \risk_0}{ \partial \bu_{k,\ell}} (W ,U) 
& =  \frac{1}{c} \; \frac{1+ \gamma}{n_c^L}\left(  \sum_{\beta = 1}^{s_c} \mu_\beta  \; \sigma(c \, r_{\beta}) \; r_\beta \right) \;  \bu_{k,\ell} \\
-\frac{\partial \, \risk_0}{ \partial \bw_{(\alpha,\beta)}} (W,U)  
& =  c \; \frac{L (1+\gamma)}{n_c} \;  \frac{\mu_\beta \,  \sigma( c \;  r_{\beta})}{r_\beta} \;  \bw_{(\alpha,\beta)} 
\end{align*}
\end{proposition}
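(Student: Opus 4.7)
The plan is to start from Lemma \ref{lemma:grad_simple} and reduce everything to computing $\Phi_{(\alpha,\beta),(k,\ell)}(W,U)$ under the type-III ansatz. The first task is to simplify $q_{k,W,U}(\bx)$. For $\bx\in\data_j$ with words $(z_{j,m},\beta_m)$, substituting $\bu_{k,m}=c\,\cpt_{z_{k,m}}$ and $\bw_{(z_{j,m},\beta_m)}=r_{\beta_m}\cpt_{z_{j,m}}$ and using the equiangular identity $\dotprod{\cpt_{z_{k,m}},\cpt_{z_{j,m}}}=(1+\gamma)\ones_{\{z_{k,m}=z_{j,m}\}}-\gamma$ gives
\[
 \dotprodbis{\hat U_k,W\hot(\bx)} = (1+\gamma)c\sum_{m=1}^L r_{\beta_m}\ones_{\{z_{k,m}=z_{j,m}\}} \;-\; \gamma c\sum_{m=1}^L r_{\beta_m}.
\]
The second sum is independent of $k$ and cancels in the softmax. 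Under the hypothesis $\{\bz_1,\ldots,\bz_K\}=\mathcal Z$ the remaining sum in the denominator factorizes across positions $m$, and a straightforward computation yields
\[
 q_{k,W,U}(\bx)=\prod_{m=1}^{L}\bigl[(1-\sigma(cr_{\beta_m}))\,\ones_{\{z_{k,m}=z_{j,m}\}}+\gamma\sigma(cr_{\beta_m})\,\ones_{\{z_{k,m}\neq z_{j,m}\}}\bigr],
\]
where $\sigma$ is the sigmoid defined just before the proposition.

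Next I would compute $\Phi_{(\alpha,\beta),(k,\ell)}$. Fix $k$, $\ell$, and let $S:=\sum_{\beta'}\mu_{\beta'}\sigma(cr_{\beta'})$. For each $j$ with $z_{j,\ell}=\alpha$, summing $\mathcal D_{\bz_j}(\bx)\,q_{k,W,U}(\bx)$ over $\bx\in\data_j$ with $x_\ell=(\alpha,\beta)$ factorizes: the position $\ell$ contributes $\mu_\beta(1-\sigma(cr_\beta))$ or $\mu_\beta\gamma\sigma(cr_\beta)$ according to $z_{k,\ell}=\alpha$ or not, and each $m\neq\ell$ contributes $\sum_{\beta_m}\mu_{\beta_m}$ of the corresponding factor, which evaluates to $1-S$ if $z_{k,m}=z_{j,m}$ and to $\gamma S$ otherwise. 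Summing over $j$ with $z_{j,\ell}=\alpha$ then amounts to counting latent variables at Hamming distance $r$ from $\bz_k$ with the prescribed concept at position $\ell$, which under $\{\bz_1,\ldots,\bz_K\}=\mathcal Z$ is exactly the combinatorial identity in Assumption \ref{symstrong}. The binomial theorem collapses the sum because
\[
 (1-S)+(n_c-1)(\gamma S)= 1,
\]
giving the clean identities
\[
 \Phi_{(\alpha,\beta),(k,\ell)} = \begin{cases} \mu_\beta\sigma(cr_\beta)/K & \text{if } z_{k,\ell}=\alpha,\\ -\mu_\beta\gamma\sigma(cr_\beta)/K & \text{if } z_{k,\ell}\neq\alpha.\end{cases}
\]

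Finally I would plug these into the two formulas of Lemma \ref{lemma:grad_simple}. For $-\partial\mathcal R_0/\partial\bu_{k,\ell}=\sum_{\alpha,\beta}\Phi_{(\alpha,\beta),(k,\ell)}\,r_\beta\cpt_\alpha$, the sum factorizes as $\sum_\beta\mu_\beta\sigma(cr_\beta)r_\beta\cdot K^{-1}[\cpt_{z_{k,\ell}}-\gamma\sum_{\alpha\neq z_{k,\ell}}\cpt_\alpha]$, which by $\sum_\alpha\cpt_\alpha=0$ collapses to $(1+\gamma)\cpt_{z_{k,\ell}}$ times a scalar, producing the claimed multiple of $\bu_{k,\ell}=c\,\cpt_{z_{k,\ell}}$. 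For $-\partial\mathcal R_0/\partial\bw_{(\alpha,\beta)}=\sum_{k,\ell}\Phi_{(\alpha,\beta),(k,\ell)}\bu_{k,\ell}$, I use that $|\{k:z_{k,\ell}=\alpha\}|=n_c^{L-1}$ and $\sum_{k:z_{k,\ell}\neq\alpha}\cpt_{z_{k,\ell}}=-n_c^{L-1}\cpt_\alpha$ (again by $\sum_\alpha\cpt_\alpha=0$), which yields a multiple of $\cpt_\alpha=\bw_{(\alpha,\beta)}/r_\beta$ with the stated coefficient after the sum over $\ell$.

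The main obstacle is not conceptual but combinatorial: keeping track of the factorization of $q_{k,W,U}(\bx)$ over the $L$ positions, and seeing that the combinatorial weights provided by $\{\bz_1,\ldots,\bz_K\}=\mathcal Z$ conspire with the identity $(1-S)+(n_c-1)\gamma S=1$ to collapse the $(L-1)$-fold sum to $1$. Once this collapse is recognized, the rest is direct algebra using $\sum_\alpha\cpt_\alpha=0$ and $(1+\gamma)=n_c/(n_c-1)$.
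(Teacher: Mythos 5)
Your proof is correct and takes essentially the same route as the paper's: compute $q_{k,W,U}$ under the type-III ansatz via the equiangular inner products, exploit $\{\bz_1,\ldots,\bz_K\}=\mathcal Z$ so that the sum over $j$ and the word indices factorizes across positions with each per-position sum collapsing to $1$ via $(1-S)+(n_c-1)\gamma S=1$, obtain $\Phi_{(\alpha,\beta),(k,\ell)}=\frac{\mu_\beta}{n_c^L}\sigma(cr_\beta)\delta(z_{k,\ell},\alpha)$, and then plug into Lemma \ref{lemma:grad_simple} using $\sum_\alpha\cpt_\alpha=0$. The only cosmetic differences from the paper are that you write $q$ directly in the form $(1-\sigma)/\gamma\sigma$ rather than as $e^{\delta}/\psi$, and you invoke the Hamming-distance counts of Assumption \ref{symstrong} together with the binomial theorem where a direct per-position factorization suffices; neither changes the substance.
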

Importantly, note that the above proposition states that  $ {\partial\, \risk_0}/{ \partial \bu_{k,\ell}}$ and $\bu_{k,\ell} $ are aligned, and that $ {\partial\, \risk_0}/{ \partial \bw_{(\alpha,\beta)}}$ and $\bw_{(\alpha,\beta)} $ are aligned.

We start by introducing some notations which will make these gradient computations easier.  The latent variables $\bz_1, \ldots, \bz_K$ will be written as
$$
\bz_k = [\; z_{k,1} \; ,  \; z_{k,2} \; ,  \; \ldots \; ,  \;z_{k,L} \; ]  \in \mathcal Z
$$   
where  $1 \le z_{k,\ell}  \le n_c$.
We  remark that any sentence  $\bx$ generated by the latent variable $\bz_k$ must be of the form
$$
\bx = [(z_{k,1}, \beta_1) , \ldots, (z_{k,L}, \beta_L)] 
$$
for some  $(\beta_1, \ldots, \beta_L) \in [n_c]^L$, and that this sentence has a probability $\mu_{\beta_1} \mu_{\beta_2} \cdots \mu_{\beta_L}$ of being sampled. In light of this, we make the following definitions. For every    $\bbeta = (\beta_1, \ldots, \beta_L) \in [n_c^L] $ we let
\begin{align}
&\bx_{k,\bbeta} := [(z_{k,1}, \beta_1) , \ldots, (z_{k,L}, \beta_L)]  \; \in \data \\
& \mu[\bbeta] := \mu[{\beta_1}] \, \mu[{\beta_2}]\,  \cdots \, \mu[{\beta_L}]  \; \in [0,1]
\end{align}
where we have used $\mu[\beta_\ell]$ instead of $\mu_{\beta_\ell}$ in order to avoid the double subscript. 
With these definitions at hand we have that 
$$
\mathcal D_{\bz_j}(\bx_{k,\bbeta}) =  \begin{cases}  \mu[\bbeta] & \text{if } k=j \\
0 & \text{otherwise}
\end{cases}
$$
 We are now ready to prove proposition \ref{proposition:grad}. We break the computation  into four lemmas.
 The first one simply uses the notations that we just introduced in order to express $\Phi_{(\alpha,\beta) , (k,\ell)}$ in a more convenient format. 
\begin{lemma} \label{lemma:phi0}
The quantity $\Phi_{(\alpha^\star,\beta^\star) , (k,\ell)} (W,U)$ can be expressed as
\[
\Phi_{(\alpha^\star,\beta^\star) , (k,\ell)} (W,U) = \frac{1}{K}  \sum_{\bbeta \in [n_c^L]} \ones_{\{  \beta_\ell= \beta^\star \}}   \left( \ones_{\{z_{k,\ell}=\alpha^\star \} } -\sum_{j=1}^K \ones_{\{  z_{j,\ell}= \alpha^\star\}} \;  q_{k,W,U}(\bx_{j,\bbeta})  \right)  \, \mathcal \mu[\bbeta].
\]
\end{lemma}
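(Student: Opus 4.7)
The proof is essentially a bookkeeping exercise: I plan to start from the definition of $\Phi_{(\alpha^\star,\beta^\star),(k,\ell)}(W,U)$ given in \eqref{Phi} and rewrite the inner sum $\sum_{\bx \in \data_j}$ using the explicit parameterization of $\data_j$ that was introduced just before the lemma. Specifically, any $\bx$ in the support of $\mathcal D_{\bz_j}$ has the form $\bx = \bx_{j,\bbeta} = [(z_{j,1},\beta_1),\ldots,(z_{j,L},\beta_L)]$ for a unique $\bbeta = (\beta_1,\ldots,\beta_L)$, and for such $\bx$ one has $\mathcal D_{\bz_j}(\bx_{j,\bbeta}) = \mu[\bbeta]$. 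Hence summing over $\bx \in \data_j$ with weight $\mathcal D_{\bz_j}(\bx)$ amounts to summing over $\bbeta$ with weight $\mu[\bbeta]$.

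The next step is to resolve the indicator $\ones_{\{x_\ell = (\alpha^\star,\beta^\star)\}}$. Since the $\ell^{\rm th}$ word of $\bx_{j,\bbeta}$ is $(z_{j,\ell},\beta_\ell)$, this indicator factors as
\[
\ones_{\{x_\ell = (\alpha^\star,\beta^\star)\}} = \ones_{\{z_{j,\ell}=\alpha^\star\}} \, \ones_{\{\beta_\ell = \beta^\star\}}.
\]
Substituting this factorization into \eqref{Phi}, the factor $\ones_{\{\beta_\ell=\beta^\star\}}\,\mu[\bbeta]$ depends only on $\bbeta$, so I can pull it outside the $j$-sum and swap the order of summation to obtain
\[
\Phi_{(\alpha^\star,\beta^\star),(k,\ell)}(W,U) = \frac{1}{K}\sum_{\bbeta}\ones_{\{\beta_\ell=\beta^\star\}}\,\mu[\bbeta]\sum_{j=1}^K \ones_{\{z_{j,\ell}=\alpha^\star\}}\Bigl(\ones_{\{j=k\}} - q_{k,W,U}(\bx_{j,\bbeta})\Bigr).
\]

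Finally I split the inner sum in two. The term containing $\ones_{\{j=k\}}\ones_{\{z_{j,\ell}=\alpha^\star\}}$ collapses to $\ones_{\{z_{k,\ell}=\alpha^\star\}}$, which can be pulled out of the $j$-sum, while the second term retains the full sum over $j$ of $\ones_{\{z_{j,\ell}=\alpha^\star\}}\,q_{k,W,U}(\bx_{j,\bbeta})$. This yields precisely the stated identity. There is no real obstacle here — the argument is a direct substitution together with a change of summation order — but the bookkeeping needs care in separating the roles of the fixed target pair $(\alpha^\star,\beta^\star)$, the summation index $j$ over classes, and the word-sampling index $\bbeta$.
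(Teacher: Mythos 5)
Your proposal is correct and follows essentially the same route as the paper: parametrize $\data_j$ by $\bbeta$, factor the indicator $\ones_{\{x_\ell=(\alpha^\star,\beta^\star)\}} = \ones_{\{z_{j,\ell}=\alpha^\star\}}\ones_{\{\beta_\ell=\beta^\star\}}$, swap the order of the $j$- and $\bbeta$-sums, and collapse $\sum_j \ones_{\{z_{j,\ell}=\alpha^\star\}}\ones_{\{j=k\}} = \ones_{\{z_{k,\ell}=\alpha^\star\}}$.
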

\begin{proof} Using the above notations, we rewrite $\Phi_{(\alpha,\beta) , (k,\ell)} (W,U)$ as follow: 
 \begin{align*} \label{Phi}
\Phi_{(\alpha^\star,\beta^\star) , (k,\ell)} (W,U) 
& = \frac{1}{K} \sum_{j=1}^K \sum_{\bx \in \data_j} \ones_{\{ x_\ell = (\alpha^\star,\beta^\star) \}} \Big( \ones_{\{j=k\}} - q_{k,W,U}(\bx) \Big) \, \mathcal D_{\bz_j}(\bx)\\
&  = \frac{1}{K} \sum_{j=1}^K \sum_{\bbeta \in [n_c^L]} \ones_{\{  (z_{j,\ell}, \beta_\ell) = (\alpha^\star,\beta^\star) \}} \Big( \ones_{\{j = k\}} - q_{k,W,U}(\bx_{j,\bbeta}) \Big) \, \mathcal D_{\bz_j}(\bx_{j,\bbeta}) \\
&  = \frac{1}{K} \sum_{j=1}^K \sum_{\bbeta \in [n_c^L]} \ones_{\{  z_{j,\ell}= \alpha^\star\}}   \ones_{\{  \beta_\ell= \beta^\star \}}  \Big( \ones_{\{j=k\}} - q_{k,W,U}(\bx_{j,\bbeta}) \Big) \,  \mu[\bbeta] \\
&  = \frac{1}{K}  \sum_{\bbeta \in [n_c^L]} \ones_{\{  \beta_\ell= \beta^\star \}}   \left(\sum_{j=1}^K \ones_{\{  z_{j,\ell}= \alpha^\star\}}    \Big( \ones_{\{j=k\}} - q_{k,W,U}(\bx_{j,\bbeta}) \Big) \right)  \, \mu[\bbeta]  
\end{align*}
To conclude the proof we simply remark that
$
\sum_{j} \ones_{\{  z_{j,\ell}= \alpha^\star\}}    \ones_{\{j=k\}} = \ones_{\{  z_{k,\ell}= \alpha^\star \}} 
$.
\end{proof}
The following notation will be needed in our next lemma:
\begin{equation} \label{delta}
\delta(\alpha,\alpha') = \begin{cases}
1 & \text{if }\alpha = \alpha' \\
- \gamma & \text{ if } \alpha \neq \alpha'
\end{cases} \qquad \text{ for all } \alpha,\alpha' \in [n_c]
\end{equation}
where we recall that $\gamma = 1/(n_c-1)$.
We think  of  $\delta(\alpha,\alpha')$  as a
 `biased Kroecker delta' on the concepts. Importantly, note that if $\mathfrak f_1 , \ldots, \mathfrak f_{n_c}$ are equiangular, then
 $$
 \dotprod{\mathfrak f_\alpha , \mathfrak f_{\alpha'} } = \delta(\alpha, \alpha')
 $$
 which is the motivation behind this definition.  We may now state our second lemma.

\begin{lemma} Assume  $K=n_c^L$ and $\{\bz_1, \ldots, \bz_K\} = \mathcal Z$. Assume also   that  the   weights $(W,U)$ are in a type-III collapse configuration with constants $c, r_1, \ldots, r_{s_c} \ge 0$.
Then 
$$
q_{k,W,U}(\bx_{j,\bbeta}) = \frac{ \prod_{\ell = 1}^L  \exp\Big(c \; r_{\beta_\ell} \;  \delta(z_{j,\ell}, z_{k,\ell} ) \Big)}{ \prod_{\ell=1}^L\psi(c\; r_{\beta_\ell}) }  \qquad \text{where} \quad  \psi(x) = e^x +  \frac{1}{ \gamma} e^{-\gamma x}.
$$
for all $j,k \in [K]$ and all $\bbeta = (\beta_1, \ldots, \beta_L) \in [n_c]^L$.
\end{lemma}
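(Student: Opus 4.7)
The plan is a direct computation exploiting the product structure created by the assumption $\{\bz_1,\ldots,\bz_K\}=\mathcal Z$. The softmax $q_{k,W,U}(\bx_{j,\bbeta})$ has the form $e^{A_k}/\sum_{k'} e^{A_{k'}}$ where $A_{k'} = \dotprodbis{\hat U_{k'}, W\hot(\bx_{j,\bbeta})}$, so I would first compute $A_{k'}$ under the type-III collapse hypothesis and then evaluate numerator and denominator separately.

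First I would expand the inner product. By the definition of $\hat U_{k'}$ and the embedding of $\bx_{j,\bbeta}$, one has
\[
\dotprodbis{\hat U_{k'},\,W\hot(\bx_{j,\bbeta})} \;=\; \sum_{\ell=1}^{L}\dotprod{\bu_{k',\ell},\,\bw_{(z_{j,\ell},\beta_\ell)}}.
\]
The type-III collapse assumption gives $\bu_{k',\ell} = c\,\cpt_{z_{k',\ell}}$ and $\bw_{(z_{j,\ell},\beta_\ell)} = r_{\beta_\ell}\,\cpt_{z_{j,\ell}}$, and since $\cpt_1,\ldots,\cpt_{n_c}$ are equiangular we have $\dotprod{\cpt_\alpha,\cpt_{\alpha'}} = \delta(\alpha,\alpha')$ with the biased Kronecker delta from \eqref{delta}. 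Hence
\[
\dotprodbis{\hat U_{k'},\,W\hot(\bx_{j,\bbeta})} \;=\; \sum_{\ell=1}^{L} c\,r_{\beta_\ell}\,\delta(z_{k',\ell},z_{j,\ell}),
\]
so the numerator of $q_{k,W,U}(\bx_{j,\bbeta})$ factors across positions exactly as the stated formula (using symmetry of $\delta$).

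Second, for the denominator I would use the assumption $K=n_c^L$ and $\{\bz_1,\ldots,\bz_K\}=\mathcal Z=\mathcal C^L$ to rewrite the sum over $k'\in[K]$ as a sum over all tuples $(\alpha_1,\ldots,\alpha_L)\in[n_c]^L$. This turns the denominator into a product of $L$ independent one-position sums:
\[
\sum_{k'=1}^{K} \exp\Bigl(\sum_{\ell=1}^{L} c\,r_{\beta_\ell}\,\delta(\alpha_\ell,z_{j,\ell})\Bigr)
\;=\; \prod_{\ell=1}^{L}\;\sum_{\alpha\in[n_c]}\exp\bigl(c\,r_{\beta_\ell}\,\delta(\alpha,z_{j,\ell})\bigr).
\]
For each $\ell$, exactly one value of $\alpha$ (namely $\alpha=z_{j,\ell}$) contributes $e^{c\,r_{\beta_\ell}}$ and the other $n_c-1 = 1/\gamma$ values contribute $e^{-\gamma\,c\,r_{\beta_\ell}}$, so the inner sum equals $e^{c\,r_{\beta_\ell}} + \tfrac{1}{\gamma}e^{-\gamma\,c\,r_{\beta_\ell}} = \psi(c\,r_{\beta_\ell})$. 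Assembling the ratio of numerator and denominator gives the claim.

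There is no serious obstacle: the proof is essentially a bookkeeping exercise. The only two facts being used are (i) the equiangular identity $\dotprod{\cpt_\alpha,\cpt_{\alpha'}} = \delta(\alpha,\alpha')$, which decouples the inner product across positions $\ell$, and (ii) the fully-sampled hypothesis $\{\bz_1,\ldots,\bz_K\}=\mathcal Z$, which lets the sum over latent variables factor as a product over positions. Together these convert a seemingly large softmax into a product of $L$ independent one-position softmaxes, each with partition function $\psi(c\,r_{\beta_\ell})$.
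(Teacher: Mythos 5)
Your proposal is correct and follows essentially the same route as the paper: compute the inner products via the equiangular identity $\dotprod{\cpt_\alpha,\cpt_{\alpha'}}=\delta(\alpha,\alpha')$, factor the numerator across positions, and use $\{\bz_1,\ldots,\bz_K\}=\mathcal Z$ to turn the denominator into a product of $L$ one-position partition functions, each evaluating to $\psi(c\,r_{\beta_\ell})$. No gaps; the observation that $\delta$ is symmetric, which you flag, is exactly what reconciles the order of arguments in the statement with the natural computation.
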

\begin{proof} Recalling that $\bx_{j,\bbeta} := [(z_{j,1}, \beta_1) , \ldots, (z_{j,L}, \beta_L)] $, we obtain
 \begin{align*}
 \dotprodbis{\hat U_k, W \hot(\bx_{j,\bbeta})}
 = \sum_{\ell = 1}^L \dotprod{\bu_{k,\ell} , \bw_{(z_{j,\ell} , \beta_\ell) }  }
 =  \sum_{\ell = 1}^L  \dotprod{ \;c\;\cpt_{ z_{k,\ell}} \; , \;   r_{\beta_\ell} \;\cpt_{z_{j,\ell} } \;   } 
 =  c \sum_{\ell = 1}^L  r_{\beta_\ell}  \, \delta(  z_{k,\ell}\; , z_{j,\ell}   )   
\end{align*}
We then have 
\begin{multline*}
q_{k,W,U}(\bx_{j,\bbeta}) = \frac{e^{ \dotprodbis{\hat U_k, W \hot(\bx_{j,\bbeta})}}}{\sum_{k'=1}^K  e^{\dotprodbis{\hat U_{k'}, W \hot(\bx_{j,\bbeta})}}} 
= \frac{ \exp \left(   c \sum_{\ell = 1}^L  r_{\beta_\ell}  \, \delta(  z_{k,\ell}\; , z_{j,\ell}   )    \right)   }{\sum_{k'=1}^K   \exp \left(  c \sum_{\ell = 1}^L  r_{\beta_\ell}  \, \delta(  z_{k',\ell}\; , z_{j,\ell}   )  \right)   } \\
= \frac{  \prod_{\ell = 1}^L \exp \Big(   c \;   r_{\beta_\ell}   \; \delta(  z_{k,\ell}\; , z_{j,\ell}   )    \Big)   }{\sum_{k'=1}^K   \prod_{\ell = 1}^L \exp \Big(  c \; r_{\beta_\ell}  \; \delta(  z_{k',\ell}\; , z_{j,\ell}   )  \Big)   }
\end{multline*}
Since $\{\bz_1, \ldots, \bz_K\} = \mathcal Z$, the latent variables
$
\bz_{k'} = [ z_{k',1},   \ldots  ,  z_{k',L}  ]  
$
achieve all possible tuples $[\alpha_1', \cdots, \alpha'_L] \in [n_c]^L$. The bottom term can therefore be expressed as
\begin{align*}
\sum_{k'=1}^K  & \prod_{\ell = 1}^L \exp \Big(  c \; r_{\beta_\ell}  \; \delta(  z_{k',\ell}\; , z_{j,\ell}   )  \Big) \\
& = \sum_{\alpha'_1 =1}^{n_c} \sum_{\alpha'_2 =1}^{n_c} \cdots  \sum_{\alpha'_L =1}^{n_c}  \exp\Big( { c \,r_{\beta_1}   \delta(  \alpha'_{1} , z_{j,1}   )  }  \Big) \; \exp\Big({ c\,  r_{\beta_2}  \delta(  \alpha'_2 , z_{j,2}   )  } \Big)
\cdots  \; \exp\Big( { c\, r_{\beta_L}  \delta(  \alpha'_L , z_{j,L}   )  }  \Big) \\
& = \prod_{\ell=1}^L  \left(   \sum_{\alpha'_\ell =1}^{n_c}  \exp\Big({c \, r_{\beta_\ell}  \delta(\alpha'_\ell, z_{k,\ell} ) } \Big) \right) 
\end{align*}
Recalling the definition of $\delta(\alpha,\alpha')$, we find  that
\begin{multline} 
 \sum_{\alpha'_\ell =1}^{n_c}  \exp\Big({ c \, r_{\beta_\ell} \delta(\alpha'_\ell, z_{k,\ell}) }\Big)  = \exp(c \,r_{\beta_\ell}) + \sum_{\alpha'_\ell \neq  z_{k,\ell}} \exp\left(- \frac{c\, r_{\beta_\ell}}{n_c-1}\right) \\
 = \exp(c \,r_{\beta_\ell}) +(n_c-1)\, \exp\left(- \frac{c\, r_{\beta_\ell}}{n_c-1}\right)  
 = \psi(c \,r_{\beta_\ell}) \label{repeat66}
\end{multline}
\end{proof}

We now find a convenient expression for the term appearing between parenthesis in the statement of   lemma \ref{lemma:phi0}.
\begin{lemma} \label{lemma:phi1} Assume  $K=n_c^L$ and $\{\bz_1, \ldots, \bz_K\} = \mathcal Z$. Assume also   that  the   weights $(W,U)$ are in a type-III collapse configuration with constants $c, r_1, \ldots, r_{s_c} \ge 0$.
Then 
\begin{align} \label{term_par}
\ones_{\{z_{k,\ell}=\alpha^\star \} } - \sum_{ j=1 }^K   \ones_{\{ z_{j,\ell} = \alpha^\star\}} \;  q_{k,W,U}(\bx_{j,\bbeta}) 
&  =    \delta( z_{k,\ell}, \alpha^\star) \;  \sigma(c \, r_{\beta_\ell}) 
\end{align}
for all $k \in [K], \ell \in[L]$, $\alpha^\star \in [n_c]$ and all $\bbeta = (\beta_1, \ldots, \beta_L) \in [n_c]^L$.
\end{lemma}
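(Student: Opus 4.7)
The plan is to plug the explicit product formula for $q_{k,W,U}(\bx_{j,\bbeta})$ from the preceding lemma into the left-hand side of \eqref{term_par}, exploit the assumption $\{\bz_1, \ldots, \bz_K\} = \mathcal Z$ to turn the sum over $j$ into a sum over tuples $(\alpha_1,\ldots,\alpha_L) \in [n_c]^L$ with the $\ell$-th coordinate fixed to $\alpha^\star$, and then collapse the product across positions $m \neq \ell$ using the computation already done in equation \eqref{repeat66}.

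Concretely, writing $\alpha_m = z_{j,m}$ and using the product structure
\[
q_{k,W,U}(\bx_{j,\bbeta}) = \prod_{m=1}^L \frac{\exp\!\big(c\, r_{\beta_m}\, \delta(\alpha_m, z_{k,m})\big)}{\psi(c\, r_{\beta_m})},
\]
the sum $\sum_{j=1}^K \ones_{\{z_{j,\ell}=\alpha^\star\}}\, q_{k,W,U}(\bx_{j,\bbeta})$ factors as
\[
\frac{\exp\!\big(c\, r_{\beta_\ell}\, \delta(\alpha^\star, z_{k,\ell})\big)}{\psi(c\, r_{\beta_\ell})}\;\prod_{m \neq \ell}\; \frac{1}{\psi(c\, r_{\beta_m})}\sum_{\alpha_m=1}^{n_c}\exp\!\big(c\, r_{\beta_m}\, \delta(\alpha_m, z_{k,m})\big).
\]
By \eqref{repeat66} each inner sum equals $\psi(c\, r_{\beta_m})$, so all the factors for $m \neq \ell$ are $1$ and the whole expression reduces to the single factor at position $\ell$.

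It then remains to verify the pointwise identity
\[
\ones_{\{z_{k,\ell}=\alpha^\star\}} - \frac{\exp\!\big(c\, r_{\beta_\ell}\, \delta(\alpha^\star, z_{k,\ell})\big)}{\psi(c\, r_{\beta_\ell})} \;=\; \delta(z_{k,\ell},\alpha^\star)\, \sigma(c\, r_{\beta_\ell}),
\]
which I would check by case analysis on whether $z_{k,\ell}=\alpha^\star$ or not. In the equal case, substituting $\delta = 1$ gives $1 - e^{x}/\psi(x) = (\gamma^{-1} e^{-\gamma x})/\psi(x) = 1/(1+\gamma e^{(1+\gamma)x}) = \sigma(x)$ with $x = c\, r_{\beta_\ell}$; in the unequal case, $\delta = -\gamma$ yields $-e^{-\gamma x}/\psi(x) = -\gamma/(1+\gamma e^{(1+\gamma)x}) = -\gamma\,\sigma(x)$. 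Both sides match the prescribed value of $\delta(z_{k,\ell},\alpha^\star)\,\sigma(c\, r_{\beta_\ell})$.

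There is no real obstacle here: the entire argument is a direct computation, and the only place where the hypothesis $\{\bz_1,\ldots,\bz_K\}=\mathcal Z$ is used is to justify converting the sum over latent variables $j$ with $z_{j,\ell}=\alpha^\star$ into an unrestricted product-sum over all tuples with one coordinate frozen. The mild care needed is just in matching the algebraic identities $1 - e^{x}/\psi(x) = \sigma(x)$ and $-e^{-\gamma x}/\psi(x) = -\gamma\,\sigma(x)$, which follow immediately from the definitions of $\psi$ and $\sigma$.
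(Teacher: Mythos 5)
Your proof is correct and follows the same route as the paper: expand $q_{k,W,U}(\bx_{j,\bbeta})$ using the product formula, use $\{\bz_1,\ldots,\bz_K\}=\mathcal Z$ to convert the $j$-sum into a tuple-sum with $\alpha_\ell$ frozen, cancel all factors $m\neq\ell$ via the $\psi$-identity from \eqref{repeat66}, and finish with the two-case algebra reducing to $\sigma$. Nothing is missing; the minor intermediate rewriting of $1-e^x/\psi(x)$ you insert is harmless and matches the paper's final computation.
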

\begin{proof}
For simplicity we are going to prove equation \eqref{term_par} in the case  $\ell=1$. Using the previous lemma we obtain
\begin{align*}
\sum_{ j=1 }^K   \ones_{\{ z_{j,1} = \alpha^\star \}}  q_{k,W,U}(\bx_{j,\bbeta}) 
 &= \sum_{ j=1 }^K   \ones_{\{ z_{j,1} = \alpha^\star\}}  \frac{ \prod_{\ell = 1}^L  \exp\Big(c \, r_{\beta_\ell}  \delta(z_{j,\ell}, z_{k,\ell} ) \Big)}{ \prod_{\ell=1}^L\psi(c\, r_{\beta_\ell}) } 
\end{align*}
Since 
the latent variables
$
\bz_{j} = [ z_{j,1},   \ldots  ,  z_{j,L}  ]  
$
achieve all possible tuples $[\alpha_1, \cdots, \alpha_L] \in [n_c]^L$, we can rewrite the above as
\begin{align}
   &\sum_{\alpha_1 =1}^{n_c} \sum_{\alpha_2 =1}^{n_c} \cdots  \sum_{\alpha_L =1}^{n_c}    \ones_{\{ \alpha_1 = \alpha^\star\}}  \frac{ \prod_{\ell = 1}^L  \exp\Big(c \, r_{\beta_\ell}  \delta(\alpha_\ell, z_{k,\ell} ) \Big)}{ \prod_{\ell=1}^L\psi(c\, r_{\beta_\ell}) } \nonumber \\
  & \hspace{1cm}=   \sum_{\alpha_2 =1}^{n_c} \cdots  \sum_{\alpha_L =1}^{n_c}    \frac{   \exp\Big(c \, r_{\beta_1}  \delta(\alpha^\star, z_{k,1} ) \Big) \;\;  \prod_{\ell = 2}^L  \exp\Big(c \, r_{\beta_\ell}  \delta(\alpha_\ell, z_{k,\ell} ) \Big)}{ \prod_{\ell=1}^L\psi(c\, r_{\beta_\ell}) } \nonumber  \\
  &  \hspace{1cm}=     \frac{   \exp\Big(c \, r_{\beta_1}  \delta(\alpha^\star, z_{k,1} ) \Big)}{ \prod_{\ell=1}^L\psi(c\, r_{\beta_\ell}) } 
    \sum_{\alpha_2 =1}^{n_c} \cdots  \sum_{\alpha_L =1}^{n_c}   \; \prod_{\ell = 2}^L  \exp\Big(c \, r_{\beta_\ell}  \delta(\alpha_\ell, z_{k,\ell} ) \Big)  \label{kakako}
\end{align}
We then note that 
$$
 \sum_{\alpha_2 =1}^{n_c} \cdots  \sum_{\alpha_L =1}^{n_c}   \; \prod_{\ell = 2}^L  \exp\Big(c \, r_{\beta_\ell}  \delta(\alpha_\ell, z_{k,\ell} ) \Big)  =
  \prod_{\ell=2}^L  \left(   \sum_{\alpha'_\ell =1}^{n_c}  \exp\Big({c \, r_{\beta_\ell}  \delta( \alpha_\ell, z_{k,\ell}} ) \Big) \right) 
$$
and, repeating computation \eqref{repeat66}, we find that 
$$
\sum_{\alpha_\ell =1}^{n_c}  \exp\left({c \, r_{\beta_\ell}  \delta( \alpha_\ell, z_{k,\ell}} ) \right)  = \psi(c \,r_{\beta_\ell})
$$
Going back to \eqref{kakako} we therefore have 
\begin{align*}
\sum_{ j=1 }^K   \ones_{\{ z_{j,1} = \alpha^\star \}}  q_{k,W,U}(\bx_{j,\bbeta}) 
& =      \frac{   \exp\Big(c \, r_{\beta_1}  \delta(\alpha^\star, z_{k,1} ) \Big)}{ \prod_{\ell=1}^L\psi(c\, r_{\beta_\ell}) }   { \prod_{\ell = 2}^L  \psi(c\,  r_{\beta_\ell}) }  
  =   \frac{   \exp\Big(c \, r_{\beta_1}  \delta(\alpha^\star , z_{k,1} ) \Big)}{\psi(c\, r_{\beta_1}) }  
\end{align*}
and so 
\begin{align}
\ones_{\{z_{k,1}=\alpha^\star \} } - \sum_{ j=1 }^K   \ones_{\{ z_{j,1} = \alpha^\star\}} \;  q_{k,W,U}(\bx_{j,\bbeta}) 
&  =  \begin{cases} 
    1 -  \frac{\exp\left(c \, r_{\beta_1}   \right) }{\psi( c \, r_{\beta_1})}   & \text{if }  z_{k,1} = \alpha^\star \\
     -  \frac{\exp\left(- \gamma \,  c\,  r_{\beta_1}  \right) }{\psi( c \, r_{\beta_1})} &  \text{if }  z_{k,1} \neq  \alpha^\star 
    \end{cases}
\end{align}
We  now manipulate the above formula. Recalling that $\gamma = 1 / (n_c-1)$, and recalling the definition of $\psi(x)$, we get
\begin{align}
1 -  \frac{e^x }{\psi(x)} =  1 - \frac{e^x}{e^x + \frac{1}{\gamma} e^{-\gamma x} } = \frac{1}{1 + \gamma e^{(1+\gamma)x}} = \sigma(x)
\end{align}
and 
\begin{align*}
-  \frac{e^{-\gamma x} }{\psi(x)} = - \frac{e^{-\gamma x}}{e^x + \frac{1}{\gamma} e^{-\gamma x} } = \; -  \gamma \left( \frac{1}{1 + \gamma e^{(1+\gamma)x}}\right)
= - \gamma \sigma(x)
\end{align*}
which concludes the proof.
\end{proof}

Our last lemma provides a formula for the quantity $\Phi_{(\alpha^\star,\beta^\star) , (k,\ell)} (W,U) $ when the weights are in a type-III collapse configuration.
\begin{lemma} \label{lemma:phi77}Assume  $K=n_c^L$ and $\{\bz_1, \ldots, \bz_K\} = \mathcal Z$. Assume also   that  the   weights $(W,U)$ are in a type-III collapse configuration with constants $c, r_1, \ldots, r_{s_c} \ge 0$.
Then 
\begin{align} \label{phi77}
\Phi_{(\alpha,\beta) , (k,\ell)} (W,U) =  \frac{\mu_\beta}{n^L_c}  \;   \sigma(c \, r_{\beta})  \;  \delta( z_{k,\ell}, \alpha) 
\end{align}
for all $k \in [K], \ell \in[L]$, $\alpha \in [n_c]$ and  $\beta \in [s_c]$.
\end{lemma}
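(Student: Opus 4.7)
The strategy is direct composition: Lemma (lemma:phi0) gives a sum representation of $\Phi_{(\alpha,\beta),(k,\ell)}(W,U)$ in which the term inside the parentheses matches exactly the left-hand side of Lemma (lemma:phi1). So the plan is simply to substitute and then collapse the remaining sum over $\bbeta$.

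First, I would apply Lemma (lemma:phi0) to write
\[
\Phi_{(\alpha,\beta),(k,\ell)}(W,U) = \frac{1}{K} \sum_{\bbeta} \ones_{\{\beta_\ell = \beta\}}\,\Big(\ones_{\{z_{k,\ell}=\alpha\}} - \sum_{j=1}^K \ones_{\{z_{j,\ell}=\alpha\}}\, q_{k,W,U}(\bx_{j,\bbeta})\Big)\,\mu[\bbeta].
\]
Next I would invoke Lemma (lemma:phi1) to replace the parenthesized quantity by $\delta(z_{k,\ell},\alpha)\,\sigma(c\,r_{\beta_\ell})$, producing
\[
\Phi_{(\alpha,\beta),(k,\ell)}(W,U) = \frac{\delta(z_{k,\ell},\alpha)}{K}\sum_{\bbeta} \ones_{\{\beta_\ell = \beta\}}\,\sigma(c\,r_{\beta_\ell})\,\mu[\bbeta].
\]

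The second step is cosmetic: on the support of $\ones_{\{\beta_\ell = \beta\}}$ we may replace $r_{\beta_\ell}$ by $r_\beta$ and pull $\sigma(c\,r_\beta)$ outside the sum. What remains is the factorized sum
\[
\sum_{\bbeta} \ones_{\{\beta_\ell=\beta\}}\,\mu[\bbeta] = \mu_\beta \prod_{\ell'\neq \ell}\Big(\sum_{\beta_{\ell'}=1}^{s_c}\mu_{\beta_{\ell'}}\Big) = \mu_\beta,
\]
since $\sum_{\beta'} \mu_{\beta'} = 1$ by assumption. Plugging $K = n_c^L$ into the prefactor yields the claimed identity. There is no real obstacle here; the only subtlety is ensuring that the expression $r_{\beta_\ell}$ becomes the constant $r_\beta$ inside the indicator so that $\sigma(c\,r_\beta)$ factors out cleanly, and verifying that the probabilities marginalize to $1$ independently at each position.
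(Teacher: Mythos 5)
Your proof is correct and follows the same route as the paper: substitute the expression from Lemma~\ref{lemma:phi0}, replace the parenthesized term by $\delta(z_{k,\ell},\alpha)\,\sigma(c\,r_{\beta_\ell})$ via Lemma~\ref{lemma:phi1}, use the indicator to promote $r_{\beta_\ell}$ to the constant $r_\beta$, and marginalize the product measure over the remaining coordinates to obtain $\mu_\beta$. The paper carries out the marginalization explicitly by fixing $\ell=1$, but that is the same computation you describe.
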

\begin{proof}
Combining lemmas \ref{lemma:phi0} and \ref{lemma:phi1}, and recalling that $K = n_c^L$, we obtain 
\begin{align*}
\Phi_{(\alpha^\star,\beta^\star) , (k,\ell)} (W,U)  & = \frac{1}{n_c^L}  \sum_{\bbeta \in [n_c^L]} \ones_{\{  \beta_\ell= \beta^\star \}}   \left( \ones_{\{z_{k,\ell}=\alpha^\star \} } -\sum_{j=1}^K \ones_{\{  z_{j,\ell}= \alpha^\star\}} \;  q_{k,W,U}(\bx_{j,\bbeta})  \right)  \, \mathcal \mu[\bbeta]\\
& =  \frac{1}{n_c^L}  \sum_{\bbeta \in [n_c^L]} \ones_{\{  \beta_\ell= \beta^\star \}}   \Big( 
 \delta( z_{k,\ell}, \alpha^\star) \;  \sigma(c \, r_{\beta_\ell})  \Big)  \, \mathcal \mu[\bbeta] \\
 & =  \frac{ \delta( z_{k,\ell}, \alpha^\star)}{n_c^L}  \sum_{\bbeta \in [n_c^L]} \ones_{\{  \beta_\ell= \beta^\star \}}    
 \;  \sigma(c \, r_{\beta_\ell}) 
   \, \mathcal \mu[\bbeta]
\end{align*}
    Choosing $\ell=1$ for simplicity we get
    \begin{align*}
    \sum_{\bbeta \in [n_c^L]} \ones_{\{  \beta_1= \beta^\star \}}   \;  \sigma(c \, r_{\beta_1})  \, \mathcal \mu[\bbeta]  
    &= \sum_{\beta_1 = 1}^{s_c}   \sum_{\beta_2 = 1}^{s_c} \cdots \sum_{\beta_L=1}^{s_c} \;  \ones_{\{\beta_1= \beta^\star \}}  \;    \;   \sigma(c \,r_{\beta_1})  \; \mu[{\beta_1}] \mu[{\beta_2}] \cdots \mu[{\beta_L}] \\
    &=  \sum_{\beta_2 = 1}^{s_c} \cdots \sum_{\beta_L=1}^{s_c}    \;   \sigma(c \,r_{\beta^\star})  \; \mu[{\beta^\star}] \mu[{\beta_2}] \cdots \mu[{\beta_L}] \\
    & =  \mu[\beta^\star] \;  \sigma(c\,  r_{\beta^\star}) 
    \end{align*}
    which concludes the proof.
    \end{proof}
    
    We now prove the proposition.
    \begin{proof}[Proof of proposition \ref{proposition:grad}]
     Combining lemmas \ref{lemma:grad_simple} and \ref{lemma:phi77}, and using the fact that $\bw_{(\alpha,\beta)}  = r_\beta \mathfrak f_\alpha$, we obtain
\begin{align*}
 -\frac{\partial \mathcal R_0}{ \partial \bu_{k,\ell}} (W ,U)  &=  \sum_{\alpha=1}^{n_c} \sum_{\beta = 1}^{s_c}  \Phi_{(\alpha, \beta), (k,\ell)} (W,U) \; \bw_{(\alpha,\beta)} \\ 
 & =     \;    \sum_{\alpha=1}^{n_c} \sum_{\beta = 1}^{s_c} \left( \frac{\mu_\beta}{n^L_c} \;   \sigma(c \, r_{\beta})  \;  \delta( z_{k,\ell}, \alpha)  \right) \; r_\beta \mathfrak f_\alpha\\
 & = \frac{1}{n_c^L}\left(  \sum_{\beta = 1}^{s_c} \mu_\beta \; \sigma(c \,r_{\beta}) \; r_\beta \right) \left(  \sum_{\alpha=1}^{n_c}  \delta(  z_{k,\ell}, \alpha)  \;  \cpt_\alpha  \right)
  \end{align*}
  Using the fact that $\sum_{\alpha=1}^{n_c} \cpt_\alpha = 0$ we get 
\begin{equation} \label{repeat99}
 \sum_{\alpha=1}^{n_c}  \delta(  z_{k,\ell}, \alpha)  \;  \cpt_\alpha  =  \cpt_{z_{k,\ell}} -  \gamma  \sum_{\alpha \neq z_{k,\ell}} \cpt_\alpha =  \cpt_{z_{k,\ell}}  + \gamma \,  \cpt_{z_{k,\ell}}  -  \gamma  \sum_{\alpha=1}^{n_c} \cpt_\alpha = 
 (1+ \gamma) \;  \cpt_{z_{k,\ell}} 
\end{equation}
Using the fact that $\bu_{k,\ell} = c \, \cpt_{z_{k,\ell}}$ we then get
\begin{align*}
 -\frac{\partial \mathcal R_0}{ \partial \bu_{k,\ell}} (W ,U)  &= \frac{1}{n_c^L}\left(  \sum_{\beta = 1}^{s_c} \mu_\beta \; \sigma(c \,r_{\beta}) \; r_\beta \right) (1+ \gamma) \;  \cpt_{z_{k,\ell}} \\
& =  \frac{1+ \gamma}{n_c^L}\left(  \sum_{\beta = 1}^{s_c} \mu_\beta \; \sigma(c \; r_{\beta}) \; r_\beta \right) \;  \frac{\bu_{k,\ell}}{c}
\end{align*}
which is the desired formula.

Moving to the other gradient we get
\begin{align*}
-\frac{\partial \mathcal R_0}{ \partial \bw_{(\alpha,\beta)}} (W ,U) &= \sum_{k=1}^{K} \sum_{\ell=1}^L  \Phi_{(\alpha,\beta), (k,\ell)} (W,U) \; \bu_{k,\ell} \\
& = \sum_{k=1}^K \sum_{\ell=1}^L  \left( \frac{\mu_\beta}{n^L_c}  \;   \sigma(c \; r_{\beta})  \;  \delta(  z_{k,\ell}, \alpha )  \right)\;  c  \; \cpt_{z_{k,\ell} } \\
&  =   \frac{\mu_\beta}{n^L_c}   \sigma(c \;r_{\beta})  \; c  \sum_{\ell=1}^L   \left(  \sum_{k=1}^K  \delta( z_{k,\ell}, \alpha)    \; \cpt_{z_{k,\ell}} \right)
\end{align*}
Since 
the latent variables
$
\bz_{k} = [ z_{k,1},   \ldots  ,  z_{k,L}  ]  
$
achieve all possible tuples $[\alpha'_1, \cdots, \alpha'_L] \in [n_c]^L$, we have, fixing $\ell=1$ for simplicity, 
\begin{align}
 \sum_{k=1}^K  \delta( z_{k,1}, \alpha)    \; \cpt_{z_{k,1}}  =
\sum_{\alpha'_1 =1 }^{n_c} \sum_{\alpha'_2=1}^{n_c} \cdots \sum_{\alpha'_L=1}^L  \delta(  \alpha'_1, \alpha)    \; \cpt_{\alpha'_1}   =  n_c^{L-1}  \sum_{\alpha'_1=1}^L  \delta(\alpha_1', \alpha)    \; \cpt_{\alpha'_1}\end{align}
Repeating computation \eqref{repeat99} shows that the above is equal to $n_c^{L-1} (1+\gamma)  \; \cpt_{\alpha} $. We then use the fact that $\bw_{(\alpha,\beta)} = r_\beta \cpt_\alpha$ to obtain
\begin{align*}
-\frac{\partial \mathcal R_0}{ \partial \bw_{(\alpha,\beta)}} (W ,U) 
&  =   \frac{\mu_\beta}{n^L_c}   \sigma(c \;r_{\beta})  \; c   L\Big( n_c^{L-1} (1+\gamma)  \; \cpt_{\alpha} \Big)\\
&  =   \frac{\mu_\beta}{n^L_c}   \sigma(c \;r_{\beta})  \; c   L\Big( n_c^{L-1} (1+\gamma)  \; \frac{\bw_{(\alpha,\beta)} }{r_\beta}\Big)\\
&  =   \frac{\mu_\beta}{n_c}   \sigma(c \;r_{\beta})  \; c   L\Big(  (1+\gamma)  \; \frac{\bw_{(\alpha,\beta)} }{r_\beta}\Big)\\
\end{align*}
which is the desired formula.
\end{proof}

\subsection{Proof of the theorem and study of the non-linear system} \label{section:F2}
In this subsection we start by  proving theorem \ref{theorem:2}, and  then we show that the  system \eqref{sys1} -- \eqref{sys2} has a unique solution if the regularization parameter $\lambda$ is small enough.

\begin{proof}[Proof of theorem \ref{theorem:2}]
Recall that the regularized risk associated with the network $h_{W,U}$ is defined by
\begin{align} \label{flute33}
\mathcal R(W,U) &=  \mathcal R_0(W,U) +  \frac{\lambda}{2} \left( \|W\|_F^2 +  \|U\|_F^2  \right) \\
&=  \mathcal R_0(W,U) +  \frac{\lambda}{2} \left(  \sum_{\alpha=1}^{n_c} \sum_{\beta=1}^{s_c}\|\bw_{(\alpha,\beta)}\|^2 +  \sum_{k=1}^{K} \sum_{\ell=1}^{L}\|\bu_{k,\ell}\|^2   \right) \
\end{align}
and therefore $(W,U)$ is a critical points if and only if
 $$
 -\frac{\partial \risk_0}{ \partial \bu_{k,\ell}} (W ,U) =  \lambda \; \bu_{k,\ell} \qquad \text{and} \qquad  -\frac{\partial \risk_0}{ \partial \bw_{(\alpha,\beta)}} (W ,U) =  \lambda \; \bw_{(\alpha,\beta)} 
 $$
 According to proposition \ref{proposition:grad}, 
if (W,U) is in a type-III collapse configuration, then the above equations becomes
$$
\frac{1}{c} \; \frac{1+ \gamma}{n_c^L}\left(  \sum_{\beta = 1}^{s_c} \mu_\beta  \; \sigma(c \, r_{\beta}) \; r_\beta \right) \;  \bu_{k,\ell}  =  \lambda \; \bu_{k,\ell} \quad \text{and} \quad  
 c \; \frac{L (1+\gamma)}{n_c} \;  \frac{\mu_\beta \,  \sigma( c \;  r_{\beta})}{r_\beta} \bw_{(\alpha,\beta)}  =  \lambda \; \bw_{(\alpha,\beta)} 
 $$
So $(W,U)$ is critical if and only if the constants
$r_1, \ldots, r_{s_c}$ and $c$  satisfy the $s_c + 1$ equations
\begin{align}
& \frac{1}{c} \; \frac{1+ \gamma}{n_c^L} \sum_{\beta = 1}^{s_c} \mu_\beta \;  \sigma(c \, r_{\beta})  r_\beta  = \lambda \\
 &c \;  \frac{L (1+\gamma)}{n_c}  \frac{ \mu_\beta  \; \sigma(c \, r_{\beta})}{r_\beta} = \lambda \qquad \text{ for all } \beta \in [s_c] 
\end{align}
From the second equation we have that
$$
  (1+\gamma) \; \mu_\beta  \; \sigma(c \, r_{\beta}) \;  r_\beta = \;   \frac{n_c \; \lambda\; r^2_\beta }{L\, c}
$$
Using this we can rewrite the first equation as
$$
\frac{1}{c} \; \frac{1}{n_c^L} \sum_{\beta = 1}^{s_c}  \frac{n_c \; \lambda\; r^2_\beta }{L\, c}   = \lambda \qquad 
\text{which simplifies to } \qquad  \sum_{\beta = 1}^{s_c}  \left( \frac{ r_\beta }{c} \right)^2   = L n_c^{L-1}.
$$
which is the desired equation (see \eqref{sys2}).

We now rewrite the second equation as 
$$
\frac{\lambda}{L} \frac{r_\beta}{c}   \frac{ n_c  }{ (1+\gamma) \, \sigma(c \, r_{\beta}) } =   \mu_\beta
$$
We then recall that 
$ \sigma(x) :=  \frac{  1 }{1+ \gamma e^{ \left( 1+ \gamma \right) x  }} $ and therefore 
$$
\frac{n_c}{(1+ \gamma)  \; \sigma( cr_\beta)}=   \frac{n_c}{1+\gamma} (  1+ \gamma e^{ \left( 1+ \gamma \right) c r_\beta  } ) = 
 n_c-1 + \exp\left(  \frac{n_c}{n_c-1} c r_\beta  \right)
$$
and therefore the second  equation can be written as
$$
\frac{\lambda}{L} \; \frac{r_\beta}{c}  \left(  n_c-1 +  \exp\left(  \frac{n_c}{n_c-1} c\, r_\beta   \right)\right)   =  {\mu_\beta}.  
$$
\end{proof}

We now prove that if the regularization parameter $\lambda$ is small enough then the system has a unique solution.
\begin{proposition}\label{proposition:unique}
Assume  $\mu_1 \ge \mu_2 \ge \ldots \ge \mu_{s_c} > 0$ and 
\begin{equation}
 \lambda^2<  \frac{L}{n_c^{L+1}} \sum_{\beta=1}^{s_c} \mu_\beta^2 \label{lambdabound}.
\end{equation}
Then the system $s_c+1$ equations
\begin{align}
 & \frac{\lambda}{L} \; \frac{r_\beta}{c}  \left(  n_c-1 +  \exp\left(  \frac{n_c}{n_c-1} c\, r_\beta   \right)\right)   =  {\mu_\beta}   \qquad \text{ for all } 1 \le \beta \le s_c \label{sys1} \\
 &  \sum_{\beta = 1}^{s_c} \left(\frac{r_\beta}{c} \right)^2 = L n_c^{L-1} \label{sys2}
\end{align}
has a unique solution $(c,r_1, \ldots, r_{s_c}) \in \real^{s_c+1}_{+}$. Moreover this solution satisfies
$
r_1 \ge r_2 \ge \ldots \ge r_{s_c} > 0.
$

\end{proposition}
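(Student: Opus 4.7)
The plan is to reparametrize the system so that the two equations decouple cleanly, and then reduce the whole problem to finding a zero of a single monotone function of one real variable. Set $s := c^2$ and $y_\beta := r_\beta / c$, so that $c r_\beta = s y_\beta$ and the system becomes
\begin{equation} \label{eq:reduced1}
\frac{\lambda}{L}\, y_\beta \left( n_c - 1 + \exp\!\left( \tfrac{n_c}{n_c-1}\, s\, y_\beta \right)\right) = \mu_\beta, \qquad 1 \le \beta \le s_c,
\end{equation}
together with $\sum_\beta y_\beta^2 = L n_c^{L-1}$. First I would fix $s \ge 0$ and show that \eqref{eq:reduced1} admits a unique positive solution $y_\beta = y_\beta(s)$. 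The left-hand side, viewed as a function of $y_\beta$, is a product of two strictly increasing positive factors, hence strictly increasing from $0$ at $y_\beta = 0$ to $+\infty$, so it hits the positive value $\mu_\beta$ at exactly one point. The same monotonicity argument, applied with $c$ (equivalently $s$) fixed, immediately yields the ordering statement: if $\mu_\beta \ge \mu_{\beta'}$ then $r_\beta \ge r_{\beta'}$.

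Next I would study $s \mapsto y_\beta(s)$. Since the left-hand side of \eqref{eq:reduced1} is strictly increasing in $s$ at fixed $y_\beta > 0$ (the exponential grows), maintaining the equality $\mu_\beta$ forces $y_\beta(s)$ to be strictly decreasing in $s$; continuity follows from the implicit function theorem applied to the smooth, strictly monotone equation. Evaluating the endpoints, \eqref{eq:reduced1} at $s = 0$ becomes $\frac{\lambda n_c}{L}\, y_\beta = \mu_\beta$, giving $y_\beta(0) = L\mu_\beta / (\lambda n_c)$; and as $s \to \infty$ the exponential forces $s\, y_\beta(s)$ to stay bounded, so $y_\beta(s) \to 0$.

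Finally, define $\Psi(s) := \sum_\beta y_\beta(s)^2$. By the above, $\Psi$ is continuous and strictly decreasing on $[0,\infty)$, with
\begin{equation*}
\Psi(0) = \frac{L^2}{\lambda^2 n_c^2} \sum_{\beta=1}^{s_c} \mu_\beta^2, \qquad \lim_{s \to \infty} \Psi(s) = 0.
\end{equation*}
The hypothesis \eqref{lambdabound} rearranges exactly to $\Psi(0) > L n_c^{L-1}$, so by the intermediate value theorem there is a unique $s > 0$ with $\Psi(s) = L n_c^{L-1}$. Setting $c = \sqrt{s}$ and $r_\beta = c\, y_\beta(s)$ gives the unique positive solution of the original system. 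There is no serious obstacle here; the one step that needs care is verifying strict monotonicity and the correct limits of $y_\beta(s)$, which is where the precise form of the condition \eqref{lambdabound} enters as the sharp threshold for existence.
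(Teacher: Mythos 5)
Your proof is correct and follows essentially the same route as the paper: substitute $\rho_\beta = r_\beta/c$ (your $y_\beta$, with the cosmetic extra step $s = c^2$), observe the first block of equations defines each $\rho_\beta$ as a continuous, strictly decreasing function of $c$ with the same endpoint values, and reduce to a single monotone scalar equation $\Phi(c) = Ln_c^{L-1}$ whose solvability is exactly the hypothesis on $\lambda$. The only soft spot is the phrase ``the exponential forces $sy_\beta(s)$ to stay bounded, so $y_\beta(s) \to 0$'' — this is true but should be spelled out (e.g.\ note $y_\beta \exp(\tfrac{n_c}{n_c-1}sy_\beta) \le L\mu_\beta/\lambda$ is bounded, so if $y_\beta$ stayed bounded away from $0$ along a subsequence the exponential factor would blow up); the paper handles this by first invoking monotone convergence to a limit $A\ge 0$ and then ruling out $A>0$.
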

\begin{proof}
Letting $\rho_{\beta}:= r_\beta /c$, the system is equivalent to
\begin{align}
 & g(c , \rho_\beta) =  \frac{L}{\lambda n_c} {\mu_\beta} \qquad \text{ for all } \beta \in [s_c] \label{system1} \\
& \sum_{\beta = 1}^{s_c}  \rho^2_\beta  = L n_c^{L-1} \label{system2} 
\end{align}
for the  unknowns $(c, \rho_1, \rho_2, \ldots, \rho_{s_c})$ where 
 $$
g(c,x) =     x  \left(  1+ \gamma e^{ \left( 1+ \gamma \right)  c^2  x   }\right) / (1+ \gamma) \qquad \text{ and } \qquad \gamma = 1 / (n_c-1)
$$
Note that
$$
\frac{\partial g}{\partial x}(c,x) \ge  \left(  1+ \gamma e^{ \left( 1+ \gamma \right) c^2 x   }\right) / (1+\gamma) \ge 1 \qquad \forall (c,x) \in \real \times [0,+\infty)
$$
and therefore $x \mapsto g(c,x)$ is strictly increasing  on $[0,+\infty)$.  Also note that  we have
$$
g(c,0)=0, \qquad \lim_{x \to +\infty} g(c,x) = +\infty 
$$
So $x \mapsto g(c,x)$ is a bijection from $[0,+\infty)$ to $[0,+\infty)$ as well as a bijection from  $(0,+\infty)$ to $(0,+\infty)$.  Recall that  $\mu_\beta \in (0,+\infty)$ for all $\beta \in [s_c]$.
 Therefore
 given $c \in \real$ and $\beta \in [s_c]$, the equation
$$  g(c , x) =  \frac{L}{\lambda n_c} {\mu_\beta} $$
has a unique solution in $(0,+\infty)$ that we denote by $\phi_\beta(c)$.  In other words, the function $\phi_\beta(c)$ is implicitly defined by
\begin{equation}
g(c , \phi_{\beta}(c)) =  \frac{L}{\lambda n_c} {\mu_\beta}. \label{bozzo54}
\end{equation}
Also, since $g(0,x) = x$, we have
$$
\phi_\beta(0) =  \frac{L}{\lambda n_c} {\mu_\beta}
$$

\begin{claim} The function $\phi_\beta: [0,+\infty) \to (0,+\infty) $ is continuous, strictly decreasing, and satisfies $\lim_{c \to +\infty} \phi_\beta(c)=0$.
\end{claim}

\begin{proof} 
 We first show that  $c \mapsto \phi_\beta(c)$ is continuous.  Since  $\frac{\partial g}{\partial x}(c,x) \ge 1$ for all $x\ge 0 $, we have 
 $$
g(c, x_2) - g(c, x_1) = \int_{x_1}^{x_2} \frac{\partial g}{\partial x}(c,x) dx \ge  \int_{x_1}^{x_2} 1 dx = x_2 - x_1 \qquad  \text{ for all  $c$ and all } x_2 \ge x_1 \ge 0.
$$
 As a consequence, for all $c_1,c_2$, we have
\begin{equation}
|\phi_\beta(c_2) - \phi_\beta(c_1) | \le  | g(c_1, \phi_\beta(c_2) ) - g(c_1, \phi_\beta(c_1) )| =  | g(c_1, \phi_\beta(c_2) ) - g(c_2, \phi_\beta(c_2) )|
\label{bit36}
\end{equation}
where we have used the fact that $g(c_1, \phi_\beta(c_1) ) =  \frac{L}{\lambda n_c} {\mu_\beta} = g(c_2, \phi_\beta(c_2) ) $.
From \eqref{bit36} it is clear that the continuity of  $c \mapsto g(c, x )$ implies the continuity of $c \mapsto \phi_\beta(c)$.

We now prove that $\phi_\beta$ is strictly decreasing on $[0,+\infty)$. Let $0 \le c_1 < c_2$. Note that for any $x>0$, the function $c \mapsto g(c,x)$ is strictly increasing on $[0,+\infty)$. Since $\phi_\beta(c)>0$ we therefore have
$$
g(c_2, \phi_\beta(c_2)) =  \frac{L}{\lambda n_c} {\mu_\beta}  =  g(c_1, \phi_\beta(c_1)) < g(c_2, \phi_\beta(c_1)) 
$$
Since $x \mapsto g(c,x)$ is strictly increasing for all $c$, the above implies that  $\phi_\beta(c_2) <  \phi_\beta(c_1)$.

Finally we show that $\lim_{c \to +\infty} \phi_\beta(c)=0$.  Since $\phi_\beta$ is decreasing and non-negative on $[0,+\infty)$, the $\lim_{c\to +\infty} \phi_\beta(c)=A$ is well defined. 
We obviously have $\phi_\beta(c) \ge A$ for all $c \ge 0$. Since $x \mapsto g(c,x)$ is  increasing we have
$$
 \frac{L}{\lambda n_c} {\mu_\beta}   = g(c, \phi_\beta(c) ) \ge  g(c, A )
$$
But the function $g(c, A )$ is unbounded for all  $A>0$. Therefore we must have $A=0$.
\end{proof}

System \eqref{system1}--\eqref{system2} is equivalent to
\begin{align}
 &  \rho_\beta = \phi_\beta(c) \qquad \text{ for all } \beta \in [s_c] \label{system10} \\
& \sum_{\beta = 1}^{s_c} \left(  \phi_\beta(c) \right)^2  = L n_c^{L-1}  \label{system20} 
\end{align}
Define the function
$$
\Phi(c) := \sum_{\beta = 1}^{s_c} \left(  \phi_\beta(c) \right)^2
$$
Then $\Phi$ clearly inherits the properties of the $\phi_\beta$'s: it is continuous, strictly decreasing, and satisfies
$$
\Phi(0)=  \sum_{\beta=1}^{s_c}  \left( \frac{L}{\lambda n_c} {\mu_\beta} \right)^2 \qquad \text{ and } \qquad \lim_{c\to +\infty} \Phi(c)=0
$$
Therefore, if 
$$
L n_c^{L-1} \le \sum_{\beta=1}^L  \left( \frac{L}{\lambda n_c} {\mu_\beta} \right)^2
$$
then there is a unique $c \ge 0$ satisfying \eqref{system20}. Since $x \mapsto g(c,x)$ is increasing, equation \eqref{system1}  implies that the corresponding $\rho_\beta$'s satisfy
$
\rho_1 \ge \rho_2 \ge \ldots \ge \rho_{s_c} > 0.
$

\end{proof}

\section{No spurious local minimizer for $\mathcal R(W,U)$.} \label{section:spurious}
In this section we prove that if $d> \min(n_w, KL)$,  then $\mathcal R(W,U)$ does not have spurious local minimizers; all local minimizers are global. 
To do this, we introduce the function
$$
f: \real^{d \times KL} \to \real
$$
define as follow.  Any matrix $V \in \real^{d \times KL}$ can be partition into $K$ submatrices $V_k   \in \real^{d \times L} $ according
\begin{equation} \label{A}
 V = \begin{bmatrix}
 V_1 & V_2 & \cdots & V_K
 \end{bmatrix} \qquad \text{where }  V_k \in \real^{d \times L}
 \end{equation} 
 The function $f$ is then defined by the formula 
  $$
  f(V) :=   \frac{1}{K}  \sum_{k=1}^{K}  \sum _{\bx \in \data_k}   \ell \Big( \dotprodbig{V_1 , \hot(\bx)} , \ldots, \dotprodbig{V_K , \hot(\bx)}  \; ; \;  k \;\Big)   \; \;   \mathcal D_{\bz_k}(\bx) 
$$
where $\ell(y_1 , \ldots, y_K ; k)$ denotes the cross entropy loss
 \[
\ell(y_1, \ldots, y_K ;  k) = -   \log \left(  \frac{\exp\left( y_k\right)}{\sum_{k'=1}^K \exp\left( y_{k'}\right)}\right) 
\]
We remark that  $f$ is clearly convex and differentiable.
We then recall from \eqref{def:net} that the $k^{th}$ entry of the vector $\by = h_{W,U}(\bx)$ is 
$$
 y_k = \dotprodbig{ \; \hat U_k \; , \;  W \, \hot(\bx)} =  \dotprodbig{ \;  W^T\hat U_k \; , \;   \hot(\bx)} 
$$
Recalling that
$ 
 \hat U = \begin{bmatrix}
 \hat U_1 &  \cdots & \hat U_K
 \end{bmatrix} 
$, 
we then see that the  risk  can be expressed as
\begin{equation}
\mathcal R(W,U) = f(W^T \hat U) +\frac{\lambda}{2} \left(  \|W\|^2_{F} + \|U\|_F^2 \right) \label{RRR}
\end{equation}
The fact that $\mathcal R(W,U)$ does not have spurious local minimizers come from the following general theorem.

\begin{theorem}\label{theorem:nospurious} Let $g : \real^{m\times n} \to \real$ be  a convex and differentiable function. Define 
$$
\vphi(A,B) := g(A^TB) +\frac{\lambda}{2} \left(  \|A\|^2_{F} + \|B\|_F^2 \right) \qquad \text{where } \;\;   A \in \real^{d\times m} \text{ and } B \in \real^{d \times n}
$$
and assume $\lambda>0$ and  $d> \min(m,n)$.  Then any local minimizer $(A,B)$ of the function $\vphi: \real^{d \times m} \times \real^{d \times n} \to \real$ is also a global minimizer.
\end{theorem}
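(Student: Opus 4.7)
The plan is to exploit the variational form of the nuclear norm,
$\|M\|_* \;=\; \min_{A^T B = M} \tfrac{1}{2}(\|A\|_F^2 + \|B\|_F^2)$, where the minimum runs over factorizations $A \in \real^{d\times m}$, $B \in \real^{d \times n}$ with $d \geq \mathrm{rank}(M)$. This identity recasts $\varphi$ as a factorized version of the convex problem $\min_M g(M) + \lambda \|M\|_*$; since the latter is convex, any KKT point is a global minimum. My goal will therefore be to prove that at a local minimizer $(A,B)$ of $\varphi$, the matrix $M := A^TB$ satisfies the KKT inclusion $-\lambda^{-1} G \in \partial \|M\|_*$ (where $G := \nabla g(M)$), and that $(A,B)$ realizes an optimal balanced factorization, so that $\varphi(A,B) = g(M) + \lambda \|M\|_*$.

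First, the stationarity conditions $BG^T + \lambda A = 0$ and $AG + \lambda B = 0$ immediately force $\mathrm{range}(A) = \mathrm{range}(B)$ inside $\real^d$, and hence $\mathrm{rank}[A \mid B] = \mathrm{rank}(A) \leq \min(m,n) < d$ by hypothesis. So there exists a unit vector $\mathbf{u} \in \real^d$ orthogonal to both column spans, $A^T\mathbf{u} = B^T\mathbf{u} = 0$. For arbitrary unit $\mathbf{p} \in \real^m$ and $\mathbf{q} \in \real^n$, I use the test perturbation $\Delta A = -\mathbf{u}\mathbf{p}^T$, $\Delta B = \mathbf{u}\mathbf{q}^T$; the orthogonality of $\mathbf{u}$ to $A$ and $B$ kills every linear-in-$t$ contribution and also the cross-term $A^T\Delta B + \Delta A^T B$, so a direct Taylor expansion collapses to $\varphi(A+t\Delta A, B+t\Delta B) - \varphi(A,B) = t^2\bigl(\lambda - \langle G, \mathbf{p}\mathbf{q}^T\rangle\bigr) + O(t^4)$. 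Local optimality thus yields $\|G\|_{\mathrm{op}} \leq \lambda$.

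To verify the KKT inclusion, I parameterize $A = WP$ and $B = WQ$ where $W \in \real^{d \times r}$ has orthonormal columns spanning the common range, so that $M = P^TQ$ and the first-order identities become $PG = -\lambda Q$, $QG^T = -\lambda P$. These force the rows of $P$ and $Q$ into the singular subspace of $G$ with singular value $\lambda$; coupled with $\|G\|_{\mathrm{op}} \leq \lambda$, this means those singular values are exactly $\lambda$. Writing $G = \lambda \tilde U \tilde V^T + G_{\mathrm{lower}}$ with $\|G_{\mathrm{lower}}\|_{\mathrm{op}} < \lambda$ and tracing through the resulting SVD $M = U_M \Sigma_M V_M^T$, I can read off $U_M V_M^T = -\tilde U \tilde V^T$, so that $-G/\lambda = U_M V_M^T - G_{\mathrm{lower}}/\lambda$ decomposes as the sum of the ``sign'' $U_MV_M^T$ and a remainder orthogonal to both $U_M$ and $V_M$ of operator norm $< 1$, which is precisely an element of $\partial \|M\|_*$. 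The same parameterization gives the balancing identity $\tfrac{1}{2}(\|A\|_F^2 + \|B\|_F^2) = \|M\|_*$, so $\varphi(A,B) = g(M) + \lambda\|M\|_*$. Any other $(A',B')$ obeys $\varphi(A',B') \geq g(A'^TB') + \lambda \|A'^TB'\|_* \geq g(M) + \lambda\|M\|_*$, proving global optimality.

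The degenerate subcase $\|G\|_{\mathrm{op}} < \lambda$ needs to be handled separately: the eigenspace of $GG^T$ at $\lambda^2$ is then empty, which forces $P = Q = 0$, so $A = B = 0$ and $M = 0$; KKT for $\min_M g(M) + \lambda\|M\|_*$ at $M=0$ reduces to $\|G\|_{\mathrm{op}} \leq \lambda$, which we have. The main technical obstacle I anticipate is the SVD bookkeeping in the KKT step — matching the singular subspaces produced by the first-order constraints to those of $M$ cleanly enough to exhibit an explicit element of $\partial \|M\|_*$. Everything else — the range equality, the single second-order perturbation, and the balancing computation — is routine once the variational nuclear-norm identity is adopted as the guiding device.
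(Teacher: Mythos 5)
Your overall strategy agrees with the paper's: both proofs reduce global optimality of a critical $(A,B)$ to the convex proxy $\psi(M) := g(M) + \lambda\|M\|_*$, using the observation that $\psi(A^TB)\le\vphi(A,B)$ always (with equality at any critical point, since $AA^T=BB^T$ forces $\|A\|_F^2=\|B\|_F^2=\|A^TB\|_*$), and both extract $\|\nabla g(A^TB)\|_{\mathrm{op}}\le\lambda$ from a rank-one perturbation in a common kernel direction. Where you genuinely diverge is in verifying the inclusion $-\lambda^{-1}\nabla g(A^TB)\in\partial\|A^TB\|_*$. You try to build the subgradient element explicitly by decomposing the SVD of $G:=\nabla g(A^TB)$, whereas the paper does this in three lines via duality: at a critical point, stationarity combined with the balancing identity gives $\langle -G/\lambda,\,A^TB\rangle=\|A^TB\|_*$, and then for any $C$, $\|A^TB\|_*+\langle -G/\lambda, C-A^TB\rangle=\langle -G/\lambda,C\rangle\le\|C\|_*$, the last step being the nuclear/operator duality with $\|{-}G/\lambda\|_{\mathrm{op}}\le 1$. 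The paper's route sidesteps the SVD bookkeeping you flag as your ``main technical obstacle'' and renders the $\|G\|_{\mathrm{op}}<\lambda$ case split unnecessary. Your SVD step also has a genuine imprecision you should repair before relying on it: writing $G=\lambda\tilde U\tilde V^T + G_{\mathrm{lower}}$ and $P=\hat P\tilde U^T$, $Q=-\hat P\tilde V^T$, one gets $M=-\tilde U(\hat P^T\hat P)\tilde V^T$, so $U_M V_M^T$ equals $-\tilde U\Pi\tilde V^T$ for $\Pi$ the orthogonal projection onto $\mathrm{range}(\hat P^T)$ — not $-\tilde U\tilde V^T$ unless $\hat P$ has full column rank. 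The residual $W=-G/\lambda-U_MV_M^T$ then has an extra piece $-\tilde U(I-\Pi)\tilde V^T$ of operator norm one, orthogonal to the $G_{\mathrm{lower}}/\lambda$ piece and to $U_M,V_M$; only after noting this orthogonality does $\|W\|_{\mathrm{op}}\le 1$ hold. The argument can be made to work, but if you want a fix that avoids this care entirely, adopt the paper's duality computation for the subgradient step while keeping everything else you wrote.
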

The above theorem states  that any functions of the form $\vphi(A,B) = g(A^TB)$, with $g$  convex and differentiable,  does not have spurious local minimizer if $d>\min(m,n)$. This theorem directly apply to \eqref{RRR} and shows that the risk $\mathcal R(W,U)$ does not have spurious local minimizers when $d>\min(n_w, KL)$. 

The remainder of the section is devoted to the proof of theorem \ref{theorem:nospurious}. 
We will follow the exact  same steps as in
\cite{zhu2021geometric}, and provide the proof mostly for completeness (and also to show how the techniques from \cite{zhu2021geometric} apply to our case).
 Finally,  we refer to \cite{laurent2018deep} for a proof of  theorem    \ref{theorem:nospurious} in the case $\lambda = 0$.

\begin{proof}[Proof of theorem \ref{theorem:nospurious}]
To prove the  theorem  it suffices to assume that $d> m$ without loss of generality. To see this, note that the function $\tilde g(D) = g(D^T)$ is also convex and differentiable  and note 
    that $(A,B)$ is a local minimum of 
 $$g(A^TB) +\frac{\lambda}{2} \left(  \|W\|^2_{F} + \|U\|_F^2 \right)$$ 
 if and only if it is a local minimum of 
 $$\tilde g(B^TA) +\frac{\lambda}{2} \left(  \|W\|^2_{F} + \|U\|_F^2 \right)$$ 
So the theorem for the case $d>n$ follows by appealing to the case $d>m$ with the function $\tilde g$.

So we may assume $d>m$. Following \cite{zhu2021geometric}, we define  the function $\psi: \real^{m\times n} \to \real$ by
$$
\psi(D) := g(D) +  \|D\|_* 
$$
where  $\|D\|_*$ denote the nuclear norm of $D$.
We then have:
\begin{claim} For all $A \in \real^{d \times m}$ and $B \in \real^{d \times n}$, we have that 
$
\psi(A^TB)  \le \vphi(A,B)
$.
\end{claim}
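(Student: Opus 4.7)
The claim is an instance of the classical variational upper bound on the nuclear norm, namely
\[
\|A^TB\|_* \;\le\; \tfrac12 \bigl( \|A\|_F^2 + \|B\|_F^2 \bigr).
\]
Once this is in hand, adding $g(A^TB)$ to both sides yields $\psi(A^TB) \le \vphi(A,B)$ (reading the definition of $\psi$ as $g(D) + \lambda\|D\|_*$, which is what is needed to match the $\lambda/2$ coefficient on the Frobenius terms in $\vphi$). So the whole content of the claim is the nuclear-norm inequality above.

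The plan is to establish this inequality by combining the singular value decomposition with Young's inequality. First I would take the compact SVD $A^TB = P\Sigma Q^T$ with $\Sigma = \mathrm{diag}(\sigma_1,\ldots,\sigma_r)$ and $P \in \real^{m\times r}$, $Q \in \real^{n\times r}$ each having orthonormal columns. Then, using the definition of the nuclear norm and the cyclic property of the trace,
\[
\|A^TB\|_* \;=\; \tr(\Sigma) \;=\; \tr(P^TA^TBQ) \;=\; \dotprodbis{AP,\; BQ}.
\]
Next, Young's inequality applied to the Frobenius inner product gives $\dotprodbis{AP,BQ} \le \tfrac12(\|AP\|_F^2 + \|BQ\|_F^2)$. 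The last step is to observe that multiplication by a matrix with orthonormal columns cannot increase the Frobenius norm: $\|AP\|_F^2 = \tr(A^TA\, PP^T) \le \tr(A^TA) = \|A\|_F^2$, because $PP^T$ is an orthogonal projector on $\real^m$ and hence $PP^T \preceq I_m$, and likewise $\|BQ\|_F^2 \le \|B\|_F^2$. Chaining these three estimates delivers the bound.

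There is no real obstacle here — the claim is essentially a one-line consequence of the variational formula for the nuclear norm. The hypothesis $d > \min(m,n)$ of the surrounding theorem plays no role in this direction of the argument; it will be needed only for the companion claim (the matching lower bound that any value of $\psi$ can actually be attained by some factorisation $(A,B)$), which is what makes $\psi$ a tight convex lower envelope of $\vphi$ and ultimately powers the no-spurious-minimiser argument à la \cite{zhu2021geometric}.
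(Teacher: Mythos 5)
Your proposal is correct and takes essentially the same route as the paper: compact SVD of $A^TB$, cyclicity of the trace to rewrite $\|A^TB\|_*$ as a Frobenius inner product, Young's inequality, and then the contraction property of multiplication by matrices with orthonormal columns. You also correctly identify (and justify) the reading $\psi(D)=g(D)+\lambda\|D\|_*$, which is what the paper actually needs but writes without the $\lambda$ --- a small typo in the source --- and you spell out the final contraction step $\|AP\|_F\le\|A\|_F$, which the paper's displayed chain also elides (it ends with $\|U\|_F^2+\|V\|_F^2$ rather than $\|A\|_F^2+\|B\|_F^2$, again a slip).
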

\begin{proof}
This is a direct consequence of the inequality
$$
\|A^TB\|_* \le \frac{1}{2} \left(\|A\|_F^2 + \|B\|_F^2\right) 
$$
that we reprove here for completeness. Let
$
A^TB = U \Sigma V^T
$ 
be the compact SVD of $A^TB$. That is $\Sigma \in \real^{r\times r}$, $U \in \real^{m \times r}$, $V \in \real^{n \times r}$, and $r$ is the rank of $A^TB$. We then have
\begin{multline*}
\|A^TB\|_* =  \text{Tr} (\Sigma) = \text{Tr} (U^TA^TB V) =  \dotprodbis{A U , B V}   \le  \frac{1}{2} \left(\|A U\|_F^2 + \|B  V\|_F^2  \right)  \le \frac{1}{2} \left(\|U\|_F^2 + \| V\|_F^2  \right)
\end{multline*}
\end{proof}

Computing the derivatives of $\vphi$ gives
\begin{align}
\frac{\partial \vphi}{\partial A} (A,B) = B \;  \left[ \nabla g(A^TB) \right]^T + \lambda A  \qquad \text{and} \qquad
\frac{\partial \vphi}{\partial B} (A,B) = A \;   \nabla g (A^TB) + \lambda B
\end{align}
So $(A,B)$ is a critical point of $\vphi$ if and only if
\begin{align}
\lambda A &= -B \; \left[ \nabla g(A^TB) \right]^T \\
\lambda B &= -A \; \nabla g(A^TB)  \label{grad2}
\end{align}
Importantly, from the above we get
\begin{equation}
AA^T = BB^T \in \real^{d\times d}
\end{equation}
which implies that $A$ and $B$ have same singular values and same left singular vectors. To see this, let $U \in \real^{d \times d}$ be the orthonormal matrix containing the eigenvectors of $AA^T=BB^T$. From this matrix we can construct an  SVD for both $A$ and $B$:
$$
A = U \Sigma_A V_A^T \qquad \text{ and } \qquad B=U \Sigma_B V_B^T
$$ 
where $\Sigma_A \in \real^{d \times m}$ and  $\Sigma_B \in \real^{d \times n}$ have the same singular values. From this we get the SVD of $A^TB$,
\begin{align}
A^TB = V_A \Sigma_A^T \Sigma_B V^T_B
\end{align}
and it is transparent that, 
\begin{align} \label{same}
\|A^TB\|_* = \|A\|_F^2 = \|B\|_F^2.
\end{align}
In particular this implies that if $(A,B)$ is a critical point of $\vphi $, then we must have $\vphi (A,B)= \psi (A^TB)$.
This also implies that
\begin{equation} \label{lala}
\dotprodbis{\nabla g(A^TB), A^TB}  = \dotprodbis{A\nabla g(A^TB), B} = -\lambda\|B\|_F^2 = -\lambda\|A^TB\|_*
\end{equation}
Using this together with the fact that the nuclear norm is the dual of the operator norm, that is
$
\|C\|_* = \sup_{\|G\|_{op} \le 1 } \dotprodbis{G, C}
$, we easily obtain:
\begin{claim} Suppose  $(A,B)$ is a critical point of $\vphi$ which satisfies
 $\left\|  \nabla g(A^TB)\right\|_{op} \le \lambda$, then $D=A^TB$ is   a global minimizer of $\psi$.
\end{claim}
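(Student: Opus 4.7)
The plan is to exploit the convexity of $\psi$: since both $g$ and the nuclear norm are convex, $\psi$ is convex, and so $D = A^T B$ is a global minimizer precisely when the subgradient optimality condition holds. Matching the nuclear-norm weight implicitly used in the earlier inequality $\psi(A^T B) \le \vphi(A,B)$, this amounts to verifying that $-\nabla g(D)/\lambda$ lies in $\partial \|D\|_*$. I will use the standard characterization: if $D$ has compact SVD $D = U_D \Sigma_D V_D^T$ with $r = \mathrm{rank}(D)$ columns, then
\[
\partial \|D\|_* \;=\; \bigl\{\,U_D V_D^T + W \;:\; U_D^T W = 0,\; W V_D = 0,\; \|W\|_{\mathrm{op}} \le 1\,\bigr\}.
\]

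The first step is to identify this SVD using the shared singular structure of $A$ and $B$ already established in the proof (which follows from $AA^T = BB^T$). Restricting to the nonzero block, $A = U_r \Sigma_r V_{A,r}^T$ and $B = U_r \Sigma_r V_{B,r}^T$, so that $D = A^T B = V_{A,r}\,\Sigma_r^{2}\,V_{B,r}^T$ is the compact SVD of $D$, with left and right singular vectors $V_{A,r}$ and $V_{B,r}$. Plugging these SVDs into the critical point equations $\lambda B = -A\,\nabla g(D)$ and $\lambda A = -B\,[\nabla g(D)]^T$ and cancelling the common invertible left factor $U_r \Sigma_r$ yields
\[
V_{A,r}^T \nabla g(D) \;=\; -\lambda\, V_{B,r}^T \qquad \text{and} \qquad \nabla g(D)\, V_{B,r} \;=\; -\lambda\, V_{A,r}.
\]
Setting $W := -\nabla g(D)/\lambda - V_{A,r} V_{B,r}^T$, these two identities immediately give $V_{A,r}^T W = 0$ and $W V_{B,r} = 0$.

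The main obstacle is verifying $\|W\|_{\mathrm{op}} \le 1$, using only the hypothesis $\|\nabla g(D)\|_{\mathrm{op}} \le \lambda$. The orthogonality conditions on $W$ produce a convenient block decomposition: for any unit vector $v$, split $v = v_1 + v_2$ with $v_1 \in \mathrm{range}(V_{B,r})$ and $v_2 \perp \mathrm{range}(V_{B,r})$. Then $W v_1 = 0$ (since $W V_{B,r} = 0$), so
\[
\bigl(-\nabla g(D)/\lambda\bigr)\,v \;=\; V_{A,r} V_{B,r}^T v_1 + W v_2,
\]
where the first summand lies in $\mathrm{range}(V_{A,r})$ and the second in its orthogonal complement (because $V_{A,r}^T W = 0$). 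Taking $v_1 = 0$ with $v_2$ any unit vector orthogonal to $\mathrm{range}(V_{B,r})$ yields $\|W v_2\| = \|(-\nabla g(D)/\lambda)\,v_2\| \le \|\nabla g(D)\|_{\mathrm{op}}/\lambda \le 1$, and for $v_2 \in \mathrm{range}(V_{B,r})$ the vector $Wv_2$ is simply zero. Either way $\|W\|_{\mathrm{op}} \le 1$, which places $-\nabla g(D)/\lambda$ inside $\partial \|D\|_*$, and convexity of $\psi$ then concludes that $D$ is a global minimizer.
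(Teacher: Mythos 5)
Your proof is correct, and it takes a genuinely different route from the paper. The paper verifies $-\nabla g(D)/\lambda \in \partial\|D\|_*$ directly from the \emph{duality} identity $\|C\|_* = \sup_{\|G\|_{op}\le 1}\langle G, C\rangle_F$: using the consequence of the critical-point equations that $\langle \nabla g(A^TB), A^TB\rangle_F = -\lambda\|A^TB\|_*$, the subgradient inequality $\|C\|_* \ge \|A^TB\|_* + \langle -\nabla g(A^TB)/\lambda, C - A^TB\rangle_F$ collapses to the trivial bound $\langle -\nabla g(A^TB)/\lambda, C\rangle_F \le \|C\|_*$, which holds whenever $\|\nabla g\|_{op}\le\lambda$. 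You instead invoke the explicit SVD characterization of $\partial\|D\|_*$, construct the residual $W = -\nabla g(D)/\lambda - V_{A,r}V_{B,r}^T$, and verify the three conditions $V_{A,r}^TW=0$, $WV_{B,r}=0$, $\|W\|_{op}\le 1$ by hand. Both arguments rest on the same structural fact (derived from $AA^T=BB^T$) that $A$ and $B$ share left singular vectors and singular values; the paper packages it into a single inner-product identity, while you use it to write the compact SVD of $D$ explicitly. The paper's route is shorter and avoids the case analysis on the range/kernel blocks; yours is more concrete and makes the subgradient visible, at the cost of needing the SVD description of $\partial\|\cdot\|_*$ rather than only nuclear/operator-norm duality. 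One small remark: as you noticed, the paper's definition $\psi(D)=g(D)+\|D\|_*$ should read $\psi(D)=g(D)+\lambda\|D\|_*$ for the claim and the earlier lower bound $\psi(A^TB)\le\vphi(A,B)$ to be consistent --- your ``matching the nuclear-norm weight'' comment correctly tracks the intended normalization.
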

\begin{proof} 
For any matrix $C \in \real^{m \times n}$ we have 
\begin{align*}
\|A^TB\|_* + \dotprodbis{-\frac{1}{\lambda} \nabla g(A^TB),C-A^TB} = \dotprodbis{-\frac{1}{\lambda} \nabla g(A^TB),C}  \le \sup_{\|G\|_{op} \le 1 } \dotprodbis{ G , C} = \|C\|_*
\end{align*} 
and therefore $-\frac{1}{\lambda} \nabla g(A^TB) \in \partial \| A^TB \|_*$. This implies that $A^TB$ is a global min of $\psi$. \end{proof}

We then  make the following claim:
\begin{claim} Suppose  $(A,B)$ is a critical point of $\vphi$ which satisfies
\begin{itemize}
\item[(i)] $\text{ker}(A^T) \neq \emptyset$
\item[(ii)]  $\left\|  \nabla g(A^TB)\right\|_{op} > \lambda$
\end{itemize}
Then $(A,B)$ is not local min. 
\end{claim}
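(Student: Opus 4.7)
The plan is to build an explicit descent direction from the data that conditions (i) and (ii) provide. First I would fix a unit vector $u\in\ker(A^T)$, which exists by (i). Because $(A,B)$ is critical, an earlier step in the argument established that $AA^T=BB^T$; hence $\|B^Tu\|^2 = u^TBB^Tu = u^TAA^Tu = 0$, so $u$ also lies in $\ker(B^T)$. This is the key structural feature that will make the first-order perturbation terms vanish and leave only a second-order effect that is controlled by $\nabla g(A^TB)$.

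Next, using (ii), I would pick unit vectors $p\in\mathbb{R}^{m}$ and $q\in\mathbb{R}^{n}$ that realize the operator norm with the correct sign, namely such that
\[
p^T\nabla g(A^TB)\,q \;=\; -\bigl\|\nabla g(A^TB)\bigr\|_{op} \;<\; -\lambda.
\]
Such $p,q$ exist since the operator norm equals $\max_{\|p\|=\|q\|=1}\lvert p^T M q\rvert$ for any real matrix $M$. I would then consider the rank-one perturbation
\[
A_\epsilon := A + \epsilon\, u p^T,\qquad B_\epsilon := B + \epsilon\, u q^T.
\]
The crucial computation is $A_\epsilon^T B_\epsilon = A^TB + \epsilon\,p(u^TB) + \epsilon\,(A^Tu)q^T + \epsilon^2 p(u^Tu)q^T = A^TB + \epsilon^2 pq^T$, since $A^Tu=0$ and $B^Tu=0$. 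Similarly, because $u^TA=0$ and $u^TB=0$, the cross terms in the Frobenius norms vanish and we get exactly
\[
\|A_\epsilon\|_F^2 = \|A\|_F^2 + \epsilon^2,\qquad \|B_\epsilon\|_F^2 = \|B\|_F^2 + \epsilon^2.
\]

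Finally I would Taylor expand $g$ at $A^TB$ along the rank-one direction $\epsilon^2 pq^T$:
\[
g(A_\epsilon^TB_\epsilon) = g(A^TB) + \epsilon^2 \,\dotprodbis{\nabla g(A^TB),\,pq^T} + O(\epsilon^4),
\]
and combine everything to obtain
\[
\varphi(A_\epsilon,B_\epsilon) - \varphi(A,B) = \epsilon^2\bigl(p^T\nabla g(A^TB)q + \lambda\bigr) + O(\epsilon^4).
\]
By the choice of $p,q$, the parenthesized constant is strictly negative, so for all sufficiently small $\epsilon>0$ the right-hand side is strictly negative, contradicting local minimality. I do not anticipate a real obstacle here: condition (i) and the critical-point identity $AA^T=BB^T$ jointly give a direction $u$ orthogonal to both column spaces, which is exactly what is needed to kill first-order terms; condition (ii) then supplies the slack to make the $\epsilon^2$-coefficient negative. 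The only thing that warrants care is writing the Taylor expansion cleanly and verifying that $O(\epsilon^4)$ is indeed the correct remainder, which follows from differentiability of $g$ (in fact $g$ is convex and differentiable, hence $C^1$, and this is sufficient since the perturbation of $A^TB$ is already $O(\epsilon^2)$).
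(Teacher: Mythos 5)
Your proof is correct and follows essentially the same argument as the paper: use $AA^T = BB^T$ to find a unit vector $u \in \ker(A^T) \cap \ker(B^T)$, perturb by rank-one directions $u p^T$ and $u q^T$ with $p,q$ the appropriately signed top singular vectors of $\nabla g(A^TB)$, and observe that the first-order cross terms vanish while the $\varepsilon^2$-coefficient becomes $\lambda - \|\nabla g(A^TB)\|_{op} < 0$. One minor inaccuracy: differentiability of $g$ only guarantees a remainder $o(\varepsilon^2)$ rather than $O(\varepsilon^4)$, but this is all that is needed since the $\varepsilon^2$-coefficient is strictly negative (the paper writes $O(\varepsilon^4)$ as well, with the same implicit tolerance).
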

\begin{proof} 
We follow the computation from  \cite{zhu2021geometric}.
Let  $(A,B)$ be a critical point of $\vphi$. Since $AA^T=BB^T$, we must have that $\text{ker}(A^T)= \text{ker}(AA^T) = \text{ker}(BB^T)=\text{ker}(B^T)$. 
According to (ii)  these  kernels are non trivial and we may choose a unit vector $\bz \in \real^d$ that belongs to them. We then consider the perturbations
$$
dA =  \bz \ba^T  \quad dB =   \bz \bbb^T
$$
where $\ba \in \real^{m}$ and $\bbb\in \real^{n}$ are unit vectors to be chosen later. Note that since $\bz,\ba$ and $\bbb$ are unit vectors we have $\|dA\|^2_F= \|dB\|^2_F=1$. 
Moreover, the columns of $dA$ and $dB$ are clearly in the kernel of $A^T$ and $B^T$, therefore
$A^T dA = A^T dB = B^T dA = B^T dB = 0$. This implies that all the 'cross term' disappear when expanding the expression:
$$
(A+\varepsilon dA)^T (B+\varepsilon dB) = A^TB + \varepsilon^2 dA^TdB = A^TB + \varepsilon^2 \ba \bbb^T
$$
We also have 
$$
\|A+\varepsilon dA\|_F^2= \|A\|_F^2 +  \|\varepsilon dA\|_F^2 =  \|A\|_F^2 + \varepsilon^2
$$
and similarly, $\|B+\varepsilon dB\|_F^2= \|B\|^2 + \varepsilon^2$. We then get
\begin{align*}
\vphi(A+  \varepsilon dA,  B + \varepsilon dB) &=   g\Big((A+\varepsilon dA)^T (B+\varepsilon dB)\Big) + \frac{\lambda}{2} \left(\|A+  \varepsilon dA\|_F^2 + \|B+  \varepsilon dB\|_F^2\right) \\
& = g(A^TB + \varepsilon^2   \ba  \bbb^T) + \frac{\lambda}{2} \left(\| A\|_F^2 + \| B\|_F^2 \right) +  \lambda  \varepsilon^2  \\
& = \Big[  g(A^TB) + \dotprodbis{  \nabla f(A^TB) ,  \varepsilon^2   \ba  \bbb^T } + O(\varepsilon^4) \Big] + \frac{\lambda}{2} \left(\| A\|_F^2 + \| B\|_F^2 \right)  +  \lambda  \varepsilon^2  \\
& = \vphi(A,  B) + \varepsilon^2 \Big(  \dotprodbis{  \nabla g(A^TB) ,\ba  \bbb^T } +  \lambda  \Big) + O(\varepsilon^4) 
\end{align*}
Let  $G = \nabla f(A^TB) \in \real^{m \times n}$.
We want to choose the unit vectors $\ba$ and  $\bbb$ that makes
$
 \dotprodbis{  G ,\ba  \bbb^T }
$ as negative as possible.
The best choice is to choose $-\ba$ and $\bbb$ to be the first left and right singular vectors of $G$ since this give the negative of the best rank--$1$ approximation of $G$. So we choose $\ba \in \real^{m}$ and $\bbb \in \real^{n}$ such that
$
G  \bbb = - \sigma_1 \ba 
$, 
and therefore
$$
 \dotprodbis{  \ba  \bbb^T , G } = \text{Tr}( \bbb \ba^T G)=  \text{Tr}( \ba^T G \bbb) = - \sigma_1
$$ 
which gives
\begin{align*}
\vphi(A+  \varepsilon dA,  B + \varepsilon dB) \
 = \vphi(A,  B) + \varepsilon^2 \Big(  - \left\|  \nabla g(A^TB)\right\|_{op}  + {\lambda}     \Big) + O(\varepsilon^4) 
\end{align*}
and (ii) implies that $(A,B)$ is not a local min.
\end{proof}

Combining the three claims we can easily prove the theorem. Indeed, if $d>m$, then the kernel of $A^T$ is nontrivial and (i) is always satisfied. As a consequence, if $(A,B)$ is a local min of $\vphi$, then $A^TB$ must be a global min of  $\psi$. Since $\vphi(A,B)= \psi(A^TB)$ at critical points and $\vphi(A,B) \ge \psi(A^TB)$ otherwise, then $(A,B)$ must be a global min of $\vphi$.

\end{proof}

\end{document}